\documentclass[11pt,reqno,a4paper]{article}
\usepackage{fullpage}

\usepackage[utf8]{inputenc}
\usepackage[T1]{fontenc}
\usepackage{booktabs}
\usepackage{amsfonts}
\usepackage{nicefrac}
\usepackage[protrusion=true,expansion=false]{microtype}
\usepackage{xcolor}
\usepackage{bm}
\usepackage{amsmath,amssymb,amsbsy,amsfonts,amsthm}

\newtheorem{theorem}{Theorem}
\newtheorem{lemma}[theorem]{Lemma}

\newtheorem{proposition}[theorem]{Proposition}

\usepackage{subcaption}
\usepackage{multirow}
\usepackage{makecell}
\usepackage{graphicx}
\usepackage{mathtools}
\usepackage{environ}
\usepackage[round]{natbib}

\usepackage{tablefootnote}
\usepackage{enumitem}
\usepackage{float}
\usepackage{afterpage}
\usepackage{placeins}

\usepackage{hyperref}

\DeclareFontFamily{U}{BOONDOX-cal}{\skewchar\font=45 }
\DeclareFontShape{U}{BOONDOX-cal}{m}{n}{<-> s*[1.05] BOONDOX-r-cal}{}
\DeclareFontShape{U}{BOONDOX-cal}{b}{n}{<-> s*[1.05] BOONDOX-b-cal}{}
\DeclareMathAlphabet{\lcal}{U}{BOONDOX-cal}{m}{n}
\SetMathAlphabet{\lcal}{bold}{U}{BOONDOX-cal}{b}{n}

\usepackage{amsmath,amsfonts,bm}

\def\1{\bm{1}}

\def\va{{\bm{a}}}

\def\vh{{\bm{h}}}

\def\vz{{\bm{z}}}

\def\mA{{\bm{A}}}

\def\mD{{\bm{D}}}

\def\mG{{\bm{G}}}

\def\mI{{\bm{I}}}

\def\mL{{\bm{L}}}

\def\mP{{\bm{P}}}
\def\mQ{{\bm{Q}}}

\def\mS{{\bm{S}}}

\def\mV{{\bm{V}}}
\def\mW{{\bm{W}}}
\def\mX{{\bm{X}}}
\def\mY{{\bm{Y}}}
\def\mZ{{\bm{Z}}}

\DeclareMathAlphabet{\mathsfit}{\encodingdefault}{\sfdefault}{m}{sl}
\SetMathAlphabet{\mathsfit}{bold}{\encodingdefault}{\sfdefault}{bx}{n}
\newcommand{\tens}[1]{\bm{\mathsfit{#1}}}

\def\tH{{\tens{H}}}

\def\tL{{\tens{L}}}

\def\tW{{\tens{W}}}

\def\gA{{\mathcal{A}}}

\def\gV{{\mathcal{V}}}

\def\gX{{\mathcal{X}}}
\def\gY{{\mathcal{Y}}}
\def\gZ{{\mathcal{Z}}}

\usepackage{xparse}

\newcommand{\stepjust}[1]{\tag*{\footnotesize\textit{(by #1)}}}

\newcommand{\eq}[1]{Eq.~\eqref{#1}}
\newcommand{\eqs}{Eqs.~}

\NewDocumentCommand{\mylt}{ O{\ell} O{t} }{%
    ^{\left(#1, #2\right)}%
}

\NewDocumentCommand{\myl}{ O{\ell} }{%
    ^{\left(#1\right)}%
}

\NewDocumentCommand{\myt}{ O{t} }{%
    ^{\left(#1\right)}%
}

\NewDocumentCommand{\mylm}{ O{\ell} O{m} }{%
    ^{\left[#1, #2\right]}%
}
\newcommand{\VarGNDE}{\mX}
\newcommand{\VarGNDEHiddenState}{\mA}
\newcommand{\VarGNDEParameter}{\va}

\newcommand{\VarGraphonNDE}{\gX}
\newcommand{\VarGraphonNDEIV}{\gZ} %
\newcommand{\VarGraphonNDEHiddenState}{\gA}

\newcommand{\VarGraphonNDEParameter}{\lcal{a}}
\newcommand{\VarDTOGradient}{\mG}

\newcommand{\hatVarGraphonNDE}{\widehat{\VarGraphonNDE}}
\newcommand{\tildeVarGraphonNDE}{\widetilde{\VarGraphonNDE}}

\newcommand{\hatVarGNDE}{\widehat{\VarGNDE}}
\newcommand{\tildeVarGNDE}{\widetilde{\VarGNDE}}

\newcommand{\GNN}{\mathrm{GNN}}
\newcommand{\GraphonNN}{\mathrm{WNN}}

\newcommand{\spaceB}{B(I;\mathbb{R}^{1\times F})}
\newcommand{\spaceb}{B(I)}
\newcommand{\spaceC}{C([0,T];\spaceB)}
\newcommand{\spaceltwo}{L^2(I;dP)}
\newcommand{\spaceLTWO}{L^2(I;\mathbb{R}^{1\times F};dP)}
\newcommand{\spaceRnF}{\mathbb{R}^{n\times F}}
\newcommand{\spaceRn}{\mathbb{R}^{n}}
\newcommand{\spaceRFFK}{\mathbb{R}^{F\times F\times K}}

\newcommand{\matL}{\mL}
\newcommand{\matLUn}{\mL_{U_n}}
\newcommand{\opLP}{\mathcal{L}_P}
\newcommand{\opLUn}{\mathcal{L}_{U_n}}

\newcommand{\norm}[1]{\left\lVert #1 \right\rVert}
\newcommand{\normF}[1]{\left\lVert #1 \right\rVert_{\mathrm{F}}}
\newcommand{\normb}[1]{\left\lVert #1 \right\rVert_{\spaceb}}
\newcommand{\normB}[1]{\left\lVert #1 \right\rVert_{\spaceB}}
\newcommand{\normC}[1]{\left\lVert #1 \right\rVert_{\spaceC}}
\newcommand{\normTwo}[1]{\left\lVert #1 \right\rVert_{2}}
\newcommand{\normMax}[1]{\left\lVert #1 \right\rVert_{\max}}

\newcommand{\iprod}[1]{\left\langle #1 \right\rangle}
\newcommand{\braket}[1]{\left[ #1 \right]}
\newcommand{\curlbraket}[1]{\left\{ #1 \right\}}
\newcommand{\abs}[1]{\left| #1 \right|}
\newcommand{\parens}[1]{\left( #1 \right)}

\newcommand{\Or}[1]{\Big(\text{Or}\ #1 \Big)}

\newcommand{\SamplingOperator}{\mathcal{S}_{U_n}}

\newcommand{\MSE}{\mathrm{MSE}_{U_n}}

\NewDocumentCommand{\hltfgk}{ O{t} }{%
\ensuremath{\vh_{fgk}^{(\ell,#1)}}%
}

\newcommand{\ConvFilterLipschitz}{\hyperref[AS0]{AS0}}
\newcommand{\ConvFilterDiff}{\hyperref[AS0diff]{AS0$'$}}
\newcommand{\SigmaLipschitz}{\hyperref[AS1]{AS1}}
\newcommand{\SigmaDiff}{\hyperref[AS1diff]{AS1$'$}}
\newcommand{\SigmaDerivativeLipschitz}{\hyperref[AS1diffLip]{AS1$''$}}
\newcommand{\SigmaTwiceDiff}{\hyperref[AS1diff_diff]{AS1$'''$}}
\newcommand{\dPBoundedBelow}{\hyperref[AS2]{AS2}}
\newcommand{\GraphonLipschitz}{\hyperref[AS3]{AS3}}

\newcommand{\dt}{\frac{d}{dt}}
\newcommand{\ddt}{\frac{d^2}{dt^2}}
\newcommand{\supT}{\sup_{t\in[0,T]}}

\newcommand{\QX}{\mathcal{Q}_{\VarGNDE}}
\newcommand{\QY}{\mathcal{Q}_{\VarGNDEHiddenState}}
\newcommand{\Qh}{\mathcal{Q}_h}

\newcommand{\DX}{\mathcal{D}_{X}}
\newcommand{\Dh}{\mathcal{D}_{h}}

\newcommand{\gd}[2]{\frac{\delta #1}{\delta #2}}

\newcommand{\Liph}{L_h}

\newcommand{\Ch}{C_h}
\newcommand{\Ce}{C_{err}}
\newcommand{\tildeCe}{\widetilde{C}_{err}}

\NewDocumentCommand{\Csys}{ O{} }{
\ensuremath{C_{#1,W}}
}

\newcommand{\CXmax}{C_{\VarGraphonNDE,\max}}
\newcommand{\CYmax}{C_{\VarGraphonNDEHiddenState,\max}}
\newcommand{\CXlip}{C_{\VarGraphonNDE,\mathrm{Lip}}}
\newcommand{\CYlip}{C_{\VarGraphonNDEHiddenState,\mathrm{Lip}}}
\newcommand{\Codotmax}{C_{\odot,\max}}
\newcommand{\Codotlip}{C_{\odot,\mathrm{Lip}}}

\newcommand{\CmXGNDE}{C_\VarGNDE}
\newcommand{\CmXDeriGNDE}{C_{\VarGNDE'}}
\newcommand{\CmXTildeDeriGNDE}{C_{\widetilde{\VarGNDE}'}}
\newcommand{\CmXHatDeriGNDE}{C_{\widehat{\VarGNDE}'}}
\newcommand{\CXdiff}{C_{\Delta \VarGNDE}}
\newcommand{\CXdiffHat}{C_{\Delta \widehat{\VarGNDE}}}

\newcommand{\CmYGNDEmax}{C_{\VarGNDEHiddenState}^{\max}(\alpha_n,\gamma_1,\gamma_2,\gamma_3)}
\newcommand{\CmYGNDE}{C_\VarGNDEHiddenState}
\newcommand{\CmYDeriGNDE}{C_{\VarGNDEHiddenState'}(\alpha_n,\gamma_1,\gamma_2,\gamma_3)}

\newcommand{\CYdiff}{C_{\Delta \VarGNDEHiddenState}(\alpha_n,\gamma_1,\gamma_2,\gamma_3)}

\NewDocumentCommand{\CGdiff}{o}{%
  C_{\Delta \VarDTOGradient}%
  \IfValueTF{#1}{^{(#1)}}{}%
  (\alpha_n, \gamma_1, \gamma_2, \gamma_3)%
}

\newcommand{\CmYDistGNDE}{C_{\VarGNDEHiddenState^{[]}}}
\newcommand{\CmXDistGNDE}{C_{\VarGNDE^{[]}}}

\newcommand{\CXdiffDist}{C_{\Delta \VarGNDE^{[]}}}

\newcommand{\CXdiffDistHat}{C_{\Delta \widehat{\VarGNDE}^{[]}}}

\newcommand{\loss}{\mathrm{Loss}}

\title{\bf Zero-Shot Size Transfer for Neural ODEs on Sparse Random Graphs: Graphon Limits and Adjoint Convergence}
\author{Mingsong Yan\thanks{Department of Mathematics, University of California, Santa Barbara, CA. (\textit{mingsongyan@ucsb.edu})}, \quad
Zhida Wang\thanks{Department of Mathematics, University of California, Santa Barbara, CA. (\textit{zhida@ucsb.edu})},\ \ \ and \ \ 
Sui Tang\thanks{Department of Mathematics, University of California, Santa Barbara, CA. (\textit{suitang@ucsb.edu})}}
\date{}

\begin{document}

\maketitle
\begin{abstract}
Graph Neural Differential Equations (GNDEs) model continuous-time graph dynamics by parameterizing Neural ODE velocity fields with Graph Neural Networks. Their local, size-independent filters suggest a \emph{zero-shot size-transfer} principle: train on a small graph and deploy on larger, similar graphs without retraining. We develop a quantitative theory for this principle on sparse random graphs sampled from graphons. We consider Graphon Neural Differential Equations (Graphon-NDEs) and adjoint Graphon-NDEs as the infinite-node limits of the forward and adjoint GNDE systems, and establish well-posedness. For an $n$-node random graph with sparsity parameter $\alpha_n$, we prove \emph{trajectory-wise} convergence of GNDE solutions to Graphon-NDE solutions at rate $\mathcal{O}((\alpha_n n)^{-1/2})$, up to logarithmic factors, with high probability. We also establish \emph{uniform-in-time} convergence bounds for adjoint systems governing hidden-state and parameter gradients. We further study discretize-then-optimize (DTO) and optimize-then-discretize (OTD) training. Under explicit Euler discretization with $M$ steps, we show that DTO and OTD are asymptotically consistent, with hidden-state and local parameter-gradient discrepancies of orders $\mathcal{O}(1/M)$ and $\mathcal{O}(1/M^2)$, respectively, up to sparsity and logarithmic factors. Experiments on HSBM and tent graphons support the theoretical rates, while zero-shot transfer experiments across four graphon classes demonstrate accurate deployment of learned GNDEs on larger independently sampled graphs.
\end{abstract}

\section{Introduction}\label{sec: introduction}

Neural Ordinary Differential Equations (Neural ODEs)~\citep{chen2018neural}
are continuous-depth neural network models in which the hidden state evolves
according to an ODE whose velocity field is parameterized by a neural network.
They can be viewed as continuous-depth analogues of deep residual networks
\citep{he2016deep}, where discrete residual updates are replaced by continuous
ODE flows. This continuous-depth formulation provides a flexible framework for modeling
temporal dynamics, with applications in generative modeling, time-series
analysis, and scientific computing~\citep{avelin2021neural,sander2022residual}. Neural ODEs are commonly trained through either the
\emph{discretize-then-optimize} (DTO) or \emph{optimize-then-discretize} (OTD)
paradigm. Both approaches require numerical ODE solves during training and can
become computationally expensive in high-dimensional settings
\citep{chen2018neural,onken2020discretize}.

Many dynamical systems of interest, such as social, biological, and physical
interaction networks, inherently have graph structure, where each node's evolution relies on the states of its neighbors. Graph Neural Differential Equations (GNDEs)~\citep{poli2019graph,liu2025graph} encode this relational structure by parameterizing the velocity field with a Graph Neural Network (GNN)~\citep{scarselli2008graph, kipf2016semi}, thereby combining the continuous-depth flexibility of Neural ODEs with graph-based message passing. GNDEs have been applied to diverse tasks, including node classification, traffic forecasting, epidemic modeling, and physical simulation~\citep{chamberlain2021grand,rusch2022graph,fang2021spatial,huang2024causal,luo2023hope,huang2023ggode,choi2022gread}. Despite their versatility, training GNDEs can be expensive: on large graphs, each
ODE solve requires repeated message passing over the entire graph, which can
scale poorly with network size~\citep{finzi2023stable,liu2025graph}. A natural
remedy is \emph{zero-shot size transfer}: train on a small source graph and reuse the learned parameters, without retraining, on larger, structurally similar target graphs.

Such transfer is plausible because graph dynamical systems often exhibit a
natural size transferability of their own: they are governed by \emph{local}
rules---each node evolves from its own state and those of its immediate
neighbors, with the same rule applied at every node regardless of the graph's
size. A heat, epidemic, or reaction--diffusion process, for instance, follows
the same node-level rule on a graph with one hundred nodes as on a graph with
ten thousand; only which nodes are connected changes. The graphon
formalism~\citep{lovasz2012large} makes this scale-invariance precise: it
represents graphs of different sizes as samples of a common continuum kernel, so
one can ask whether the finite-graph dynamics converge to a limiting
\emph{graphon dynamical system}. This has been established for nonlinear heat
equations on dense deterministic graph sequences~\citep{medvedev2014nonlinear},
$\tW$-random graphs~\citep{medvedev2019nonlinear}, sparse sampling
regimes~\citep{kaliuzhnyi2022sparse}, collective dynamics with time-varying
coupling~\citep{ayi2021mean}, and interacting particle
systems~\citep{bayraktar2023graphon}, where the governing rule lives on the
limiting graphon rather than on any finite graph.

As flexible data-driven surrogates for the classical graph dynamical systems~\citep{poli2019graph,berndt2025permutation,chamberlain2021grand,choi2022gread},
GNDEs raise a natural question: does this graphon-based size transfer extend
from classical, prescribed dynamics to learned GNDE surrogates? If the GNDE and
the target dynamics are compatible with the same graphon, parameters
$\hat{\theta}$ learned on a small source graph $\mathcal{G}_{\mathrm{small}}$
should remain accurate when the model is evaluated, without retraining, on a
larger target graph $\mathcal{G}$ from the same family. To organize the
analysis, we apply the triangle-inequality decomposition from our companion
work~\citep{yan2025convergence}:
\begin{equation}\label{eq:triangle_decomposition}
\bigl\| \Phi_{\mathcal{G}}(\hat{\theta}) - S_{\mathcal{G}} \bigr\|
\leq
\underbrace{\bigl\| \Phi_{\mathcal{G}}(\hat{\theta}) - \Phi_{\mathcal{G}_{\mathrm{small}}}(\hat{\theta}) \bigr\|}_{\text{(I) model transfer error}}
+ \underbrace{\bigl\| \Phi_{\mathcal{G}_{\mathrm{small}}}(\hat{\theta}) - S_{\mathcal{G}_{\mathrm{small}}} \bigr\|}_{\text{(II) training error}}
+ \underbrace{\bigl\| S_{\mathcal{G}_{\mathrm{small}}} - S_{\mathcal{G}} \bigr\|}_{\text{(III) graph discretization error}}.
\end{equation}
Here $S_{\mathcal{G}}$ is the exact solution of the target system on
$\mathcal{G}$, $\Phi_{\mathcal{G}}(\hat{\theta})$ is the GNDE surrogate
evaluated on $\mathcal{G}$, and $\hat{\theta}$ is trained on the source graph
$\mathcal{G}_{\mathrm{small}}$. Term~(II) is the source-graph training error,
and Term~(III) is the size-transfer error of the classical graph dynamics,
precisely the error controlled by existing graphon-limit
results~\citep{medvedev2014nonlinear,medvedev2019nonlinear}. This work focuses
on Term~(I), the size-transfer error of the learned GNDE surrogate, together
with its gradient analogue, under random and sparse graph sampling.

\paragraph{Contributions.} 

Building on~\citet{yan2025convergence}, which established trajectory-wise
convergence for \emph{deterministic} graph sequences, we study the size
transferability of GNDEs on \emph{random sparse} graphs. Our random-graph
analysis adopts the sparse sampling setting in
\citet{keriven2020convergence}, developed for size transferability of GNNs. In
this setting, we establish quantitative convergence bounds for finite-graph
GNDEs and their adjoint systems toward their Graphon-NDE limits, and further
analyze the temporal discretization and DTO--OTD consistency underlying
practical GNDE training. The resulting bounds make explicit how transfer and
discretization depend on graph sparsity and architectural quantities, including
activation smoothness and polynomial filter norms. In particular, convergence results for forward trajectories, hidden-state
gradients, and parameter gradients require progressively stronger activation
regularity, in line with recent observations on activation smoothness for Neural
ODEs~\citep{gao2025global}. We itemize our contributions as follows.
\begin{itemize}
    \item \textbf{Graphon limits.} Based on the analysis of \citet{yan2025convergence}, we consider Graphon Neural Differential Equations (Graphon-NDEs) as the infinite-node limit of finite-graph GNDEs and adapt the corresponding well-posedness result under suitable regularity assumptions (Theorem~\ref{theorem: well-posedness}, Section~\ref{sec: Graphon-NDE}). We further derive the infinite-node adjoint system, which serves as the graphon limit of the adjoint GNDEs for hidden-state and parameter gradients, and prove its well-posedness (Theorem~\ref{theorem: Well-posedness of Adjoint System}, Section~\ref{sec: Graphon-NDE}).

    \item \textbf{Forward and adjoint convergence.} We establish a forward convergence rate for GNDE solutions on sparse random
graphs, showing that they converge with high probability to the Graphon-NDE
solution uniformly in time
(Theorem~\ref{theorem: GNDE -> Graphon-NDE},
Section~\ref{sec: GNDE to Graphon-NDE}). In addition, we provide convergence
bounds for the corresponding adjoint systems governing hidden-state and
parameter gradients
(Theorems~\ref{theorem: MSE Y Yn adjoint equation final}--\ref{theorem: z - zn leq TQh},
Section~\ref{sec: GNDE to Graphon-NDE}). Together, these results establish the stability of GNDEs under graph-size
transfer at both the forward-prediction and adjoint-gradient levels, providing
quantitative control of the transfer error of GNDEs (i.e., Term~(I)
in~\eqref{eq:triangle_decomposition}), and its gradient analogue under random
and sparse graph sampling.

    \item \textbf{Temporal discretization and DTO--OTD consistency.}
We establish temporal discretization error bounds showing that residual GNNs
obtained by Euler discretization converge to their continuous-time GNDE limits
for forward trajectories, hidden-state adjoints, and parameter gradients
(Theorems~\ref{theorem: discretized gnde to gnde}--\ref{theorem: discretized adjoint gnde to adjoint gnde parameters},
Section~\ref{sec: ResGNN to GNDE}). Building on these estimates, we prove that
the DTO and OTD training paradigms produce asymptotically consistent gradients
under explicit Euler discretization: the hidden-state gradient discrepancy
vanishes at order $\mathcal{O}(1/M)$, while the local parameter-gradient
discrepancy vanishes at order $\mathcal{O}(1/M^2)$, up to sparsity and logarithmic factors (Theorems~\ref{theorem: hidden states gradients DTO and OTD}--\ref{theorem: parameter gradients DTO and OTD},
Section~\ref{sec: DTO vs OTD}). Our quantitative DTO--OTD consistency rates for continuous-depth graph neural networks complement prior analyses of
DTO and OTD training for Neural ODEs~\citep{onken2020discretize}. In particular,
the results show that, under explicit Euler discretization, the gradients
obtained by differentiating through the discrete solver are asymptotically
consistent with those obtained from the continuous adjoint as $M\to\infty$.

    \item \textbf{Numerical experiments.} We validate the theoretical convergence rates on the hierarchical stochastic block model (HSBM) and tent graphons in Section~\ref{sec: validation of theory}, including graph-size convergence of the forward trajectory and adjoint gradients, temporal discretization error, and DTO--OTD gradient discrepancy. We then test the size transferability of GNDEs as learned surrogates in Section~\ref{subsec: exp transfer}: a GNDE trained on a small source graph is deployed zero-shot on independently sampled larger target
graphs. The experiments cover four systems with qualitatively distinct
dynamical behaviors: hierarchical smoothing (linear heat on HSBM graphon), traveling wavefronts (Fisher--KPP on a circular threshold graphon), degree-stratified endemic states (SIS on a rank-1 power-law graphon), and consensus contraction (bounded-confidence consensus on the tent graphon). Across all four systems, the trained GNDEs accurately fit the source dynamics and achieve small transfer error on independently sampled larger graphs.
\end{itemize}

\paragraph{Related work.}

\emph{1) Size transferability of GNNs.} Size transferability of discrete-layer GNNs has been studied across related
graph-limit settings~\citep{ruiz2020graphon,levie2021transferability,
maskey2023transferability,keriven2020convergence,le2024limits,herbst2025higher}. These works provide \emph{layer-wise} guarantees: each hidden layer's output converges as graph size grows. Our forward convergence result (Theorem~\ref{theorem: GNDE -> Graphon-NDE}) generalizes~\citet{keriven2020convergence} from discrete-layer GNNs to continuous-depth GNDEs, where a stronger notion of \emph{trajectory-wise} convergence is required: the
entire solution path must converge uniformly in time, rather than only the outputs of finitely many layers. A key technical step is to extend the spatial chaining argument of~\citet{keriven2020convergence} to a joint node--time domain, which yields \emph{uniform-in-time} control of the random-graph error along the continuous GNDE trajectory. We additionally establish adjoint convergence and quantify the DTO--OTD discrepancy, which do not arise in the discrete-layer setting and are essential for
understanding the transferability of gradient information in training.

\emph{2) Graphon limits of graph dynamical systems.}
A related line of work studies continuum limits of prescribed dynamics on graph
sequences, including nonlinear heat equations, collective dynamics, interacting
particle systems, and microscopic-to-macroscopic limits
\citep{medvedev2014nonlinear,medvedev2019nonlinear,kaliuzhnyi2022sparse,
ayi2021mean,bayraktar2023graphon,paul2022microscopic}. Our analysis shares
standard tools with this literature, such as Banach fixed-point arguments for
well-posedness and Gr\"{o}nwall-type estimates for stability and convergence.
The main difference is that classical graph dynamical systems are driven by
fixed or prescribed interaction rules, whereas the vector field of our GNDE
surrogate is learned through a GNN parameterization. Consequently, the
convergence bounds must track architectural quantities such as activation
regularity, convolutional filter norms, depth, and width. Moreover, while
classical graphon-limit results mainly concern forward trajectories, we also analyze the associated adjoint systems, which are needed to understand gradient computation and gradient-level stability under graph-size transfer.

\emph{3) Discretization and adjoints for Neural ODEs.}
The numerical treatment of adjoints is a central issue in Neural ODE training.
The original Neural ODE framework computes gradients using the adjoint
sensitivity method~\citep{chen2018neural}. Subsequent work has shown that
discretization choices can substantially affect gradient accuracy.
\citet{gholami2019anode} highlighted numerical issues in adjoint-based training
and proposed ANODE to improve gradient accuracy with controlled memory cost.
\citet{onken2020discretize} compared DTO and OTD training strategies for Neural
ODEs in non-graph settings. \citet{xu2023correcting} showed that certain
higher-order discretizations can produce gradient oscillations under automatic
differentiation. Recent work further highlights the role of activation
smoothness in forward and backward Neural ODE analysis~\citep{gao2025global}.
Our work studies these issues for GNDEs: under explicit Euler discretization, we
prove quantitative DTO--OTD consistency rates showing that gradients obtained by backpropagating through the Euler-discretized GNDE asymptotically agree with those obtained by first deriving the continuous adjoint equations and then discretizing them.

\paragraph{Organization.}
Section~\ref{sec: preliminaries} introduces the notation and preliminaries,
including graphons, function spaces, and G\^ateaux derivatives.
Section~\ref{sec: GNDE} formulates spectral GNNs, GNDEs, and Graphon-NNs.
Section~\ref{sec: training of GNDEs} reviews the training paradigms for GNDEs, i.e., DTO and OTD.
Section~\ref{sec: Graphon-NDE} introduces Graphon-NDEs and adjoint Graphon-NDEs
and establishes their well-posedness. Section~\ref{sec: GNDE to Graphon-NDE}
proves uniform-in-time convergence of the forward trajectories and of the
backward gradients with respect to hidden states and parameters.
Section~\ref{sec: ResGNN to GNDE} analyzes temporal discretization errors for
forward trajectories, hidden-state gradients, and parameter gradients.
Section~\ref{sec: DTO vs OTD} establishes the asymptotic consistency of DTO and OTD under explicit Euler discretization. Section~\ref{sec: experiments}
presents numerical experiments.

\section{Notation and Preliminaries}\label{sec: preliminaries}

\subsection{Basic notations}
Let $\mathbb{N}:=\{1,2,\ldots\}$ and $\mathbb{R}^+:=[0,\infty)$. For $n\in\mathbb{N}$, let $[n]:=\{1,2,\ldots,n\}$ and $\mathbb{Z}_n:=\{0,1,\ldots,n-1\}$. For vectors, $\|\cdot\|_2$ denotes the Euclidean norm. For matrices, $\|\cdot\|_2$, $\|\cdot\|_{\mathrm{F}}$, and $\|\cdot\|_{\max}$ denote the spectral, Frobenius, and max (entry-wise) norms, respectively. For tensors, $\|\cdot\|_{\max}$ similarly denotes the maximum absolute entry. We write $I := [0, 1]$ for the unit interval and $I^2 := I \times I$. We write \(A\lesssim B\) if there exists an absolute constant \(c>0\) such that
\(A\leq cB\).

\subsection{Graphs, graphons, and graph features}\label{subsec: graphs and graphons}

A graph of $n$ nodes is denoted by $\mathcal{G} = \langle V, E, \mW \rangle$, where $V$ is the set of $n$ nodes, $E \subseteq V \times V$ is the set of edges, and $\mW = [\mW_{ij} \in I : i, j \in [n]]$ is the symmetric adjacency matrix with $\mW_{ij} = \mW_{ji} \neq 0$ if and only if $(i,j) \in E$. The degree matrix $\mD$ is the diagonal matrix with entries $\mD_{ii} := \sum_j \mW_{ij}$, and the \emph{symmetric normalized adjacency} is $\matL := \mD^{-1/2}\mW\mD^{-1/2}$, with the convention $[\mD^{-1/2}]_{ii} := 0$ when $\mD_{ii} = 0$. (The associated normalized Laplacian is $\mI - \matL$; we work with $\matL$ directly because the spectral filters introduced below are polynomials in $\matL$.) We denote the graph node feature matrix as $\mZ\in \mathbb{R}^{n\times F}$, where $F$ is the number of features.

A \emph{graphon} is a bounded, symmetric, measurable function $\tW: I^2 \to I$. Graphons serve both as limit objects of convergent graph sequences in the sense of homomorphism density~\citep{lovasz2012large} and as generative models for random graphs. Given a graphon $\tW$, a probability distribution $P$ on $I$ and sparsity parameter $\alpha_n\in(0,1]$, the \emph{$\tW$-random graph} $\mathcal{G}$ of $n$ nodes is generated as follows: nodes $u_i\in I$, $i\in[n]$, are independently sampled from the distribution $P$;
conditional on these nodes, edges are sampled independently from a Bernoulli distribution, i.e., nodes $i$ and $j$ are connected with probability
$\alpha_n \tW(u_i,u_j)$ for $i<j$. A $\tW$-random graph provides a general model for random graphs, including important examples such as Erdős–Rényi graphs and stochastic block models.

\subsection{Function spaces}\label{subsec: function spaces}

We denote by $\spaceb$ the Banach space of bounded scalar-valued functions on $I$ with norm $\|Z\|_{\spaceb} := \sup_{u \in I} |Z(u)|$, and by $\spaceB$ the space of bounded vector-valued functions $\mathcal{Z} = [\mathcal{Z}_f : f \in [F]]: I \to \mathbb{R}^{1 \times F}$ with norm $\|\mathcal{Z}\|_{\spaceB} := (\sum_{f \in [F]} \|\mathcal{Z}_f\|_{\spaceb}^2)^{1/2}$. Given a probability measure $P$ on $I$, we define $\spaceltwo$ as the space of square-integrable scalar functions with norm defined by $\|Z\|_{\spaceltwo} := (\int_I |Z(u)|^2\, dP(u))^{1/2}$ and inner product $\langle Z, \widetilde{Z} \rangle_{\spaceltwo} := \int_I Z(u)\widetilde{Z}(u)\, dP(u)$. The vector-valued counterpart is $\spaceLTWO$ with norm $\|\mathcal{Z}\|_{\spaceLTWO} := (\sum_{f \in [F]} \|\mathcal{Z}_f\|_{\spaceltwo}^2)^{1/2}$. 

For an interval $\Omega\subseteq \mathbb{R}^+$, we write $C(\Omega;\spaceB)$ for the space of continuous vector-valued functions $\mathcal{X}=[\mathcal{X}_f:f\in[F]]: I \times \Omega \to \mathbb{R}^{1 \times F}$ satisfying that for each $t\in\Omega$, $\mathcal{X}(\cdot, t) \in \spaceB$; for each $f\in[F]$ and $u\in I$, $\mathcal{X}_f(u,\cdot)$ is continuous on $\Omega$. The norm is defined by $\|\mathcal{X}\|_{C(\Omega;\spaceB)} := \sup_{t\in\Omega} \|\mathcal{X}(\cdot,t)\|_{\spaceB}$. By $C^1(\Omega;\spaceB)$ we denote a subspace of $C(\Omega;\spaceB)$, in which the vector-valued function $\mathcal{X}$ additionally satisfies that for each $f\in[F]$ and $u\in I$, $\mathcal{X}_f(u,\cdot)$ is continuously differentiable. 

\subsection{G\^ateaux derivatives}\label{subsec: gateaux}

Let $\mathcal{H}_1$ and $\mathcal{H}_2$ be Hilbert spaces. An operator $\mathcal{F}: \mathcal{H}_1 \to \mathcal{H}_2$ is G\^ateaux differentiable at $u \in \mathcal{H}_1$ if the limit
\[
\gd{\mathcal{F}}{u}(h) := \lim_{\epsilon \to 0} \frac{\mathcal{F}(u + \epsilon h) - \mathcal{F}(u)}{\epsilon},
\]
exists for all $h \in \mathcal{H}_1$ and $h \mapsto \gd{\mathcal{F}}{u}(h)$ is a bounded linear operator \citep{ekeland1999convex}. Its Hilbert-space adjoint $\big(\gd{\mathcal{F}}{u}\big)^*: \mathcal{H}_2 \to \mathcal{H}_1$ satisfies
\begin{equation*}%
\iprod{\gd{\mathcal{F}}{u}(h),\, v}_{\mathcal{H}_2} = \iprod{h,\, \parens{\gd{\mathcal{F}}{u}}^*(v)}_{\mathcal{H}_1}, \quad \forall\, h \in \mathcal{H}_1,\ v \in \mathcal{H}_2.
\end{equation*}
When $\mathcal{H}_2 = \mathbb{R}$, the operator $\gd{\mathcal{F}}{u}$ is a continuous linear functional on $\mathcal{H}_1$, and the Riesz representation theorem gives
$%
\gd{\mathcal{F}}{u}(h) = \langle \nabla_u \mathcal{F},\, h \rangle_{\mathcal{H}_1}$, for all $h\in\mathcal{H}_1$, where $\nabla_u \mathcal{F} \in \mathcal{H}_1$ is the gradient of $\mathcal{F}$ at $u$. For a composition $\mathcal{F} \circ \widetilde{\mathcal{F}}$, it follows from the chain rule that
\begin{equation}\label{eq: chain rule gradient}
\nabla_u(\mathcal{F} \circ \widetilde{\mathcal{F}}) = \parens{\gd{\widetilde{\mathcal{F}}}{u}}^*(\nabla_{\widetilde{u}} \mathcal{F}),
\end{equation}
where $\widetilde{u} := \widetilde{\mathcal{F}}(u)$. This identity is the foundation of gradient backpropagation in both discrete and continuous architectures.

\section{Graph Neural Differential Equations}\label{sec: GNDE}

We define the architecture of spectral Graph Neural Networks (GNNs) and their continuous-depth extension, the Graph Neural Differential Equations (GNDEs).

\subsection{Spectral graph convolutions}\label{subsec: spectral gcn}

Graph neural networks operate by alternating graph convolution and pointwise nonlinear activation. We focus on \emph{spectral} (polynomial filter) convolutions. Given a graph $\mathcal{G}$ with symmetric normalized adjacency $\matL$, the graph convolution is the linear operator $\phi: \spaceRFFK \times \spaceRnF \to \spaceRnF$ defined by
\begin{equation}\label{def: operator phi}
\phi(\vh, \VarGNDE) := \left[\sum_{g=1}^{F} \sum_{k=0}^{K-1} \vh_{fgk}\, \matL^k \VarGNDE_g : f \in [F]\right],\quad  \vh \in \spaceRFFK,\VarGNDE \in \spaceRnF,
\end{equation}
which aggregates information from the node itself and its neighbors up to $K-1$ hops away. The detailed properties of the graph convolutional operators can be found in Appendix~\ref{appendix: graph convolutional operators}. The output of the $\ell$-th GNN layer is
\begin{equation}\label{def: updating formula gnn}
\tildeVarGNDE\myl := \phi(\vh\myl, \VarGNDE\myl[\ell-1]), \qquad \VarGNDE\myl := \sigma(\tildeVarGNDE\myl),
\end{equation}
where $\sigma$ is an element-wise activation function. A GNN with $L$ layers is compactly written as
\begin{equation}\label{def: GNN multi layer}
\VarGNDE^{(L)} := \GNN(\VarGNDE^{(0)}; \matL, \tH),
\end{equation}
where $\tH = [\vh_{fgk}\myl : f, g \in [F],\, k \in \mathbb{Z}_K,\, \ell \in [L]]$ collects all trainable filter coefficients.

\subsection{Formulation of GNDEs}

In GNDEs, the GNN output serves as the velocity field of an ODE, i.e.,
\begin{align}\label{GNDE}
\begin{split}
\dt \VarGNDE(t) &= \GNN\big(\VarGNDE(t);\matL,\tH(t)\big), \\
\VarGNDE(0) &= \mZ \in \spaceRnF,
\end{split}
\end{align}
where $\tH(t)$ denotes the collection of \emph{time-varying} trainable parameters
\begin{equation}\label{parameterized H(t) filters}
\tH(t) := \left\{\vh_{fgk}\mylt : f, g \in [F],\ k \in \mathbb{Z}_K,\ \ell \in [L] \right\}, \quad t \in [0,T].
\end{equation}
Time-varying parameters enable GNDEs to capture nonautonomous dynamics, a capability absent in standard discrete-layer GNNs. If $\GNN(\VarGNDE(t);\matL,\tH(t))$ is Lipschitz continuous in $\VarGNDE$ uniformly over $t \in [0,T]$ and varies continuously in time, then GNDE~\eqref{GNDE} admits a unique solution on $[0,T]$ for every initial feature matrix $\mZ \in \spaceRnF$.

\subsection{Graphon convolutions and Graphon Neural Networks}\label{subsec: graphon nn}

The continuum analogue of the spectral GNN is the \emph{Graphon Neural Network} (Graphon-NN). Given a graphon $\tW$ with probability distribution $P$, define the degree function $d_P(u) := \int_I \tW(u,v)\, dP(v)$ and the symmetric normalized kernel $\tL_P(u,v) := \tW(u,v)/\sqrt{d_P(u)\, d_P(v)}$. The associated integral operator $\opLP: \spaceltwo \to \spaceltwo$ (the graphon analogue of $\matL$) acts as
\[
(\opLP X)(u) := \int_I \tL_P(u,v)\, X(v)\, dP(v),\quad X\in \spaceltwo.
\]
The layer-wise graphon convolution operator $\Phi: \spaceRFFK \times \spaceLTWO \to \spaceLTWO$ is defined by
\begin{equation}\label{def: operator Phi}
\Phi(\vh, \VarGraphonNDE) := \left[\sum_{g=1}^{F} \sum_{k=0}^{K-1} \vh_{fgk} \opLP^k \VarGraphonNDE_g : f \in [F]\right],\quad \vh\in\mathbb{R}^{F\times F\times K},\VarGraphonNDE\in\spaceLTWO, 
\end{equation}
with detailed properties in Appendix~\ref{appendix: graphon convolutional operators}. The output of the $\ell$-th Graphon-NN layer is
\begin{equation}\label{def: updating formula Graphon-NN}
\tildeVarGraphonNDE\myl := \Phi(\vh\myl, \VarGraphonNDE\myl[\ell-1]), \qquad \VarGraphonNDE\myl = \sigma(\tildeVarGraphonNDE\myl),
\end{equation}
and the Graphon-NN with $L$ layers maps input features $\VarGraphonNDE^{(0)}$ to
\begin{equation}\label{def: Graphon-NN multi layer}
\VarGraphonNDE^{(L)} := \GraphonNN(\VarGraphonNDE^{(0)}; \opLP, \tH).
\end{equation}

\section{Training of GNDEs}\label{sec: training of GNDEs}
As in Neural ODEs, GNDEs are used in supervised learning settings where the loss is defined on the terminal state $\VarGNDE(T)$ of the dynamics. Training amounts to finding
time-dependent parameters $\tH(\cdot)$ that minimize the loss, where the
dependence of $\VarGNDE(T)$ on $\tH(\cdot)$ is governed by the GNDE \eqref{GNDE}. Since the parameters $\tH(t)$ influence the loss only through the hidden-state trajectory $\VarGNDE(t)$, the gradient computation decomposes into two stages: 
\begin{enumerate}
\item[\textbf{(S1)}] compute the gradient of the loss with respect to the hidden-state trajectory;
\item[\textbf{(S2)}] use hidden-state gradients to compute the gradient with respect to the parameters.
\end{enumerate}
We describe two paradigms, discretize-then-optimize (DTO) and optimize-then-discretize (OTD), that both follow this structure but differ in the order of discretization and differentiation. Throughout the paper, we use the superscript $[\cdot]$ for quantities on a discrete time grid and $(\cdot)$ for their continuous-time counterparts.

\subsection{Discretize-Then-Optimize (DTO).}
In DTO, one first discretizes the continuous-time GNDE \eqref{GNDE} and then differentiates the resulting discrete objective via backpropagation. We employ the explicit Euler method with step size $\kappa := T/M$ on the uniform time grid $t_m := m\kappa$, $m \in \mathbb{Z}_{M+1}$. The discretized forward dynamics is
\begin{equation}\label{eq: Euler's method for Xn}
\begin{aligned}
\VarGNDE^{[m]} &:= \VarGNDE^{[m-1]} + \kappa\,
\GNN\parens{\VarGNDE^{[m-1]};\matL,\tH(t_{m-1})},\quad m\in[M],\\
\VarGNDE^{[0]}&:=\mZ,
\end{aligned}
\end{equation}
where $\VarGNDE^{[m]}$ denotes the approximation to $\VarGNDE(t_m)$ at the $m$-th time step. The recursion \eqref{eq: Euler's method for Xn} defines a residual GNN with $M$ blocks.
 
\paragraph{(S1) Hidden-state gradients.} Denote by $\VarDTOGradient^{[m]} := \nabla_{\VarGNDE^{[m]}} \loss \in \spaceRnF$ the gradient of the loss with respect to the $m$-th hidden state. We obtain the backward recursions of $\VarDTOGradient^{[m]}$ by differentiating \eqref{eq: Euler's method for Xn} and applying \eqref{eq: chain rule gradient} with respect to $\VarGNDE^{[m-1]}$, which gives
\begin{equation}\label{eq: discretized Gn}
\begin{aligned}
\VarDTOGradient^{[M]}&=\nabla_{\mX(T)}\loss,\\
\VarDTOGradient^{[m-1]} &= \VarDTOGradient^{[m]}
+ \kappa\,\DX^{[m-1]}\parens{\VarDTOGradient^{[m]}},\quad m=M,\ldots,1,
\end{aligned}
\end{equation}
where the adjoint operator $\DX^{[m]}: \spaceRnF \to \spaceRnF$ is defined by 
\begin{equation}\label{def: operator DX[m]}
\DX^{[m]}:=\parens{\frac{\delta \GNN\parens{\VarGNDE^{[m]};\matL,\tH(t_{m})}}
{\delta \VarGNDE^{[m]}}}^*.
\end{equation}
 
\paragraph{(S2) Parameter gradients.} Similarly, differentiating \eqref{eq: Euler's method for Xn} and applying \eqref{eq: chain rule gradient} with respect to $\vh\mylt[\ell][t_{m-1}]$ gives the parameter gradient. Given $\{\VarDTOGradient^{[m]}\}_{m=0}^M$ from \eqref{eq: discretized Gn}, we have
\begin{equation}\label{grad of h dicretize-then-optimize}
\kappa\,\Dh^{[\ell,m]}\parens{\VarDTOGradient^{[m+1]}}
= \text{DTO-Gradient of the loss w.r.t. }\vh\mylt[\ell][t_m],
\end{equation}
where the adjoint operator $\Dh^{[\ell,m]}: \spaceRnF \to \spaceRFFK$ is defined by 
\begin{equation}\label{def: operator Dh[ell,m]}
\Dh^{[\ell,m]}:=\parens{\frac{\delta \GNN\parens{\VarGNDE^{[m]};\matL,\tH(t_{m})}}
{\delta \vh\mylt[\ell][t_m]}}^*. 
\end{equation}
 
DTO computes \emph{exact} gradients of the discretized objective and is straightforward
to implement via automatic differentiation, but it typically requires storing
all intermediate states $\{\VarGNDE^{[m]}\}_{m=0}^M$ during backpropagation.

\subsection{Optimize-Then-Discretize (OTD).}
In OTD, one first derives the gradient equations at the continuous level and then discretizes them numerically \citep{kidger2022neural}.

\paragraph{(S1) Hidden-state gradients.} Define the continuous adjoint state $\VarGNDEHiddenState(t)
:= \nabla_{\VarGNDE(t)}\loss \in \spaceRnF$, the gradient of the loss with respect to the hidden state at time $t$. The GNDE \eqref{GNDE} defines a continuous-time flow $\VarGNDE(t)$ whose infinitesimal change
is governed by the GNN vector field. Applying \eqref{eq: chain rule gradient} to this flow, as in the DTO derivation, yields the following backward-in-time adjoint equation
\begin{equation}\label{adjoint GNDE: Y}
\begin{aligned}
\VarGNDEHiddenState(T) &= \nabla_{\VarGNDE(T)}\loss,\\
\dt \VarGNDEHiddenState(t) &=
-\mathcal{D}_X^{(t)}\parens{
\VarGNDEHiddenState(t)},\quad \text{where}\quad \mathcal{D}_X^{(t)}:=\parens{\frac{\delta \GNN(\VarGNDE(t);\matL,\tH(t))}{\delta \VarGNDE(t)}}^*. 
\end{aligned}
\end{equation}

\paragraph{(S2) Parameter gradients.} Similarly, applying \eqref{eq: chain rule gradient} with respect to $\vh\mylt$, expresses the parameter gradient directly in terms of the hidden-state adjoint as
\begin{equation}\label{adjoint GNDE: z}
\VarGNDEParameter\mylt = 
\mathcal{D}_h\mylt
(\VarGNDEHiddenState(t)),\quad\text{where}\quad \mathcal{D}_h\mylt:=\parens{\frac{\delta \GNN(\VarGNDE(t);\matL,\tH(t))}{\delta \vh\mylt}}^*. 
\end{equation}
In practice, \eqs \eqref{adjoint GNDE: Y} and \eqref{adjoint GNDE: z} must be discretized. Using the explicit Euler method with the same step size $\kappa = T/M$ and time grid $\{t_m\}$ as in DTO, and approximating the continuous operators $\DX^{(t_m)}$ and $\Dh^{(\ell,t_m)}$ by their discrete counterparts $\DX^{[m]}$ and $\Dh^{[\ell,m]}$ yields
\paragraph{(S1, discretized).}
\begin{equation}\label{eq: discretized Yn}
\begin{aligned}
\VarGNDEHiddenState^{[M]}&:=\nabla_{\VarGNDE(T)}\loss,\\
\VarGNDEHiddenState^{[m-1]} &:= \VarGNDEHiddenState^{[m]}
+ \kappa\,\DX^{[m]}\!\big(\VarGNDEHiddenState^{[m]}\big),\quad m=M,\ldots,1.
\end{aligned}
\end{equation}

\paragraph{(S2, discretized).}
\begin{equation}\label{grad of h optimize-then-dicretize}
\begin{split}
\VarGNDEParameter\mylm&:=\mathcal{D}_h^{[\ell,m]}\parens{\VarGNDEHiddenState^{[m]}},\\
\kappa\VarGNDEParameter\mylm &= \text{OTD-Gradient of the loss w.r.t. }\vh\mylt[\ell][t_m].
\end{split}
\end{equation}
OTD can avoid storing the full forward trajectory
$\{\VarGNDE^{[m]}\}_{m=0}^M$ by recomputing or checkpointing the forward states,
but it incurs additional computational cost from backward adjoint integration
and produces gradients that depend on the chosen time discretization.

\subsection{OTD versus DTO.}
The DTO recursion \eqref{eq: discretized Gn} and the OTD recursion \eqref{eq: discretized Yn} share the same algebraic form but evaluate the adjoint operators at different points: DTO at the left endpoint $(\VarGNDE^{[m-1]},t_{m-1})$, OTD at the right endpoint $(\VarGNDE^{[m]},t_{m})$; Table~\ref{tab:DTO_vs_OTD} summarizes the adjoint operators used in DTO and OTD. The explicit forms of these operators can be found in Appendix~\ref{appendix: graph convolutional operators} (specifically, \eq{adjoint operator: GNDE to filters}). Under the backward-in-time Euler discretization with the same step size and time grid, these two recursions become directly comparable, though for higher-order solvers such coincidence generally fails~\citep{onken2020discretize}. Later in the paper, we analyze how these discrepancies behave under graph refinement and in the graphon limit, and establish conditions under which DTO and OTD become asymptotically equivalent.
 
\begin{table}[h]
    \centering
    \renewcommand{\arraystretch}{2.2}
    \resizebox{\textwidth}{!}{%
    \begin{tabular}{l|c|c}
    \toprule
     & \textbf{DTO} & \textbf{OTD} \\
    \midrule

    Hidden states &
    $\displaystyle \DX^{[m]}:=\parens{\frac{\delta \GNN\parens{\VarGNDE^{[m]};\matL,\tH(t_{m})}}{\delta \VarGNDE^{[m]}}}^*$ &
    $\displaystyle \DX\myt:=\parens{\frac{\delta \GNN\parens{\VarGNDE(t);\matL,\tH(t)}}{\delta \VarGNDE(t)}}^*$ \\

    \midrule

    Parameters &
    $\displaystyle \Dh^{[\ell,m]}:=\parens{\frac{\delta \GNN\parens{\VarGNDE^{[m]};\matL,\tH(t_{m})}}{\delta \vh\mylt[\ell][t_m]}}^*$ &
    $\displaystyle \Dh\mylt:=\parens{\frac{\delta \GNN\parens{\VarGNDE(t);\matL,\tH(t)}}{\delta \vh\mylt}}^*$ \\

    \midrule

    Evaluation & Left endpoint: $(\VarGNDE^{[m-1]}, t_{m-1})$ & Right endpoint: $(\VarGNDE^{[m]}, t_m)$ \\

    \bottomrule
    \end{tabular}%
    }
    \vspace{0.3cm}
    \caption{Adjoint operators for DTO and OTD.}
    \label{tab:DTO_vs_OTD}
\end{table}

\section{Graphon Neural Differential Equations}\label{sec: Graphon-NDE}

A central question motivating this work is whether GNDEs exhibit \emph{size transferability}: can a GNDE trained on a moderate-sized graph be deployed on a larger, structurally similar graph without retraining? To formalize this, we need an infinite-node reference object against which finite-graph GNDEs can be compared. In this section, we consider \emph{Graphon Neural Differential Equations} (Graphon-NDEs) as the natural infinite-node limit of GNDEs and establish their well-posedness. 

Recall that a graphon $\tW\in B(I^2)$ serves as a generative model for sequences of structurally similar graphs of increasing size~\citep{lovasz2012large}. Replacing the discrete normalized adjacency matrix $\matL$ in the GNDE \eqref{GNDE} by its graphon analogue $\opLP$, and letting node features become functions $\VarGraphonNDE(u,t)$ for the continuum nodes $u\in I$, we define the Graphon-NDE as
\begin{equation}\label{Graphon-NDE}
    \begin{split}
\frac{\partial}{\partial t} \VarGraphonNDE(u,t) &= \GraphonNN(\VarGraphonNDE(u,t);\opLP,\tH(t)),\\
\VarGraphonNDE(u,0) &= \VarGraphonNDEIV(u) \in \spaceB.
    \end{split}
\end{equation}
The Graphon-NDE \eqref{Graphon-NDE} is a nonlocal integro-differential equation on the graphon space, where the graphon neural network $\GraphonNN$ plays the same architectural role as the GNN in \eqref{GNDE} but operates on function-valued features rather than finite-dimensional node vectors.

For training, the same two-stage gradient structure \textbf{(S1)--(S2)} from Section~\ref{sec: training of GNDEs} carries over to the infinite-node setting. Let $\VarGraphonNDEHiddenState(u,t)$ and $\VarGraphonNDEParameter\mylt$ denote the gradients of the loss with respect to the hidden state $\VarGraphonNDE(u,t)$ and parameters $\vh\mylt$, respectively. In parallel to the adjoint GNDE equations \eqref{adjoint GNDE: Y}--\eqref{adjoint GNDE: z}, we introduce the following infinite-node adjoint equations. 

\paragraph{(S1) Hidden-state gradients.}
\begin{align}\label{adjoint Grahpon-NDE: Y}
\begin{split}
\frac{\partial}{\partial t} \VarGraphonNDEHiddenState(u,t) &= - \parens{\frac{\delta \GraphonNN}{\delta \VarGraphonNDE}}^*(\VarGraphonNDEHiddenState(u,t)),\\
\VarGraphonNDEHiddenState(u,T) &= \frac{\delta L}{\delta \VarGraphonNDE^{(L,T)}}(u) \in \spaceB.
\end{split}
\end{align}

\paragraph{(S2) Parameter gradients.}
\begin{align}\label{adjoint Graphon-NDE: H}
\VarGraphonNDEParameter\mylt &=\parens{\frac{\delta \GraphonNN}{\delta \vh\mylt}}^*(\VarGraphonNDEHiddenState(u,t))\in\spaceRFFK.
\end{align}
We note that the explicit form of adjoint operator appearing in \eqref{adjoint Grahpon-NDE: Y} and \eqref{adjoint Graphon-NDE: H} can be found in Appendix \ref{appendix: graphon convolutional operators} (specifically, \eq{adjoint operator: Graphon-NDE to filters}). We now state the assumptions needed for well-posedness. Unlike discrete-layer GNN architectures, which generate only finitely many hidden states during forward propagation, the continuous-depth structure of GNDEs and Graphon-NDEs evolves features through infinitely many intermediate states, forming a trajectory over a continuous time horizon. Ensuring the well-posedness of this trajectory, for both the forward dynamics and the backward adjoint equations, requires regularity conditions on the architecture, which we formalize as follows.

\begin{itemize}
    \item\phantomsection\label{AS0} \textbf{AS0.} The convolutional filters are \(\Liph\)-Lipschitz continuous in time, i.e., for all \(f, g \in [F]\), \(\ell \in [L]\), and \(k \in \mathbb{Z}_K\), $\left|\hltfgk[t_1] - \hltfgk[t_2]\right| \leq \Liph |t_1 - t_2|$, for all $t_1, t_2 \in \mathbb{R}^+$.
    
    \item\phantomsection\label{AS1} \textbf{AS1.} The activation function \(\sigma\) is $L_\sigma$-Lipschitz continuous with \(\sigma(0) = 0\).

    \item\phantomsection\label{AS2} \textbf{AS2.} The graphon degree function $d_P$ is bounded below, i.e., there exists $c_{\mathrm{min}}>0$ such that $d_P(u)>c_{\mathrm{min}}$ for all $u\in I$. 
\end{itemize}

Assumptions \ConvFilterLipschitz\ and \SigmaLipschitz\ impose regularity on the neural architecture: \ConvFilterLipschitz\ ensures that the time-varying filters evolve smoothly, and \SigmaLipschitz\ guarantees that the nonlinearity of the activation function does not amplify perturbations, and this property is satisfied by common activations such as ReLU, sigmoid, and tanh. As noted in \cite{gao2025global}, the smoothness of the activation function plays a critical role in ensuring well-posedness of both the forward and backward dynamics. Assumption \dPBoundedBelow\ is the nondegenerate-degree condition used in
\citet{keriven2020convergence}; it ensures that the graphon degree function do not approach
zero and that the symmetric normalized kernel $\tL_P$ is well
defined and bounded. The following result directly follows from \cite{yan2025convergence}. 

\begin{theorem}[Well-posedness of Graphon-NDE]\label{theorem: well-posedness}
Suppose that \ConvFilterLipschitz, \SigmaLipschitz, and \dPBoundedBelow\ hold. If \(\tW \in B(I^2)\) and \(\VarGraphonNDEIV \in \spaceB\), then for any \(T > 0\), there exists a unique solution \(\VarGraphonNDE \in C^1\parens{[0,T];\spaceB}\) to the Graphon-NDE \eqref{Graphon-NDE}.
\end{theorem}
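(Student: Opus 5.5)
We will recast \eqref{Graphon-NDE} as the integral equation
\[
\VarGraphonNDE(\cdot,t)=\VarGraphonNDEIV+\int_0^t \GraphonNN\big(\VarGraphonNDE(\cdot,s);\opLP,\tH(s)\big)\,ds
\]
and apply the Banach fixed-point theorem on the Banach space $\spaceC$ equipped with a weighted (Bielecki) norm $\sup_{t\in[0,T]} e^{-\beta t}\normB{\VarGraphonNDE(\cdot,t)}$, where $\beta>0$ is chosen large. The weighted norm lets us obtain the solution on the whole interval $[0,T]$ in one step, instead of patching together short-time solutions.

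\textbf{Step 1: the graphon convolution is bounded on $\spaceb$.} By \dPBoundedBelow\ and $0\le \tW\le 1$, the kernel satisfies $|\tL_P(u,v)|\le 1/c_{\min}$. Hence for $X\in\spaceb$,
\[
|(\opLP X)(u)|\le c_{\min}^{-1}\normb{X},
\]
using that $P$ is a probability measure. So $\opLP$ maps $\spaceb$ to itself with norm at most $c_{\min}^{-1}$, and $\opLP^k$ has norm at most $c_{\min}^{-k}$. This is the one place where I must check that $\opLP$, defined on $\spaceltwo$, makes sense pointwise on bounded measurable functions. It does, because $\tL_P(u,\cdot)$ is bounded and measurable, so the integral defining $(\opLP X)(u)$ exists for every $u$.

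\textbf{Step 2: the network is uniformly Lipschitz and continuous in time.} Using Step 1, for every filter $\vh$,
\[
\normB{\Phi(\vh,\VarGraphonNDE)-\Phi(\vh,\widetilde{\VarGraphonNDE})}\le C_\Phi\normMax{\vh}\,\normB{\VarGraphonNDE-\widetilde{\VarGraphonNDE}},
\]
where $C_\Phi$ depends only on $F$, $K$ and $c_{\min}$. By \ConvFilterLipschitz, the filters are continuous on $[0,T]$ and therefore bounded there; call the bound $\Ch$. Composing with the $L_\sigma$-Lipschitz activation \SigmaLipschitz\ over $L$ layers shows that $\GraphonNN(\cdot;\opLP,\tH(t))$ is Lipschitz on $\spaceB$, uniformly in $t$, with constant
\[
L_W:=(L_\sigma C_\Phi \Ch)^L .
\]
Since $\sigma(0)=0$, we also get the linear growth bound $\normB{\GraphonNN(\VarGraphonNDE)}\le L_W\normB{\VarGraphonNDE}$. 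The same layer-by-layer estimate, now perturbing the filters, gives
\[
\normB{\GraphonNN(\VarGraphonNDE;\tH(t_1))-\GraphonNN(\VarGraphonNDE;\tH(t_2))}\le C(\normB{\VarGraphonNDE})\,\Liph\,|t_1-t_2| .
\]
This is the continuity-in-time property we need.

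\textbf{Step 3: the fixed-point map is a contraction.} Let $\Gamma$ denote the right-hand side of the integral equation. To see that $\Gamma$ maps $\spaceC$ into itself, fix $f$ and $u$. The map $s\mapsto \GraphonNN(\VarGraphonNDE(\cdot,s))_f(u)$ is continuous, because of Step 2 and because continuity of $\VarGraphonNDE(u,\cdot)$ propagates through $\opLP$ by dominated convergence. Its integral is therefore continuous, indeed $C^1$, in $t$, and the growth bound keeps $\Gamma\VarGraphonNDE(\cdot,t)$ in $\spaceB$. For the contraction estimate,
\[
e^{-\beta t}\normB{\Gamma\VarGraphonNDE(t)-\Gamma\widetilde{\VarGraphonNDE}(t)}\le \frac{L_W}{\beta}\,\sup_s e^{-\beta s}\normB{\VarGraphonNDE(s)-\widetilde{\VarGraphonNDE}(s)} .
\]
Choosing $\beta=2L_W$ makes $\Gamma$ a contraction, which yields a unique fixed point. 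Any $C^1$ solution of \eqref{Graphon-NDE} is a fixed point of $\Gamma$, so uniqueness holds in $C^1([0,T];\spaceB)$. Conversely, the fixed point lies in $C^1$ by the fundamental theorem of calculus applied pointwise in $(u,f)$.

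\textbf{Main obstacle.} The delicate part is Step 3's continuity claim. $\spaceC$ is defined through pointwise-in-$u$ continuity in time rather than continuity in the sup norm, so I must check two things. First, that this space is complete under the sup-in-time $\spaceB$ norm: a uniform limit of functions continuous in $t$ is continuous, so this holds. Second, that $\opLP$ preserves pointwise continuity in time: this follows by dominated convergence, using the uniform bound $\sup_t\normB{\VarGraphonNDE(t)}<\infty$ together with the bounded kernel. Everything else is a routine repetition of the argument in \cite{yan2025convergence}, with the normalized kernel $\tL_P$ and the measure $P$ in place of the graphon and Lebesgue measure.
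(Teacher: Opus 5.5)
Your proposal is correct and is essentially the argument the paper relies on: the paper gives no self-contained proof here but defers to \cite{yan2025convergence}, and its sketch for the adjoint system (Theorem~\ref{theorem: Well-posedness of Adjoint System}) uses the same integral reformulation and Banach contraction argument. The only difference is that your Bielecki weight $e^{-\beta t}$ gives a global contraction on $[0,T]$ in one step, where the paper's sketch contracts on short time intervals and iterates. Your explicit checks (boundedness of $\opLP$ on bounded measurable functions, pointwise-in-time continuity via dominated convergence, completeness of $\spaceC$) supply details the paper leaves implicit.
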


For the adjoint system, we require a slightly stronger regularity condition on the activation function.
\begin{itemize}
\item\phantomsection\label{AS1diff} \textbf{AS1$'$.} The activation function $\sigma$ is differentiable.
\end{itemize}

\begin{theorem}[Well-posedness of Adjoint Graphon-NDE, proof in Appendix \ref{appendix: Boundedness and Well-posedness}]\label{theorem: Well-posedness of Adjoint System}
    Suppose that \ConvFilterLipschitz, \SigmaDiff, and \dPBoundedBelow\ hold. Let $T>0$ and $\VarGraphonNDE$ be the solution of the Graphon-NDE \eqref{Graphon-NDE}. There exists a unique solution \(\VarGraphonNDEHiddenState \in C^1\parens{[0,T];\spaceB}\) to the adjoint system \eqref{adjoint Grahpon-NDE: Y}.
\end{theorem}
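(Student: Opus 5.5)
The adjoint system \eqref{adjoint Grahpon-NDE: Y} is a \emph{linear} backward-in-time evolution equation in $\VarGraphonNDEHiddenState$, whose coefficient operator depends on time through the known forward trajectory $\VarGraphonNDE$ and the filters $\tH(t)$. The plan is therefore to rewrite it in integral form,
\[
\VarGraphonNDEHiddenState(\cdot,t)=\VarGraphonNDEHiddenState(\cdot,T)+\int_t^T \mathcal{D}^{(s)}\bigl(\VarGraphonNDEHiddenState(\cdot,s)\bigr)\,ds,
\qquad \mathcal{D}^{(s)}:=\parens{\frac{\delta \GraphonNN(\VarGraphonNDE(\cdot,s);\opLP,\tH(s))}{\delta \VarGraphonNDE}}^*,
\]
and apply a Banach fixed-point argument on $\spaceC$. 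I would use a weighted norm $\sup_t e^{-\lambda(T-t)}\|\cdot\|_{\spaceB}$, or equivalently iterate on short subintervals, in the same way Theorem~\ref{theorem: well-posedness} is obtained from \citet{yan2025convergence}.

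The first step is to make $\mathcal{D}^{(s)}$ explicit via the backpropagation formula in Appendix~\ref{appendix: graphon convolutional operators}. It is a composition, over layers $\ell=L,\dots,1$, of two kinds of maps. One is multiplication by $\sigma'(\tildeVarGraphonNDE^{(\ell)}(\cdot,s))$. The other is the adjoint graphon convolution $\Phi(\vh^{(\ell,s)},\cdot)^*$, which is again a polynomial in the self-adjoint operator $\opLP$ with transposed filter coefficients.

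The second step is to show that $\mathcal{D}^{(s)}$ maps $\spaceB$ into itself with operator norm bounded uniformly in $s\in[0,T]$. Under \dPBoundedBelow, the kernel satisfies $|\tL_P|\le c_{\min}^{-1}$, so $\opLP$ is bounded from $\spaceB$ to $\spaceB$ in sup norm. The filters are continuous, hence bounded on $[0,T]$ by \ConvFilterLipschitz. The factor $\sigma'$ is bounded: I would read \SigmaDiff\ together with \SigmaLipschitz, giving $|\sigma'|\le L_\sigma$. Combining these yields
\[
\|\mathcal{D}^{(s)}\|\le C:=\Bigl(L_\sigma\, F\,K \max_{\ell,s}\normMax{\vh^{(\ell,s)}}\,\max(1,c_{\min}^{-1})^{K-1}\Bigr)^{L}.
\]
Linearity then makes the fixed-point map Lipschitz with constant $C(T-t)$, so it is a contraction in the weighted norm once $\lambda>C$. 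This gives existence and uniqueness of $\VarGraphonNDEHiddenState\in\spaceC$. I would also check that the terminal datum lies in $\spaceB$; this is given in the statement.

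The main obstacle is regularity in time: upgrading to $C^1([0,T];\spaceB)$ requires $s\mapsto\mathcal{D}^{(s)}(\VarGraphonNDEHiddenState(\cdot,s))$ to be continuous pointwise in $u$. Continuity of the filters and of $s\mapsto\VarGraphonNDE(\cdot,s)$ in $\spaceB$ follows from Theorem~\ref{theorem: well-posedness}. From these, the pre-activations $\tildeVarGraphonNDE^{(\ell)}(u,s)$ are continuous in $s$. However, mere differentiability of $\sigma$ does not make $\sigma'$ continuous, so the factor $\sigma'(\tildeVarGraphonNDE^{(\ell)}(u,s))$ may fail to be continuous in $s$.

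I would handle this in two ways. First, I would try to interpret \SigmaDiff\ as continuous differentiability, consistent with the later assumption \SigmaDerivativeLipschitz. Failing that, I would settle for absolute continuity: the integrand is then bounded and measurable, the integral equation still holds, and $\VarGraphonNDEHiddenState$ is differentiable almost everywhere. Under continuity of $\sigma'$, the fundamental theorem of calculus applied pointwise in $u$ gives $\VarGraphonNDEHiddenState\in C^1$ and recovers \eqref{adjoint Grahpon-NDE: Y}.

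Uniqueness in $C^1$ follows from uniqueness of the integral equation. Alternatively, I would apply Gr\"onwall's inequality to the difference of two solutions, which solves the same linear equation with zero terminal data.
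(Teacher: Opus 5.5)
Your proposal is correct and follows the paper's own argument: an integral reformulation backward from $T$, a uniform-in-time bound on the adjoint derivative operator obtained from the layerwise backpropagation formula (the paper takes it from Proposition~\ref{proposition: norm of Y ell is bounded by norm of Y} with $\ell=0$, which, like you, implicitly uses $|\sigma'|\le L_\sigma$ from \SigmaLipschitz), and a contraction on short backward intervals, for which your weighted norm is an equivalent device. Your remark on time regularity goes beyond the paper's sketch, which does not address it: under \SigmaDiff\ alone $\sigma'$ need not be continuous, so the $C^1$ conclusion genuinely requires continuity of $\sigma'$ (e.g.\ \SigmaDerivativeLipschitz); without it one only gets a time-Lipschitz solution of the integral equation.
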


The well-posedness of both the forward Graphon-NDE and its adjoint system ensures that the infinite-node limit provides a well-defined reference for analyzing size transferability. In the following sections, we quantify how closely finite-graph GNDEs approximate this limit.

\section{Convergence Analysis: from GNDEs to Graphon-NDEs}\label{sec: GNDE to Graphon-NDE}

We now turn to the central question: as the number of nodes $n$ grows, do GNDE solutions converge to the Graphon-NDE solution? Crucially, because GNDEs are continuous-depth models, we require
\emph{trajectory-wise} convergence, which means convergence of the entire
solution trajectory over $[0,T]$, not merely convergence at finitely many layers. As discussed in \cite{yan2025convergence}, trajectory-wise convergence is important for both forward and backward propagation:
it ensures that the continuous-time evolution of node-features is consistent across graph sizes and is necessary
for controlling the accumulation of gradient errors along adjoint trajectories. 

To quantify the trajectory-wise discrepancy, we extend the sampled mean-squared error metric of \citet{keriven2020convergence} to the continuous-time GNDE setting. Given a set of $n$ distinct samples $U_n=\{u_j:j\in[n]\}$ in $I$, a sampling operator $\SamplingOperator:\spaceB\to\spaceRnF$ is defined by $[\SamplingOperator \mathcal{Z}]_{i,:}:=\mathcal{Z}(u_i)$, $i\in[n]$. For Graphon-NDE solution $\VarGraphonNDE$ and GNDE solution $\VarGNDE$, we define
\begin{equation*}
\MSE(\VarGraphonNDE(\cdot,t),\VarGNDE(t)):=\frac{1}{\sqrt{n}}\normF{\SamplingOperator(\VarGraphonNDE(\cdot,t))-\VarGNDE(t)}. 
\end{equation*}
In addition, we adopt the piecewise-Lipschitz graphon regularity assumption from
\citet{keriven2020convergence}.
\begin{itemize}
\item\phantomsection\label{AS3} \textbf{AS3.} The graphon $\tW$ is piecewise Lipschitz, namely there exist a positive constant $c_{\mathrm{Lip}}$, a positive integer $n_I$, and a partition $I_s$, $s\in[n_I]$ of $I$, such that for any $v\in I$ and $s\in[n_I]$, $\abs{\tW(u,v)-\tW(u',v)}\leq c_{\mathrm{Lip}}|u-u'|$, for all $u,u'\in I_s$.
\end{itemize}

\subsection{Convergence of forward trajectory}

\begin{theorem}[Proof in Appendix \ref{appendix: Infinite-node Convergence of GNDEs}]\label{theorem: GNDE -> Graphon-NDE}
    Suppose that \ConvFilterLipschitz, \SigmaLipschitz, \dPBoundedBelow, and \GraphonLipschitz\ hold. Let $\gamma_1,\gamma_2\in(0,1)$ with $2\gamma_1+\gamma_2<1$. Suppose that $n$ is large enough satisfying \eqref{eq: n large enough for matrix L - LUn} and sparsity level $\alpha_n$ satisfies \eqref{assumption: sparsity alpha n large enough}. Let $\VarGNDE$ and $\VarGraphonNDE$ be solutions of GNDE \eqref{GNDE} and Graphon-NDE \eqref{Graphon-NDE}, respectively, with initial values satisfying 
    \begin{equation}\label{eq: initial values of GNDE and Graphon-NDE equal}
        \mX(0) = \SamplingOperator(\VarGraphonNDE(\cdot,0)). 
    \end{equation}
   Then, with probability at least $1-2\gamma_1-\gamma_2$, 
    \begin{equation}\label{eq: GNDE -> Graphon-NDE}
        \supT\MSE(\VarGraphonNDE(\cdot,t),\VarGNDE(t)) = \mathcal{O}\parens{\frac{1}{\sqrt{\alpha_n n}}+\frac{\sqrt{\log(n_ILFK/\gamma_2)}+\sqrt{\log(n_I/\gamma_1)}}{\sqrt{n}}}. 
    \end{equation}
\end{theorem}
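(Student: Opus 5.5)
The plan is a Grönwall argument on the sampled error $E(t):=\SamplingOperator(\VarGraphonNDE(\cdot,t))-\VarGNDE(t)$, combined with a uniform-in-time concentration bound for the discrepancy between the graph and graphon operators evaluated along the \emph{deterministic} Graphon-NDE trajectory. Differentiating and using \eqref{eq: initial values of GNDE and Graphon-NDE equal}, we have $E(0)=0$ and
\begin{equation*}
\dt E(t)=\Big[\SamplingOperator\,\GraphonNN(\VarGraphonNDE(\cdot,t);\opLP,\tH(t))-\GNN(\SamplingOperator\VarGraphonNDE(\cdot,t);\matL,\tH(t))\Big]+\Big[\GNN(\SamplingOperator\VarGraphonNDE(\cdot,t);\matL,\tH(t))-\GNN(\VarGNDE(t);\matL,\tH(t))\Big].
\end{equation*}

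\emph{Stability term.} The second bracket is a stability term. Since $\|\matL\|_2\le 1$, \SigmaLipschitz\ and the boundedness of the filters on $[0,T]$ (which follows from \ConvFilterLipschitz) show that $\GNN(\cdot;\matL,\tH(t))$ is Lipschitz in Frobenius norm, with a constant $C_{\mathrm{Lip}}$ depending only on $L_\sigma,L,F,K$ and $\sup_t\normMax{\tH(t)}$, uniformly in $t$. Dividing by $\sqrt n$ and applying Grönwall then gives
\begin{equation*}
\supT\MSE\le T e^{C_{\mathrm{Lip}}T}\,\supT R(t),
\end{equation*}
where $R(t)$ is $n^{-1/2}$ times the Frobenius norm of the first bracket. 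The problem therefore reduces to bounding $\supT R(t)$ with high probability.

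\emph{Layer-wise transfer.} The first bracket is a GNN-versus-Graphon-NN error on the fixed input $\VarGraphonNDE(\cdot,t)$. This input is bounded in $\spaceB$ by Theorem~\ref{theorem: well-posedness}. It is also Lipschitz in $t$: $\partial_t\VarGraphonNDE$ is bounded because $\GraphonNN$ is bounded on bounded sets. Following \citet{keriven2020convergence}, I would telescope over layers. At each layer the error splits into three parts: (a) propagation of the previous layer's error through $\phi$ and $\sigma$, with a Lipschitz factor; (b) $\|\matL-\matLUn\|_2$, where $\matLUn$ is the normalized matrix built from the expected adjacency $\alpha_n\tW(u_i,u_j)$, multiplied by bounded norms; and (c) the Monte Carlo error $n^{-1/2}\normF{\opLUn^k\SamplingOperator\mathcal{Y}-\SamplingOperator\opLP^k\mathcal{Y}}$ for the intermediate graphon features $\mathcal{Y}$. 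Part (b) is independent of $t$ and is bounded by the sparse concentration result of \citet{keriven2020convergence}: with probability $1-\gamma_1$, and given \eqref{eq: n large enough for matrix L - LUn} and \eqref{assumption: sparsity alpha n large enough}, it is $\mathcal{O}((\alpha_n n)^{-1/2})$ plus a $\sqrt{\log(n_I/\gamma_1)/n}$ term coming from the degree estimates under \dPBoundedBelow. Because this event does not depend on $t$, it holds simultaneously for all times at no extra cost.

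\emph{Uniform-in-time Monte Carlo error (main obstacle).} Part (c) is where time-uniformity matters. For fixed $t$, $(\opLUn X)(u_i)$ is an empirical average of $\tL_P(u_i,u_j)X(u_j)$, so Hoeffding/Bernstein together with a union bound over $n$, $L$, $F$, $K$ gives the $\sqrt{\log(n_ILFK/\gamma_2)/n}$ rate. To make this uniform in $t$, I would extend the chaining argument to the joint index set $I\times[0,T]$. The relevant functions $(v,t)\mapsto\tL_P(u,v)\mathcal{Y}_g(v,t)$ are Lipschitz in $t$, because $\VarGraphonNDE$ is Lipschitz in $t$, the filters satisfy \ConvFilterLipschitz, and $\sigma$ is Lipschitz. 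They are also piecewise Lipschitz in $u$ by \GraphonLipschitz\ and \dPBoundedBelow. Covering $[0,T]$ by an $\varepsilon$-net with $\varepsilon\sim n^{-1/2}$ and applying Dudley's bound, the added entropy only contributes constants, or logarithmic factors absorbed in the $\mathcal{O}$. Doing this carefully, namely checking that the time-Lipschitz constants of every intermediate layer stay bounded and that the net does not inflate the log factors beyond the stated form, is the step I expect to be delicate.

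A final union bound over the failure events yields the probability $1-2\gamma_1-\gamma_2$.
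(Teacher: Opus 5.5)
Your proposal is correct and follows essentially the paper's route. The ingredients match: the layer-wise recursion of Proposition~\ref{proposition: epsilon ell+1 leq epsilon ell + Delta}, the time-independent bound on $\|\matL-\matLUn\|_2$, chaining over $I\times[0,T]$ using the time-Lipschitz constants of the intermediate graphon features (Proposition~\ref{proposition: solution Xfell is piecewise Lipschitz continuous about t} and Lemma~\ref{lemma: concentration ineq Yut}), and Gr\"onwall. Your consistency/stability split is only a repackaging of the paper's recursion, which carries $\epsilon^{(0,t)}$ through the layers.

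Two bookkeeping corrections are needed.
\begin{itemize}
\item $\opLUn$ uses the empirically normalized kernel $\tL_{U_n}$, not $\tL_P$, so $(\opLUn X)(u_i)$ is not simply an empirical average of $\tL_P(u_i,u_j)X(u_j)$. On the degree event you must also pay $\sup|\tL_{U_n}-\tL_P|\lesssim C_{1,W}\sqrt{\log(4n_I/\gamma_1)/n}$ before chaining the $\tL_P$-weighted process. This is where the $\sqrt{\log(n_I/\gamma_1)}$ contribution actually enters, rather than through the $\matL$ versus $\matLUn$ term.
\item The supremum over $(u,t)$ should come from Dudley's entropy integral on each piece $I_s\times[0,T]$, with a union bound only over the $n_I$ pieces and the $LFK$ functions. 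A union bound over nodes, or over an $n^{-1/2}$-net in time, would introduce a $\sqrt{\log n}$ factor that is not in the stated rate.
\end{itemize}
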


We highlight that the error bound established in 
Theorem~\ref{theorem: GNDE -> Graphon-NDE} is uniform over the entire
continuous time horizon $t \in [0,T]$. While the resulting convergence rates are
structurally similar to those obtained for finite-depth GNNs
\citep{keriven2020convergence}, our result establishes a stronger dynamic
property. The key distinction is that a standard multi-layer GNN involves only
finitely many layers, whereas a GNDE defines a trajectory of hidden states indexed by a continuum of times. Consequently, fixed-time concentration estimates alone are not
sufficient to obtain trajectory-wise guarantees uniformly over $[0,T]$. To overcome this difficulty, we extend the spatial chaining technique for
non-normalized kernels in \citep[Lemma~4]{keriven2020convergence} to a
joint spatial-temporal setting. More precisely, Lemma~\ref{lemma: concentration ineq Yut}
controls the relevant empirical process over the augmented domain
(the Cartesian product of the node and time domains) by combining Hoeffding-type concentration with Dudley's chaining inequality. This allows the sampling error to be controlled
simultaneously over both the node and time variables. We finally remark that the randomness in our GNDE framework is localized strictly to the initial graph generation (the node samples and the Bernoulli edges). Once the graph is drawn, the subsequent ODE dynamics are deterministic, allowing the bound to hold simultaneously for all $t \in [0,T]$ on a single high-probability event.

\subsection{Convergence of hidden-state gradients}

With the forward trajectory convergence established, we turn to the adjoint equations, which govern the backward propagation of gradients. A stronger regularity condition on the activation function is needed.
\begin{itemize}
\item\phantomsection\label{AS1diffLip} \textbf{AS1$''$.} The derivative \(\sigma'\) is $L_{\sigma'}$-Lipschitz continuous.
\end{itemize}

\begin{theorem}[Proof in Appendix \ref{appendix: Infinite-node Convergence of Adjoint GNDEs}]\label{theorem: MSE Y Yn adjoint equation final}
Suppose that \ConvFilterLipschitz, \SigmaDerivativeLipschitz, \dPBoundedBelow, and \GraphonLipschitz\ hold. Let $\gamma_1,\gamma_2,\gamma_3\in(0,1)$ with $2\gamma_1+\gamma_2+\gamma_3<1$. Suppose that $n$ is large enough satisfying \eqref{eq: n large enough for matrix L - LUn} and sparsity level $\alpha_n$ satisfies \eqref{assumption: sparsity alpha n large enough}. Let $\VarGNDE$, $\VarGraphonNDE$, $\VarGNDEHiddenState$ and $\VarGraphonNDEHiddenState$ be the solutions of GNDE \eqref{GNDE}, Graphon-NDE \eqref{Graphon-NDE}, adjoint GNDE \eqref{adjoint GNDE: Y}, and adjoint Graphon-NDE \eqref{adjoint Grahpon-NDE: Y}, respectively. Suppose that initial value condition \eqref{eq: initial values of GNDE and Graphon-NDE equal} holds, and the terminal values of the adjoint systems satisfy 
\begin{equation}\label{eq: initial condition adjoint Hidden State}
\VarGNDEHiddenState(T)=\SamplingOperator(\VarGraphonNDEHiddenState(\cdot,T)).
\end{equation}
Then, with probability at least $1-2\gamma_1-\gamma_2-\gamma_3$, \eq{eq: GNDE -> Graphon-NDE} holds and 
\begin{equation}\label{eq:adjoint GNDE -> adjoint Graphon-NDE}
\supT\mathrm{MSE}_{U_n}(\VarGraphonNDEHiddenState(\cdot,t),\VarGNDEHiddenState(t)) = \mathcal{O}\parens{\frac{1}{\sqrt{\alpha_n n}}+\frac{\sqrt{\log(n_ILFK/\gamma_3)}+\sqrt{\log(n_ILFK/\gamma_2)}+\sqrt{\log(n_I/\gamma_1)}}{\sqrt{n}}}.
\end{equation}
\end{theorem}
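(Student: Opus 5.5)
The plan is a backward-in-time Gr\"onwall argument for the sampled adjoint error, run on the high-probability event of Theorem~\ref{theorem: GNDE -> Graphon-NDE} intersected with one further concentration event. Set $\mathcal{E}(t):=\SamplingOperator(\VarGraphonNDEHiddenState(\cdot,t))-\VarGNDEHiddenState(t)\in\spaceRnF$. Sampling commutes with $\partial_t$, because $\VarGraphonNDEHiddenState\in C^1([0,T];\spaceB)$ by Theorem~\ref{theorem: Well-posedness of Adjoint System}. By \eqref{eq: initial condition adjoint Hidden State}, $\mathcal{E}(T)=0$, and
\[
\dt\mathcal{E}(t)=-\SamplingOperator\Bigl(\parens{\tfrac{\delta\GraphonNN}{\delta\VarGraphonNDE}}^*_{\VarGraphonNDE(\cdot,t)}(\VarGraphonNDEHiddenState(\cdot,t))\Bigr)+\DX^{(t)}\parens{\VarGNDEHiddenState(t)}.
\]
I would insert intermediate terms so that the right-hand side splits into three pieces.

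\textbf{(a) Stability term.} This is $\DX^{(t)}(\VarGNDEHiddenState(t))-\DX^{(t)}(\SamplingOperator\VarGraphonNDEHiddenState)$. It is linear in $\mathcal{E}$, and its operator norm is bounded by $\prod_\ell(\text{filter norms})\,L_\sigma^{L}$, uniformly in $t$, since $\|\matL\|_2\le1$ and the filters are bounded on $[0,T]$ by \ConvFilterLipschitz.

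\textbf{(b) Forward-trajectory term.} This is $\DX^{(t)}$ evaluated at $\VarGNDE(t)$ versus at $\SamplingOperator\VarGraphonNDE(\cdot,t)$, applied to $\SamplingOperator\VarGraphonNDEHiddenState$. From the explicit form \eqref{adjoint operator: GNDE to filters}, the adjoint is a composition of transposed filters $\phi(\vh\myl,\cdot)^*$ and diagonal multiplications by $\sigma'(\tildeVarGNDE\myl)$. Using \SigmaDerivativeLipschitz, the layerwise pre-activation differences and the uniform bound on $\|\VarGraphonNDEHiddenState\|$ from Appendix~\ref{appendix: Boundedness and Well-posedness}, this piece is bounded by $C\sqrt n\,\MSE(\VarGraphonNDE,\VarGNDE)$. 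That quantity is controlled uniformly in $t$ by Theorem~\ref{theorem: GNDE -> Graphon-NDE}, at probability cost $2\gamma_1+\gamma_2$. Some care is needed here: $\sigma'$ acts entrywise on a vector built from intermediate layers. So I must propagate MSE bounds through the layers of the forward GNN at fixed $t$, as in \citet{keriven2020convergence}, rather than only for the output.

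\textbf{(c) Sampling/consistency term.} This is
\[
\SamplingOperator\Bigl(\bigl(\tfrac{\delta\GraphonNN}{\delta\VarGraphonNDE}\bigr)^*\VarGraphonNDEHiddenState\Bigr)-\bigl(\tfrac{\delta\GNN}{\delta\VarGNDE}\bigr)^*_{\SamplingOperator\VarGraphonNDE}\bigl(\SamplingOperator\VarGraphonNDEHiddenState\bigr),
\]
i.e.\ the discrepancy between $\opLP^k$ acting on graphon functions and $\matL^k$ acting on their samples, inside the backward layer recursion. I would telescope it layer by layer and power by power. Each elementary error then has the form $\matL\SamplingOperator\mathcal{Y}-\SamplingOperator\opLP\mathcal{Y}$, where $\mathcal{Y}(\cdot,t)$ ranges over the family of functions built from $\VarGraphonNDEHiddenState$, $\sigma'(\tildeVarGraphonNDE\myl)$ and $\opLP$ powers. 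I would split it as
\[
(\matL-\matLUn)\SamplingOperator\mathcal{Y}+(\matLUn\SamplingOperator-\SamplingOperator\opLP)\mathcal{Y}.
\]
The first part uses the spectral bound on $\|\matL-\matLUn\|_2$ under \eqref{eq: n large enough for matrix L - LUn} and \eqref{assumption: sparsity alpha n large enough}, which gives $(\alpha_nn)^{-1/2}$; this event is shared with the forward theorem. The second part is handled by the joint node--time chaining of Lemma~\ref{lemma: concentration ineq Yut}, which bounds it by $\sqrt{\log(n_ILFK/\gamma_3)/n}$ uniformly in $t$. This requires $\mathcal{Y}$ to be bounded, piecewise Lipschitz in $u$ on the partition of \GraphonLipschitz, and Lipschitz in $t$. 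Lipschitz continuity in $t$ follows from $\VarGraphonNDEHiddenState,\VarGraphonNDE\in C^1$ and \ConvFilterLipschitz, while regularity in $u$ comes from the kernel $\tL_P$ smoothing after at least one application of $\opLP$ (with \dPBoundedBelow). A union bound over the $n_ILFK$ families accounts for the logarithm.

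\textbf{Main obstacle and conclusion.} I expect the hard step to be (c). The terms with $k=0$ involve $\sigma'(\tildeVarGraphonNDE\myl)\odot\VarGraphonNDEHiddenState$, which need not be Lipschitz in $u$, so the chaining bound may not apply to them directly. First I would check that $k=0$ terms are pointwise and hence produce no sampling error at all, since sampling commutes with pointwise products. Only terms with $k\ge1$, where an $\opLP$ is applied, enter the concentration bound. Combining (a)--(c), intersecting the events (total failure probability $2\gamma_1+\gamma_2+\gamma_3$), dividing by $\sqrt n$, and applying backward Gr\"onwall from $\mathcal{E}(T)=0$ gives \eqref{eq:adjoint GNDE -> adjoint Graphon-NDE} with an $e^{CT}$ constant.
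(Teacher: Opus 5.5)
\paragraph{Where the proposal agrees with the paper.} Your overall structure is the paper's: backward Gr\"onwall from $\mathcal{E}(T)=0$, split into a stability part, a forward-error part and a sampling part. The $(\alpha_n n)^{-1/2}$ term comes from the spectral bound on $\|\matL-\matLUn\|_2$, and the $\gamma_3$ term from node--time chaining with a union bound.

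\paragraph{The gap is in step (b).} It comes from your intermediate object: $\DX^{(t)}$ evaluated at $\SamplingOperator\VarGraphonNDE(\cdot,t)$, whose lower backward layers are pure graph quantities.
\begin{itemize}
\item Let $\tilde P_\ell,\tilde Q_\ell$ be the graph pre-activations of the forward passes started from $\VarGNDE(t)$ and from $\SamplingOperator\VarGraphonNDE(\cdot,t)$. Telescoping the two adjoints produces, at every layer $\ell<L$, terms $\bigl(\sigma'(\tilde P_\ell)-\sigma'(\tilde Q_\ell)\bigr)\odot B_\ell$.
\item Here $B_\ell$ is built from $\SamplingOperator\VarGraphonNDEHiddenState$ by repeatedly applying $\phi^*_{\vh}$ (hence powers of $\matL$) and diagonal $\sigma'$-multiplications.
\item You control $\tilde P_\ell-\tilde Q_\ell$ only in Frobenius norm, so you need $\normMax{B_\ell}=O(1)$.
\item The sup-norm bound on $\VarGraphonNDEHiddenState$ (Proposition~\ref{proposition: norm of Y ell is bounded by norm of Y}) gives this only for $B_L=\SamplingOperator\VarGraphonNDEHiddenState$. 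For $\ell<L$ you need $\ell^\infty$ control of $\matL^k$, and neither $\|\matL\|_2\le 1$ nor that bound supplies it.
\item The natural fallback $\normMax{B_\ell}\le\normMax{\SamplingOperator(\VarGraphonNDEHiddenState\mylt)}+\normF{B_\ell-\SamplingOperator(\VarGraphonNDEHiddenState\mylt)}$ costs $\sqrt n$ times the rate, i.e.\ a factor $\alpha_n^{-1/2}+\sqrt{\log(\cdot)}$. This is exactly the loss the paper accepts for $\normMax{\VarGNDEHiddenState\mylt}$, and hence in Theorem~\ref{theorem: z - zn leq TQh}.
\end{itemize}
So (b), as you justify it, gives only $O\bigl((\alpha_n n)^{-1/2}(\alpha_n^{-1/2}+\dots)\bigr)$, not \eqref{eq:adjoint GNDE -> adjoint Graphon-NDE}.

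\paragraph{Two repairs.}
\begin{enumerate}
\item Add a degree-concentration event: $\mD_{ii}\asymp\alpha_n n$ uniformly in $i$, which holds w.h.p.\ under \eqref{assumption: sparsity alpha n large enough} with a large constant. It yields $\|\matL\|_{\infty\to\infty}\le(\max_i\mD_{ii}/\min_i\mD_{ii})^{1/2}=O(1)$. This is an extra ingredient, with its own probability cost, that your accounting does not include.
\item Follow the paper: drop the intermediate object and compare $\VarGNDEHiddenState\mylt$ with $\SamplingOperator(\VarGraphonNDEHiddenState\mylt)$ directly, layer by layer, via
\[
\SamplingOperator(\hatVarGraphonNDE\mylt)\odot\SamplingOperator(\VarGraphonNDEHiddenState\mylt)-\hatVarGNDE\mylt\odot\VarGNDEHiddenState\mylt=\bigl(\SamplingOperator(\hatVarGraphonNDE\mylt)-\hatVarGNDE\mylt\bigr)\odot\SamplingOperator(\VarGraphonNDEHiddenState\mylt)+\hatVarGNDE\mylt\odot\bigl(\SamplingOperator(\VarGraphonNDEHiddenState\mylt)-\VarGNDEHiddenState\mylt\bigr).
\]
Now max norms fall only on sampled graphon functions (bounded by $\normB{\VarGraphonNDEHiddenState\mylt}$) and on $\hatVarGNDE\mylt$ (bounded by $L_\sigma$). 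The forward error enters through the pre-activation MSE $\widetilde{\epsilon}\mylt$ of Proposition~\ref{prop: Graphon-MSE every layer}, times $L_{\sigma'}$, rather than through the output MSE. This gives $\eta^{(\ell-1,t)}\le L_\sigma FK\Ch\,\eta\mylt+\QY(n,\gamma_1,\gamma_2,\gamma_3)$, after which your Gr\"onwall step closes.
\end{enumerate}

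\paragraph{Step (c) is not the obstacle you flagged.} The chaining lemmas need the integrand only bounded and Lipschitz in $t$; the $u$-regularity comes from $u\mapsto\tW(u,v)$ via \GraphonLipschitz. So the $k=1$ term, where $\opLUn-\opLP$ acts on the unsmoothed $\hatVarGraphonNDE\mylt_g\VarGraphonNDEHiddenState\mylt_g$, is covered. Your $k=0$ remark does not reach that term, but it does not need to. Its $t$-Lipschitz bound uses $L_{\sigma'}$ (Proposition~\ref{proposition: temporal lipschitz for adjoint system}), not merely $C^1$-regularity.
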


The stronger assumption \SigmaDerivativeLipschitz\ (compared to \SigmaLipschitz\ in Theorem~\ref{theorem: GNDE -> Graphon-NDE}) is needed because the adjoint equation \eqref{adjoint GNDE: Y} involves the derivative of the GNN with respect to the hidden state, which depends on $\sigma'$. This mirrors a general principle in continuous-depth models: backward propagation requires higher regularity than forward propagation, as the adjoint operators involve derivatives of the velocity field~\citep{gao2025global}.

\subsection{Convergence of parameter gradients}

We next study the convergence of the parameter gradients, under the assumption that the filter parameters are differentiable in time.
\begin{itemize}
\item\phantomsection\label{AS0diff} \textbf{AS0$'$.} The parameters $\hltfgk$, $\ell\in[L]$, $f,g\in[F]$, $k\in\mathbb{Z}_K$, are differentiable in $t$.
\end{itemize}
Additionally, we require a stronger smoothness condition on the activation function.
\begin{itemize}
\item\phantomsection\label{AS1diff_diff} \textbf{AS1$'''$.} The activation function $\sigma$ is twice differentiable. 
\end{itemize}

\begin{theorem}[Proof in Appendix \ref{Appendix: Infinite-node Convergence of Adjoint GNDEs (Parameter Gradients)}]\label{theorem: z - zn leq TQh}
Suppose that \ConvFilterDiff, \SigmaTwiceDiff, \dPBoundedBelow, and \GraphonLipschitz\ hold. Let $\gamma_1,\gamma_2,\gamma_3,\gamma_4\in(0,1)$ satisfying $2\gamma_1+\gamma_2+\gamma_3+\gamma_4<1$. Suppose that $n$ is large enough satisfying \eqref{eq: n large enough for matrix L - LUn} and $\alpha_n$ satisfies \eqref{assumption: sparsity alpha n large enough}. Let $\VarGNDE$, $\VarGraphonNDE$, $\VarGNDEHiddenState$ and $\VarGraphonNDEHiddenState$ be the solutions of GNDE \eqref{GNDE}, Graphon-NDE \eqref{Graphon-NDE}, adjoint GNDE \eqref{adjoint GNDE: Y}, and adjoint Graphon-NDE \eqref{adjoint Grahpon-NDE: Y}, respectively. Suppose that initial value conditions satisfy \eqref{eq: initial values of GNDE and Graphon-NDE equal} and \eqref{eq: initial condition adjoint Hidden State}. Let $\VarGNDEParameter\mylt$ and $\VarGraphonNDEParameter\mylt$ be defined in \eqref{adjoint GNDE: z} and \eqref{adjoint Graphon-NDE: H}, respectively. Then with probability at least $1-2\gamma_1-\gamma_2-\gamma_3-\gamma_4$, \eqs \eqref{eq: GNDE -> Graphon-NDE}, \eqref{eq:adjoint GNDE -> adjoint Graphon-NDE} hold, and 
\begin{equation}\label{eq:adjoint GNDE parameter -> adjoint Graphon-NDE parameter}
  \begin{aligned}
\max_{\ell\in[L]}\supT\Big\|\VarGraphonNDEParameter\mylt&-\frac{1}{n}\VarGNDEParameter\mylt\Big\|_{\mathrm{max}}=\mathcal{O}\parens{\frac{\sqrt{\log(LF^2 K/\gamma_4)}}{\sqrt{n}}}+\\
&\mathcal{O}\Big(\parens{\frac{1}{\sqrt{\alpha_n n}}+\frac{\sqrt{\log(n_ILFK/\gamma_2)}+\sqrt{\log(n_I/\gamma_1)}}{\sqrt{n}}}\\
&\times\parens{\frac{1}{\sqrt{\alpha_n}}+\sqrt{\log(n_ILFK/\gamma_3)}+\sqrt{\log(n_ILFK/\gamma_2)}+\sqrt{\log(n_I/\gamma_1)}}\Big).
\end{aligned}  
\end{equation}

\end{theorem}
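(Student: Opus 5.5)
We compare the parameter gradient $\VarGraphonNDEParameter\mylt$ with $\frac1n\VarGNDEParameter\mylt$ via their explicit forms from Appendix~\ref{appendix: graphon convolutional operators} and Appendix~\ref{appendix: graph convolutional operators}. The graphon gradient of the layer-$\ell$ filter is a $P$-integral of products. Each product pairs a backpropagated adjoint at layer $\ell$, built from $\VarGraphonNDEHiddenState(\cdot,t)$ through $\sigma'$ and the operators $\opLP^{k}$ of the later layers, with a feature $\opLP^k\VarGraphonNDE_g^{(\ell-1)}$ built from the forward Graphon-NN at $\VarGraphonNDE(\cdot,t)$. Thus each entry has the form $\int_I \mathcal{B}_f^{(\ell,t)}(u)\,(\opLP^k\VarGraphonNDE^{(\ell-1,t)}_g)(u)\,dP(u)$. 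The finite-graph counterpart is $\frac1n\sum_i [\mB^{(\ell,t)}_f]_i [\matL^k\VarGNDE^{(\ell-1,t)}_g]_i$, where all objects are built from $\VarGNDE(t)$, $\VarGNDEHiddenState(t)$ and $\matL$.

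We split the difference entrywise into two pieces.

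\textbf{(a) Sampling error.} The first piece is $\int \mathcal{B}\,\opLP^k\VarGraphonNDE\,dP - \frac1n\sum_i \mathcal{B}(u_i)(\opLP^k\VarGraphonNDE)(u_i)$, with graphon quantities evaluated at the sample points. We control it by a Hoeffding-plus-chaining bound uniform in $t\in[0,T]$, in the same joint node--time spirit as Lemma~\ref{lemma: concentration ineq Yut}. The integrand is bounded by the boundedness results of Appendix~\ref{appendix: Boundedness and Well-posedness}. It is also Lipschitz in $t$. Here AS0$'$ and AS1$'''$ enter: $\sigma''$ and $\partial_t\vh$ make $t\mapsto\sigma'(\cdot)$ and the backpropagated adjoint Lipschitz in time, so a covering of $[0,T]$ plus a union bound over $\ell,f,g,k$ gives $\mathcal{O}(\sqrt{\log(LF^2K/\gamma_4)/n})$ with probability $1-\gamma_4$. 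This is the first term of \eqref{eq:adjoint GNDE parameter -> adjoint Graphon-NDE parameter}.

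\textbf{(b) Propagated error.} The second piece is $\frac1n\sum_i\big(\mathcal{B}(u_i)(\opLP^k\VarGraphonNDE)(u_i) - [\mB]_i[\matL^k\VarGNDE]_i\big)$. Add and subtract, then apply Cauchy--Schwarz. This bounds it by $\MSE(\mathcal{B},\mB)\cdot\MSE$-size of the sampled features, plus the sampled adjoint size times $\MSE(\opLP^k\VarGraphonNDE,\matL^k\VarGNDE)$. The factors are handled as follows.
\begin{itemize}
\item The forward-feature error is bounded by the operator bound on $\matL-\SamplingOperator\opLP$ (under \eqref{eq: n large enough for matrix L - LUn} and \eqref{assumption: sparsity alpha n large enough}) together with Theorem~\ref{theorem: GNDE -> Graphon-NDE}.
\item The adjoint error $\MSE(\mathcal{B},\mB)$ is bounded layer by layer backwards. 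It uses Theorem~\ref{theorem: MSE Y Yn adjoint equation final}, Lipschitzness of $\sigma'$, and the forward error. Both errors are of order $\epsilon_n:=\frac{1}{\sqrt{\alpha_n n}}+\frac{\sqrt{\log(n_ILFK/\gamma_2)}+\sqrt{\log(n_I/\gamma_1)}}{\sqrt n}$, up to the extra $\gamma_3$ log term.
\item The remaining factors are the $\frac{1}{\sqrt n}\normF{\cdot}$ norms of $\matL^k\VarGNDE$, $\mB$ and their graphon samples. We do not bound these by constants. Instead we bound the $\max$-entry or spectral size of the random-graph quantities, e.g.\ $\norm{\matL}$ on sparse graphs and the adjoint size. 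This contributes the factor $\frac{1}{\sqrt{\alpha_n}}+\sqrt{\log(n_ILFK/\gamma_3)}+\cdots$.
\end{itemize}
Multiplying the two yields the product term in \eqref{eq:adjoint GNDE parameter -> adjoint Graphon-NDE parameter}.

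Finally, we intersect the events of Theorems~\ref{theorem: GNDE -> Graphon-NDE} and \ref{theorem: MSE Y Yn adjoint equation final} with the $\gamma_4$-event of step (a), take $\sup_t$ and $\max_\ell$, and conclude with probability at least $1-2\gamma_1-\gamma_2-\gamma_3-\gamma_4$.

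The main obstacle is step (a). A fixed-$t$ Hoeffding bound is easy, but uniformity over $t$ requires time-Lipschitz control of the graphon integrand $\mathcal{B}^{(\ell,t)}\,\opLP^k\VarGraphonNDE^{(\ell-1,t)}$. This means differentiating $\sigma'(\tildeVarGraphonNDE^{(\ell,t)})$ in $t$, which is exactly where twice differentiability of $\sigma$ and differentiability of the filters are needed. We would first try a bracketing/net argument on $[0,T]$ with mesh $\sim n^{-1/2}$, which suffices since the log of the covering number only adds lower-order terms.
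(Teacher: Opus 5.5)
Your argument is sound and uses the same ingredients as the paper, but it orders the telescoping differently.

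\textbf{How the two routes differ.} In Proposition~\ref{proposition: Phi* - phi* leq order} the paper splits each entry into five terms. It first replaces $\opLP^k$ by $\opLUn^k$ inside the $\spaceltwo$ pairing ($\Delta_1$). It then applies the time-uniform chaining bound of Lemma~\ref{lemma: estimate Delta2} (Lemma~\ref{lemma: concentration ineq Yut} with $\tW\equiv 1$) to the integrand $(\opLUn^k\VarGraphonNDE\mylt_g)\hatVarGraphonNDE\mylt_f\VarGraphonNDEHiddenState\mylt_f$ ($\Delta_2$). Only after that does it use $\SamplingOperator\opLUn^k=\matLUn^k\SamplingOperator$, the bound on $\normTwo{\matLUn-\matL}$, and the adjoint-product error ($\Delta_3$--$\Delta_5$). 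You sample first, with a deterministic integrand built from $\opLP^k$, and collect all graph effects in one sample-point term. Your forward-feature error consists of the node-sampled analogues of $\Delta_1$, $\Delta_3$, $\Delta_4$.

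\textbf{What your ordering buys.} The summands in (a) are genuinely i.i.d. The paper's $\Delta_2$ integrand contains $\opLUn^k$, which depends on the same $u_j$, so Hoeffding and chaining do not apply to it verbatim.

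\textbf{Two small corrections.} In (a), use chaining rather than a plain $n^{-1/2}$-net. A union bound over $\sim T\sqrt{n}$ net points adds a $\sqrt{\log n}$ factor, which is not lower order relative to $\sqrt{\log(LF^2K/\gamma_4)/n}$. In (b), let $a,b$ be the sampled graphon adjoint product and feature, and $A,B$ their graph counterparts. Both size factors cannot be graphon samples without an extra cross term. Write instead $a_ib_i-A_iB_i=(a_i-A_i)B_i+a_i(b_i-B_i)$; the two size factors are then bounded by $\CmXGNDE$ and $\Codotmax$.

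\textbf{The product term.} Your explanation of where it comes from is wrong, although the conclusion survives. No $1/\sqrt{\alpha_n}$ can come from $\norm{\matL}$, since $\normTwo{\matL}\le 1$ by normalization. In the paper this factor enters only through $\Delta_5$. There, $\frac{1}{\sqrt{n}}\normTwo{(\SamplingOperator(\hatVarGraphonNDE\mylt_f)-\hatVarGNDE\mylt_f)\odot\VarGNDEHiddenState\mylt_f}$ is bounded by $L_{\sigma'}\widetilde{\epsilon}\mylt\normMax{\VarGNDEHiddenState\mylt}$. The bound $\normMax{\VarGNDEHiddenState\mylt}\le\CmYGNDEmax$ comes from the crude estimate $\normMax{\cdot}\le\normF{\cdot}\le\sqrt{n}\,\Ce\QY$ behind \eqref{eq: sup mYnlt max leq P + CYmax}.

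You can avoid this factor in your bound on the adjoint-product error. Pair the $\sigma'$-error with $\SamplingOperator(\VarGraphonNDEHiddenState\mylt_f)$, whose entries are at most $\CYmax$. Pair the adjoint error with $\normMax{\hatVarGNDE\mylt}\le L_\sigma$, as the paper itself does in Proposition~\ref{proposition: eta ell+1 leq eta ell + Delta}. You then get $\mathcal{O}\bigl(\epsilon_n+\sqrt{\log(n_ILFK/\gamma_3)/n}+\sqrt{\log(LF^2K/\gamma_4)/n}\bigr)$ with no product term. Since $\alpha_n\le 1$, this is dominated by the right-hand side of \eqref{eq:adjoint GNDE parameter -> adjoint Graphon-NDE parameter}.

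So your third bullet is unnecessary rather than fatal, and your scheme proves the theorem with a sharper rate. That rate still decays when $\alpha_n=n^{-1/2}$, where the stated bound is only $\mathcal{O}(1)$.
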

Theorems~\ref{theorem: GNDE -> Graphon-NDE}, \ref{theorem: MSE Y Yn adjoint equation final} and \ref{theorem: z - zn leq TQh} together establish size transferability at three levels: the forward trajectory (Theorem~\ref{theorem: GNDE -> Graphon-NDE}), the hidden-state gradients (Theorem~\ref{theorem: MSE Y Yn adjoint equation final}), and the parameter gradients (Theorem~\ref{theorem: z - zn leq TQh}). The progression of regularity assumptions on the activation function $\sigma$---from a Lipschitz continuous $\sigma$ (for the forward pass), to a Lipschitz continuous derivative $\sigma'$ (for the hidden-state gradients), and finally to a twice-differentiable $\sigma$ (for the parameter gradients)---reflects the increasing order of smoothness required at each analytical stage.

We remark that the bounds in
Theorems~\ref{theorem: GNDE -> Graphon-NDE}, \ref{theorem: MSE Y Yn adjoint equation final}, and \ref{theorem: z - zn leq TQh} should be interpreted as worst-case upper bounds. The numerical results in Sections~\ref{subsec: exp forward},
\ref{subsec: exp hidden state gradient}, and~\ref{subsec: exp parameter gradient} suggest that the forward-trajectory estimate is relatively sharp, whereas the hidden-state gradient estimate and the
parameter-gradient estimate can be conservative, especially for sparse graphs. One possible reason is that the adjoint and parameter-gradient analyses are obtained through successive stability estimates. In particular, the
hidden-state adjoint bound depends on the forward-trajectory error, and the parameter-gradient bound further depends on both the forward and hidden-state adjoint errors. At each stage, triangle inequalities and worst-case operator-norm estimates are used to control all possible inputs uniformly. These successive worst-case estimates may amplify the apparent sparsity dependence of the bounds, even though faster convergence is observed in the
numerical experiments. We leave the derivation of sharper transfer-error bounds for future work.

\section{Convergence Analysis: from Residual GNNs to GNDEs}\label{sec: ResGNN to GNDE}

The previous section established that GNDEs converge to Graphon-NDEs as the number of nodes grows. In practice, however, GNDEs are never solved exactly; they are discretized into residual GNNs via numerical methods such as Euler's scheme \eqref{eq: Euler's method for Xn}. A natural question thus arises: how well do the discrete approximations $\VarGNDE^{[m]}$ approximate the continuous GNDE solution $\VarGNDE(t)$ as the step size $\kappa$ decreases?

The importance of this question is twofold. First, it completes the approximation chain needed for a \emph{two-scale convergence} result. The previous section establishes the spatial limit, namely the convergence of GNDEs to Graphon-NDEs as the number of nodes grows, while this section establishes the temporal limit, namely the convergence of residual GNNs to GNDEs as the step size tends to zero. Coupling these two results yields an end-to-end convergence guarantee from finite-node, discrete residual GNNs to the infinite-node, time-continuous Graphon-NDE. Second, this result provides the foundation for comparing the DTO and OTD training paradigms in Section~\ref{sec: DTO vs OTD}, as both produce discrete gradient sequences whose difference can be controlled using the discretization-error estimates developed here.

\subsection{Discretization error of forward trajectory}

\begin{theorem}[Proof in Appendix \ref{Appendix: Infinite-node Convergence of Discretized GNDEs}]\label{theorem: discretized gnde to gnde}
    Suppose that \ConvFilterDiff, \SigmaDiff, \dPBoundedBelow, and \GraphonLipschitz\ hold. Let $\gamma_1,\gamma_2\in(0,1)$ with $2\gamma_1+\gamma_2<1$. Suppose that $n$ is large enough satisfying \eqref{eq: n large enough for matrix L - LUn} and \eqref{eq: n is big enough 2}; sparsity level $\alpha_n$ satisfies \eqref{assumption: sparsity alpha n large enough}. Let $\VarGNDE(t)$ be the solution of GNDE \eqref{GNDE} and $\left\{\VarGNDE^{[m]} : m \in [M]\right\}$ be computed via Euler's method \eqref{eq: Euler's method for Xn} with step size $\kappa = T/M$ and initial value 
    \begin{equation}\label{eq: initial value condition on GNDE and discretized GNDE}
\VarGNDE^{[0]}=\VarGNDE(0). 
    \end{equation}
    Then, with probability at least $1-2\gamma_1-\gamma_2$, \eq{eq: GNDE -> Graphon-NDE} holds and 
    \begin{equation}\label{eq: error bound X(t) - Xm temperol discretization error}
    \frac{1}{\sqrt{n}}\max_{m\in[M]}\normF{\VarGNDE(t_m)- \VarGNDE^{[m]}}=\mathcal{O}\parens{\frac{1}{M}}.
    \end{equation}
\end{theorem}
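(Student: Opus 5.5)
The plan is the classical Euler global-error argument: bound the local truncation error, then propagate it with a discrete Gr\"onwall inequality. The one new ingredient is that every constant must be independent of $n$ once measured in the normalized Frobenius norm $n^{-1/2}\normF{\cdot}$. Write $F_n(\VarGNDE,t):=\GNN(\VarGNDE;\matL,\tH(t))$ and $E_m:=\VarGNDE(t_m)-\VarGNDE^{[m]}$. Subtracting \eqref{eq: Euler's method for Xn} from the integral form of \eqref{GNDE} gives
\begin{equation*}
E_m=E_{m-1}+\kappa\big(F_n(\VarGNDE(t_{m-1}),t_{m-1})-F_n(\VarGNDE^{[m-1]},t_{m-1})\big)+\tau_m ,
\end{equation*}
where $\tau_m:=\int_{t_{m-1}}^{t_m}\big(F_n(\VarGNDE(s),s)-F_n(\VarGNDE(t_{m-1}),t_{m-1})\big)\,ds$. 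Suppose we have (i) Lipschitz continuity of $F_n$ in $\VarGNDE$ with a constant $C_{\mathrm{Lip}}$, and (ii) $\normF{\tau_m}\le C\sqrt n\,\kappa^2$. Then $\normF{E_m}\le(1+\kappa C_{\mathrm{Lip}})\normF{E_{m-1}}+C\sqrt n\kappa^2$ with $E_0=0$ by \eqref{eq: initial value condition on GNDE and discretized GNDE}. The discrete Gr\"onwall inequality yields $n^{-1/2}\max_m\normF{E_m}\le C\kappa\,(e^{C_{\mathrm{Lip}}T}-1)/C_{\mathrm{Lip}}=\mathcal{O}(1/M)$.

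For step (i), I would use that $\normTwo{\matL}\le1$ for the symmetric normalized adjacency, deterministically. Hence $\phi(\vh,\cdot)$ is Lipschitz in Frobenius norm with constant of order $FK\max|\vh_{fgk}|$. By \ConvFilterDiff\ the filters are continuous, so they are bounded on $[0,T]$. Composing with the $L_\sigma$-Lipschitz activation over $L$ layers gives $C_{\mathrm{Lip}}\lesssim (L_\sigma FK\sup_t\normMax{\vh^{(\cdot,t)}})^L$, independent of $n$ and of the graph. Since \SigmaDiff\ alone does not give Lipschitz continuity of $\sigma$, I will take $\sigma'$ bounded (as in \SigmaLipschitz) so that $L_\sigma$ exists.

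For step (ii), the natural bound is $\normF{\tau_m}\le\int_{t_{m-1}}^{t_m}\big(\normF{F_n(\VarGNDE(s),s)-F_n(\VarGNDE(t_{m-1}),s)}+\normF{F_n(\VarGNDE(t_{m-1}),s)-F_n(\VarGNDE(t_{m-1}),t_{m-1})}\big)ds$, and I would treat its two pieces separately.
\begin{itemize}
\item The first piece is at most $C_{\mathrm{Lip}}\,\kappa\sup_s\normF{\dt\VarGNDE(s)}$. Since $\sigma(0)=0$, $\normF{\dt\VarGNDE(s)}\le C_{\mathrm{Lip}}\normF{\VarGNDE(s)}$, and Gr\"onwall gives $\normF{\VarGNDE(s)}\le e^{C_{\mathrm{Lip}}T}\normF{\mZ}$.
\item The second piece is bounded by the time-Lipschitz constant of the filters times $\kappa$. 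The filters are Lipschitz in $t$ because a differentiable function with bounded derivative on $[0,T]$ is Lipschitz, as in \ConvFilterLipschitz. This piece is thus at most $C\kappa\normF{\VarGNDE(t_{m-1})}$, after a layer-by-layer induction using (i).
\end{itemize}
Both pieces therefore reduce to a bound on $n^{-1/2}\normF{\mZ}$, so what remains is to show that this quantity is $\mathcal{O}(1)$.

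This last point is where I expect the main obstacle, and it is where the probabilistic hypotheses enter. By \eqref{eq: initial values of GNDE and Graphon-NDE equal}, $\mZ=\SamplingOperator\VarGraphonNDEIV$, so $n^{-1/2}\normF{\mZ}\le\normB{\VarGraphonNDEIV}$. This already suffices, but it requires $\mZ$ to be a sampled graphon signal. If the statement is read without that identification, I would instead invoke Theorem~\ref{theorem: GNDE -> Graphon-NDE}. On its event of probability at least $1-2\gamma_1-\gamma_2$, we have $n^{-1/2}\normF{\VarGNDE(t)}\le\sup_t\normB{\VarGraphonNDE(\cdot,t)}+o(1)$ uniformly in $t$. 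The sup-norm is bounded by Theorem~\ref{theorem: well-posedness}, and the $o(1)$ term is at most $1$ once $n$ satisfies \eqref{eq: n large enough for matrix L - LUn}, \eqref{eq: n is big enough 2} and $\alpha_n$ satisfies \eqref{assumption: sparsity alpha n large enough}. The same event then contains \eqref{eq: GNDE -> Graphon-NDE}, which explains why the conclusion is stated with that probability. The delicate part is to track that every constant ($C_{\mathrm{Lip}}$, the filter bounds, the trajectory bound) depends only on $T,L,F,K,L_\sigma$, the filter regularity and $\normB{\VarGraphonNDEIV}$, and not on $n$ or $\alpha_n$.
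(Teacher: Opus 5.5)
Your proof is correct, but it reaches the bound by a different and more elementary route than the paper.

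\emph{The paper's route.} Propositions~\ref{proposition: Xns Xn0'' bounded above by sqrt n} and~\ref{proposition: discretized GNDE error leq 1/M} first control the trajectory probabilistically. On the event of Theorem~\ref{theorem: GNDE -> Graphon-NDE}, the layerwise MSE estimate together with \eqref{eq: n is big enough 2} gives $\|X^{(\ell,t)}\|_{\mathrm F}\le 2C_{\mathcal X,\max}\sqrt n$. The paper then differentiates through the layers, using $|\sigma'|\le L_\sigma$ and $|\frac{d}{dt}h^{(\ell,t)}_{fgk}|\le L_h$, to obtain $\sup_t\|\frac{d^2}{dt^2}X(t)\|_{\mathrm F}\le C\sqrt n$. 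This second-derivative bound and the Lipschitz constant $(L_\sigma FKC_h)^L$ are fed into the textbook Euler lemma (Lemma~\ref{lemma: Euler's method}).

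\emph{Your route.} You instead do two things differently:
\begin{enumerate}
\item You estimate the local truncation error directly from the integral form, using only joint Lipschitz continuity of the vector field in $(X,t)$, so no second derivative ever appears.
\item You bound the trajectory deterministically by Gr\"onwall, $\|X(t)\|_{\mathrm F}\le e^{C_{\mathrm{Lip}}T}\|X(0)\|_{\mathrm F}\le e^{C_{\mathrm{Lip}}T}\sqrt n\,\|\mathcal X(\cdot,0)\|_{B(I;\mathbb R^{1\times F})}$. This needs only $\|\mathbf L\|_2\le 1$, $\sigma(0)=0$, and the sampled initial condition.
\end{enumerate}

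\emph{What each buys.} Your route gives a stronger conclusion. The $\mathcal O(1/M)$ estimate holds for every realization of the random graph, without \eqref{eq: n is big enough 2} or any condition on $\alpha_n$, under the Lipschitz assumptions alone. Its constants also do not depend on the graphon: there are no $(c_{\max}/c_{\min})^k$ factors coming from $C_{0,W}$. The probability $1-2\gamma_1-\gamma_2$ then enters only through the co-assertion of \eqref{eq: GNDE -> Graphon-NDE}. The paper's heavier route pays off later: its layerwise derivative bounds (including $\frac{d}{dt}\widehat X^{(\ell,t)}$) and layerwise Euler errors are reused in the adjoint and DTO--OTD theorems, where differentiability of $\sigma$ is genuinely needed.

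\emph{Two minor remarks.}
\begin{itemize}
\item Your ``fallback'' via Theorem~\ref{theorem: GNDE -> Graphon-NDE} is not an alternative to the identification $X(0)=\mathcal S_{U_n}\mathcal X(\cdot,0)$, because that theorem assumes it. Simply drop the fallback.
\item Differentiability of the filters does not by itself give a bounded derivative. You are therefore using AS0 in addition to AS1, as the paper also does tacitly.
\end{itemize}
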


The $\mathcal{O}(1/M)$ rate is the classical first-order convergence of the explicit Euler method, confirming that the discretization error vanishes as the number of time steps $M$ increases, independently of the graph size $n$. We numerically validate the predicted $\mathcal{O}(1/M)$ convergence rate in
Section~\ref{subsec: exp discretization}.

\subsection{Discretization error of hidden-state gradients}

\begin{theorem}[Proof in Appendix \ref{Appendix: Infinite-node Convergence of Discretized Adjoint GNDEs}]\label{theorem: discretized adjoint gnde to adjoint gnde Hidden state}
Suppose that \ConvFilterDiff, \SigmaTwiceDiff, \dPBoundedBelow, and \GraphonLipschitz\ hold. Let $\gamma_1,\gamma_2,\gamma_3\in(0,1)$ with $2\gamma_1+\gamma_2+\gamma_3<1$. Suppose that $n$ is large enough satisfying \eqref{eq: n is big enough 2}, \eqref{eq: n is big enough 3} and \eqref{eq: n large enough for matrix L - LUn}; sparsity level $\alpha_n$ satisfies \eqref{assumption: sparsity alpha n large enough}. Suppose that initial conditions \eqref{eq: initial values of GNDE and Graphon-NDE equal}, \eqref{eq: initial condition adjoint Hidden State} and \eqref{eq: initial value condition on GNDE and discretized GNDE} hold. Let $\VarGNDEHiddenState(t)$ be the solution of adjoint GNDE \eqref{adjoint GNDE: Y} and $\left\{\VarGNDEHiddenState^{[m]} : m \in [M]\right\}$ be computed via backward-in-time Euler's method \eqref{eq: discretized Yn} with step size $\kappa = T/M$ and terminal value 
\begin{equation}\label{eq: initial condition GNDE discretized GNDE Adjoint A}
\VarGNDEHiddenState^{[M]}=\VarGNDEHiddenState(T).
\end{equation} 
Then, with probability at least $1-2\gamma_1-\gamma_2-\gamma_3$, \eq{eq:adjoint GNDE -> adjoint Graphon-NDE} holds and 
    \begin{align*}
        \frac{1}{\sqrt{n}}\max_{m\in[M]}\normF{\VarGNDEHiddenState^{[m]}-\VarGNDEHiddenState(t_m)}=\mathcal{O}\parens{\frac{\frac{1}{\sqrt{\alpha_n}}+\sqrt{\log(n_ILFK/\gamma_3)}+\sqrt{\log(n_ILFK/\gamma_2)}+\sqrt{\log(n_I/\gamma_1)}}{M}}.
    \end{align*}
\end{theorem}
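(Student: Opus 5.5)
We will treat the backward Euler recursion \eqref{eq: discretized Yn} as a perturbed one-step scheme for the linear, time-varying adjoint ODE \eqref{adjoint GNDE: Y}. The plan is to bound the local truncation error and the error propagation by a discrete Grönwall argument. Set $E^{[m]} := \VarGNDEHiddenState^{[m]} - \VarGNDEHiddenState(t_m)$, with $E^{[M]}=0$ by \eqref{eq: initial condition GNDE discretized GNDE Adjoint A}. Taylor expansion of $\VarGNDEHiddenState$ backward from $t_m$ to $t_{m-1}$, using \eqref{adjoint GNDE: Y}, gives
\begin{equation*}
E^{[m-1]} = E^{[m]} + \kappa\,\DX^{[m]}\parens{E^{[m]}} + \kappa\parens{\DX^{[m]}-\DX^{(t_m)}}\parens{\VarGNDEHiddenState(t_m)} + R^{[m]},
\end{equation*}
where $R^{[m]}$ is the Taylor remainder, bounded by $\kappa^2 \sup_t \normF{\ddt \VarGNDEHiddenState(t)}$.

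\textbf{Step 1: uniform bounds on the graph and the trajectories.} On the high-probability event of Theorem~\ref{theorem: MSE Y Yn adjoint equation final} (probability at least $1-2\gamma_1-\gamma_2-\gamma_3$), we first bound $\normTwo{\matL}\le 1$. We then obtain $\frac{1}{\sqrt n}\supT\normF{\VarGNDE(t)}$ and $\frac{1}{\sqrt n}\supT\normF{\VarGNDEHiddenState(t)}$ from a Grönwall argument using the explicit form of the adjoint operator in Appendix~\ref{appendix: graph convolutional operators}. The factor $\frac{1}{\sqrt{\alpha_n}}+\sqrt{\log(\cdot)}$ in the statement should come from here. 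The adjoint operator contains $\sigma'$ evaluated at hidden states, and the relevant norms (max-norm and entrywise quantities linked to the graphon limit through $\SamplingOperator$) are controlled only up to the MSE error from Theorems~\ref{theorem: GNDE -> Graphon-NDE} and \ref{theorem: MSE Y Yn adjoint equation final}. Multiplying those bounds by $\sqrt n$ produces exactly $\frac{1}{\sqrt{\alpha_n}}+\sqrt{\log}$ terms. This gives a constant $\CmYGNDE$ of that order. In particular, $\frac{1}{\sqrt n}\normF{\VarGNDEHiddenState(t)}$ is $\mathcal{O}(1)$ plus the adjoint transfer error. The conditions \eqref{eq: n is big enough 2} and \eqref{eq: n is big enough 3} are used to absorb lower-order terms.

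\textbf{Step 2: local error terms.} We bound two quantities.
\begin{enumerate}[label=(\roman*)]
\item For the Taylor remainder, differentiate \eqref{adjoint GNDE: Y} once more:
\begin{equation*}
\ddt\VarGNDEHiddenState = -\parens{\dt \DX^{(t)}}\VarGNDEHiddenState + \DX^{(t)}\DX^{(t)}\VarGNDEHiddenState.
\end{equation*}
The time derivative of $\DX^{(t)}$ involves $\sigma''$ (here \SigmaTwiceDiff\ is needed), $\dt\VarGNDE$ (bounded by the GNN velocity), and $\dt\tH$ (by \ConvFilterDiff). All of these are bounded as in Step 1, so $\frac{1}{\sqrt n}\normF{R^{[m]}} \lesssim \kappa^2 C$.
\item For the consistency term, write $\DX^{[m]}-\DX^{(t_m)}$ as a difference of adjoint operators evaluated at $\VarGNDE^{[m]}$ versus $\VarGNDE(t_m)$, with the same parameters $\tH(t_m)$. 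Lipschitz continuity of $\sigma'$ (which follows on bounded sets from \SigmaTwiceDiff) bounds its action on $\VarGNDEHiddenState(t_m)$ by $C\,\normF{\VarGNDE^{[m]}-\VarGNDE(t_m)}$ times a max-norm of $\VarGNDEHiddenState$. Theorem~\ref{theorem: discretized gnde to gnde} then makes this contribution $\kappa\cdot\mathcal{O}(1/M)$.
\end{enumerate}

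\textbf{Step 3: discrete Grönwall.} Using $\normTwo{\DX^{[m]}}\le C$, we get
\begin{equation*}
\frac{1}{\sqrt n}\normF{E^{[m-1]}} \le (1+C\kappa)\frac{1}{\sqrt n}\normF{E^{[m]}} + C'\kappa/M + C''\kappa^2.
\end{equation*}
Iterating from $m=M$ down to $m=1$ gives $\max_m \frac{1}{\sqrt n}\normF{E^{[m]}} \le e^{CT}\,T\,(C'+C''T)/M$.

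The main obstacle is the mixed-norm issue in Step 2(ii). The pointwise difference of $\sigma'$ terms multiplies $\VarGNDEHiddenState$ entrywise, so the Frobenius bound requires a max-norm (rather than averaged) control of $\VarGNDEHiddenState(t_m)$, or else of the forward error. The plan is to bound $\normMax{\VarGNDEHiddenState(t)}$ by comparing $\VarGNDEHiddenState(t)$ to $\SamplingOperator\VarGraphonNDEHiddenState(\cdot,t)$, which is bounded in $\spaceB$ by Theorem~\ref{theorem: Well-posedness of Adjoint System}, plus the error from Theorem~\ref{theorem: MSE Y Yn adjoint equation final} crudely inflated by $\sqrt n$. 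This step is where the stated prefactor $\frac{1}{\sqrt{\alpha_n}}+\sqrt{\log(n_ILFK/\gamma_3)}+\cdots$ enters the final rate.
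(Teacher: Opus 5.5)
Your proposal is correct and is essentially the paper's own proof: the same Taylor-expansion error recursion started from $E^{[M]}=0$, the same split of $\DX^{[m]}(\VarGNDEHiddenState^{[m]})-\DX^{(t_m)}(\VarGNDEHiddenState(t_m))$ into propagation of $E^{[m]}$ plus a forward-discretization consistency term (Lemma~\ref{lemma: Ynlm - Unltm estimates}), the same second-derivative bound through $\sigma''$, $\dt\VarGNDE$ and $\dt\tH$, and the same discrete Gr\"onwall step (Proposition~\ref{proposition: error Ynm - Yntm estimate}). The sparsity prefactor enters exactly as you predict, through a max-norm bound on the continuous adjoint obtained by comparing with $\SamplingOperator(\VarGraphonNDEHiddenState)$ and inflating the transfer error by $\sqrt{n}$; the one point your sketch understates is that this max-norm control is needed for every intermediate backward state $\VarGNDEHiddenState^{(\ell,t)}$, $\ell\in\mathbb{Z}_{L+1}$ (these multiply $\hatVarGNDE^{[\ell,m]}-\hatVarGNDE^{(\ell,t_m)}$ in the consistency term and $\sigma''(\tildeVarGNDE^{(\ell,t)})\odot\dt\tildeVarGNDE^{(\ell,t)}$ in $\ddt\VarGNDEHiddenState$), so you need the layer-wise adjoint transfer bounds $\supT\eta^{(\ell,t)}\le \Ce\QY(n,\gamma_1,\gamma_2,\gamma_3)$ of Proposition~\ref{prop: Adjoint Graphon-MSE every layer}, not just the output-layer statement of Theorem~\ref{theorem: MSE Y Yn adjoint equation final}.
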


The adjoint discretization error retains the $\mathcal{O}(1/M)$ dependence on the temporal step size. However, the prefactor now depends explicitly on the sparsity parameter $\alpha_n$, through a term of order $1/\sqrt{\alpha_n}$. Thus, in the sparse-graph regime where $\alpha_n \to 0$, the discretization error bound deteriorates unless the number of time steps $M$ is chosen large enough relative to the sparsity level.

\subsection{Discretization error of parameter gradients}

\begin{theorem}[Proof in Appendix \ref{Appendix: Infinite-node Convergence of Discretized Adjoint GNDEs (Parameter Gradients)}]\label{theorem: discretized adjoint gnde to adjoint gnde parameters}
    Suppose that \ConvFilterDiff, \SigmaTwiceDiff, \dPBoundedBelow, and \GraphonLipschitz\ hold. Let $\gamma_1,\gamma_2,\gamma_3\in(0,1)$ with $2\gamma_1+\gamma_2+\gamma_3<1$. Suppose that $n$ is large enough satisfying \eqref{eq: n is big enough 2}, \eqref{eq: n is big enough 3} and \eqref{eq: n large enough for matrix L - LUn}; sparsity level $\alpha_n$ satisfies \eqref{assumption: sparsity alpha n large enough}; $M$ is large enough such that \eqref{eq: M is large enough} holds. Suppose that initial conditions \eqref{eq: initial values of GNDE and Graphon-NDE equal}, \eqref{eq: initial condition adjoint Hidden State}, \eqref{eq: initial value condition on GNDE and discretized GNDE} and \eqref{eq: initial condition GNDE discretized GNDE Adjoint A} hold. Let $\VarGNDEParameter\mylt$ and $\VarGNDEParameter\mylm$ be defined in \eqref{adjoint GNDE: z} and \eqref{grad of h optimize-then-dicretize}, respectively. Then with probability at least $1-2\gamma_1-\gamma_2-\gamma_3$, 
    \begin{equation*}
\frac{1}{n}\max_{m\in[M],\ell\in[L]}\normMax{\VarGNDEParameter\mylt[\ell][t_m] - \VarGNDEParameter\mylm} =\mathcal{O}\parens{\frac{\frac{1}{\sqrt{\alpha_n}}+\sqrt{\log(n_ILFK/\gamma_3)}+\sqrt{\log(n_ILFK/\gamma_2)}+\sqrt{\log(n_I/\gamma_1)}}{M}}.
\end{equation*}
\end{theorem}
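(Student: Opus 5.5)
The argument rests on the fact that $\VarGNDEParameter\mylt[\ell][t_m]=\Dh\mylt[\ell][t_m](\VarGNDEHiddenState(t_m))$ and $\VarGNDEParameter\mylm=\Dh^{[\ell,m]}(\VarGNDEHiddenState^{[m]})$ are two evaluations of the same explicit adjoint map. The map is the one in \eq{adjoint operator: GNDE to filters}, seen as a function of three arguments: the forward state, the filter at time $t_m$, and the adjoint input. The continuous version uses $(\VarGNDE(t_m),\tH(t_m),\VarGNDEHiddenState(t_m))$ and the discrete version uses $(\VarGNDE^{[m]},\tH(t_m),\VarGNDEHiddenState^{[m]})$. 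The filters coincide, so only the state and adjoint arguments differ. We split with the triangle inequality:
\begin{equation*}
\VarGNDEParameter\mylt[\ell][t_m]-\VarGNDEParameter\mylm
=\Big(\Dh\mylt[\ell][t_m]-\Dh^{[\ell,m]}\Big)\big(\VarGNDEHiddenState(t_m)\big)
+\Dh^{[\ell,m]}\big(\VarGNDEHiddenState(t_m)-\VarGNDEHiddenState^{[m]}\big).
\end{equation*}
Each entry of $\Dh$ is a sum over nodes, namely a Frobenius inner product of $n\times F$ matrices. After dividing by $n$, Cauchy--Schwarz gives entrywise max-norm bounds in terms of $n^{-1/2}$-normalized Frobenius norms.

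\emph{Second term.} Unroll $\Dh^{[\ell,m]}$ layer by layer. It applies the transposed later layers, namely $\sigma'(\tildeVarGNDE^{(\ell')})\odot$ followed by $\phi(\vh^{(\ell')},\cdot)^*$ for $\ell'>\ell$. It then pairs the result with $\sigma'(\tildeVarGNDE\myl)$ and with $\matL^k\VarGNDE^{(\ell-1)}_g$, all evaluated along $\VarGNDE^{[m]}$. The bounds needed are $\|\matL\|_2\le 1$, $\sup|\sigma'|\le L_\sigma$, the uniform filter bounds from \ConvFilterDiff{} on the compact interval $[0,T]$, and the uniform bound $n^{-1/2}\|\VarGNDE^{[m]}\|_{\mathrm F}\lesssim1$. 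The last follows from Theorem~\ref{theorem: discretized gnde to gnde} combined with Theorem~\ref{theorem: GNDE -> Graphon-NDE} and boundedness of $\VarGraphonNDE$. With these, $\frac1n\|\Dh^{[\ell,m]}(\mV)\|_{\max}\lesssim n^{-1/2}\|\mV\|_{\mathrm F}$, and the second term is at most $C\,n^{-1/2}\|\VarGNDEHiddenState(t_m)-\VarGNDEHiddenState^{[m]}\|_{\mathrm F}$. Theorem~\ref{theorem: discretized adjoint gnde to adjoint gnde Hidden state} then gives exactly the stated order $\big(\alpha_n^{-1/2}+\sqrt{\log(\cdot)}\big)/M$.

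\emph{First term.} This is a Lipschitz estimate of the adjoint map in the forward state with fixed input $\VarGNDEHiddenState(t_m)$. Differences of the form $\sigma'(\tildeVarGNDE(t_m))-\sigma'(\tildeVarGNDE^{[m]})$ appear and are controlled through Lipschitz continuity of $\sigma'$. We take this from \SigmaTwiceDiff{}, using that $\sigma''$ is bounded on the relevant bounded range of preactivations; if this fails we invoke the same convention as the proof of Theorem~\ref{theorem: discretized adjoint gnde to adjoint gnde Hidden state}. Differences $\matL^k(\VarGNDE(t_m)-\VarGNDE^{[m]})$ appear as well. Telescoping over the layers, each difference is multiplied by bounded factors. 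One factor is the adjoint input, bounded in sup over $t$ by $n^{-1/2}\|\VarGNDEHiddenState(t)\|_{\mathrm F}\lesssim1$ from Theorem~\ref{theorem: MSE Y Yn adjoint equation final} and boundedness of $\VarGraphonNDEHiddenState$.

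\emph{Main obstacle.} Entrywise products such as $\sigma'(\cdot)\odot \mV$ break the naive Frobenius--Cauchy--Schwarz chain, because $\|\mA\odot\mB\|_{\mathrm F}$ is not bounded by $n^{-1/2}\|\mA\|_{\mathrm F}\|\mB\|_{\mathrm F}$. This is why \eqref{eq: M is large enough} and the extra large-$n$ conditions appear. I would first handle the $\sigma'$-difference with a max-norm bound on the backpropagated adjoint. That bound should come from $\|\matL^k\|$ acting from $\max$ to $\max$, which is controlled by the degree concentration event underlying \eqref{eq: n large enough for matrix L - LUn} and gives the $1/\sqrt{\alpha_n}$ factor. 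Combined with Theorem~\ref{theorem: discretized gnde to gnde}, the first term is then $\mathcal O\big((\alpha_n^{-1/2}+\sqrt{\log})/M\big)$. If only a quadratic cross term survives, of size $\|\Delta\VarGNDE\|\cdot\|\Delta\VarGNDEHiddenState\|$, it is $\mathcal O(M^{-2})$ and is absorbed using \eqref{eq: M is large enough}. Finally, take the max over $m,\ell$ on the intersection of the high-probability events, which has probability at least $1-2\gamma_1-\gamma_2-\gamma_3$.
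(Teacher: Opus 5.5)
Your route differs from the paper's and works, except for the one estimate that carries the real difficulty: you attribute it to a mechanism that does not work as you describe.

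\textbf{Comparison with the paper's route.} The paper does not split the operator. In Proposition~\ref{proposition: diff of D Yn [] ()} it writes both gradients in the per-layer form $\phi^*_{\mX_1}(\widehat{\mX}_1\odot\mA_1)$. It compares the Euler triple $(\mX^{[\ell,m]},\widehat{\mX}^{[\ell,m]},\mA^{[\ell,m]})$ with the continuous triple at $(\ell,t_m)$ via the three-term estimate of Lemma~\ref{lemma: phi* X1 hatX1 A1 - X2 hatX2 A2}. It then feeds in:
\begin{itemize}
\item the layerwise errors of $\mX$ and of $\sigma'(\widetilde{\mX})$ (Proposition~\ref{proposition: discretized GNDE error leq 1/M});
\item the adjoint error \eqref{eq: Ynlm error} at every layer $\ell$;
\item the bound \eqref{eq: normF of Yn[lm]} on $\normF{\mA^{[\ell,m]}}$, which is the only use of \eqref{eq: M is large enough}.
\end{itemize}
Your split (same input with different forward states, plus same forward states with different input) is an exact identity by linearity of $\Dh^{[\ell,m]}$. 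It needs only the top-layer adjoint error from Theorem~\ref{theorem: discretized adjoint gnde to adjoint gnde Hidden state}, and your treatment of the second term is correct. In the first term, the hybrid adjoint (that is, $\mA(t_m)$ backpropagated along the Euler states) is bounded in Frobenius norm by a constant times $\normF{\mA(t_m)}$. So you never need a bound on the discrete adjoint, and \eqref{eq: M is large enough} becomes superfluous for you. It has nothing to do with the Hadamard-product issue, and there is no quadratic cross term to absorb.

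\textbf{The estimate both routes need.} Both routes need a max-norm bound on the continuous backpropagated adjoint $\mA^{(\ell,t_m)}$. It multiplies $\normF{\widehat{\mX}^{[\ell,m]}-\widehat{\mX}^{(\ell,t_m)}}$ both directly and inside the layer recursion for the hybrid-minus-continuous adjoint. Your proposed source for it fails:
\begin{itemize}
\item The event attached to \eqref{eq: n large enough for matrix L - LUn} is only the spectral bound \eqref{eq: Ln - LWn 2norm}.
\item Transferring that bound to the max norm via $\norm{\cdot}_{\infty\to\infty}\le\sqrt n\,\normTwo{\cdot}$ costs a factor $\alpha_n^{-1/2}$ per application of $\matL$. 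That factor compounds over powers, layers and the Gr\"onwall exponent.
\item A genuine degree-concentration bound for the Bernoulli graph is not established in the paper and would need its own probability accounting. Even if proved, it would give an $\alpha_n$-independent constant, not the $1/\sqrt{\alpha_n}$ you attribute to it.
\end{itemize}

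\textbf{The fix.} The paper's bound is \eqref{eq: sup mYnlt max leq P + CYmax} in Proposition~\ref{proposition: Yns Yn0'' bounded above by sqrt n}:
$\normMax{\mA^{(\ell,t)}}\le\normF{\SamplingOperator(\gA^{(\ell,t)})-\mA^{(\ell,t)}}+\normB{\gA^{(\ell,t)}}\le C_{err}\,\mathcal{Q}_{\mA}\sqrt n+C_{\gA,\max}$.
This is the adjoint transfer estimate of Theorem~\ref{theorem: MSE Y Yn adjoint equation final} combined with max $\le$ Frobenius. It is the actual origin of the $\alpha_n^{-1/2}+\sqrt{\log}$ prefactor in the statement. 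Cite it, and for $\sigma'$ use the global bound \eqref{eq: sigma twice prime leq} as the paper does, rather than a bounded-range argument. With those two changes your proof closes.
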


\section{DTO versus OTD}\label{sec: DTO vs OTD}

We address the question raised in Section~\ref{sec: training of GNDEs}: under what conditions do the DTO and OTD training paradigms produce comparable gradients? Recall from the comparison in Section~\ref{sec: training of GNDEs} that the DTO gradient and the OTD gradient differ at finite step size $\kappa$ because their discrete recursions evaluate the adjoint operators at different points (left versus right endpoints). The results of Section~\ref{sec: ResGNN to GNDE} show that the OTD gradients, both for hidden states and for parameters, converge to their continuous-time adjoint GNDE counterparts. An analogous convergence result can also be established for DTO gradients. Therefore, by comparing both DTO and OTD gradients against the same continuous-time GNDE limit, we can control their discrepancy in terms of their respective convergence errors.

\begin{theorem}[DTO versus OTD for Hidden-State Gradients, proof in Appendix \ref{Appendix: DTO versus OTD for Gradients of Hidden States}]\label{theorem: hidden states gradients DTO and OTD}
Let $\VarDTOGradient^{[m]}$ and $\VarGNDEHiddenState^{[m]}$ be generated from \eqref{eq: discretized Gn} and \eqref{eq: discretized Yn}, respectively. Under the same assumptions as Theorem \ref{theorem: discretized adjoint gnde to adjoint gnde Hidden state}, with probability at least $1-2\gamma_1-\gamma_2-\gamma_3$, 
    \begin{align}\label{eq: theoretical dto otd state}
\frac{1}{\sqrt{n}}\max_{m\in[M]}\normF{\VarDTOGradient^{[m]}-\VarGNDEHiddenState^{[m]}}= \mathcal{O}\parens{\frac{\frac{1}{\sqrt{\alpha_n}}+\sqrt{\log(n_ILFK/\gamma_3)}+\sqrt{\log(n_ILFK/\gamma_2)}+\sqrt{\log(n_I/\gamma_1)}}{M}}.
    \end{align}
\end{theorem}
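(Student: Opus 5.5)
We compare the two backward recursions directly, treating $\VarDTOGradient^{[m]}$ as a perturbation of $\VarGNDEHiddenState^{[m]}$. An alternative is to pass both through the continuous adjoint $\VarGNDEHiddenState(t_m)$ and use Theorem~\ref{theorem: discretized adjoint gnde to adjoint gnde Hidden state}. The direct route is cleaner.

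Set $E^{[m]}:=\VarDTOGradient^{[m]}-\VarGNDEHiddenState^{[m]}$, so that $E^{[M]}=0$ by the common terminal value $\nabla_{\mX(T)}\loss=\VarGNDEHiddenState(T)$. Subtracting \eqref{eq: discretized Yn} from \eqref{eq: discretized Gn} gives
\begin{equation*}
E^{[m-1]} = E^{[m]} + \kappa\,\DX^{[m-1]}\parens{E^{[m]}} + \kappa\parens{\DX^{[m-1]}-\DX^{[m]}}\parens{\VarGNDEHiddenState^{[m]}}.
\end{equation*}
The argument then has three ingredients.

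\textbf{Step 1: uniform operator bound.} On the high-probability event of Theorem~\ref{theorem: discretized adjoint gnde to adjoint gnde Hidden state}, we need $\normTwo{\DX^{[m]}}\le C$ uniformly in $m$ and $n$. From the explicit form \eqref{adjoint operator: GNDE to filters}, $\DX^{[m]}$ is a composition of polynomials in $\matL$ (with $\normTwo{\matL}\le 1$), filter coefficients (bounded on $[0,T]$ by \ConvFilterLipschitz), and diagonal multiplications by $\sigma'$ (bounded by $L_\sigma$). Hence $\normTwo{\DX^{[m]}}\le C_W$ deterministically.

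\textbf{Step 2: consecutive operators differ by $O(\kappa)$, which is the main obstacle.} We need $\frac{1}{\sqrt n}\normF{(\DX^{[m-1]}-\DX^{[m]})(\VarGNDEHiddenState^{[m]})}\lesssim \kappa\,C_\star$, where $C_\star$ is the stated prefactor involving $1/\sqrt{\alpha_n}$ and the logarithms. The time-dependence of the filters contributes $L_h\kappa$ by \ConvFilterLipschitz. The state dependence enters through $\sigma'$ evaluated at the pre-activations of $\VarGNDE^{[m-1]}$ and $\VarGNDE^{[m]}$. Using \SigmaTwiceDiff, or $\sigma'$ Lipschitz, these differ entrywise by $L_{\sigma'}$ times the pre-activation difference. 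That difference is controlled by
\begin{equation*}
\VarGNDE^{[m]}-\VarGNDE^{[m-1]}=\kappa\,\GNN(\VarGNDE^{[m-1]};\matL,\tH(t_{m-1})).
\end{equation*}
The difficulty is that this is a product of two $n$-vectors. We must bound the Frobenius norm of $\mathrm{diag}(\sigma'\text{-difference})$ applied to a vector, and this requires a max-norm (row-wise) bound on either the adjoint $\VarGNDEHiddenState^{[m]}$ or the increment.

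To get it, we would reuse the row-wise/uniform bounds established earlier. On the event, $\VarGNDEHiddenState^{[m]}$ stays close to $\SamplingOperator\VarGraphonNDEHiddenState$, which is bounded in $\spaceB$. The entrywise control of $\matL^k$ applied to bounded vectors, coming from the concentration lemmas, is exactly what produces the $1/\sqrt{\alpha_n}$ and logarithmic factors. We expect these are the same estimates used to prove Theorem~\ref{theorem: discretized adjoint gnde to adjoint gnde Hidden state}, so we would invoke them rather than rederive them. Uniform boundedness of $\frac1{\sqrt n}\normF{\VarGNDEHiddenState^{[m]}}$ follows from Step 1 and a discrete Grönwall argument.

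\textbf{Step 3: discrete Grönwall.} Combining the recursion with Steps 1--2 gives
\begin{equation*}
\tfrac{1}{\sqrt n}\normF{E^{[m-1]}}\le (1+\kappa C_W)\tfrac{1}{\sqrt n}\normF{E^{[m]}}+C\kappa^2 C_\star .
\end{equation*}
Since $E^{[M]}=0$, summing over $M$ steps yields
\begin{equation*}
\max_m \tfrac1{\sqrt n}\normF{E^{[m]}}\le e^{C_W T}\,T\,C\,C_\star\,\kappa=\mathcal{O}(C_\star/M).
\end{equation*}
This is \eqref{eq: theoretical dto otd state}, holding on the same event with probability at least $1-2\gamma_1-\gamma_2-\gamma_3$.
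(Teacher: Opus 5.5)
Your error recursion for $E^{[m]}$, the deterministic operator bound $(L_\sigma FK\Ch)^L$ in Step~1, and the discrete Gr\"onwall summation in Step~3 are all fine. The gap is in Step~2, which carries all the content and does not close the way you propose.

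The difference $(\DX^{[m-1]}-\DX^{[m]})(\VarGNDEHiddenState^{[m]})$ produces, at every layer $\ell$, a term $(\hatVarGNDE^{[\ell,m-1]}-\hatVarGNDE^{[\ell,m]})\odot\VarGNDEHiddenState^{[\ell,m]}$. The $\hatVarGNDE$-increment is controlled only in Frobenius norm: it is $\lesssim\sqrt n/M$ by \eqref{eq: hat mXln diff [m] leq 1/M every layer}. Nothing on the paper's events bounds $\matL^k$ in $\infty\to\infty$ norm, so you cannot put the max-norm on the increment. You therefore need $\normMax{\VarGNDEHiddenState^{[\ell,m]}}$ bounded uniformly in $n$ and $M$.

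Deducing this from closeness of the discrete adjoint to $\SamplingOperator(\VarGraphonNDEHiddenState)$ fails. That closeness holds only in normalized Frobenius norm, at level $\mathcal{O}((\alpha_n n)^{-1/2})$ (up to logarithms) plus $\mathcal{O}(C_\star/M)$, and passing to a max-norm costs a factor $\sqrt n$. The graph-sampling part becomes the harmless $\mathcal{O}(1/\sqrt{\alpha_n}+\cdots)$. This, not an entrywise concentration bound for $\matL^k$, is where the $1/\sqrt{\alpha_n}$ in the theorem comes from. But the time-discretization part becomes $C_\star\sqrt n/M$. Your Step~2 then gives only $\kappa C_\star(1+\sqrt n/M)$, and Step~3 yields $\mathcal{O}(C_\star/M+C_\star\sqrt n/M^2)$. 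That is the claimed rate only when $M\gtrsim\sqrt n$.

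The missing idea is to anchor at the continuous adjoint. Its layer-wise states are the only ones with a uniform max-norm bound, namely \eqref{eq: sup mYnlt max leq P + CYmax}. Write $\VarGNDEHiddenState^{[\ell,m]}=\VarGNDEHiddenState\mylt[\ell][t_m]+R$ and put the max-norm on $\VarGNDEHiddenState\mylt[\ell][t_m]$. For the remainder, bound $(\hatVarGNDE^{[\ell,m-1]}-\hatVarGNDE^{[\ell,m]})\odot R$ using $\normMax{\hatVarGNDE}\le L_\sigma$ and $\normF{R}\le\CYdiff\sqrt n/M$ from \eqref{eq: Ynlm error}. After the outer factor $\kappa$, this contributes only $\mathcal{O}(\kappa^2 C_\star)$ per step.

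With that repair your perturbation argument works. However, it now consumes the full OTD-versus-continuous estimate (Proposition~\ref{proposition: error Ynm - Yntm estimate}), so it is not independent of $\VarGNDEHiddenState(t_m)$ and is not really cleaner. The paper's proof is the triangle inequality through $\VarGNDEHiddenState(t_m)$. It combines that proposition with its DTO analogue, Proposition~\ref{proposition: error Gnm - Yntm estimate}, which is built on Lemma~\ref{lemma: Gnlm - Unltm estimates} and anchors at the continuous layer-wise adjoints for exactly this max-norm reason. The repaired direct route would spare you the DTO-versus-continuous proposition, but nothing more.
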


For the parameter gradients, the estimate concerns the \emph{local} gradient contribution on each time interval. This contribution contains an explicit factor $\kappa=T/M$ from the time discretization. As a result, the local difference is multiplied by this extra factor of $\kappa$, yielding the rate $\mathcal{O}(1/M^2)$ for each parameter-gradient contribution.

\begin{theorem}[DTO versus OTD for Parameter Gradients, proof in Appendix \ref{Appendix: DTO versus OTD for Gradients of Parameters}]\label{theorem: parameter gradients DTO and OTD}
Let $\VarDTOGradient^{[m]}$ and $\VarGNDEHiddenState^{[m]}$ be generated from \eqref{eq: discretized Gn} and \eqref{eq: discretized Yn}, respectively. Under the same assumptions as Theorem \ref{theorem: discretized adjoint gnde to adjoint gnde Hidden state}, with probability at least $1-2\gamma_1-\gamma_2-\gamma_3$, 
\begin{equation}\label{eq: theoretical dto otd param}
\begin{aligned}
&\frac{1}{n}\max_{m\in[M],\ell\in[L]}\normMax{\kappa\,\Dh^{[\ell,m]}\parens{\VarDTOGradient^{[m+1]}} - \kappa\,\mathcal{D}_h^{[\ell,m]}\parens{\VarGNDEHiddenState^{[m]}}}\\
=\ & \mathcal{O}\parens{\frac{\frac{1}{\sqrt{\alpha_n}}+\sqrt{\log(n_ILFK/\gamma_3)}+\sqrt{\log(n_ILFK/\gamma_2)}+\sqrt{\log(n_I/\gamma_1)}}{M^2}} .
    \end{aligned}
\end{equation}

\end{theorem}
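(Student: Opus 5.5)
The estimate follows from an add-and-subtract split at the level of the operator $\Dh^{[\ell,m]}$. Both DTO and OTD use the same operator $\Dh^{[\ell,m]}$, which is linear in its argument since it is the adjoint of a Gâteaux derivative. So
\[
\kappa\,\Dh^{[\ell,m]}\parens{\VarDTOGradient^{[m+1]}} - \kappa\,\Dh^{[\ell,m]}\parens{\VarGNDEHiddenState^{[m]}}
= \kappa\,\Dh^{[\ell,m]}\parens{\VarDTOGradient^{[m+1]}-\VarGNDEHiddenState^{[m]}}.
\]
It therefore suffices to prove two things. First, an operator bound $\frac{1}{n}\normMax{\Dh^{[\ell,m]}(\mY)} \le C\,\frac{1}{\sqrt n}\normF{\mY}$, holding uniformly in $m,\ell$ on the high-probability event. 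Second, the mismatch $\frac{1}{\sqrt n}\normF{\VarDTOGradient^{[m+1]}-\VarGNDEHiddenState^{[m]}}$ is $\mathcal{O}(\text{prefactor}/M)$, where "prefactor" denotes the sparsity/log numerator in \eqref{eq: theoretical dto otd param}. Multiplying by $\kappa=T/M$ then gives the claimed $1/M^2$.

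For the operator bound, I use the explicit form of $\Dh^{[\ell,m]}$ from the appendix (\eq{adjoint operator: GNDE to filters}). Its $(f,g,k)$ entry is an inner product of the backpropagated signal at layer $\ell$ with $\matL^k\VarGNDE^{[m],(\ell-1)}_g$. I apply Cauchy–Schwarz and use three facts:
\begin{itemize}
\item $\normTwo{\matL}\le 1$;
\item the filters are bounded on $[0,T]$ (by \ConvFilterDiff\ and continuity), and $\sigma'$ is bounded;
\item $\frac{1}{\sqrt n}\normF{\VarGNDE^{[m],(\ell-1)}}$ is bounded uniformly in $m$. This follows from the Euler stability of \eqref{eq: Euler's method for Xn} together with Theorem~\ref{theorem: discretized gnde to gnde}, or directly from a discrete Grönwall argument with $\sigma(0)=0$.
\end{itemize}
Together these give $|[\Dh^{[\ell,m]}(\mY)]_{fgk}|\lesssim \normF{\VarGNDE^{[m],(\ell-1)}}\normF{\mY}$, hence the $n^{-1}$ normalization produces the $\frac{1}{\sqrt n}$-scaled Frobenius bound with a constant independent of $n$ and $M$.

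For the mismatch, I first write $\VarDTOGradient^{[m+1]}-\VarGNDEHiddenState^{[m]} = (\VarDTOGradient^{[m+1]}-\VarDTOGradient^{[m]}) + (\VarDTOGradient^{[m]}-\VarGNDEHiddenState^{[m]})$.
\begin{itemize}
\item The second term is $\mathcal{O}(\text{prefactor}/M)$ by Theorem~\ref{theorem: hidden states gradients DTO and OTD}.
\item For the first term, the recursion \eqref{eq: discretized Gn} gives $\VarDTOGradient^{[m]}-\VarDTOGradient^{[m+1]}=\kappa\,\DX^{[m]}(\VarDTOGradient^{[m+1]})$. It is then enough to bound $\normTwo{\DX^{[m]}}$ uniformly, which holds because $\sigma'$ and the filters are bounded and $\normTwo{\matL}\le1$, and to bound $\frac{1}{\sqrt n}\normF{\VarDTOGradient^{[m+1]}}$ uniformly.
\item That last bound follows from a discrete Grönwall argument: $\normF{\VarDTOGradient^{[m-1]}}\le(1+C\kappa)\normF{\VarDTOGradient^{[m]}}$, so the iterates stay within $e^{CT}\normF{\VarDTOGradient^{[M]}}$. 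The terminal value equals $\VarGNDEHiddenState(T)=\SamplingOperator(\VarGraphonNDEHiddenState(\cdot,T))$, which has $\frac{1}{\sqrt n}$-Frobenius norm bounded by $\normB{\VarGraphonNDEHiddenState(\cdot,T)}$.
\end{itemize}
So the first term is $\mathcal{O}(1/M)$ with a sparsity-free constant, which is absorbed into the prefactor.

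The main obstacle is ensuring that all constants in the operator bounds for $\DX^{[m]}$ and $\Dh^{[\ell,m]}$ are uniform in $n$, $m$ and $M$ on the single high-probability event of Theorem~\ref{theorem: discretized adjoint gnde to adjoint gnde Hidden state}. In particular, the sparse normalized adjacency $\matL$ must not spoil the bounds. Since $\normTwo{\matL}\le 1$ holds deterministically for the symmetric normalized adjacency (including the zero-degree convention), I expect this to be fine. The remaining care is that the $\frac{1}{\sqrt n}$ versus $\frac1n$ normalizations match when passing from Frobenius norms to the max-norm of the filter gradient. Combining the pieces yields \eqref{eq: theoretical dto otd param}.
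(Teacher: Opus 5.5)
Your proof is correct and shares the paper's skeleton: linearity of $\Dh^{[\ell,m]}$, the bound $\normMax{\phi_{\mX}^*(\mZ)}\le\normF{\mX}\normF{\mZ}$, an $\mathcal{O}(1/M)$ mismatch, and then the extra factor $\kappa$. The difference is in how you obtain the mismatch.

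The paper stays at layer $\ell$. It splits $\VarDTOGradient^{[\ell,m+1]}-\VarGNDEHiddenState^{[\ell,m]}$ through the continuous adjoint at $t_{m+1}$ and $t_m$. The two outer terms are handled by the layer-wise estimates of Propositions~\ref{proposition: error Gnm - Yntm estimate} and~\ref{proposition: error Ynm - Yntm estimate}, and the middle term by the time-derivative bound~\eqref{eq: dYnLt/dt F norm bound}.

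You instead reduce to the output layer using the deterministic backpropagation factor $(L_\sigma FK\Ch)^{L-\ell}$. This is valid because both arguments are backpropagated through the same network at step $m$. You then split through the neighbouring DTO iterate $\VarDTOGradient^{[m]}$. Theorem~\ref{theorem: hidden states gradients DTO and OTD} handles one piece as a black box. The increment $\kappa\,\DX^{[m]}(\VarDTOGradient^{[m+1]})$ is controlled by discrete stability of the DTO recursion started from $\VarDTOGradient^{[M]}=\SamplingOperator(\VarGraphonNDEHiddenState(\cdot,T))$.

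What your route gains:
\begin{itemize}
\item The step-increment term gets a deterministic, sparsity-free constant, whereas the paper's middle term inherits the $\alpha_n$-dependent constant of~\eqref{eq: dYnLt/dt F norm bound}.
\item Bounding $\normF{\VarGNDE^{[\ell,m]}}$ via Euler stability with $\sigma(0)=0$ avoids the graphon comparison for that factor.
\end{itemize}
The paper's route is shorter, given the layer-wise propositions it has already established.

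The final rate is the same either way. It is dominated by the $\mathcal{O}(1/M)$ hidden-state DTO--OTD discrepancy, which carries the sparsity and logarithmic factors.
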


Theorems~\ref{theorem: hidden states gradients DTO and OTD} and
\ref{theorem: parameter gradients DTO and OTD} show that DTO and OTD become
asymptotically equivalent as $M\to\infty$. The hidden-state gradient
discrepancy vanishes at order $\mathcal{O}(1/M)$, while each local
parameter-gradient difference vanishes at order $\mathcal{O}(1/M^2)$. Thus,
for sufficiently fine time discretizations, the gradient discrepancy between
DTO and OTD becomes small. The choice between them is then mainly determined by
implementation costs: DTO differentiates through the discrete solver and
typically requires storing the forward states, whereas OTD computes gradients
through the continuous adjoint equation solved backward in time. We numerically
assess the two predicted rates in Section~\ref{subsec: exp dto otd}.

\section{Numerical Experiments}\label{sec: experiments}

We organize the numerical experiments into two parts. First, we validate the
convergence rates predicted by our analysis in
Section~\ref{sec: validation of theory}. Second, we test the practical
size-transferability of learned GNDE dynamics by training on small source graphs
and evaluating on independently sampled larger target graphs in
Section~\ref{subsec: exp transfer}.

\subsection{Validation of theoretical rates}\label{sec: validation of theory}

We validate our theoretical results through five groups of experiments, organized into two categories. First, we study graph-size transfer errors, denoted by $\mathcal{TE}$. These experiments support the convergence results for forward trajectories, hidden-state gradients, and parameter gradients in Theorems~\ref{theorem: GNDE -> Graphon-NDE}, \ref{theorem: MSE Y Yn adjoint equation final}, and~\ref{theorem: z - zn leq TQh}, respectively. They are presented in Sections~\ref{subsec: exp forward}, \ref{subsec: exp hidden state gradient}, and~\ref{subsec: exp parameter gradient}. Second, we study temporal discretization errors and DTO--OTD consistency, denoted by $\mathcal{DE}$. These experiments validate the discretization and gradient-consistency results in Theorems~\ref{theorem: discretized gnde to gnde}, \ref{theorem: hidden states gradients DTO and OTD}, and~\ref{theorem: parameter gradients DTO and OTD}. They are presented in Sections~\ref{subsec: exp discretization} and~\ref{subsec: exp dto otd}.

\paragraph{Graphons.} We consider two graphons: (i) an hierarchical stochastic block model (HSBM) graphon
$\tW:[0,1]^2\to{0,1}$ with two hierarchical levels; and (ii) the tent graphon
$\tW(x,y)=\max(0,1-|x-y|)$. These graphons are shown in
Figure~\ref{fig:graphons}.

\paragraph{Graph sampling.}
Given $n$ nodes sampled independently and uniformly from the unit interval $I$,
denoted by $U_n=\{u_i\}_{i=1}^n$, edges are sampled independently as Bernoulli
random variables with probabilities $\alpha_n \tW(u_i,u_j)$, where
$\alpha_n\in(0,1]$ controls the sparsity level.

\paragraph{GNDE architecture.}
The vector field of the GNDE is parameterized by a GNN with three hidden layers and one output layer (\texttt{fc1}, \texttt{mid1}, \texttt{mid2}, \texttt{fc2}). The width of each hidden layer is $16$. Each hidden layer applies a polynomial graph convolution filter
with $K=3$ using the symmetric normalized adjacency matrix $\matL$, followed by the Softplus activation $\sigma(x)=\log(1+e^x)$. Node features of
dimension $F=4$ are initialized using random Fourier polynomials of degree $10$, i.e., for $i$-th node $u_i$ and $f$-th feature, the initial feature is $[\mZ]_{i,f} = \sum_{k=1}^{10} a_{f,k}\sin(2\pi k\, u_i) + b_{f,k}\cos(2\pi k\, u_i)$,
where coefficients $a_{f,k}$ and $b_{f,k}$ are drawn independently from normal distribution $\mathcal{N}(0,1)$.

\subsubsection{Transfer Error for Forward trajectories}\label{subsec: exp forward}

We numerically validate the convergence rate of finite-node GNDE solutions
toward the Graphon-NDE solution established in
Theorem~\ref{theorem: GNDE -> Graphon-NDE}. The Graphon-NDE solution
$\VarGraphonNDE(u,t)$ has a continuum node variable
$u\in [0,1]$ and time $t\in[0,T]$, and is not available in closed form.
Therefore, we approximate the Graphon-NDE trajectory by a high-resolution GNDE
trajectory $\VarGNDE_{n_{\mathrm{ref}}}(t)$ on a large reference graph
$\mathcal{G}_{n_{\mathrm{ref}}}$ with $n_{\mathrm{ref}}=10000$ nodes placed at
evenly spaced latent positions. This reference GNDE is then solved using
Euler's method with $M_{\mathrm{ref}}=256$ time steps. We denote the resulting
numerical solution by
$\VarGNDE_{n_{\mathrm{ref}},M_{\mathrm{ref}}}^{[m]}$, which serves as a
surrogate for $\VarGNDE_{n_{\mathrm{ref}}}(t_m)$, with $t_m=mT/M_{\mathrm{ref}}$. 

For each smaller graph size $n$, we compute the corresponding approximation $\VarGNDE_{n,M_{\mathrm{ref}}}^{[m]}$ using the same time
discretization. We slightly abuse notation and use $\SamplingOperator$ to denote
the numerical sampling map from the reference grid to the node set $U_n$ of the
smaller randomly sampled graph by using nearest neighbors. We define the empirical transfer error for forward trajectory by
\begin{equation}\label{eq: E forward}
    \mathcal{TE}_{\mathrm{fwd}}(n)
    :=
    \max_{m\in[M_{\mathrm{ref}}]}
    \frac{1}{\sqrt{n}}
    \normF{
    \VarGNDE_{n,M_{\mathrm{ref}}}^{[m]}
    -
    \SamplingOperator\left(
    \VarGNDE_{n_{\mathrm{ref}},M_{\mathrm{ref}}}^{[m]}
    \right)
    } .
\end{equation}
Here, $\mathcal{TE}_{\mathrm{fwd}}(n)$ serves as a numerical surrogate for the
discrepancy between the GNDE and Graphon-NDE solutions, namely the left-hand
side of \eqref{eq: GNDE -> Graphon-NDE}. We choose $n_{\mathrm{ref}}$ and
$M_{\mathrm{ref}}$ sufficiently large so that $\mathcal{TE}_{\mathrm{fwd}}(n)$
accurately approximates this exact error and the reference approximation error
is negligible. We therefore expect $\mathcal{TE}_{\mathrm{fwd}}(n)$ to exhibit
the rate $\mathcal{O}(1/\sqrt{\alpha_n n})$, up to logarithmic factors, predicted by
Theorem~\ref{theorem: GNDE -> Graphon-NDE}.

Concretely, we consider graph sizes
$n \in \{100,150,200,300,500,700,1000,1500,2000,3000\}$ and run 100 independent
trials. In each trial, we sample a set of random Fourier coefficients, and initialize the GNDE, whose weights are shared across
all graph sizes in that trial. For each $n$, we consider three sparsity levels
$\alpha_n \in \{1, n^{-0.25}, n^{-0.5}\}$. Since the theoretical rate is $\mathcal{O}(1/\sqrt{\alpha_n n})$, the expected
log-log slopes are $-0.50$, $-0.375$, and $-0.25$, respectively. 
Figure~\ref{fig:forward_convergence} shows clear power-law decay across all
three regimes for both the tent and HSBM graphons, and the fitted slopes remain
close to these theoretical values.

\begin{figure}[t]
    \centering
    \includegraphics[width=\textwidth]{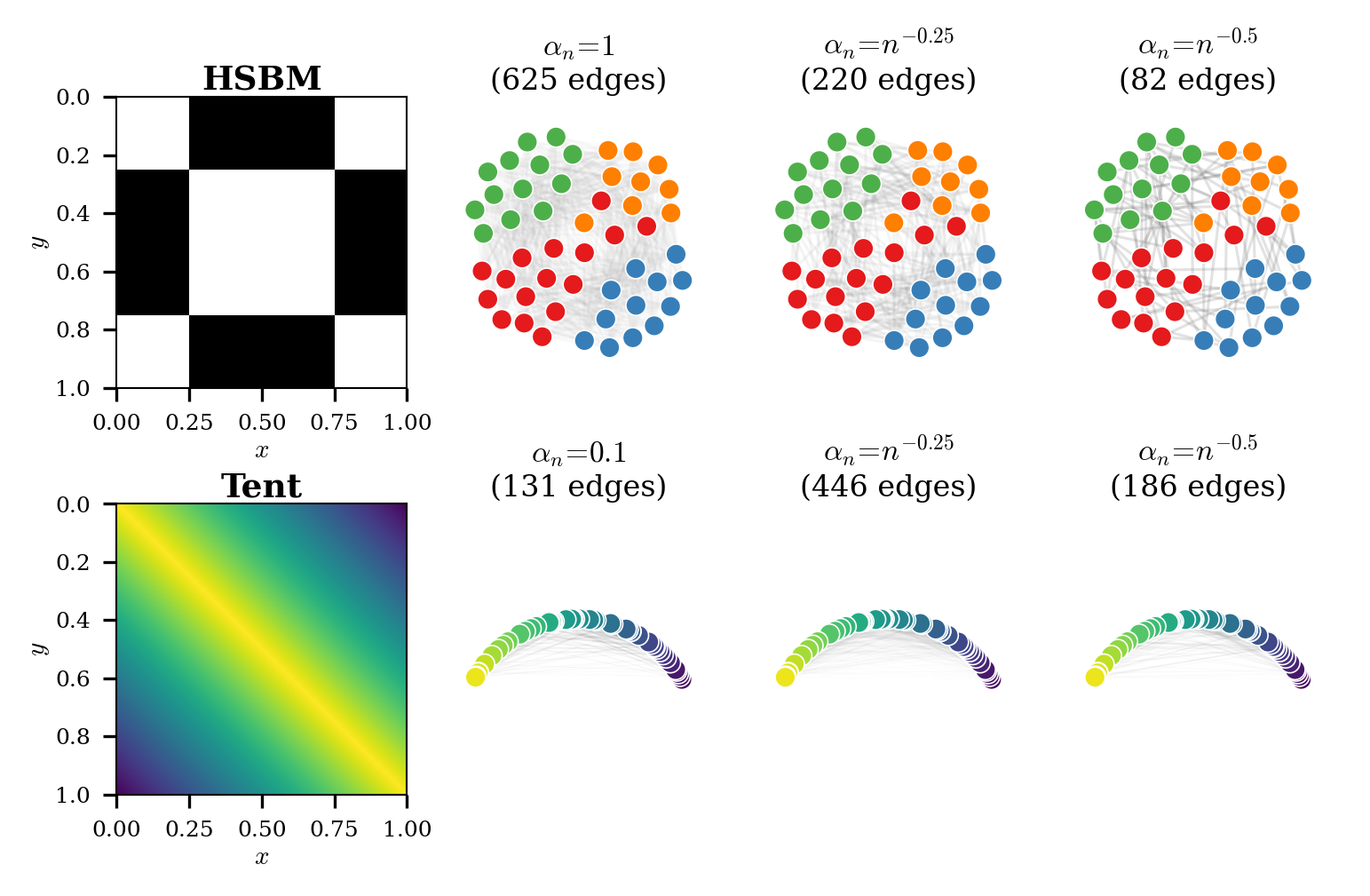}
    \caption{Illustration of the tent and HSBM graphons and representative sampled graphs. The graphs are sampled from each graphon with edge probabilities
$\alpha_n \tW(u_i,u_j)$, where
$\alpha_n \in \{1,\,n^{-0.25},\,n^{-0.5}\}$.}
    \label{fig:graphons}
\end{figure}

\begin{figure}[t]
    \centering
    \includegraphics[width=\textwidth]{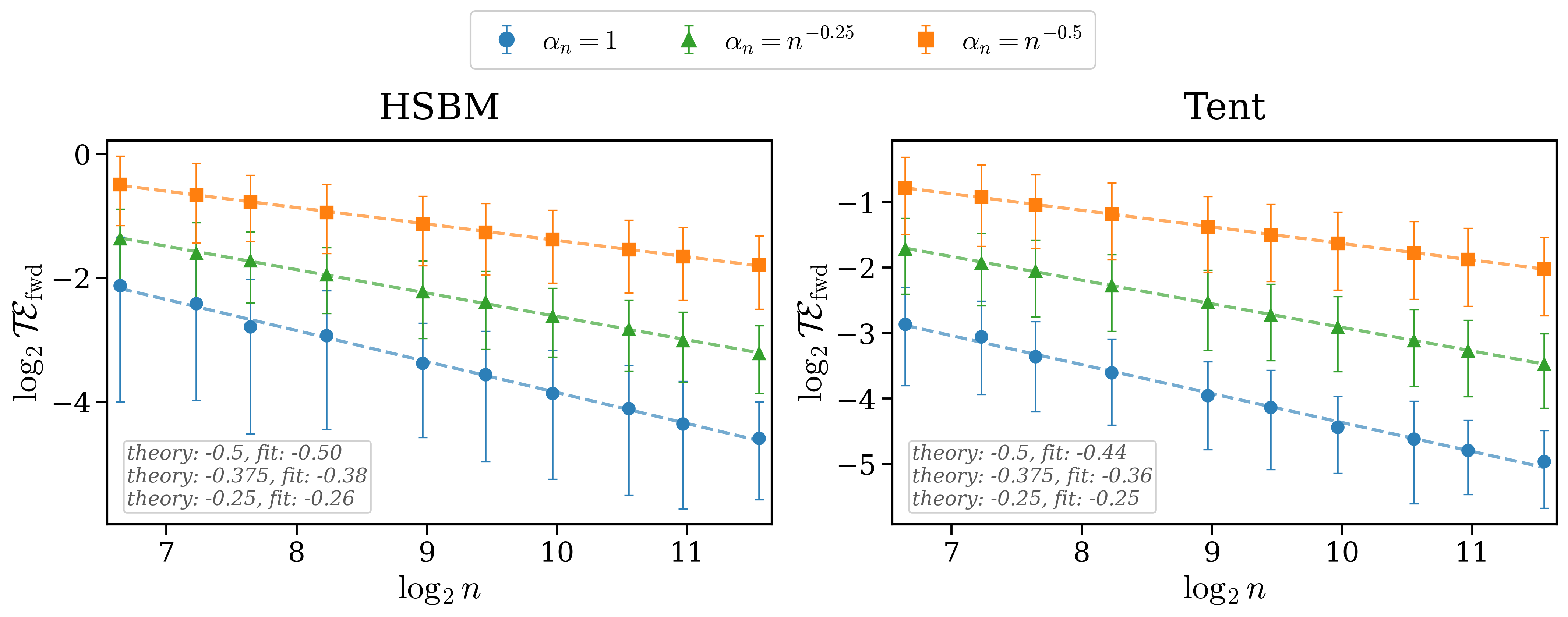}
    \caption{Transfer error for forward trajectory: log-log plot of $\mathcal{TE}_{\mathrm{fwd}}(n)$ versus the number of nodes $n$ for the HSBM graphon (left) and the tent graphon (right). Three sparsity levels $\alpha_n\in\{1,n^{-0.25},n^{-0.5}\}$ are considered. Markers denote empirical means, error bars show standard deviations, and dashed lines indicate fitted power-law trends. The fitted slopes are consistent with the theoretical rate $\mathcal{O}(1/\sqrt{\alpha_n n})$.}
    \label{fig:forward_convergence}
\end{figure}

\subsubsection{Transfer Error for Hidden-state Gradients}\label{subsec: exp hidden state gradient}

We next validate the hidden-state adjoint convergence rate from
Theorem~\ref{theorem: MSE Y Yn adjoint equation final}. We follow the same reference-graph protocol
as in Section~\ref{subsec: exp forward}. In each trial,
we additionally draw an independent random Fourier polynomial as the terminal
adjoint state and solve the adjoint equation backward in time on both the
reference graph and each sampled graph, using the same Euler time discretization. We define the empirical transfer error for hidden-state gradients as
\begin{equation}\label{eq: E adjoint}
\mathcal{TE}_{\mathrm{state}}(n)
:=
\max_{m\in[M_{\mathrm{ref}}]}
\frac{1}{\sqrt{n}}
\normF{
\VarGNDEHiddenState_{n,M_{\mathrm{ref}}}^{[m]}
-
\SamplingOperator\left(
\VarGNDEHiddenState_{n_{\mathrm{ref}},M_{\mathrm{ref}}}^{[m]}
\right)
} .
\end{equation}
This quantity is the discretized analogue of the left-hand side
in~\eqref{eq:adjoint GNDE -> adjoint Graphon-NDE}. Theorem~\ref{theorem:
MSE Y Yn adjoint equation final} predicts the worst-case rate
$\mathcal{O}(1/\sqrt{\alpha_n n})$, up to logarithmic factors.

Figure~\ref{fig:adjoint_convergence} reports the results over the same setting of graph
sizes, sparsity levels, and independent trials as in Section \ref{subsec: exp forward}. The empirical
decay rates are consistent with the theoretical rate
$\mathcal{O}(1/\sqrt{\alpha_n n})$ for both the HSBM and tent graphons; in the sparsest regime
$\alpha_n=n^{-1/2}$, the observed convergence is slightly faster than the worst-case prediction.

\begin{figure}[t]
    \centering
    \includegraphics[width=\textwidth]{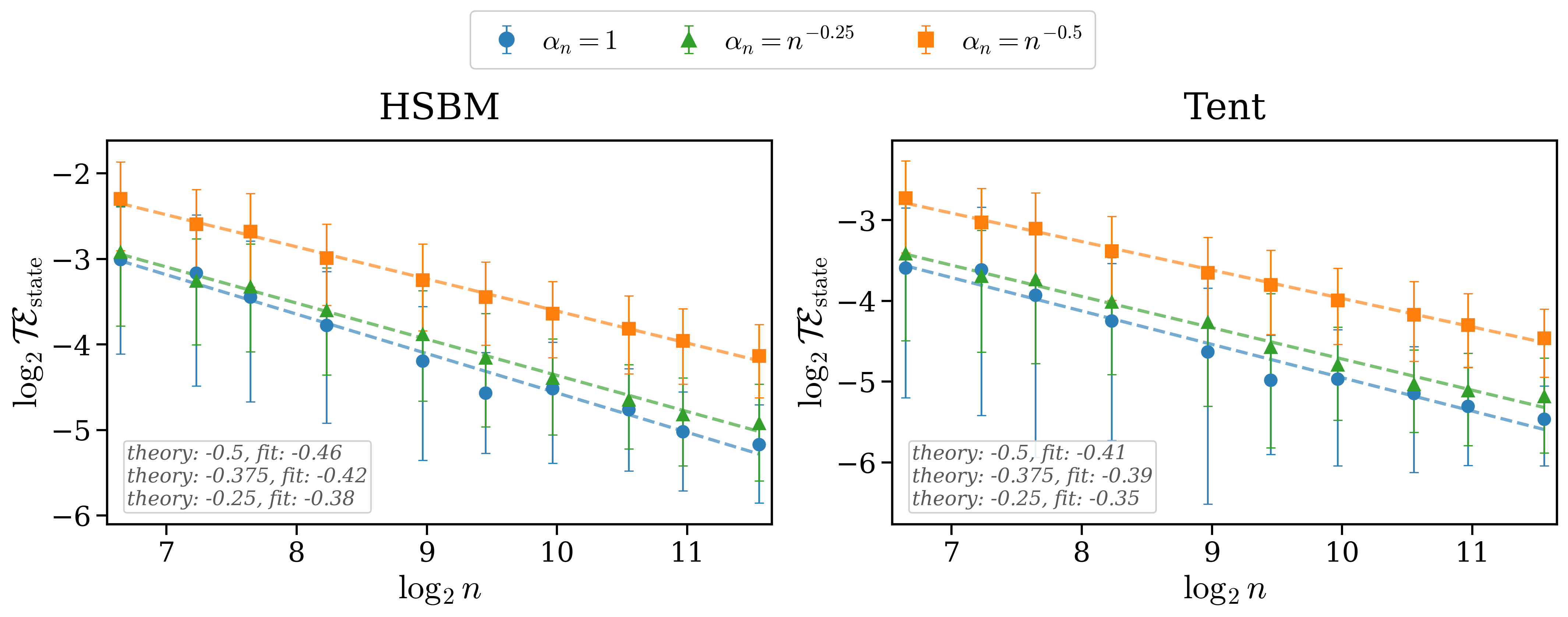}
    \caption{Transfer error for hidden-state gradients: log-log plot of $\mathcal{TE}_{\mathrm{state}}(n)$ versus the number of nodes $n$ for the HSBM graphon (left) and the tent graphon (right), using the same graph sizes and sparsity levels as in Figure~\ref{fig:forward_convergence}. The adjoint trajectories are initialized from a terminal random Fourier state and integrated backward in time. The empirical decay rates are consistent with the theoretical worst-case bound $\mathcal{O}(1/\sqrt{\alpha_n n})$. For sparse graphs, the observed convergence rate is slightly faster than the worst-case theoretical rates.
}

    \label{fig:adjoint_convergence}
\end{figure}

\subsubsection{Transfer Error for Parameter Gradients}\label{subsec: exp parameter gradient}

We validate the parameter-gradient convergence rate from
Theorem~\ref{theorem: z - zn leq TQh}. We apply the same transfer protocol as in Sections~\ref{subsec: exp forward} and \ref{subsec: exp hidden state gradient}. At each backward-in-time Euler step, we
additionally extract the gradients of parameters in four layers, and define the transfer error for parameter gradients as 
\begin{equation}\label{eq: E param grad}
\mathcal{TE}_{\mathrm{param}}(n) := \max_{\ell\in[4]}\,\max_{m\in[M_{\mathrm{ref}}]}
\Big\|\tfrac{1}{n}\,\VarGNDEParameter_{n,M_{\mathrm{ref}}}^{[\ell,m]} - \tfrac{1}{N_{\mathrm{ref}}}\,
\VarGNDEParameter_{N_{\mathrm{ref},M_{\mathrm{ref}}}}^{[\ell,m]}\Big\|_{\max},
\end{equation}
which is a discrete analogue of the left-hand side of \eq{eq:adjoint GNDE parameter -> adjoint Graphon-NDE parameter}. It follows from Theorem~\ref{theorem: z - zn leq TQh} that the leading
worst-case sparsity-dependent rate is
$\mathcal{O}(1/(\alpha_n\sqrt{n}))$, up to logarithmic factors.

As before, we consider three sparsity regimes
$\alpha_n\in\{1,n^{-0.25},n^{-0.5}\}$. The corresponding theoretical
worst-case rates are
$\mathcal{O}(n^{-0.5})$, $\mathcal{O}(n^{-0.25})$, and
$\mathcal{O}(1)$, respectively. In particular, for the sparsest regime
$\alpha_n=n^{-0.5}$, the theorem does not guarantee convergence of the parameter-gradient transfer  error. However, in the numerical experiments, we observe substantially faster decay than the worst-case rate predicts. We report the results for $\mathcal{TE}_{\mathrm{param}}(n)$ in Figure~\ref{fig:param_grad_convergence}. For both the HSBM and tent graphons, the empirical decay is close to $\mathcal{O}(n^{-0.5})$ across all three sparsity levels. Thus, the dense case is consistent with the theoretical worst-case rate, while the sparse cases exhibit faster convergence than the
worst-case bound suggests. This indicates that the theoretical bound for parameter-gradient transfer on sparse graphs is conservative in these
experiments.

\begin{figure}[t]
    \centering
    \includegraphics[width=\textwidth]{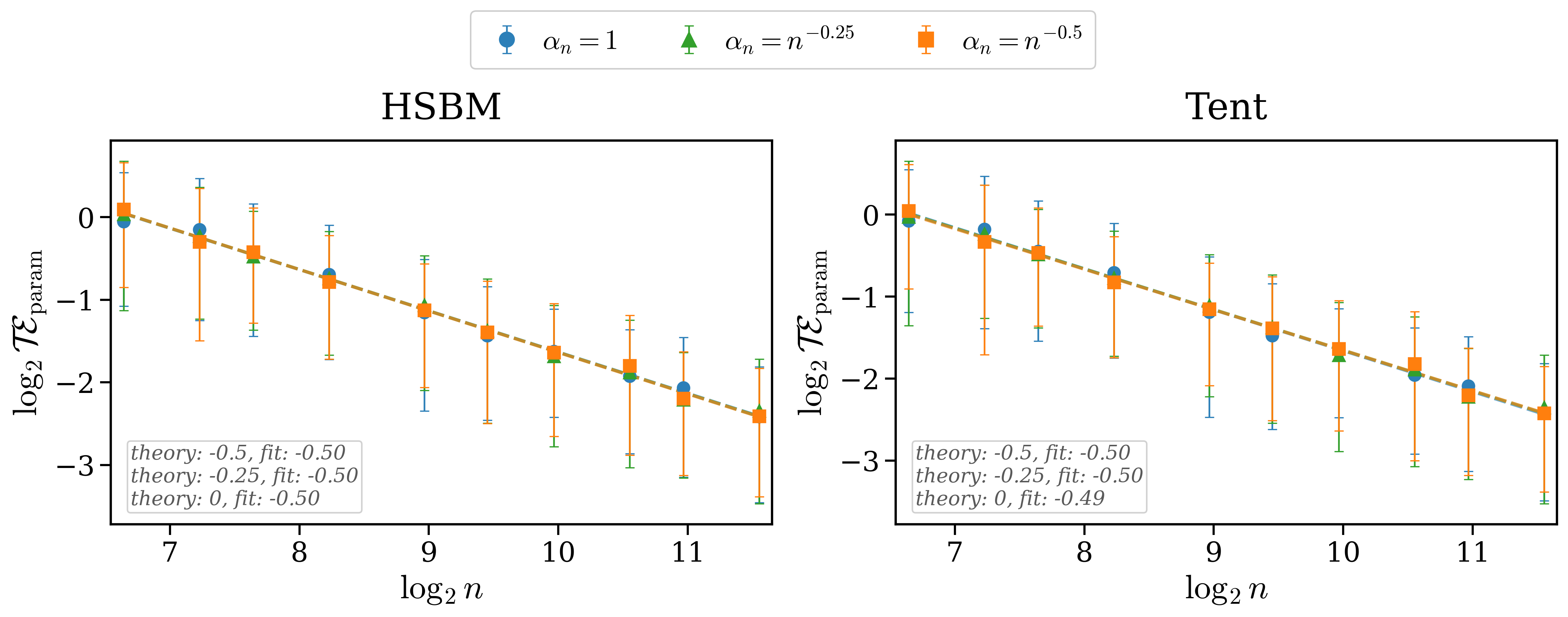}
    \caption{Transfer error for parameter gradients: log-log plot of
    $\mathcal{TE}_{\mathrm{param}}(n)$ versus the number of nodes $n$ for the HSBM graphon (left) and tent graphon
    (right). Three sparsity levels $\alpha_n\in\{1,n^{-0.25},n^{-0.5}\}$ are considered. The observed empirical rates for both graphons across three sparsity levels are all around $\mathcal{O}(n^{-0.5})$. The empirical rate matches the dense-graph theoretical rate and is much faster than the worst-case theoretical rates predicted for sparse graphs.
}
    \label{fig:param_grad_convergence}
\end{figure}

\subsubsection{Discretization Error for Forward Trajectories}\label{subsec: exp discretization}

We numerically validate the theoretical error rate established in
Theorem~\ref{theorem: discretized gnde to gnde} for Euler discretizations of
GNDEs. The theorem
bounds the discrepancy between the exact GNDE solution
$\VarGNDE_n(t_m)$ and its Euler approximation $\VarGNDE_n^{[m]}$. Again, in practice, the exact trajectory $\VarGNDE_n(t)$ is not available in closed form.
Therefore, we use a fine-resolution Euler solution as a reference approximation.

Specifically, we compute a reference trajectory
$\VarGNDE_{n,M_{\mathrm{ref}}}^{[k]}$ using Euler's method with
$M_{\mathrm{ref}}=256$ time steps. For each coarse resolution
$M \in \{2,4,8,16,32,64\}$, we compute the Euler iterates
$\VarGNDE_{n,M}^{[m]}$ on the time grid $t_m=mT/M$, $m\in[M]$. Since each $M$ divides $M_{\mathrm{ref}}$, the coarse time grid is nested in the reference
grid, and the matching fine-grid index is $k_m = mM_{\mathrm{ref}}/M$. We define
\begin{equation}\label{eq: E disc}
\mathcal{DE}_{\mathrm{fwd}}(M)
    :=
    \frac{1}{\sqrt{n}}
    \max_{m\in[M]}
    \normF{
    \VarGNDE_{n,M}^{[m]}
    -
    \VarGNDE_{n,M_{\mathrm{ref}}}^{[k_m]}
    } .
\end{equation}
Here $\VarGNDE_{n,M_{\mathrm{ref}}}^{[k_m]}$ serves as a numerical
surrogate for the exact solution $\VarGNDE_n(t_m)$ at time $t_m$. We expect
$\mathcal{DE}_{\mathrm{fwd}}(M)$ to exhibit the first-order rate
$\mathcal{O}(1/M)$ predicted by Theorem~\ref{theorem: discretized gnde to gnde}.

For each graphon, we fix a single random graph $\mathcal{G}_n$ with
$n=2000$ nodes, sampled from the $512\times512$ graphon discretization without edge drop, i.e., in the dense-graph regime. We then vary only the
random Fourier initialization and model weights across 50 random seeds.

Figure~\ref{fig:discretization} shows that $\mathcal{DE}_{\mathrm{fwd}}(M)$
decays approximately as a power law in $M$ for both graphons. The fitted
log-log slopes are close to $-1$, consistent with the theoretical first-order rate $\mathcal{O}(1/M)$, and this behavior is stable across the 50 random trials.

\begin{figure}[t]
    \centering
    \includegraphics[width=\textwidth]{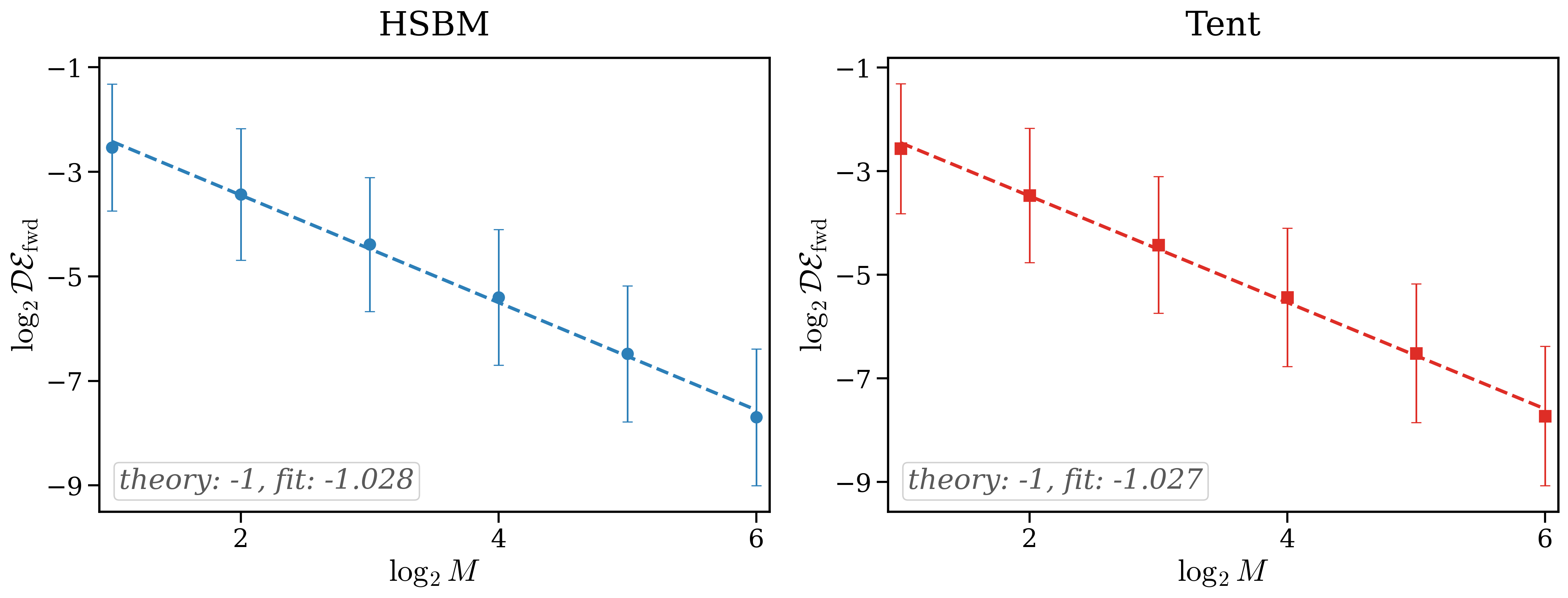}
    \caption{Temporal discretization error: log-log plot of $\mathcal{DE}_{\mathrm{fwd}}(M)$ versus the number of Euler steps $M$ for the HSBM graphon (left) and the tent graphon (right). The setup uses $n=2000$ nodes in the dense-graph setting ($\alpha_n=1$). The fitted log-log slopes are close to $-1$, consistent with the theoretical rate $\mathcal{O}(1/M)$.}
    \label{fig:discretization}
\end{figure}

\subsubsection{DTO--OTD Gradient Discrepancy}\label{subsec: exp dto otd}
We numerically verify the theoretical DTO--OTD error rates for hidden-state
gradients and parameter gradients established in
Theorems~\ref{theorem: hidden states gradients DTO and OTD}
and~\ref{theorem: parameter gradients DTO and OTD}. To this end, we define hidden-state gradient discrepancy (i.e., the left side of \eqref{eq: theoretical dto otd state}) by 
\begin{equation*}
\mathcal{DE}_{\mathrm{state}}(M) := \frac{1}{\sqrt{n}}\max_{m\in[M]} \normF{\VarDTOGradient^{[m]} - \VarGNDEHiddenState^{[m]}},
\end{equation*}
and parameter gradient discrepancy (i.e., the left side of \eqref{eq: theoretical dto otd param}) by 
\begin{equation*}
\mathcal{DE}_{\mathrm{param}}^{(\ell)}(M) := \frac{1}{n}\max_{m\in[M]} \normMax{\kappa\,\Dh^{[\ell,m]}\!\parens{\VarDTOGradient^{[m+1]}} - \kappa\,\Dh^{[\ell,m]}\!\parens{\VarGNDEHiddenState^{[m]}}},\quad \ell \in [L]. 
\end{equation*}
Note that Theorems~\ref{theorem: hidden states gradients DTO and OTD}
and~\ref{theorem: parameter gradients DTO and OTD} establish the theoretical
rates $\mathcal{DE}_{\mathrm{state}}(M)=\mathcal{O}(1/M)$ and
$\mathcal{DE}_{\mathrm{param}}^{(\ell)}(M)=\mathcal{O}(1/M^2)$, up to sparsity and logarithmic factors.

We test on a graph $\mathcal{G}_n$ with $n=200$ nodes. To obtain meaningful
gradient quantities, we fit the GNDE to a prescribed target dynamics and define
the loss with respect to the corresponding target trajectory. Recall that the GNDE vector field is parameterized by the three-hidden-layer GNN
described above. For the parameter-gradient discrepancy, we report the errors
for four parameter groups, denoted by \texttt{fc1}, \texttt{mid1},
\texttt{mid2}, and \texttt{fc2}. We consider the Allen--Cahn equation on a graph $\frac{d}{dt}\mX(t)=-\frac{\varepsilon^2}{n}\bigl(\mD\mX(t)-\mW\mX(t)\bigr)+\mX(t) - \mX(t)^{\odot 3}$, where $\mW$ is the adjacency matrix, $\mD$ is the degree matrix,
$\varepsilon=0.5$, and $\mX(t)^{\odot 3}$ denotes the element-wise cube of
$\mX(t)$. We approximate the graph Allen--Cahn target trajectory using the adaptive
Dormand--Prince solver with tolerance $10^{-8}$ and treat the resulting
terminal state as the reference terminal state, denoted by
$\VarGNDE_n^{\star}(T)$. When training the GNDE, we define the loss as the
error at the terminal time, i.e., $\normF{\VarGNDE_n(T)-\VarGNDE_n^{\star}(T)}^2$. Both the DTO and OTD paradigms use the Euler scheme with step size
$\kappa=T/M$, where
$M \in \{5,10,20,40,80,160,320,640\}$.

Figure~\ref{fig:dto_otd} presents the numerical results for both tent and HSBM graphons. The hidden-state discrepancy $\mathcal{DE}_{\mathrm{state}}(M)$ decays with slope essentially $-1$, in agreement with the $\mathcal{O}(1/M)$ rate from Theorem~\ref{theorem: hidden states gradients DTO and OTD}. The parameter gradient discrepancy $\mathcal{DE}_{\mathrm{param}}^{(\ell)}(M)$ decays with slopes approximately $-2$ across all considered parameter blocks, supporting the $\mathcal{O}(1/M^2)$ rate predicted by Theorem~\ref{theorem: parameter gradients DTO and OTD}.

\begin{figure}[t]
    \centering
    \includegraphics[width=\textwidth]{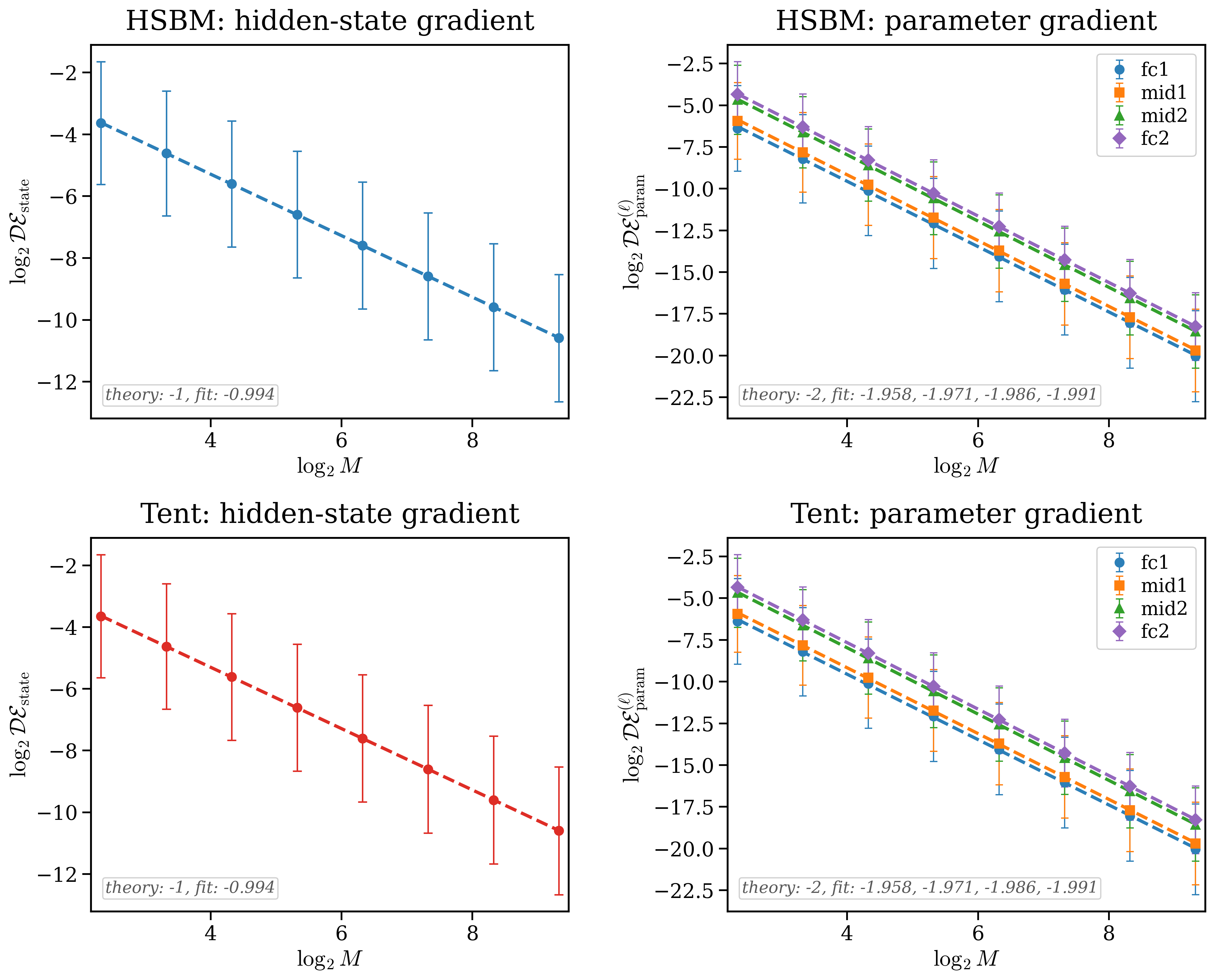}
    \caption{DTO--OTD gradient discrepancy: log-log plots of the DTO--OTD gradient discrepancies versus the number of Euler steps $M$ for the HSBM graphon
(top row) and the tent graphon (bottom row). Left column: hidden-state
gradient discrepancy $\mathcal{DE}_{\mathrm{state}}(M)$. The fitted log-log
slopes are close to $-1$, consistent with the theoretical rate
$\mathcal{O}(1/M)$. Right column: parameter-gradient discrepancy
$\mathcal{DE}_{\mathrm{param}}^{(\ell)}(M)$ for the four parameter blocks
\texttt{fc1}, \texttt{mid1}, \texttt{mid2}, and \texttt{fc2}. The fitted
log-log slopes are close to $-2$, consistent with the theoretical rate
$\mathcal{O}(1/M^2)$.}
    \label{fig:dto_otd}
\end{figure}

\subsection{Size transferability for learned dynamics}
\label{subsec: exp transfer}

We next test the practical implication of our theory: a GNDE trained on a small \emph{source} graph ($n_{\mathrm{src}}=16$ nodes) should transfer to much larger \emph{target} graphs (up to $n=1024$) sampled independently from the same graphon. Unlike the forward-convergence study of Section~\ref{subsec: exp forward}, which compares each sampled graph against a single large reference graph serving as a proxy for the graphon limit, here the source and target are two \emph{independent} random graphs of different sizes: they share neither latent positions nor edge realizations. Successful prediction therefore requires the trained GNDE to generalize to a genuinely new graph, not merely to reproduce its behavior on the (small) graph it was trained on.

\paragraph{Dynamics.}
We consider four graph-based dynamical systems, summarized in Table~\ref{tab:dynamics_transfer}. Three of them take the nonlocal diffusion form
\begin{equation}
\label{eq: dynamics transfer}
    \frac{d}{dt}[\mX]_i
    =
    c \sum_{j=1}^n
    \matL_{ij}\,
    f\big([\mX]_j-[\mX]_i\big)
    +
    g\big([\mX]_i\big),
\end{equation}
where $\matL_{ij} = \mW_{ij}/\sqrt{\mD_{ii}\mD_{jj}}$ is the symmetric normalized adjacency weight ($\mW$ is the adjacency, $\mD$ is the degree matrix, as in Section~\ref{subsec: graphs and graphons}), $f$ controls how a difference between neighbors contributes to the local change, and $g$ is an on-node reaction term. When $f$ is the identity, the sum is a standard graph-Laplacian diffusion: $\sum_j \matL_{ij}\big([\mX]_j-[\mX]_i\big)=-\big[(\mD_{\matL}-\matL)\mX\big]_i$, where $\mD_{\matL}:=\operatorname{diag}(\matL\mathbf{1})$ is the diagonal matrix of row sums of $\matL$. These models can be viewed as graph discretizations of nonlocal diffusion equations on graphons~\citep{medvedev2014nonlinear}; the four systems span standard application regimes and cover different kinds of nonlinearity in the GNDE velocity field: pure diffusion, diffusion with reaction, multiplicative coupling, and saturating influence. We describe each system together with its paired graphon and visualized pattern below.

\begin{table}[t]
\centering
\resizebox{\textwidth}{!}{%
\begin{tabular}{@{}lll@{}}
\toprule
\textbf{Dynamics} & \textbf{Equation on node $i$} & \textbf{Description} \\
\midrule
\textbf{Linear Heat} &
$\displaystyle
\frac{d}{dt} [\mX]_i = c
\sum_{j=1}^{n} \matL_{ij}\,\big([\mX]_j - [\mX]_i\big)
$ & \makecell[l]{Classical diffusion (graph Laplacian flow); \\[4pt] models linear heat propagation on graphs. \\[4pt] Coupling constant $c = 0.5$.} \\[4pt]

\midrule 

\textbf{Fisher--KPP} &
$\displaystyle
\frac{d}{dt} [\mX]_i =
c \sum_{j=1}^{n} \matL_{ij}\,\big([\mX]_j - [\mX]_i\big)
+ [\mX]_i\,(1 - [\mX]_i)
$ & \makecell[l]{Diffusion with logistic reaction; \\[4pt] models population growth with spatial spreading. \\[4pt] Coupling constant $c = 1.0$.} \\[4pt]

\midrule 

\textbf{SIS Epidemic} &
$\displaystyle
\frac{d}{dt} [\mX]_i =
-\beta\,[\mX]_i + \gamma\,(1-[\mX]_i) \sum_{j=1}^{n} \matL_{ij}\,[\mX]_j
$ & \makecell[l]{Epidemic spreading with recovery; \\ [4pt] models disease spread through infected neighbors. \\[4pt] Parameters $\beta = 1$, $\gamma = 5$.} \\[4pt]

\midrule 

\textbf{Consensus} &
$\displaystyle
\frac{d}{dt} [\mX]_i =
c \sum_{j=1}^{n}
\matL_{ij}\,
\frac{[\mX]_j - [\mX]_i}{1 + ([\mX]_j - [\mX]_i)^2}
$ & \makecell[l]{Nonlinear averaging with saturating influence; \\[4pt] models bounded-confidence consensus. \\[4pt] Coupling constant $c = 10$.} \\
\bottomrule
\end{tabular}%
}
\caption{Graph-based dynamical systems used in the size-transfer experiment. Here $\matL_{ij} = \mW_{ij}/\sqrt{\mD_{ii}\mD_{jj}}$ is the symmetric normalized adjacency weight and $[\mX]_i$ denotes the node feature at node $i$.}
\label{tab:dynamics_transfer}
\end{table}

Throughout, each node $i$ carries a latent position $u_i\in[0,1]$ (its graphon coordinate), and $u$ denotes a generic such position. We use four graphons: the HSBM and tent graphons considered in  Section \ref{sec: validation of theory}, together with two additional ones, the circular threshold graphon and the rank-1 power-law graphon (illustrated in Figure~\ref{fig:graphons_natural}). We pair each dynamics with a graphon whose structure makes its characteristic dynamical behavior clearly visible (diffusion, reaction--diffusion growth, epidemic propagation, or consensus formation), and visualize the resulting trajectories on a source graph with $n_{\mathrm{src}}=16$ nodes in Figure~\ref{fig:dynamics_comparison}.
\begin{itemize}
    \item \emph{Linear heat flow on graphs}~\citep{chung1997spectral,chung2007diffusion}. On the HSBM graphon (Figure~\ref{fig:dynamics_comparison}, first column), heat smooths within each community quickly and equalizes between communities more slowly, producing a hierarchical relaxation with two distinct timescales.
    \item \emph{Fisher--KPP reaction--diffusion}~\citep{fisher1937wave,kolmogorov1937study,du2020fisher,hoffman2019invasion} models population growth with spatial spreading. On the circular threshold graphon (Figure~\ref{fig:dynamics_comparison}, second column), a small population initially confined near latent position $u=0.125$ grows and spreads outward as a finite-speed traveling wavefront, illustrating how a species invades new spatial regions.
    \item \emph{Network SIS epidemic dynamics}~\citep{lajmanovich1976deterministic,vanmieghem2009virus,vanmieghem2011nintertwined} is the only system not in the diffusion form above; its multiplicative term $(1-[\mX]_i)\sum_j \matL_{ij}[\mX]_j$ pairs susceptibility with infection pressure from neighbors. On the rank-1 power-law graphon (Figure~\ref{fig:dynamics_comparison}, third column), well-connected hub nodes sustain higher endemic infection levels than peripheral ones, mirroring how epidemics concentrate in highly connected sub-populations.
    \item \emph{Nonlinear consensus with saturating influence}~\citep{hegselmann2002opinion,motsch2014heterophilious,brooks2024sigmoidal} is a smooth bounded-confidence variant of classical opinion dynamics. On the Tent graphon (Figure~\ref{fig:dynamics_comparison}, fourth column), trajectories starting from a spread of initial opinions contract toward a single common value, modeling how a group reaches agreement when mutual influence saturates with disagreement size.
\end{itemize}

\begin{figure}[t]
    \centering
    \includegraphics[width=\textwidth]{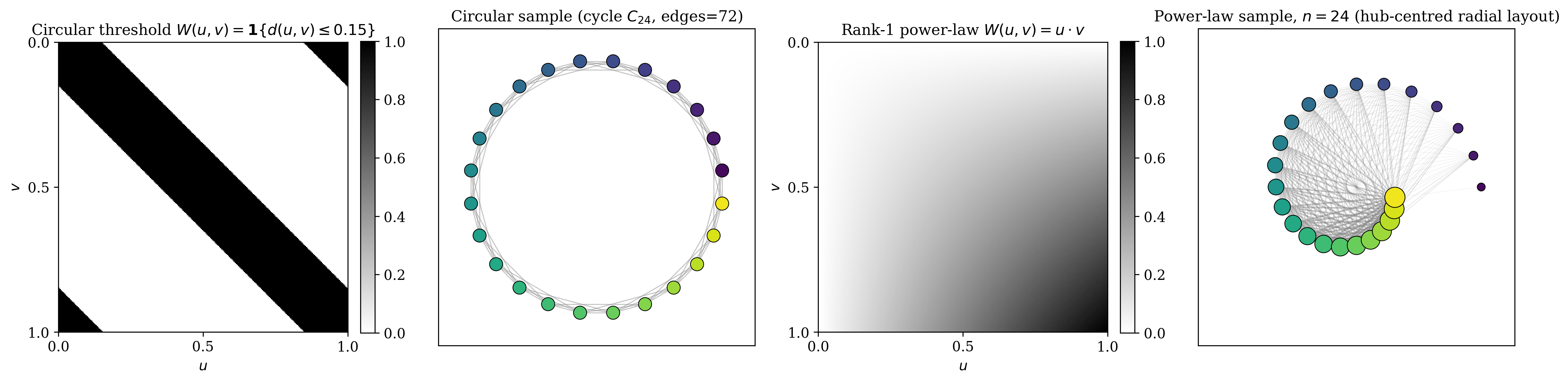}
    \caption{Two graphons used in the pattern-formation experiment, with representative sample graphs at $n=24$ drawn under the same Bernoulli sampling protocol as in Section \ref{sec: validation of theory}. \emph{Left:} circular threshold graphon $\tW(u,v)=\mathbf{1}\{d(u,v)\le 0.15\}$, where $d(u,v)=\min(|u-v|,1-|u-v|)$ is the periodic distance. \emph{Right:} rank-1 power-law graphon $\tW(u,v)=uv$, whose continuous graphon yields a weighted graph with edge weights $\tW(u_i, u_j)$.}
    \label{fig:graphons_natural}
\end{figure}

\paragraph{Initial conditions.}
In the transfer experiments, we take the node feature dimension $F$ to be $1$. Thus, the initial condition is a scalar field obtained by evaluating a continuous function $\mZ:I\to\mathbb{R}$ at the sampled node positions, i.e., $[\mX(0)]_i=\mZ(u_i)$, for $i\in[n]$. We use two families of initial conditions, each chosen for a different purpose. For the quantitative transfer experiment in Table~\ref{tab:transfer_error}, we use random Fourier polynomials $\mZ(u) = \sum_{k=1}^{10} a_k \sin(2\pi k u) + b_k \cos(2\pi k u)$, with i.i.d.\ Gaussian coefficients $(a_k, b_k)$, where errors are averaged over random draws. The same realization of $\mZ$ is reused across all graph sizes within a trial. For the qualitative visualization in Figure~\ref{fig:dynamics_comparison}, we instead pick a single initial condition tailored to each dynamics: a Fourier polynomial for linear heat and consensus; a sigmoidal localized seed for Fisher--KPP, $\mZ(u) = V\,\sigma(-s(u-u_{\rm thresh}))$ with $V=1$, $s=20$, $u_{\rm thresh}=0.125$,
which models a small population initially confined near $u=0.125$; and the constant $\mZ(u)=0.5$ for SIS, so that the graphon's row-sum profile alone shapes the resulting spatial structure.

\paragraph{GNDE architecture.}
In the theoretical analysis, we primarily take the graph shift operator in the graph convolutional operation to be the symmetric normalized adjacency matrix $\matL$. In the transfer experiments below, however, we choose the graph shift operator $\mS$ to match the linear graph operator appearing in the target dynamics. For the linear heat equation, Fisher--KPP dynamics, and nonlinear consensus dynamics, the linear graph-dependent term has the form $-c\widetilde{\matL}\mX$, where $\widetilde{\matL}:=\mD_{\matL}-\matL$. We therefore set $\mS=\widetilde{\matL}$ and use a GNN of one hidden layer with width $16$ and Tanh activation. For the SIS dynamics, the interaction is multiplicative rather than driven by pairwise differences, so we instead set $\mS=\matL$ and use a GNN of two hidden layers with width $16$ and the same Tanh activation. For all experiments, we use a polynomial filter with $K=2$.

\paragraph{Training protocol.}
For training, we use velocity regression rather than trajectory matching, in order to isolate approximation and transfer performance from the cost of differentiating through an ODE solver. Ground-truth position--velocity pairs $\{(\mX(t_m),\frac{d}{dt}\mX(t_m))\}_{m=0}^{2}$ are generated by approximating the target dynamics with an adaptive Dormand--Prince solver with tolerance $10^{-8}$, sampled at three uniformly spaced times $\{t_0,t_1,t_2\}$ in $[0,T]$. Let $\mX^{\langle q\rangle}$ denote the target trajectory generated from the $q$-th initial condition. We minimize the following loss
\begin{equation}
\label{eq: simfree loss}
    \mathcal{L}(\tH)
    =
    \frac{1}{3Q}
    \sum_{q=1}^{Q}
    \sum_{m=0}^{2}
    \left\|
    \mathrm{GNN}(\mX^{\langle q\rangle}(t_m);\mS,\tH)
    -
    \frac{d}{dt}\mX^{\langle q\rangle}(t_m)
    \right\|^2,
\end{equation}
with $Q=500$ initial conditions. We optimize the model using Adam~\citep{kingma2014adam} with learning rate $10^{-3}$ for up to $500$ epochs, with early stopping. Since each update only requires GNN forward evaluations and does not involve an inner ODE solve, this velocity-regression training procedure is substantially cheaper than trajectory-based training through a differentiable ODE solver such as \texttt{torchdiffeq}~\citep{chen2018neural}. %

\paragraph{Transfer protocol.}
With training in place, we evaluate transfer as follows. For each graphon--dynamics pair, we train on a source graph with $n_{\mathrm{src}}=16$ nodes and apply the learned parameters $\widehat{\tH}$ to independently sampled target graphs of size $n=32$, $64$, $128$, $256$, $512$, $1024$, between $2\times$ and $64\times$ larger than the source.
Both the source graph $\mathcal{G}_{n_{\mathrm{src}}}$ and the target graph $\mathcal{G}_n$ are drawn \emph{independently} from the graphon, by the same Bernoulli mechanism used in Section~\ref{sec: validation of theory}: latent positions $u_i\in[0,1]$ together with random edges generated from the graphon $\tW$. For the $\{0,1\}$-valued graphons (HSBM and circular threshold), the edges are unweighted with edge probability $\alpha_n\tW(u_i,u_j)$; for the weighted graphons (tent and rank-1 power-law), we retain the value $\tW(u_i,u_j)$ as an edge weight and impose sparsity through $\alpha_n$. We use a \emph{dense} regime ($\alpha_n=1$) and a \emph{sparse} regime ($\alpha_n=n^{-1/2}$ for the Tent, Circular, and Power-law graphons, $\alpha_n=0.1$ for the binary HSBM graphon), with surviving edges rescaled by $1/\alpha_n$. We define the transfer error to be
\begin{equation}
\label{eq: E transfer new}
    \mathcal{TE}(n)
    \;:=\;
    \frac{
    \sup_{t\in[0,T]}
    \big\|\VarGNDE_n(t;\widehat{\tH})-\mX_n^\star(t)\big\|
    }{
    \sup_{t\in[0,T]}
    \big\|\mX_n^\star(t)\big\|
    },
\end{equation}
where $\mX_n^\star(t)$ is the ground-truth solution on the target graph. Since $\mX_n^\star$ has no closed form, in our experiments we approximate it with the adaptive Dormand--Prince solver with tolerance $10^{-6}$; the GNDE solution $\VarGNDE_n(\cdot;\widehat{\tH})$ is numerically solved by a Runge-Kutta method of order four with step size $0.02$, and each supremum of $t\in[0,T]$ is taken as the maximum over $50$ uniformly spaced output times $t_m\in[0,T]$.

\paragraph{Results.} We report $\mathcal{TE}(n)$ in Table~\ref{tab:transfer_error}
as mean $\pm$ std over 10 trials, each redrawing the target graph, initial
condition, and model initialization. Its numerator is the absolute prediction
error bounded by (I)+(II)+(III) in \eqref{eq:triangle_decomposition}, and the
denominator $\sup_t\|\mX_n^\star(t)\|=\mathcal{O}(\sqrt{n})$ matches that scaling, so
rows are comparable across $n$: the source row ($n_{\mathrm{src}}=16$) is the
relative training error~(II), each transfer row is the relative total, and
$\mathcal{TE}(n)$ comparable to the source value means the size-induced terms
(I) and (III) do not dominate~(II)

\emph{Dense regime ($\alpha_n=1$): transfer succeeds across graphons.} For every dynamics graphon pair, the transfer error on the larger target graph stays comparable to the source error at $n_{\mathrm{src}}=16$ (e.g., Fisher--KPP on Tent: $0.018$ at $n=1024$ versus $0.022$ at $n_{\mathrm{src}}$). The comparable source and target errors indicate that transfer to larger dense graphs does not produce a noticeable size-induced degradation. The main exception is SIS on the rank-1 power-law graphon, where the multiplicative interaction $(\mathbf{1}-\mX)\matL\mX$ amplifies the effect of smooth degree heterogeneity and the transfer error rises modestly from $0.011$ at $n_{\mathrm{src}}$ to $0.024$ at $n=1024$.

\emph{Sparse regime: sparsification effects depend on the graphon.} On the Tent, circular threshold, and rank-1 power-law graphons, using $\alpha_n=n^{-1/2}$ generally increases the transfer error compared with the dense setting $\alpha_n=1$, especially at smaller transfer sizes. As $n$ grows, however, the sparse-regime errors move closer to the corresponding dense-regime errors, suggesting that the effect of edge subsampling weakens at larger graph sizes. For the HSBM graphon, the dense graph ($\alpha_n=1$) is the deterministic two-level block pattern and carries no edge-sampling randomness. We therefore probe sparsity using a fixed keep-probability $0.1$ (discarding around $90\%$ of edges); the transfer error is elevated at small $n$ but returns toward the dense value as $n$ grows.

\begin{table}[ht]
\centering\scriptsize
\caption{Transfer error of GNDEs trained on $n_{\mathrm{src}}=16$ and transferred to $n \in \{128, 256, 512, 1024\}$ across four graphon families. Dense regime $\alpha_n=1$; sparse regime $\alpha_n=n^{-1/2}$ for Tent, Circular, and Power-law, and a fixed $\alpha_n=0.1$ for the HSBM graphon.}
\label{tab:transfer_error}
\setlength{\tabcolsep}{2.5pt}
\resizebox{\textwidth}{!}{%
\begin{tabular}{llcccccccc}
\toprule
& & \multicolumn{2}{c}{\textbf{Tent}} & \multicolumn{2}{c}{\textbf{HSBM}} & \multicolumn{2}{c}{\textbf{Circular}} & \multicolumn{2}{c}{\textbf{Power-law}} \\
\cmidrule(lr){3-4}\cmidrule(lr){5-6}\cmidrule(lr){7-8}\cmidrule(lr){9-10}
Dynamics & $n$ & $\alpha{=}1$ & $\alpha{=}n^{-1/2}$ & $\alpha{=}1$ & $\alpha{=}0.1$ & $\alpha{=}1$ & $\alpha{=}n^{-1/2}$ & $\alpha{=}1$ & $\alpha{=}n^{-1/2}$ \\
\midrule
\textbf{Linear Heat} & 16 (src) & \multicolumn{2}{c}{$0.004 \pm 0.002$} & \multicolumn{2}{c}{$0.004 \pm 0.002$} & \multicolumn{2}{c}{$0.004 \pm 0.002$} & \multicolumn{2}{c}{$0.004 \pm 0.002$} \\
 & 128 & $0.003_{\pm 0.001}$ & $0.006_{\pm 0.004}$ & $0.004_{\pm 0.004}$ & $0.006_{\pm 0.003}$ & $0.003_{\pm 0.002}$ & $0.005_{\pm 0.003}$ & $0.004_{\pm 0.002}$ & $0.005_{\pm 0.003}$ \\
 & 256 & $0.003_{\pm 0.001}$ & $0.005_{\pm 0.004}$ & $0.004_{\pm 0.004}$ & $0.005_{\pm 0.004}$ & $0.003_{\pm 0.002}$ & $0.005_{\pm 0.003}$ & $0.004_{\pm 0.002}$ & $0.004_{\pm 0.003}$ \\
 & 512 & $0.003_{\pm 0.001}$ & $0.004_{\pm 0.003}$ & $0.004_{\pm 0.004}$ & $0.004_{\pm 0.004}$ & $0.003_{\pm 0.002}$ & $0.004_{\pm 0.002}$ & $0.004_{\pm 0.002}$ & $0.004_{\pm 0.002}$ \\
 & 1024 & $0.003_{\pm 0.001}$ & $0.004_{\pm 0.003}$ & $0.004_{\pm 0.004}$ & $0.004_{\pm 0.004}$ & $0.003_{\pm 0.002}$ & $0.004_{\pm 0.002}$ & $0.004_{\pm 0.002}$ & $0.004_{\pm 0.002}$ \\
\midrule
\textbf{Fisher--KPP} & 16 (src) & \multicolumn{2}{c}{$0.022 \pm 0.011$} & \multicolumn{2}{c}{$0.026 \pm 0.013$} & \multicolumn{2}{c}{$0.032 \pm 0.014$} & \multicolumn{2}{c}{$0.029 \pm 0.012$} \\
 & 128 & $0.019_{\pm 0.009}$ & $0.023_{\pm 0.009}$ & $0.022_{\pm 0.010}$ & $0.027_{\pm 0.010}$ & $0.026_{\pm 0.010}$ & $0.039_{\pm 0.011}$ & $0.027_{\pm 0.009}$ & $0.029_{\pm 0.009}$ \\
 & 256 & $0.018_{\pm 0.008}$ & $0.021_{\pm 0.008}$ & $0.021_{\pm 0.009}$ & $0.023_{\pm 0.009}$ & $0.025_{\pm 0.010}$ & $0.033_{\pm 0.010}$ & $0.027_{\pm 0.009}$ & $0.028_{\pm 0.009}$ \\
 & 512 & $0.018_{\pm 0.008}$ & $0.020_{\pm 0.008}$ & $0.021_{\pm 0.009}$ & $0.022_{\pm 0.009}$ & $0.024_{\pm 0.010}$ & $0.029_{\pm 0.010}$ & $0.027_{\pm 0.009}$ & $0.027_{\pm 0.009}$ \\
 & 1024 & $0.018_{\pm 0.008}$ & $0.019_{\pm 0.008}$ & $0.021_{\pm 0.009}$ & $0.021_{\pm 0.009}$ & $0.024_{\pm 0.010}$ & $0.027_{\pm 0.010}$ & $0.027_{\pm 0.009}$ & $0.027_{\pm 0.009}$ \\
\midrule
\textbf{SIS} & 16 (src) & \multicolumn{2}{c}{$0.016 \pm 0.010$} & \multicolumn{2}{c}{$0.012 \pm 0.010$} & \multicolumn{2}{c}{$0.011 \pm 0.009$} & \multicolumn{2}{c}{$0.011 \pm 0.007$} \\
 & 128 & $0.016_{\pm 0.010}$ & $0.025_{\pm 0.006}$ & $0.022_{\pm 0.017}$ & $0.053_{\pm 0.031}$ & $0.008_{\pm 0.008}$ & $0.053_{\pm 0.025}$ & $0.023_{\pm 0.016}$ & $0.065_{\pm 0.028}$ \\
 & 256 & $0.016_{\pm 0.010}$ & $0.022_{\pm 0.007}$ & $0.021_{\pm 0.016}$ & $0.041_{\pm 0.022}$ & $0.008_{\pm 0.008}$ & $0.029_{\pm 0.012}$ & $0.023_{\pm 0.015}$ & $0.061_{\pm 0.025}$ \\
 & 512 & $0.016_{\pm 0.010}$ & $0.021_{\pm 0.007}$ & $0.019_{\pm 0.016}$ & $0.032_{\pm 0.016}$ & $0.008_{\pm 0.008}$ & $0.019_{\pm 0.010}$ & $0.023_{\pm 0.014}$ & $0.054_{\pm 0.022}$ \\
 & 1024 & $0.017_{\pm 0.010}$ & $0.020_{\pm 0.008}$ & $0.019_{\pm 0.017}$ & $0.027_{\pm 0.015}$ & $0.008_{\pm 0.008}$ & $0.013_{\pm 0.008}$ & $0.024_{\pm 0.015}$ & $0.050_{\pm 0.020}$ \\
\midrule
\textbf{Consensus} & 16 (src) & \multicolumn{2}{c}{$0.213 \pm 0.055$} & \multicolumn{2}{c}{$0.201 \pm 0.059$} & \multicolumn{2}{c}{$0.213 \pm 0.041$} & \multicolumn{2}{c}{$0.193 \pm 0.055$} \\
 & 128 & $0.165_{\pm 0.051}$ & $0.199_{\pm 0.035}$ & $0.183_{\pm 0.066}$ & $0.214_{\pm 0.041}$ & $0.206_{\pm 0.030}$ & $0.208_{\pm 0.021}$ & $0.157_{\pm 0.047}$ & $0.182_{\pm 0.032}$ \\
 & 256 & $0.156_{\pm 0.049}$ & $0.187_{\pm 0.035}$ & $0.174_{\pm 0.064}$ & $0.194_{\pm 0.049}$ & $0.204_{\pm 0.027}$ & $0.210_{\pm 0.016}$ & $0.147_{\pm 0.047}$ & $0.171_{\pm 0.035}$ \\
 & 512 & $0.154_{\pm 0.049}$ & $0.177_{\pm 0.038}$ & $0.172_{\pm 0.065}$ & $0.183_{\pm 0.056}$ & $0.203_{\pm 0.028}$ & $0.211_{\pm 0.016}$ & $0.147_{\pm 0.043}$ & $0.163_{\pm 0.036}$ \\
 & 1024 & $0.151_{\pm 0.048}$ & $0.171_{\pm 0.040}$ & $0.170_{\pm 0.066}$ & $0.177_{\pm 0.060}$ & $0.199_{\pm 0.028}$ & $0.208_{\pm 0.018}$ & $0.145_{\pm 0.043}$ & $0.158_{\pm 0.037}$ \\
\bottomrule
\end{tabular}%
}
\end{table}

Finally, Figure~\ref{fig:dynamics_comparison} illustrates that, on the source graph ($n_{\mathrm{src}}=16$), the GNDE prediction visually matches the ground truth for each of the four dynamics. The training and transfer protocol is identical to the experiment above; the only difference is that we rescale the per-dynamics coupling in~\eqref{eq: dynamics transfer} ($c=3$ for linear heat and SIS, $c=15$ for consensus, $d=0.6$, $r=0.5$ for Fisher--KPP) so that each dynamics' characteristic behavior develops clearly within the simulated time window.

\paragraph{Computational platform.}
All experiments were run on NVIDIA RTX A4000 GPUs with 16\,GB of memory.

\begin{figure}[t]
    \centering
    \includegraphics[width=\textwidth]{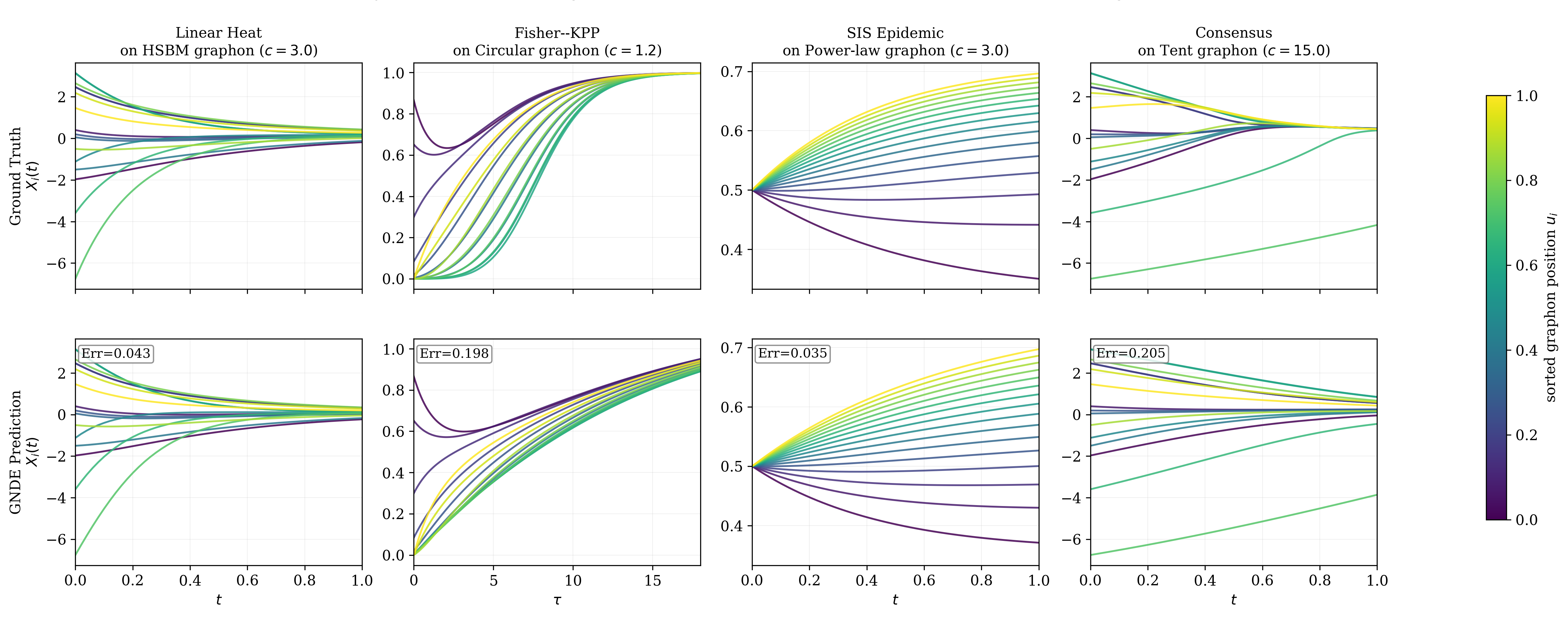}
    \caption{Ground truth versus GNDE prediction on source graphs with $n_{\mathrm{src}}=16$. Each dynamics is paired with a graphon that highlights its characteristic dynamical behavior. Top row: ground-truth trajectories. Bottom row: GNDE predictions, with relative trajectory error annotated. Heat, SIS, and consensus use $t\in[0,1]$; Fisher--KPP on the circular graphon uses rescaled time $\tau\in[0,18]$.}
    \label{fig:dynamics_comparison}
\end{figure}

\section*{Acknowledgment} 
M.~Yan acknowledges support from an AMS-Simons travel grant. S.~Tang is supported by NSF DMS CAREER Grant No.~2340631. The authors thank Mr.\ Fangqian Zhang for his careful proofreading of the manuscript
and helpful discussions.

\bibliography{utils/ref}
\bibliographystyle{plainnat}

\appendix

\section{Notation Conventions}
Because the vector fields of our GNDEs and Graphon-NDEs are parameterized by multi-layer architectures, we explicitly track their hidden states using the following conventions. Superscripts denote the layer and time indices. Parentheses $(\ell, t)$ indicate continuous time $t$, while square brackets $[\ell, m]$ indicate discrete steps $t_m$. Base inputs ($\ell=0$) frequently omit the zero for brevity (e.g., $\VarGNDE^{(0,t)}\equiv\VarGNDE\myt\equiv\VarGNDE(t)$, $\VarGraphonNDE^{(0,t)}\equiv \VarGraphonNDE(\cdot,t)$, and $\VarGNDE^{[0,m]}\equiv\VarGNDE^{[m]}$). The $\ell$-th layer outputs of the forward passes $\GNN\parens{\VarGNDE(t);\matL,\tH(t)}$, $\GraphonNN(\VarGraphonNDE(u,t);\opLP,\tH(t))$, and $\GNN(\VarGNDE^{[m]};\matL,\tH(t_{m}))$ are denoted by $\VarGNDE\mylt$, $\VarGraphonNDE\mylt$, and $\VarGNDE^{[\ell,m]}$, respectively.

For the layer-wise operations, the two-variable graph and graphon convolutional operators, $\phi(\vh, \mX)$ and $\Phi(\vh, \mathcal{X})$, are introduced in the main text (i.e., \eqs\eqref{def: operator phi} and \eqref{def: operator Phi}). By fixing one variable, we define their associated linear operators $\phi_{\vh}$ and $\phi_{\mX}$ (defined in \eqref{def: graph convolutional operator phi h} and \eqref{def: graph convolutional operator phi X}), as well as the continuous counterparts $\Phi_{\vh}$ and $\Phi_{\mathcal{X}}$ (defined in \eqref{def: Phih} and \eqref{def: frakx}). Detailed properties of these single-variable operators, including their adjoints, are provided in Appendices~\ref{appendix: graph convolutional operators} and \ref{appendix: graphon convolutional operators}.

To layer-wisely notate the backward gradients (i.e., the layer-wise output of the adjoint operator in the adjoint systems \eqref{adjoint GNDE: Y}, \eqref{adjoint GNDE: z}, \eqref{adjoint Grahpon-NDE: Y}, \eqref{adjoint Graphon-NDE: H}), we reverse the layer indexing: $\ell=L$ denotes the initial input to the backward pass, and $\ell=0$ denotes the final output. Analogous to the forward states, we omit the $L$ index for brevity: $\VarGNDEHiddenState^{(L,t)}\equiv\VarGNDEHiddenState\myt\equiv\VarGNDEHiddenState(t)$, $\VarGraphonNDEHiddenState^{(L,t)}\equiv \VarGraphonNDEHiddenState(\cdot,t)$, and $\VarGNDEHiddenState^{[L,m]}\equiv\VarGNDEHiddenState^{[m]}$. These represent the gradients of the forward pass output (i.e., the inputs to the adjoint operators). Under this convention, applying the adjoint of the network's Gateaux derivative to the terminal state yields the gradient with respect to the initial forward state:
$$
\parens{\gd{\GNN}{\VarGNDE^{(0,t)}}}^*(\VarGNDEHiddenState^{(L,t)})=\VarGNDEHiddenState^{(0,t)},\ \parens{\gd{\GraphonNN}{\VarGraphonNDE^{(0,t)}}}^*(\VarGraphonNDEHiddenState^{(L,t)})=\VarGraphonNDEHiddenState^{(0,t)},\ \parens{\gd{\GNN}{\VarGNDE^{[0,m]}}}^*(\VarGNDEHiddenState^{[L,m]})=\VarGNDEHiddenState^{[0,m]}.
$$
The explicit recursive chain-rule derivations for these backward states, along with the gradients with respect to the filter weights, are detailed in \eqref{adjoint operator: GNDE to filters} of Appendix~\ref{appendix: graph convolutional operators} and \eqref{adjoint operator: Graphon-NDE to filters} of Appendix~\ref{appendix: graphon convolutional operators}.

\section{Boundedness and Temporally Lipschitz of Graphon-NDE solutions} 
\begin{proposition}[Boundedness of Graphon-NDE solutions]\label{proposition: norm of X ell is bounded by norm of X}
Suppose that \ConvFilterLipschitz, \SigmaLipschitz, and \dPBoundedBelow\  hold. Let $\VarGraphonNDE$ be the solution of the Graphon-NDE \eqref{Graphon-NDE}. Let $\tildeVarGraphonNDE\mylt,\VarGraphonNDE\mylt$ be defined in \eq{def: updating formula Graphon-NN}. Then for each $\ell\in\mathbb{Z}_{L+1}$, it holds that
\begin{align}\label{eq: norm of X ell is bounded by norm of X}
    &\norm{ \VarGraphonNDE\myl}_{\spaceC} \leq  \CXmax\myl:=(F\Csys[0]\Ch {L_\sigma})^{\ell}\norm{\VarGraphonNDE}_{\spaceC},\\
    \Or{&\norm{\tildeVarGraphonNDE\myl}_{\spaceC}}\nonumber
\end{align} 
where constants $\Ch$ and $\Csys[0]$ are defined in \eqref{def: constant Ch} and \eqref{def: C0W}, respectively.

\end{proposition}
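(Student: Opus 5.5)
The plan is induction on the layer index $\ell$, done at a fixed time $t$ and then taking $\sup_{t\in[0,T]}$. The base case $\ell=0$ is trivial, since $\VarGraphonNDE^{(0,t)}=\VarGraphonNDE(\cdot,t)$ and the constant equals $\norm{\VarGraphonNDE}_{\spaceC}$. For the inductive step the layer update \eqref{def: updating formula Graphon-NN} splits into two pieces: the linear graphon convolution $\Phi(\vh^{(\ell,t)},\cdot)$, followed by the pointwise activation $\sigma$.

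\emph{Activation step.} Under \SigmaLipschitz, $\sigma(0)=0$ and $\sigma$ is $L_\sigma$-Lipschitz. Hence $|\sigma(x)|\le L_\sigma|x|$ pointwise in $u$, and therefore
\[
\normB{\VarGraphonNDE^{(\ell,t)}}\le L_\sigma\normB{\tildeVarGraphonNDE^{(\ell,t)}}.
\]
This step also gives the parenthetical "(Or ...)" bound for $\tildeVarGraphonNDE\myl$ with the factor $L_\sigma$ removed from the last layer.

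\emph{Convolution step.} For each output feature $f$ I bound
\[
\normb{\sum_{g}\sum_{k}\vh_{fgk}^{(\ell,t)}\opLP^k\VarGraphonNDE_g^{(\ell-1,t)}}\le \sum_g\sum_k|\vh_{fgk}^{(\ell,t)}|\,\normb{\opLP^k\VarGraphonNDE_g^{(\ell-1,t)}}.
\]
Two ingredients are needed here.

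First, a uniform filter bound $|\vh_{fgk}^{(\ell,t)}|\le \Ch$ on $[0,T]$. This follows from \ConvFilterLipschitz: $|\vh(t)|\le|\vh(0)|+\Liph T$, which is presumably how \eqref{def: constant Ch} defines $\Ch$.

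Second, a sup-norm operator bound for $\opLP$ on $\spaceb$. Under \dPBoundedBelow, $|\tL_P(u,v)|\le 1/c_{\min}$, so
\[
\normb{\opLP X}\le c_{\min}^{-1}\normb{X}.
\]
Iterating gives $\normb{\opLP^kX}\le c_{\min}^{-k}\normb{X}$. Summing over $k\in\mathbb{Z}_K$ yields a constant, which I expect to be $\Csys[0]$ in \eqref{def: C0W}, e.g. $\sum_{k<K}c_{\min}^{-k}$.

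Then $\sum_g\normb{\VarGraphonNDE_g}\le\sqrt F\normB{\VarGraphonNDE}$ by Cauchy–Schwarz, and taking the $\ell^2$ norm over $f$ gives another $\sqrt F$. Together these produce the factor $F\Csys[0]\Ch$. Combining the two steps,
\[
\normB{\VarGraphonNDE^{(\ell,t)}}\le F\Csys[0]\Ch L_\sigma\normB{\VarGraphonNDE^{(\ell-1,t)}}.
\]
Taking the supremum over $t$ and inducting gives \eqref{eq: norm of X ell is bounded by norm of X}. Finiteness of $\norm{\VarGraphonNDE}_{\spaceC}$ is guaranteed by Theorem~\ref{theorem: well-posedness}.

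\emph{Main obstacle.} The only delicate point is the sup-norm bound on $\opLP$. $\opLP$ is naturally defined on $\spaceltwo$, so I must check that it maps $\spaceb$ to $\spaceb$ with the claimed constant. This reduces to $\int|\tL_P(u,v)|\,dP(v)\le \sup|\tL_P|\le c_{\min}^{-1}$, using $\tW\le1$ and $d_P>c_{\min}$. The remaining work is matching the constants to the definitions \eqref{def: constant Ch} and \eqref{def: C0W}. If those definitions differ, the same argument goes through with whatever $\Csys[0]$ bounds $\sum_k\norm{\opLP^k}_{\spaceb\to\spaceb}$.
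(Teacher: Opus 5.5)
Your proposal is correct and is essentially the paper's proof: write $\VarGraphonNDE^{(\ell,t)}=\sigma(\Phi_{\vh^{(\ell,t)}}(\VarGraphonNDE^{(\ell-1,t)}))$, combine \SigmaLipschitz\ with the operator bound $\norm{\Phi_{\vh^{(\ell,t)}}}_{\spaceB\to\spaceB}\le F\Csys[0]C_h$ (the paper's Lemma~\ref{lemma: norm of operator mathfrak H and adjoint operator widehat mathfrak H}, proved by the same triangle-inequality and Cauchy--Schwarz computation you give), then recurse in $\ell$ and take $\sup_t$. The one change needed to match the stated constant is to bound $\tW\le c_{\max}:=\sup\tW$ instead of $\tW\le 1$. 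This gives $\norm{\opLP}_{\spaceb\to\spaceb}\le c_{\max}/c_{\min}$, so the sum over $k$ is exactly $\Csys[0]=\sum_{k=0}^{K-1}(c_{\max}/c_{\min})^k$; your $1/c_{\min}$ version yields a weaker bound that coincides with the stated one only when $c_{\max}=1$.
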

\begin{proof}
    It follows from updating formula \eqref{def: updating formula Graphon-NN} that $\VarGraphonNDE\mylt=\sigma(\Phi_{\vh\mylt}(\VarGraphonNDE^{(\ell-1,t)}))$, $\ell\in[L]$, $t\in[0,T]$, where we adopt the notations of graphon convolutional operators in Appendix \ref{appendix: graphon convolutional operators}. Then we obtain from \SigmaLipschitz\ and Lemma \ref{lemma: norm of operator mathfrak H and adjoint operator widehat mathfrak H} that $$\norm{\VarGraphonNDE\mylt }_{\spaceB}\leq  \parens{F\Csys[0]\Ch {L_\sigma}}\|\VarGraphonNDE^{(\ell-1,t)}\|_{\spaceB},$$ which with a recursion and the norm defined in $\spaceC$ yields
\begin{equation*}
    \norm{ \VarGraphonNDE\mylt }_{\spaceB} \leq (F\Csys[0]\Ch{L_\sigma})^{\ell}\|\VarGraphonNDE(\cdot, t)\|_{\spaceB}\leq (F\Csys[0]\Ch{L_\sigma})^{\ell}\|\VarGraphonNDE\|_{\spaceC},
\end{equation*}
Since $\VarGraphonNDE^{(0)}=\VarGraphonNDE$, \eq{eq: norm of X ell is bounded by norm of X} trivially holds when $\ell=0$. Therefore, \eq{eq: norm of X ell is bounded by norm of X} holds for all $\ell\in\mathbb{Z}_{L+1}$. Noting that \eq{def: updating formula Graphon-NN} can be rewritten as  $\tildeVarGraphonNDE\mylt=\Phi_{\vh\mylt}(\sigma(\tildeVarGraphonNDE^{(\ell-1,t)}))$, $\ell\in[L]$, the result for $\tildeVarGraphonNDE\mylt$ can be derived with a similar argument as above. 
\end{proof}

\begin{proposition}[Temporally Lipschitz of Graphon-NDE solutions]\label{proposition: solution Xfell is piecewise Lipschitz continuous about t}
    Suppose that \ConvFilterLipschitz, \SigmaLipschitz, and \dPBoundedBelow\  hold. Let $\VarGraphonNDE=[\VarGraphonNDE_f:f\in[F]]$ be the solution of the Graphon-NDE \eqref{Graphon-NDE}. Let $\tildeVarGraphonNDE\mylt,\VarGraphonNDE\mylt$ be defined in \eq{def: updating formula Graphon-NN}. Then for each $\ell\in\mathbb{Z}_{L+1}$, it holds that 
    \begin{align}\label{eq: Xfell is Lipschitz continuous}
        &\norm{\VarGraphonNDE\mylt[\ell][t_1]_f-\VarGraphonNDE\mylt[\ell][t_2]_f}_{\spaceb}\leq \CXlip\myl|t_1-t_2|,\quad \forall t_1,t_2\in[0,T],\ \forall f\in[F], \\
        \Or{&\norm{\tildeVarGraphonNDE\mylt[\ell][t_1]_f-\tildeVarGraphonNDE\mylt[\ell][t_2]_f}_{\spaceb}}\nonumber
    \end{align}
    where constant $\CXlip\myl$ is defined recursively as $\CXlip\myl:=\Csys[0]{L_\sigma}\parens{\sqrt{F}\Liph\CXmax\myl[\ell-1]+F\Ch\CXlip\myl[\ell-1]}$, 
    and $\CXlip^{(0)}:=\CXmax^{(L)}$.
\end{proposition}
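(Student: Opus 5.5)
The plan is induction on the layer index $\ell$, for a fixed feature $f$ and times $t_1,t_2\in[0,T]$. The structure mirrors Proposition~\ref{proposition: norm of X ell is bounded by norm of X}.

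\emph{Base case $\ell=0$.} Here $\VarGraphonNDE^{(0,t)}=\VarGraphonNDE(\cdot,t)$. By Theorem~\ref{theorem: well-posedness} the solution is $C^1$ in time, so for each $u$,
\[
\VarGraphonNDE_f(u,t_1)-\VarGraphonNDE_f(u,t_2)=\int_{t_2}^{t_1}\GraphonNN(\VarGraphonNDE(u,s);\opLP,\tH(s))_f\,ds .
\]
The integrand is the $f$-th component of the $L$-th layer output $\VarGraphonNDE^{(L,s)}$. Proposition~\ref{proposition: norm of X ell is bounded by norm of X} bounds its $\spaceb$-norm by $\norm{\VarGraphonNDE^{(L,s)}}_{\spaceB}\le\CXmax^{(L)}$. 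This gives the constant $\CXlip^{(0)}=\CXmax^{(L)}$.

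\emph{Inductive step $\ell\ge1$.} Write $\VarGraphonNDE^{(\ell,t)}_f=\sigma\big(\sum_{g,k}\vh^{(\ell,t)}_{fgk}\opLP^k\VarGraphonNDE^{(\ell-1,t)}_g\big)$. Using \SigmaLipschitz, the difference at $t_1,t_2$ is at most $L_\sigma$ times the difference of the pre-activations. Add and subtract $\vh^{(\ell,t_2)}_{fgk}\opLP^k\VarGraphonNDE^{(\ell-1,t_1)}_g$ to split this into two terms:
\[
\sum_{g,k}\big|\vh^{(\ell,t_1)}_{fgk}-\vh^{(\ell,t_2)}_{fgk}\big|\,\norm{\opLP^k\VarGraphonNDE^{(\ell-1,t_1)}_g}_{\spaceb}
\quad\text{and}\quad
\sum_{g,k}\big|\vh^{(\ell,t_2)}_{fgk}\big|\,\norm{\opLP^k(\VarGraphonNDE^{(\ell-1,t_1)}_g-\VarGraphonNDE^{(\ell-1,t_2)}_g)}_{\spaceb}.
\]
I would bound each piece using the sup-norm operator bound for $\opLP^k$ on $\spaceb$. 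Under \dPBoundedBelow, $|\tL_P|\le 1/c_{\min}$, and the appendix lemma supplies the constant $\Csys[0]$ for this. The filter coefficients themselves are bounded by $\Ch$.

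\emph{First term.} Use \ConvFilterLipschitz, which gives $|\Delta\vh|\le\Liph|t_1-t_2|$. The sum over $g$ of $\norm{\VarGraphonNDE_g^{(\ell-1,t_1)}}_{\spaceb}$ is at most $\sqrt F\norm{\VarGraphonNDE^{(\ell-1,t_1)}}_{\spaceB}$ by Cauchy--Schwarz. That norm is at most $\CXmax^{(\ell-1)}$ by Proposition~\ref{proposition: norm of X ell is bounded by norm of X}. This yields $\Csys[0]\sqrt F\Liph\CXmax^{(\ell-1)}|t_1-t_2|$.

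\emph{Second term.} Use the induction hypothesis for each $g$, summing $F$ terms. This yields $\Csys[0]F\Ch\CXlip^{(\ell-1)}|t_1-t_2|$.

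Multiplying the sum of the two terms by $L_\sigma$ gives exactly the stated recursion for $\CXlip^{(\ell)}$.

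\emph{Pre-activation variant.} The parenthetical statement for $\tildeVarGraphonNDE^{(\ell,t)}$ follows the same way. Write $\tildeVarGraphonNDE^{(\ell,t)}=\Phi_{\vh^{(\ell,t)}}(\sigma(\tildeVarGraphonNDE^{(\ell-1,t)}))$ and absorb $L_\sigma$ at the inner level instead of the outer one.

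\emph{Main obstacle.} I expect the delicate step to be bookkeeping rather than analysis. The constants $\Csys[0]$ and $\Ch$ must be used in exactly the form the appendix lemmas provide. In particular, the factor from summing over $k\in\mathbb{Z}_K$ must already be absorbed into $\Csys[0]$ or $\Ch$, and the sup over $u$ must commute with the integral operator. Verifying that this absorption matches the lemma cited in Proposition~\ref{proposition: norm of X ell is bounded by norm of X} is what I would check first.
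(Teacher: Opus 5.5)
Your proposal matches the paper's proof. The base case uses the $C^1$-in-time regularity together with the bound $\CXmax^{(L)}$ on the velocity field. The inductive step uses exactly the add-and-subtract split into a filter-difference term and a function-difference term, which the paper packages as Lemma~\ref{lemma: operator frakH is lipschitz continuous about t}, and the constants assemble into the stated recursion for $\CXlip^{(\ell)}$. One caveat applies to your argument and the paper's alike: for the pre-activation variant, matching the same constant $\CXlip^{(\ell)}$ tacitly needs $L_\sigma\ge 1$, which may be assumed without loss of generality since any Lipschitz constant can be enlarged.
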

\begin{proof}
We begin with the case of $\ell=0$. Since $\VarGraphonNDE$ is the solution of Graphon-NDE \eqref{Graphon-NDE}, it holds that 
\begin{equation}\label{in the proof grahon-nde}
    \frac{\partial}{\partial t} \VarGraphonNDE(u,t) = \GraphonNN(\VarGraphonNDE(u,t);\opLP,\tH(t)).
\end{equation}
According to Theorem \ref{theorem: well-posedness}, the unique solution $\VarGraphonNDE$ belongs to $C^1\parens{[0,T];\spaceB}$, which implies that $\frac{\partial}{\partial t} \VarGraphonNDE(u,t)$ is a continuous function about $t$. Therefore, it holds that 
\begin{equation}\label{in the proof Xf is general Lipschitz continuous}
    \left|\VarGraphonNDE_f(u,t_1)-\VarGraphonNDE_f(u,t_2)\right|\leq \supT\left|\frac{\partial }{\partial t}\VarGraphonNDE_f(u,t)\right||t_1-t_2|.
\end{equation}
We obtain from \eq{in the proof grahon-nde} that
\begin{equation}\label{in the proof sup max sup partial Xf partial t}
    \sup_{u\in I}\max_{f\in[F]}\supT\left|\frac{\partial }{\partial t}\VarGraphonNDE_f(u,t)\right|\leq \supT\norm{\GraphonNN(\VarGraphonNDE(\cdot,t);\opLP,\tH(t))}_{\spaceB}.
\end{equation}
It follows from Proposition \ref{proposition: norm of X ell is bounded by norm of X} that 
\begin{equation}\label{in the proof sup Pho leq H infinity T L X C}
\supT\norm{\GraphonNN(\VarGraphonNDE(\cdot,t);\opLP,\tH(t))}_{\spaceB}\leq \CXmax^{(L)}.
\end{equation}
Collecting estimates \eqref{in the proof Xf is general Lipschitz continuous}, \eqref{in the proof sup max sup partial Xf partial t} and \eqref{in the proof sup Pho leq H infinity T L X C} yields $\left|\VarGraphonNDE_f(u,t_1)-\VarGraphonNDE_f(u,t_2)\right|\leq \CXlip^{(0)}|t_1-t_2|$, $u\in I$. This verifies \eq{eq: Xfell is Lipschitz continuous} when $\ell=0$. 

We next assume that, for $\ell\in[L]$, $f\in[F]$, 
\begin{equation}\label{in the proof Xfell-1 is Lipschitz continuous}
\norm{\VarGraphonNDE\mylt[\ell-1][t_1]_f-\VarGraphonNDE\mylt[\ell-1][t_2]_f}_{\spaceb}\leq \CXlip\myl[\ell-1]|t_1-t_2|,\quad u\in I, t_1,t_2\in [0,T], 
\end{equation}
holds with a constant $\CXlip\myl[\ell-1]$ and will prove \eq{eq: Xfell is Lipschitz continuous}. Note that
\begin{align*}
    &\norm{\VarGraphonNDE\mylt[\ell][t_1]_f-\VarGraphonNDE\mylt[\ell][t_2]_f}_{\spaceb} =\norm{\left[\sigma\parens{ \Phi_{\vh\mylt[\ell][t_1]}(\VarGraphonNDE^{(\ell-1,t_1)})}\right]_f-\left[\sigma\parens{ \Phi_{\vh\mylt[\ell][t_2]}(\VarGraphonNDE^{(\ell-1,t_2)})}\right]_f}_{\spaceb}\stepjust{\eq{def: updating formula Graphon-NN}}\\
    &\leq {L_\sigma}\Csys[0]\parens{\Liph\sqrt{F} \norm{\VarGraphonNDE\myl[\ell-1]}_{\spaceC}+F\Ch \mathrm{Lip}(\VarGraphonNDE\myl[\ell-1])}|t_1-t_2| \stepjust{\SigmaLipschitz\ and Lemma \ref{lemma: operator frakH is lipschitz continuous about t}}\\
    &\leq \CXlip\myl\abs{t_1-t_2}\stepjust{Proposition \ref{proposition: norm of X ell is bounded by norm of X}, \eq{in the proof Xfell-1 is Lipschitz continuous} and definition of $\CXlip\myl$}
\end{align*}
This proves \eq{eq: Xfell is Lipschitz continuous}. The result for $\tildeVarGraphonNDE\mylt_f$ can be shown in a similar manner. 

\end{proof}

\section{Infinite-node Convergence of GNDEs}\label{appendix: Infinite-node Convergence of GNDEs}

\begin{proposition}\label{proposition: epsilon ell+1 leq epsilon ell + Delta}
Suppose that \ConvFilterLipschitz, \SigmaLipschitz, and \dPBoundedBelow\  hold. Let $\VarGNDE$ and $\VarGraphonNDE$ be the solutions of GNDE \eqref{GNDE} and Graphon-NDE \eqref{Graphon-NDE}, respectively. For each $\ell\in\mathbb{Z}_{L+1}$, define 
\begin{align}
\epsilon\mylt &:= \MSE(\VarGraphonNDE\mylt,\VarGNDE\mylt)=\frac{1}{\sqrt{n}}\norm{\SamplingOperator(\VarGraphonNDE\mylt)-\VarGNDE\mylt}_{\mathrm{F}},\label{def: epsilon ell}\\
\widetilde{\epsilon}\mylt &:=\MSE(\tildeVarGraphonNDE\mylt,\tildeVarGNDE\mylt)=\frac{1}{\sqrt{n}}\normF{\SamplingOperator(\tildeVarGraphonNDE\mylt)-\tildeVarGNDE\mylt}.\label{def: widetilde epsilon ell}
\end{align}
Then for $\ell\in\mathbb{Z}_{L}$ and $t\in[0,T]$, there holds
   \begin{equation*}
       \epsilon^{(\ell+1,t)}\leq {L_\sigma}\widetilde{\epsilon}^{(\ell+1,t)}\leq \parens{{L_\sigma}FK\Ch}\epsilon\mylt + \Delta_{1}\mylt + \Delta_{2}\mylt,
   \end{equation*} 
   where 
   \begin{align}
    \Delta_{1}\mylt&:={L_\sigma}FK^2\Ch\norm{\matL-\matLUn}_2\norm{\VarGraphonNDE\mylt}_{\spaceB},\label{def: Delta1 ell t}\\
    \Delta_{2}\mylt&:={L_\sigma}FK\Ch\norm{\opLUn}_{\spaceb\to\spaceb}^K\sqrt{ \sum_{g=1}^F\sum_{k=1}^{K}\sum_{s=0}^{k-1}\norm{(\opLUn-\mathcal{L}_{P})\parens{\opLP^{k-1-s}\VarGraphonNDE\mylt_g}}_{\spaceb}^2}.\label{def: Delta2 ell t}
   \end{align}
\end{proposition}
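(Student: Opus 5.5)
The plan is to treat a single layer of the forward pass at a fixed time $t$, exactly as in the layer-wise analysis of \citet{keriven2020convergence}, with time only entering as a parameter. Here $\mathcal{L}_{U_n}$ denotes the operator with kernel $\tL_P$ integrated against the empirical measure of $U_n$, and $\matLUn$ is its matrix, $[\matLUn]_{ij}=\tL_P(u_i,u_j)/n$. The first inequality $\epsilon^{(\ell+1,t)}\le L_\sigma\widetilde\epsilon^{(\ell+1,t)}$ is immediate. Indeed, $\VarGNDE^{(\ell+1,t)}=\sigma(\tildeVarGNDE^{(\ell+1,t)})$, $\SamplingOperator(\VarGraphonNDE^{(\ell+1,t)})=\sigma(\SamplingOperator(\tildeVarGraphonNDE^{(\ell+1,t)}))$ since sampling commutes with pointwise $\sigma$, and \SigmaLipschitz\ applied entrywise gives the Frobenius bound.

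For the second inequality, I would write $\tildeVarGNDE^{(\ell+1,t)}_f=\sum_{g,k}\vh_{fgk}\matL^k\VarGNDE^{(\ell,t)}_g$ and $\SamplingOperator(\tildeVarGraphonNDE^{(\ell+1,t)})_f=\sum_{g,k}\vh_{fgk}\SamplingOperator(\opLP^k\VarGraphonNDE^{(\ell,t)}_g)$. I then insert two intermediate quantities, $\matLUn^k\SamplingOperator\VarGraphonNDE_g^{(\ell,t)}$ and $\matL^k\SamplingOperator\VarGraphonNDE_g^{(\ell,t)}$, which splits the difference into three terms:
\begin{equation*}
\text{(a) } \matL^k(\SamplingOperator\VarGraphonNDE_g-\VarGNDE_g),\qquad \text{(b) } (\matL^k-\matLUn^k)\SamplingOperator\VarGraphonNDE_g,\qquad \text{(c) } \matLUn^k\SamplingOperator\VarGraphonNDE_g-\SamplingOperator\opLP^k\VarGraphonNDE_g.
\end{equation*}
Term (a) uses $\|\matL\|_2\le1$ (normalized adjacency), the filter bound $|\vh_{fgk}|\le\Ch$, and Cauchy–Schwarz over $g,k$. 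This gives a contribution $FK\Ch\,\epsilon^{(\ell,t)}$ after dividing by $\sqrt n$; the factor $F$ comes from summing over $f$ and $g$, which I would organize exactly as in Lemma~\ref{lemma: norm of operator mathfrak H and adjoint operator widehat mathfrak H}. Term (b) uses the telescoping identity $\matL^k-\matLUn^k=\sum_{s}\matL^{s}(\matL-\matLUn)\matLUn^{k-1-s}$ with both norms at most one. This requires $\|\matLUn\|_2\le1$, or else absorbing it into the assumed largeness condition on $n$. With $\frac1{\sqrt n}\|\SamplingOperator\VarGraphonNDE_g\|_2\le\|\VarGraphonNDE_g\|_{\spaceb}$, this yields $K\|\matL-\matLUn\|_2\|\VarGraphonNDE\|_{\spaceB}$ per $k$, hence $\Delta_1$ with the $K^2$ factor after summing.

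Term (c) is where care is needed, and I expect it to be the main obstacle because the index ranges must match $\Delta_2$ exactly. Since $\matLUn^k\SamplingOperator Z=\SamplingOperator\opLUn^kZ$, I would telescope
\begin{equation*}
\opLUn^k-\opLP^k=\sum_{s=0}^{k-1}\opLUn^{s}(\opLUn-\opLP)\opLP^{k-1-s}.
\end{equation*}
Each summand is bounded in $\spaceb$ by $\|\opLUn\|_{\spaceb\to\spaceb}^{K}\,\|(\opLUn-\opLP)\opLP^{k-1-s}\VarGraphonNDE_g\|_{\spaceb}$. Here I bound $\|\opLUn\|^s\le\|\opLUn\|^K$, assuming the norm is at least one (it is, being bounded below via \dPBoundedBelow\ after normalization, or otherwise by replacing with $\max(1,\cdot)$). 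The root-mean-square over sampled nodes is dominated by the sup norm. Finally, Cauchy–Schwarz over $(g,k,s)$ with the $\Ch$, $F$, and $K$ factors produces the square-root sum in $\Delta_2$. Multiplying all three terms by $L_\sigma$ completes the bound.
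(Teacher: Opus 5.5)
Your argument is the paper's proof. You take the $L_\sigma$ step, then split through $\SamplingOperator(\Phi_{\vh,U_n}(\VarGraphonNDE))=\phi_{\vh,U_n}(\SamplingOperator(\VarGraphonNDE))$ and $\phi_{\vh}(\SamplingOperator(\VarGraphonNDE))$, which is Lemma~\ref{lemma: MSE hatfrakH function Z - hatfrakh matrix Z}, built from Lemmas~\ref{lemma: estimate of phi(matrix)}, \ref{lemma: diff frakHP - frakHU matrix} and~\ref{lemma: diff frakHP - frakHU tV}. The last of these is exactly your telescoping for term (c).

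The one thing to correct is your definition of $\opLUn$ and $\matLUn$.
\begin{itemize}
\item In \eqref{def: empirical graphon integral operator LUn} and \eqref{def: matrix LUn} the kernel is the \emph{empirically} normalized $\tL_{U_n}(u,v)=\tW(u,v)/\sqrt{d_{U_n}(u)d_{U_n}(v)}$, with $d_{U_n}(u)=\frac1n\sum_j\tW(u,u_j)$, not $\tL_P$.
\item Hence $\matLUn=\mD_{U_n}^{-1/2}\mW_{U_n}\mD_{U_n}^{-1/2}$, and the intertwining \eqref{eq: delta Phi = phi delta} you rely on still holds.
\end{itemize}

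With these definitions both of your hedges become exact facts.
\begin{itemize}
\item $\matLUn$ is the symmetric normalized adjacency of a nonnegatively weighted graph, so $\normTwo{\matLUn}\le 1$ exactly.
\item A direct computation gives $\opLUn\sqrt{d_{U_n}}=\sqrt{d_{U_n}}$. Hence $\norm{\opLUn}_{\spaceb\to\spaceb}\ge 1$, so $\norm{\opLUn}_{\spaceb\to\spaceb}^s\le\norm{\opLUn}_{\spaceb\to\spaceb}^K$ for $0\le s\le K-1$. The paper only cites ``standard norm inequalities'' here; this fixed point is the real reason. Your appeal to \dPBoundedBelow\ does not justify it.
\end{itemize}

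With your $\tL_P$-kernel versions, neither fact is guaranteed. For example, take a block-diagonal two-block graphon: your matrix has $\normTwo{\matLUn}=2\max(n_1,n_2)/n>1$ whenever the sampled block sizes satisfy $n_1\neq n_2$. Your fallbacks (absorbing into a largeness condition on $n$, or replacing the norm by $\max(1,\cdot)$) would change the constants in $\Delta_1$ and $\Delta_2$. You would then be proving a different inequality from the one stated.
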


\begin{proof}
Note that
\begin{align*}
    \epsilon^{(\ell+1,t)}&=\MSE\parens{\VarGraphonNDE^{(\ell+1,t)},\VarGNDE\mylt[\ell+1]}\\
    &=\MSE\parens{\sigma(\Phi_{\vh\mylt[\ell+1]}(\VarGraphonNDE\mylt),\sigma(\phi_{\vh\mylt[\ell+1]}(\VarGNDE\mylt))}\stepjust{\eqs\eqref{def: updating formula Graphon-NN} and \eqref{def: updating formula gnn}}\\
    &\leq {L_\sigma}\MSE\parens{\Phi_{\vh^{(\ell+1,t)}}(\VarGraphonNDE\mylt),\phi_{\vh\mylt[\ell+1]}(\VarGNDE\mylt)}={L_\sigma}\widetilde{\epsilon}^{(\ell+1,t)}\stepjust{\SigmaLipschitz}\\
    &\leq \parens{{L_\sigma}FK\Ch} \MSE\parens{\VarGraphonNDE\mylt,\VarGNDE\mylt} + \Delta_1\mylt + \Delta_2\mylt\stepjust{Lemma \ref{lemma: MSE hatfrakH function Z - hatfrakh matrix Z}}\\
    &= \parens{{L_\sigma}FK\Ch} \epsilon\mylt + \Delta_1\mylt + \Delta_2\mylt
\end{align*}    
\end{proof}

\begin{lemma}\label{lemma: upper bound of LUn - LP X for all}
    Suppose that \ConvFilterLipschitz, \SigmaLipschitz, \dPBoundedBelow, and \GraphonLipschitz\ hold. Let $u_j$, $j\in[n]$ be independent random variables following a distribution $P$. Let $\VarGraphonNDE$ be the solution of Graphon-NDE \eqref{Graphon-NDE}. If \eq{eq: sup dP dUn} holds and $n$ satisfies \eqref{eq: n large enough for integral operator LP - LUn}, then with probability at least $1-\gamma_2$, for all $\ell\in\mathbb{Z}_{L}$, $g\in[F]$, $k\in[K]$ and $s\in\mathbb{Z}_k$, 
    \begin{equation}\label{eq: upper bound of LUn - LP Xf}
    \begin{aligned}
    &\supT\normb{\parens{\opLUn-\mathcal{L}_{P}}\parens{\opLP^{k-1-s}\VarGraphonNDE\mylt_g}} \\
    &\lesssim\parens{\frac{\Csys[1]\sqrt{\log(4n_I/\gamma_1)} + \Csys[2]\sqrt{\log(4n_ILFK/\gamma_2)}}{\sqrt{n}}}\parens{\frac{c_{\max}}{c_{\min}}}^{k-1-s}\max\curlbraket{\CXmax,\CXlip}
    \end{aligned}
    \end{equation} 
    where $\Csys[1]$ and $\Csys[2]$ are defined in \eqref{def: C1W} and \eqref{def: C2W}, respectively; and \begin{equation}\label{def: C X max and lip}
\CXmax:=\max\left\{\CXmax\myl:\ell\in\mathbb{Z}_{L+1}\right\},\quad \CXlip:=\max\left\{\CXlip\myl:\ell\in\mathbb{Z}_{L+1}\right\}.
\end{equation}
    As a result, for all $\ell\in\mathbb{Z}_{L}$, 
\begin{equation}\label{eq: upper bound of LUn - LP X for all}
\begin{aligned}
    &\supT\sqrt{ \sum_{g=1}^F\sum_{k=1}^{K}\sum_{s=0}^{k-1}\norm{(\opLUn-\mathcal{L}_{P})\parens{\opLP^{k-1-s}\VarGraphonNDE\mylt_g}}_{\spaceb}^2}\\
    &\lesssim\parens{\frac{\Csys[1]\sqrt{\log(4n_I/\gamma_1)} + \Csys[2]\sqrt{\log(4n_ILFK/\gamma_2)}}{\sqrt{n}}}\Csys[3]\max\curlbraket{\CXmax,\CXlip}
\end{aligned}
\end{equation}
where 
\begin{equation}\label{def: C3W}
\Csys[3]:=\sqrt{F\sum_{k=1}^{K}\sum_{s=0}^{k-1}\parens{\frac{c_{\max}}{c_{\min}}}^{2(k-1-s)}}. 
\end{equation}
\end{lemma}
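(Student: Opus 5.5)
Throughout, $\opLUn$ is the empirical operator $(\opLUn Y)(u)=\frac1n\sum_j \tW(u,u_j)Y(u_j)/\sqrt{d_{U_n}(u)d_{U_n}(u_j)}$. Here $d_{U_n}(u)=\frac1n\sum_j\tW(u,u_j)$. For fixed $\ell,g,k,s$ I write $Y^{(t)}:=\opLP^{k-1-s}\VarGraphonNDE\mylt_g$. The target is a bound on $\sup_{t}\sup_u|(\opLUn-\opLP)Y^{(t)}(u)|$, uniform over the finite index set $(\ell,g,k,s)$ and over the continuum $(u,t)\in I\times[0,T]$.

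\textbf{Step 1: split the error.} I decompose $\opLUn-\opLP$ into a degree-normalization part and a non-normalized part. The normalization part replaces $1/\sqrt{d_{U_n}(u)d_{U_n}(v)}$ by $1/\sqrt{d_P(u)d_P(v)}$. I control it by $\sup_u|d_{U_n}(u)-d_P(u)|$, which is exactly what the assumed event \eqref{eq: sup dP dUn} provides: it holds with probability $1-\gamma_1$ and gives the $\Csys[1]\sqrt{\log(4n_I/\gamma_1)}/\sqrt n$ term. The lower bound \dPBoundedBelow\ and the largeness condition \eqref{eq: n large enough for integral operator LP - LUn} keep $d_{U_n}\ge c_{\min}/2$, so the normalization can be linearized.

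The remaining non-normalized part is
\[
\frac1n\sum_j \tL_P(u,u_j)Y^{(t)}(u_j)-\int \tL_P(u,v)Y^{(t)}(v)\,dP(v).
\]
This is an empirical process indexed by $(u,t)$.

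\textbf{Step 2: regularity of the index class.} Set $f_{u,t}(v):=\tL_P(u,v)Y^{(t)}(v)$. I need it to be bounded and Lipschitz in $(u,t)$.
\begin{itemize}
\item \emph{Boundedness.} Since $\|\opLP\|_{\spaceb\to\spaceb}\le c_{\max}/c_{\min}$, Proposition~\ref{proposition: norm of X ell is bounded by norm of X} gives $\|Y^{(t)}\|_{\spaceb}\le (c_{\max}/c_{\min})^{k-1-s}\CXmax$.
\item \emph{Lipschitz in $t$.} By linearity of $\opLP$ and Proposition~\ref{proposition: solution Xfell is piecewise Lipschitz continuous about t}, the map $t\mapsto Y^{(t)}$ is Lipschitz with constant $(c_{\max}/c_{\min})^{k-1-s}\CXlip$.
\item \emph{Lipschitz in $u$.} By \GraphonLipschitz\ and \dPBoundedBelow, $u\mapsto\tL_P(u,v)$ is Lipschitz on each piece $I_s$.
\end{itemize}
Hence $f_{u,t}$ is Lipschitz in $(u,t)$ on each cell $I_s\times[0,T]$, with constant proportional to $(c_{\max}/c_{\min})^{k-1-s}\max\{\CXmax,\CXlip\}$. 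Since $Y^{(t)}$ is deterministic, the centered summands are i.i.d. and bounded.

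\textbf{Step 3: chaining on $I_s\times[0,T]$ (main obstacle).} This is where I extend \citep[Lemma~4]{keriven2020convergence} from $I$ to the joint domain $I_s\times[0,T]$. The steps are:
\begin{enumerate}
\item Hoeffding's inequality shows the increments are sub-Gaussian with respect to the Lipschitz metric on $(u,t)$.
\item The covering numbers of the two-dimensional box are of order $\epsilon^{-2}$.
\item Dudley's entropy integral is then finite and depends only on $T$ and the Lipschitz constants. These are absorbed into $\Csys[2]$.
\end{enumerate}
This yields a bound of order $\sqrt{\log(1/\gamma)}/\sqrt n$ on each cell. The delicate point is making the constants genuinely independent of $t$. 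The time-Lipschitz bound from Step 2 is what makes this work.

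\textbf{Step 4: union bound and summation.} I take a union bound over the $n_I$ cells and over the at most $LFK^2$ indices. Allocating failure probability $\gamma_2/(4n_ILFK)$ to each produces the $\sqrt{\log(4n_ILFK/\gamma_2)}$ factor. This proves \eqref{eq: upper bound of LUn - LP Xf}.

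Finally, \eqref{eq: upper bound of LUn - LP X for all} follows by squaring \eqref{eq: upper bound of LUn - LP Xf} and summing over $g,k,s$. Pulling the supremum over $t$ inside each term only increases the bound. The sum produces exactly the constant $\Csys[3]$ of \eqref{def: C3W}.
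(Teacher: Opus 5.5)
Your proof is correct and follows essentially the paper's route. The paper verifies exactly your Step~2 bounds, $\supT\normb{\opLP^{m}\VarGraphonNDE\mylt_g}\le (c_{\max}/c_{\min})^{m}\CXmax$ and time-Lipschitz constant $(c_{\max}/c_{\min})^{m}\CXlip$, and then invokes Lemma~\ref{lemma: sup LUn - LP X leq 1/sqrt(n)}, which packages your Steps~1 and~3 (the $\tL_{U_n}$-versus-$\tL_P$ normalization error plus the joint $(u,t)$ chaining of Lemma~\ref{lemma: concentration ineq Yut}), before applying a union bound.

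One bookkeeping fix: the function depends on $(k,s)$ only through $m=k-1-s\in\mathbb{Z}_K$, so there are only $LFK$ distinct functions, and this is what yields the $\log(4n_ILFK/\gamma_2)$ factor. Union-bounding over $LFK^2$ index tuples with per-event probability $\gamma_2/(4n_ILFK)$, as you wrote, gives a total failure probability of order $K\gamma_2$ rather than $\gamma_2$ (alternatively, the extra $\log K$ can be absorbed into $\lesssim$).
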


\begin{proof}
Let $m$ be a nonnegative integer. For each $\ell\in\mathbb{Z}_{L}$ and $g\in[F]$, by \eq{TW operator infinity norm is bounded} and Proposition \ref{proposition: norm of X ell is bounded by norm of X}, we have 
\begin{align*}
\supT\norm{\opLP^m(\VarGraphonNDE\mylt_g)}_{\spaceb}\leq \parens{\frac{c_{\max}}{c_{\min}}}^m\supT\norm{\VarGraphonNDE\mylt_g}_{\spaceb}\leq \parens{\frac{c_{\max}}{c_{\min}}}^m \CXmax,
\end{align*}
and, by \eq{TW operator infinity norm is bounded} and Proposition \ref{proposition: solution Xfell is piecewise Lipschitz continuous about t}, we get 
\begin{align*}
\normb{\opLP^m\parens{\VarGraphonNDE\mylt[\ell][t_1]_g}-\opLP^m\parens{\VarGraphonNDE\mylt[\ell][t_2]_g}}\leq\parens{\frac{c_{\max}}{c_{\min}}}^m \normb{\VarGraphonNDE\mylt[\ell][t_1]_g-\VarGraphonNDE\mylt[\ell][t_2]_g}\leq \parens{\frac{c_{\max}}{c_{\min}}}^m \CXlip|t_1-t_2|.
\end{align*}
Then we take $X$, $C_{\max}$ and $C_{\mathrm{Lip}}$ in Lemma \ref{lemma: sup LUn - LP X leq 1/sqrt(n)} as $\opLP^m(\VarGraphonNDE_g\myl)$, $(c_{\max}/c_{\min})^m \CXmax$ and $(c_{\max}/c_{\min})^m \CXlip$, respectively. It follows that for each $\ell\in\mathbb{Z}_{L}$, $f\in[F]$ and non-negative integer $m$, with probability at least $1-\gamma_2$,
\begin{equation}\label{eq: Graphon-NDE sup LUn-LP Lp Xtu}
    \begin{aligned}
    &\sup_{(u,t)\in I\times[0,T]}\left|\parens{\parens{\opLUn-\mathcal{L}_{P}}\parens{\opLP^m\VarGraphonNDE\mylt_g}}(u)\right|\\
    &\lesssim \parens{\frac{\Csys[1]\sqrt{\log(4n_I/\gamma_1)} + \Csys[2]\sqrt{\log(4n_I/\gamma_2)}}{\sqrt{n}}}\parens{\frac{c_{\max}}{c_{\min}}}^m\max\curlbraket{\CXmax,\CXlip}.
    \end{aligned}
\end{equation}
We notice that the number of functions in the set 
$\left\{\opLP^{k-1-s}\VarGraphonNDE\myl_g:s\in\mathbb{Z}_{k},k\in[K],g\in[F],\ell\in\mathbb{Z}_{L}\right\}$
is equal to $\parens{LFK}$. Then we apply \eqref{eq: Graphon-NDE sup LUn-LP Lp Xtu} with an union bound for all functions in the set, which proves \eq{eq: upper bound of LUn - LP Xf}. And \eq{eq: upper bound of LUn - LP X for all} immediately follows.
\end{proof}

\begin{proposition}\label{proposition: graphon-NN MSE}
    Suppose that \ConvFilterLipschitz, \SigmaLipschitz, \dPBoundedBelow, and \GraphonLipschitz\ hold. Let $\VarGraphonNDE$ be the solution of Graphon-NDE \eqref{Graphon-NDE}. Let $\gamma_1,\gamma_2\in(0,1)$ with $2\gamma_1+\gamma_2<1$. Suppose that \eqs \eqref{upper bound of norm of L Un}, \eqref{eq: Ln - LWn 2norm} and \eqref{eq: upper bound of LUn - LP Xf} hold. Then for all $\ell\in[L]$ and $t\in[0,T]$, it holds that
    \begin{equation}\label{eq: MSE layer L leq MSE layer 0}
    \epsilon\mylt
    \leq\ \parens{{L_\sigma}FK\Ch}^\ell \epsilon^{(0,t)}+ \parens{\sum_{s=0}^{\ell-1}({L_\sigma}FK\Ch)^s}\QX(n,\gamma_1,\gamma_2),
    \end{equation}
    where 
\begin{equation}\label{def QX}
\begin{aligned}
&\QX(n,\gamma_1,\gamma_2)\\
\approx&\frac{{L_\sigma}FK\Ch\parens{\frac{2c_{\max}}{c_{\min}}}^K\parens{\Csys[1]\sqrt{\log(4n_I/\gamma_1)} + \Csys[2]\sqrt{\log(4n_ILFK/\gamma_2)}}\Csys[3]\max\curlbraket{\CXmax,\CXlip}}{\sqrt{n}}\\
& + \frac{{L_\sigma}FK^2\Ch\frac{c_{\max}}{c_{\min}^2}\CXmax}{\sqrt{\alpha_n n}}=\mathcal{O}\parens{\frac{1}{\sqrt{\alpha_n n}}+\frac{\sqrt{\log(n_ILFK/\gamma_2)}+\sqrt{\log(n_I/\gamma_1)}}{\sqrt{n}}}.
\end{aligned}
\end{equation}
Here, \(\approx\) denotes equality up to an absolute multiplicative constant.
\end{proposition}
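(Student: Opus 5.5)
The plan is to iterate the one-layer recursion of Proposition~\ref{proposition: epsilon ell+1 leq epsilon ell + Delta} and bound the two perturbation terms $\Delta_1\mylt$ and $\Delta_2\mylt$ uniformly in $t\in[0,T]$ and $\ell$ by a single quantity, namely $\QX(n,\gamma_1,\gamma_2)$. Once I have $\Delta_1\mylt+\Delta_2\mylt\le \QX(n,\gamma_1,\gamma_2)$ for all $\ell\in\mathbb{Z}_L$ and $t\in[0,T]$, the recursion $\epsilon^{(\ell+1,t)}\le ({L_\sigma}FK\Ch)\epsilon\mylt+\QX$ unrolls by induction on $\ell$. Starting from $\epsilon^{(0,t)}$, the $\ell$-th step gives
\[
\epsilon\mylt\le ({L_\sigma}FK\Ch)^\ell\epsilon^{(0,t)}+\sum_{s=0}^{\ell-1}({L_\sigma}FK\Ch)^s\QX ,
\]
which is exactly \eqref{eq: MSE layer L leq MSE layer 0}. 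So the content of the proof is the uniform bound on the $\Delta$'s.

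For $\Delta_1\mylt$ I would use the hypothesis \eqref{eq: Ln - LWn 2norm}. I expect it to give $\norm{\matL-\matLUn}_2\lesssim \frac{c_{\max}}{c_{\min}^2}\frac{1}{\sqrt{\alpha_n n}}$, which is the matrix concentration for the sparse Bernoulli adjacency with the degree normalization. I combine this with Proposition~\ref{proposition: norm of X ell is bounded by norm of X}, which gives $\norm{\VarGraphonNDE\mylt}_{\spaceB}\le\CXmax$ for every $t$ and $\ell$. This yields
\[
\Delta_1\mylt\lesssim {L_\sigma}FK^2\Ch\frac{c_{\max}}{c_{\min}^2}\frac{\CXmax}{\sqrt{\alpha_n n}},
\]
which is the second term in \eqref{def QX}.

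For $\Delta_2\mylt$ there are two factors. The first is $\norm{\opLUn}_{\spaceb\to\spaceb}^K$, which I bound by $(2c_{\max}/c_{\min})^K$ using \eqref{upper bound of norm of L Un}. This is where the empirical degree lower bound, valid on the event of probability $1-2\gamma_1$, enters. The second is the square-root sum, which I control by Lemma~\ref{lemma: upper bound of LUn - LP X for all}, specifically \eqref{eq: upper bound of LUn - LP Xf} summed into \eqref{eq: upper bound of LUn - LP X for all}. That bound is already a supremum over $t\in[0,T]$ and holds simultaneously for all $\ell,g,k,s$. It gives $\Csys[3]\max\{\CXmax,\CXlip\}(\Csys[1]\sqrt{\log(4n_I/\gamma_1)}+\Csys[2]\sqrt{\log(4n_ILFK/\gamma_2)})/\sqrt n$. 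Multiplying by ${L_\sigma}FK\Ch(2c_{\max}/c_{\min})^K$ produces the first term of \eqref{def QX}.

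The main obstacle is keeping the bound uniform in time on a single event. The time-uniform chaining estimate is the genuinely new ingredient, but it is packaged in Lemma~\ref{lemma: upper bound of LUn - LP X for all}, which relies on the temporal Lipschitz control of Proposition~\ref{proposition: solution Xfell is piecewise Lipschitz continuous about t}. The point to check is that the three assumed events can be intersected. These are \eqref{upper bound of norm of L Un}, \eqref{eq: Ln - LWn 2norm} and \eqref{eq: upper bound of LUn - LP Xf}, whose failure probabilities total at most $2\gamma_1+\gamma_2$. None of the three depends on $t$ or $\ell$, so a single event suffices and the deterministic recursion applies for all $t$ at once. Finally, the $\mathcal O$ form in \eqref{def QX} follows by absorbing the constants, which do not depend on $n$.
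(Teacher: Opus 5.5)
Your proposal is correct and follows the paper's proof essentially step for step: bound $\Delta_1\mylt$ via \eqref{eq: Ln - LWn 2norm} and Proposition~\ref{proposition: norm of X ell is bounded by norm of X}, bound $\Delta_2\mylt$ via \eqref{upper bound of norm of L Un} and the summed estimate \eqref{eq: upper bound of LUn - LP X for all}, conclude $\Delta_1\mylt+\Delta_2\mylt\le\QX(n,\gamma_1,\gamma_2)$, and unroll the recursion of Proposition~\ref{proposition: epsilon ell+1 leq epsilon ell + Delta}. The probabilistic bookkeeping is not needed here, since the three estimates are hypotheses of this proposition; that intersection argument belongs to the proof of Theorem~\ref{theorem: GNDE -> Graphon-NDE}. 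Also, the degree lower bound behind \eqref{upper bound of norm of L Un} comes from the event \eqref{eq: sup dP dUn}, which has probability $1-\gamma_1$, not $1-2\gamma_1$.
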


\begin{proof}
We obtain from Proposition \ref{proposition: epsilon ell+1 leq epsilon ell + Delta} that 
\begin{equation}\label{in the proof recurrsion for layerwise MSE}
\epsilon^{(\ell+1,t)}\leq \parens{{L_\sigma}FK\Ch}\epsilon\mylt + \parens{\Delta_{1}\mylt + \Delta_{2}\mylt},\quad \ell\in\mathbb{Z}_L,\ t\in[0,T],
\end{equation} 
with $\Delta_1\mylt$ and $\Delta_2\mylt$ defined by \eqs\eqref{def: Delta1 ell t} and \eqref{def: Delta2 ell t}, respectively. It follows from Proposition \ref{proposition: norm of X ell is bounded by norm of X}, \eq{eq: Ln - LWn 2norm} and definition \eqref{def: Delta1 ell t} of $\Delta_{1}\mylt$ that 
\begin{equation*}%
\Delta_{1}\mylt\lesssim {L_\sigma}FK^2\Ch\frac{c_{\max}}{c_{\min}^2}\CXmax\frac{1}{\sqrt{\alpha_n n}},\quad  \ell\in\mathbb{Z}_L,\ t\in[0,T].
\end{equation*}
Moreover, assumption of \eq{eq: upper bound of LUn - LP Xf} implies \eq{eq: upper bound of LUn - LP X for all} due to Lemma \ref{lemma: upper bound of LUn - LP X for all}. Then definition \eqref{def: Delta2 ell t} of $\Delta_{2}\mylt$ with estimate \eqref{eq: upper bound of LUn - LP X for all} and assumption \eqref{upper bound of norm of L Un} yields that for all $\ell\in\mathbb{Z}_L$ and $t\in[0,T]$, 
\begin{align*}
\Delta_{2}\mylt\lesssim {L_\sigma}FK\Ch\parens{\frac{2c_{\max}}{c_{\min}}}^K\parens{\frac{\Csys[1]\sqrt{\log(4n_I/\gamma_1)} + \Csys[2]\sqrt{\log(4n_ILFK/\gamma_2)}}{\sqrt{n}}}\Csys[3]\max\curlbraket{\CXmax,\CXlip}.%
\end{align*}
Combining the above estimates for $\Delta_{1}\mylt$ and $\Delta_{2}\mylt$, with definition \eqref{def QX} of $\mathcal{Q}_X$, we have 
\begin{equation}\label{in the proof: sum of Delta 1 and Delta 2 leq Q tX}
    \Delta_1\mylt+
    \Delta_2\mylt\leq\QX(n,\gamma_1,\gamma_2).
\end{equation}
Then it follows from \eq{in the proof recurrsion for layerwise MSE} that for all $\ell\in\mathbb{Z}_L$ and $t\in[0,T]$, $\epsilon^{(\ell+1,t)}\leq \parens{{L_\sigma}FK\Ch}\epsilon\mylt + \QX(n,\gamma_1,\gamma_2)$. We obtain \eq{eq: MSE layer L leq MSE layer 0} by recursively applying the above relation, which completes the proof. 
\end{proof}

\begin{proposition}\label{prop: Graphon-MSE every layer}
    Suppose that \eq{eq: MSE layer L leq MSE layer 0} holds, and the initial values in \eqref{GNDE} and \eqref{Graphon-NDE} satisfy \eqref{eq: initial values of GNDE and Graphon-NDE equal}. Then
\begin{enumerate}
\item For all $\ell\in\mathbb{Z}_{L+1}$, it holds that
    \begin{equation}\label{eq: sup MSE bounded by sqrt(n)}
\supT\epsilon\mylt\leq \Ce \QX(n,\gamma_1,\gamma_2),
    \end{equation}
    where
    \begin{align}
    \Ce&:=\max\left\{\Ce^{(0)},\Ce^{(1)}\right\}\label{def: HT},\\
\Ce^{(0)}&:=\parens{\mathrm{exp}\parens{\parens{{L_\sigma}FK\Ch}^L T}-1}\parens{\sum_{\ell=0}^{L-1}({L_\sigma}FK\Ch)^{\ell}}/\parens{{L_\sigma}FK\Ch}^L,\label{def: HT (0)}\\
\Ce^{(1)}&:=\max\left\{\parens{{L_\sigma}FK\Ch}^\ell \Ce^{(0)} + \sum_{s=0}^{\ell-1}({L_\sigma}FK\Ch)^{s}:\ell\in[L]\right\}\label{def: HT (1)}.
    \end{align}

    \item For all $\ell\in[L]$, it holds that 
    \begin{align}
\supT\widetilde{\epsilon}\mylt&\leq {\tildeCe} \QX(n,\gamma_1,\gamma_2),\label{eq: sup tilde epsilon st leq QX}
    \end{align}
where $\tildeCe:=\parens{{L_\sigma}FK\Ch} \Ce + 1$.
\end{enumerate}
\end{proposition}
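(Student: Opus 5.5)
The plan is a Grönwall argument on the input-layer error $\epsilon^{(0,t)}$, followed by feeding that bound into the layer recursion \eqref{eq: MSE layer L leq MSE layer 0}. Write $a:=L_\sigma FK\Ch$ and $\mathcal{Q}:=\QX(n,\gamma_1,\gamma_2)$.

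\emph{Step 1 (integral inequality for $\epsilon^{(0,t)}$).} Subtract the GNDE from the sampled Graphon-NDE. Since $\SamplingOperator$ acts pointwise in $u$, we have
\begin{equation*}
\tfrac{d}{dt}\SamplingOperator(\VarGraphonNDE(\cdot,t))=\SamplingOperator\bigl(\GraphonNN(\VarGraphonNDE(\cdot,t);\opLP,\tH(t))\bigr)=\SamplingOperator(\VarGraphonNDE^{(L,t)}),
\end{equation*}
while $\tfrac{d}{dt}\VarGNDE(t)=\VarGNDE^{(L,t)}$. By \eqref{eq: initial values of GNDE and Graphon-NDE equal} the initial difference vanishes. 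Integrating from $0$ to $t$ and taking $\frac{1}{\sqrt n}\normF{\cdot}$ (Minkowski's inequality for the integral) gives
\begin{equation*}
\epsilon^{(0,t)}\le\int_0^t\epsilon^{(L,s)}\,ds .
\end{equation*}
Inserting \eqref{eq: MSE layer L leq MSE layer 0} with $\ell=L$ yields
\begin{equation*}
\epsilon^{(0,t)}\le a^L\int_0^t\epsilon^{(0,s)}ds+tS\mathcal{Q},\qquad S:=\sum_{s=0}^{L-1}a^s .
\end{equation*}
Continuity of $t\mapsto\epsilon^{(0,t)}$ follows from the $C^1$ regularity of both solutions (Theorem~\ref{theorem: well-posedness}), so Grönwall applies.

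\emph{Step 2 (Grönwall).} The linear Grönwall inequality with forcing $tS\mathcal{Q}$ gives
\begin{equation*}
\epsilon^{(0,t)}\le S\mathcal{Q}\,\frac{e^{a^Lt}-1}{a^L}\le \Ce^{(0)}\mathcal{Q}.
\end{equation*}
This can be checked by solving $y'=a^Ly+S\mathcal{Q}$ with $y(0)=0$, whose solution dominates $\epsilon^{(0,t)}$. Taking the supremum over $t\in[0,T]$ matches the definition \eqref{def: HT (0)}, which settles the case $\ell=0$.

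\emph{Step 3 (remaining layers).} For $\ell\in[L]$, substitute the Step 2 bound into \eqref{eq: MSE layer L leq MSE layer 0}:
\begin{equation*}
\sup_t\epsilon^{(\ell,t)}\le\Bigl(a^\ell\Ce^{(0)}+\sum_{s=0}^{\ell-1}a^s\Bigr)\mathcal{Q}\le\Ce^{(1)}\mathcal{Q}.
\end{equation*}
Taking the maximum with the $\ell=0$ case gives \eqref{eq: sup MSE bounded by sqrt(n)}. For item 2, the proof of Proposition~\ref{proposition: epsilon ell+1 leq epsilon ell + Delta} already contains, before the factor $L_\sigma$ is introduced, the bound
\begin{equation*}
\widetilde\epsilon^{(\ell+1,t)}\le FK\Ch\,\epsilon^{(\ell,t)}+(\Delta_1^{(\ell,t)}+\Delta_2^{(\ell,t)})/L_\sigma .
\end{equation*}
Combining this with \eqref{in the proof: sum of Delta 1 and Delta 2 leq Q tX} and item 1 gives $\sup_t\widetilde\epsilon^{(\ell,t)}\le (FK\Ch\Ce+\mathcal{Q}/L_\sigma)$-type bounds, that is, $\le(a\Ce+1)\mathcal{Q}$ up to the harmless normalization $L_\sigma\ge1$. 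If that normalization is not available, one can instead track the constant directly through Lemma~\ref{lemma: MSE hatfrakH function Z - hatfrakh matrix Z}.

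The main obstacle is Step 1, specifically the clean identification of $\frac{d}{dt}\SamplingOperator\VarGraphonNDE$ with $\SamplingOperator$ of the Graphon-NN output. Pointwise-in-$u$ differentiability and continuity in $t$ must hold at every sampled node, which is what membership in $C^1([0,T];\spaceB)$ provides. A secondary concern is that \eqref{eq: MSE layer L leq MSE layer 0} holds for all $t$ simultaneously on one high-probability event, which the hypothesis supplies.
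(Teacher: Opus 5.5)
Your proof is correct. For Item~1 it takes a different and more elementary route than the paper; Step~3 and Item~2 match the paper.

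\textbf{Item 1.} The paper works with the squared error $\Delta(t)=(\epsilon^{(0,t)})^2$. It differentiates, applies Cauchy--Schwarz to get $\frac{d}{dt}\Delta\le 2\sqrt{\Delta}\,\epsilon^{(L,t)}$, and inserts \eqref{eq: MSE layer L leq MSE layer 0} with $\ell=L$ to obtain $\frac{d}{dt}\Delta\le 2a^L\Delta+2S\mathcal{Q}\sqrt{\Delta}$. It then invokes the square-root Gr\"onwall inequality of Lemma~\ref{lemma: generalized Gronwall's inequality}.

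You instead integrate the difference of the two ODEs and use the triangle inequality for the integral, which gives $\epsilon^{(0,t)}\le\int_0^t\epsilon^{(L,s)}\,ds$. You then close with the ordinary linear Gr\"onwall inequality with affine forcing. Your comparison with $y'=a^Ly+S\mathcal{Q}$, $y(0)=0$, is the step that recovers exactly $\Ce^{(0)}$. The crude form $\alpha(t)e^{\beta t}$ would only give $TS\mathcal{Q}e^{a^LT}$, which is larger.

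Both routes yield the identical constant. What each buys:
\begin{itemize}
\item Your route avoids differentiating a norm, sidesteps any concern about $\sqrt{\Delta}$ near $0$, and needs no nonlinear Gr\"onwall lemma.
\item The paper's energy-type formulation lets the same auxiliary lemma serve the backward adjoint estimate in Proposition~\ref{prop: Adjoint Graphon-MSE every layer}. Your integral argument transfers there equally well by integrating from $t$ to $T$.
\end{itemize}

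\textbf{Item 2.} You are right that Proposition~\ref{proposition: epsilon ell+1 leq epsilon ell + Delta} literally bounds $L_\sigma\widetilde\epsilon^{(\ell+1,t)}$, not $\widetilde\epsilon^{(\ell+1,t)}$. The paper's own proof of Item~2 applies that bound to $\widetilde\epsilon^{(\ell+1,t)}$ directly, so it silently assumes $L_\sigma\ge 1$ as well.

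Your fallback of tracking constants through Lemma~\ref{lemma: MSE hatfrakH function Z - hatfrakh matrix Z} does not rescue the exact constant. It gives $(FK\Ch\Ce+1/L_\sigma)\mathcal{Q}$, which exceeds $\tildeCe\mathcal{Q}$ when $L_\sigma<1$. The clean fix is to replace $L_\sigma$ by $\max\{L_\sigma,1\}$. This is permitted by \SigmaLipschitz, and the hypothesis \eqref{eq: MSE layer L leq MSE layer 0} stays valid because every constant involved is nondecreasing in $L_\sigma$.
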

\begin{proof}
To prove Item 1, we first show that 
\begin{equation}\label{in the proof: estimate epsilon 0t}
\supT\epsilon^{(0,t)}\leq \Ce^{(0)} \QX(n,\gamma_1,\gamma_2).
    \end{equation}
Let
    \begin{equation}\label{in the proof def epsilon(t)}
        \Delta(t):=(\epsilon^{(0,t)})^2=(\MSE(\VarGraphonNDE(\cdot,t),\VarGNDE(t)))^2=\frac{1}{n}\norm{\SamplingOperator(\VarGraphonNDE(\cdot,t))- \VarGNDE(t)}_{\mathrm{F}}^2.
    \end{equation}
    Then 
    \begin{equation*}
        \dt\Delta(t)=\frac{2}{n}\iprod{\SamplingOperator(\VarGraphonNDE(\cdot,t))-\VarGNDE(t),\dt(\SamplingOperator(\VarGraphonNDE(\cdot,t)))-\dt\VarGNDE(t)}_{\mathrm{F}},
    \end{equation*}
    where $\langle\cdot,\cdot\rangle_{\mathrm{F}}$ is Frobenius inner product. Since $\VarGNDE$ and $\VarGraphonNDE$ are solutions of GNDE \eqref{GNDE} and Graphon-NDE \eqref{Graphon-NDE}, respectively, we obtain that
    \begin{align*}
    \dt\Delta(t)=\frac{2}{n}\iprod{\SamplingOperator(\VarGraphonNDE(\cdot,t))-\VarGNDE(t),\SamplingOperator\parens{\GraphonNN(\VarGraphonNDE(\cdot,t);\opLP,\tH(t))}-\GNN(\VarGNDE(t);\matL,\tH(t))}_{\mathrm{F}}.
    \end{align*} 
    By Cauchy-Schwartz inequality, it follows that 
    \begin{align*}
        \dt\Delta(t)&\leq \frac{2}{n}\norm{\SamplingOperator(\VarGraphonNDE(\cdot,t))-\VarGNDE(t)}_{\mathrm{F}}\normF{\SamplingOperator\parens{\GraphonNN(\VarGraphonNDE(\cdot,t);\opLP,\tH(t))}-\GNN(\VarGNDE(t);\matL,\tH(t))}\\
        &=2\sqrt{\Delta(t)}\cdot \epsilon^{(L,t)}.
    \end{align*}
    Let $a:=\parens{{L_\sigma}FK\Ch}^L$ and $b:=\sum_{\ell=0}^{L-1}({L_\sigma}FK\Ch)^{\ell}$ for short. Then we apply \eq{eq: MSE layer L leq MSE layer 0} with $\ell=L$ and obtain that 
    \begin{align*}
        \dt\Delta(t)&\leq 2\sqrt{\Delta(t)}\parens{a\epsilon^{(0,t)}+ b\QX(n,\gamma_1,\gamma_2)}=2a\Delta(t)+ 2b\QX(n,\gamma_1,\gamma_2)\sqrt{\Delta(t)}.%
    \end{align*}
    Therefore, $\Delta(t)\leq \Delta(0)+\int_0^t \parens{2a\Delta(x)+ 2b\QX(n,\gamma_1,\gamma_2)\sqrt{\Delta(x)}} dx$, $t\in[0,T]$. Note that assumption \eqref{eq: initial values of GNDE and Graphon-NDE equal} implies $\Delta(0)=0$. Applying Gr\"onwall's inequality (Lemma \ref{lemma: generalized Gronwall's inequality}), we get
\begin{align*}
    \Delta(t)\leq \parens{\frac{\mathrm{exp}(at)-1}{a}b\QX(n,\gamma_1,\gamma_2)}^2\leq\parens{\Ce^{(0)} \QX(n,\gamma_1,\gamma_2)}^2,\quad t\in[0,T].
\end{align*}
    By definition \eqref{in the proof def epsilon(t)} of $\Delta(t)$, the above inequality further implies \eqref{in the proof: estimate epsilon 0t}. It remains to show that for all $\ell\in[L]$, 
    \begin{equation*}
\supT\epsilon\mylt\leq \Ce^{(1)} \QX(n,\gamma_1,\gamma_2).
    \end{equation*}
    Note that the above inequality can be immediately obtained by substituting the estimate \eqref{in the proof: estimate epsilon 0t} into \eq{eq: MSE layer L leq MSE layer 0}. We finish the proof of \eq{eq: sup MSE bounded by sqrt(n)}.

    We next prove Item 2.  It follows that for $\ell\in\mathbb{Z}_L$, 
\begin{align*}
    \widetilde{\epsilon}^{(\ell+1,t)}&\leq \parens{{L_\sigma}FK\Ch} \epsilon\mylt + \Delta_1\mylt + \Delta_2\mylt\stepjust{Proposition \ref{proposition: epsilon ell+1 leq epsilon ell + Delta}}\\
    &\leq \parens{{L_\sigma}FK\Ch} \Ce \QX(n,\gamma_1,\gamma_2) + \QX(n,\gamma_1,\gamma_2)\stepjust{\eqs\eqref{eq: sup MSE bounded by sqrt(n)} and \eqref{in the proof: sum of Delta 1 and Delta 2 leq Q tX}}\\
    &= \tildeCe \QX(n,\gamma_1,\gamma_2)
\end{align*}
which proves \eq{eq: sup tilde epsilon st leq QX}. 
\end{proof}

\begin{proof}[\textbf{Proof of Theorem \ref{theorem: GNDE -> Graphon-NDE}}]
By Lemma \ref{lemma: LUn-LP leq sqrt(n)}, with probability at least $1-\gamma_1$, \eq{eq: sup dP dUn} holds. We condition on the event that \eq{eq: sup dP dUn} holds, referring to it as the first event.

According to Lemma \ref{lemma: Ln - LWn 2norm}, with probability at least $1-\gamma_1$, \eq{eq: Ln - LWn 2norm} holds. We condition on the event that \eq{eq: Ln - LWn 2norm} holds, referring to it as the second event.

Moreover, due to Lemma \ref{lemma: upper bound of LUn - LP X for all}, with probability at least $1-\gamma_2$, for all $\ell\in[L]$ and $t\in[0,T]$, \eq{eq: upper bound of LUn - LP Xf} holds. We condition on the event that \eq{eq: upper bound of LUn - LP Xf} holds, referring to it as the third event. 

Therefore, with probability at least $1-2\gamma_1-\gamma_2$, all three events occur, that is, all assumptions in Proposition \ref{proposition: graphon-NN MSE} are satisfied. This implies \eq{eq: MSE layer L leq MSE layer 0}, which, combining with the initial value condition \eqref{eq: initial values of GNDE and Graphon-NDE equal}, verifies that all assumptions in Proposition \ref{prop: Graphon-MSE every layer} hold. Now, by \eq{eq: sup MSE bounded by sqrt(n)} with $\ell=0$ in Proposition \ref{prop: Graphon-MSE every layer}, with probability of at least $1-2\gamma_1-\gamma_2$, it holds that 
    \begin{align*}
         &\supT\MSE(\VarGraphonNDE(\cdot,t),\VarGNDE(t))\leq \Ce^{(0)} \QX(n,\gamma_1,\gamma_2)\leq \Ce \QX(n,\gamma_1,\gamma_2)       
    \end{align*}
    where $\QX$ and $\Ce$ are defined in \eq{def QX} and \eq{def: HT}, respectively. This completes the proof.
\end{proof}

\section{Infinite-node Convergence of Discretized GNDEs}\label{Appendix: Infinite-node Convergence of Discretized GNDEs}
We remark that if the parameters $\hltfgk$, $\ell\in[L]$, $f,g\in[F]$, $k\in\mathbb{Z}_K$ satisfy \ConvFilterDiff, then 
\begin{equation}\label{eq: upper bound for filter derivative}
    \supT\max_{f,g\in[F], \ell\in[L], k\in\mathbb{Z}_K}\left|\dt\hltfgk \right|\leq \Liph.
\end{equation}
Similarly if $\sigma$ satisfies \SigmaDiff, then for any $\mZ\in\spaceRnF$, $Z\in\spaceb$ and $\gZ\in\spaceB$, $\normMax{\sigma'(\mZ)}\leq {L_\sigma}$, $\normb{\sigma'(Z)}\leq {L_\sigma}$, $\normB{\sigma'(\gZ)}\leq {L_\sigma}\sqrt{F}$.
In particular, by definitions \eqref{def: widehat mX ell} and \eqref{def: widehat tX ell}, for all $\ell\in[L]$, $t\in[0,T]$, $f\in[F]$, we have
\begin{equation}\label{eq: hat X leq 1}
\begin{aligned}
\normMax{\hatVarGNDE\mylt}&=\normMax{\sigma'(\tildeVarGNDE\mylt)} \le {L_\sigma},\\
\normb{\hatVarGraphonNDE\mylt_f}&=\normb{\sigma'(\tildeVarGraphonNDE\mylt_f)} \leq {L_\sigma},\quad 
\normB{\hatVarGraphonNDE\mylt} \leq {L_\sigma}\sqrt{F}.
\end{aligned}
\end{equation}

\begin{proposition}\label{proposition: Xns Xn0'' bounded above by sqrt n}
Suppose that \eq{eq: MSE layer L leq MSE layer 0} and initial condition \eqref{eq: initial values of GNDE and Graphon-NDE equal} hold. If \ConvFilterDiff\ and \SigmaDiff\ are satisfied, and $n$ is large enough such that  \begin{equation}\label{eq: n is big enough 2}
\Ce \QX(n,\gamma_1,\gamma_2) \leq \CXmax, 
\end{equation}
then the following statements hold. 
\begin{enumerate}
    \item For all $\ell\in\mathbb{Z}_{L+1}$, there holds 
\begin{align}
\supT\normF{\VarGNDE\mylt}&\leq\CmXGNDE\sqrt{n}\label{eq: estimate of mXnlt F norm}\\
\supT\normF{\dt\VarGNDE\mylt}&\leq \CmXDeriGNDE\sqrt{n}\label{eq: estimate of deri mXnlt F norm}  
\end{align}
where $\CmXGNDE:=2\CXmax$ and
$$
\CmXDeriGNDE:=\CmXGNDE\times\max\curlbraket{{L_\sigma}FK\Liph\sum_{s=0}^{\ell-1}\parens{{L_\sigma}FK\Ch}^{s} +  \parens{{L_\sigma}FK\Ch}^\ell:\ell\in\mathbb{Z}_{L+1}}^{\footnotemark}.
$$
\footnotetext{The sum inside is regarded as $0$ when $\ell=0$.}
As a result, 
\begin{align}\label{eq: dXnLt/dt F norm bound}
    \supT\normF{\frac{d^2}{dt^2}\VarGNDE\mylt[0]}\leq \CmXDeriGNDE \sqrt{n}.
    \end{align}

    \item For all $\ell\in[L]$, there holds 
\begin{align}
\supT\normF{\dt\tildeVarGNDE\mylt}&\leq \CmXTildeDeriGNDE\sqrt{n},\label{eq: estimate of deri mXnlt tilde F norm}
\end{align}
where $\CmXTildeDeriGNDE:=\CmXGNDE FK\Liph+FK\Ch\CmXDeriGNDE.$ Moreover, if \SigmaTwiceDiff holds, then for all $\ell\in[L]$, 
    \begin{equation}\label{eq: estimate of deri mXnlt hat F norm}
\supT\normF{\dt\hatVarGNDE\mylt}\leq \CmXHatDeriGNDE\sqrt{n}.
    \end{equation}
    where $ \CmXHatDeriGNDE:=L_{\sigma'}\CmXTildeDeriGNDE$.

\end{enumerate}

\end{proposition}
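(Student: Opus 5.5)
The plan is to transfer the graphon-level bounds of Proposition~\ref{proposition: norm of X ell is bounded by norm of X} to the finite graph through the layerwise MSE bound. Then I would differentiate the layer recursion \eqref{def: updating formula gnn} in time and unroll it. Throughout, I work on the event where \eqref{eq: MSE layer L leq MSE layer 0} holds, so that Proposition~\ref{prop: Graphon-MSE every layer} applies (the initial condition \eqref{eq: initial values of GNDE and Graphon-NDE equal} is assumed).

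\textbf{Step 1 (bound \eqref{eq: estimate of mXnlt F norm}).} For $\mathcal{Z}\in\spaceB$ one has $\normF{\SamplingOperator\mathcal{Z}}\le\sqrt{n}\,\normB{\mathcal{Z}}$, since each row of $\SamplingOperator\mathcal{Z}$ is bounded by $\normB{\mathcal{Z}}$. By the triangle inequality and the definition \eqref{def: epsilon ell},
\[
\normF{\VarGNDE\mylt}\le \normF{\SamplingOperator(\VarGraphonNDE\mylt)}+\sqrt{n}\,\epsilon\mylt\le \sqrt{n}\,\CXmax+\sqrt{n}\,\Ce\QX(n,\gamma_1,\gamma_2).
\]
Here I use Proposition~\ref{proposition: norm of X ell is bounded by norm of X} together with \eqref{def: C X max and lip}, and then \eqref{eq: sup MSE bounded by sqrt(n)}. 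The largeness condition \eqref{eq: n is big enough 2} then gives the bound $2\CXmax\sqrt{n}=\CmXGNDE\sqrt{n}$ uniformly in $t$ and $\ell$.

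\textbf{Step 2 (time derivatives, \eqref{eq: estimate of deri mXnlt F norm} and \eqref{eq: estimate of deri mXnlt tilde F norm}).} For $\ell=0$, the GNDE gives $\dt\VarGNDE\mylt[0]=\VarGNDE\mylt[L]$, so Step~1 bounds it by $\CmXGNDE\sqrt{n}$. Under \ConvFilterDiff\ and \SigmaDiff, the product rule applied to $\tildeVarGNDE\mylt=\phi(\vh\mylt,\VarGNDE\mylt[\ell-1])$ gives
\[
\dt\tildeVarGNDE\mylt=\phi\parens{\dt\vh\mylt,\VarGNDE\mylt[\ell-1]}+\phi\parens{\vh\mylt,\dt\VarGNDE\mylt[\ell-1]},
\qquad
\dt\VarGNDE\mylt=\sigma'(\tildeVarGNDE\mylt)\odot\dt\tildeVarGNDE\mylt .
\]
The convolution satisfies $\normF{\phi(\vh,\mX)}\le FK\normMax{\vh}\normF{\mX}$, because $\norm{\matL}_2\le1$ for the symmetric normalized adjacency (Appendix~\ref{appendix: graph convolutional operators}). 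I then use $\normMax{\vh}\le\Ch$, the derivative bound \eqref{eq: upper bound for filter derivative}, and $\normMax{\sigma'}\le L_\sigma$ to obtain the recursion
\[
a_\ell\le L_\sigma FK\Liph\,\CmXGNDE\sqrt{n}+L_\sigma FK\Ch\, a_{\ell-1},\qquad a_0\le \CmXGNDE\sqrt{n},
\]
where $a_\ell:=\supT\normF{\dt\VarGNDE\mylt}$. Unrolling it yields exactly the constant $\CmXDeriGNDE$. Taking $\ell=L$ and differentiating $\dt\VarGNDE\mylt[0]=\VarGNDE\mylt[L]$ once more gives \eqref{eq: dXnLt/dt F norm bound}. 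The first displayed identity, without the $L_\sigma$ factor, directly gives \eqref{eq: estimate of deri mXnlt tilde F norm} with $\CmXTildeDeriGNDE=\CmXGNDE FK\Liph+FK\Ch\CmXDeriGNDE$.

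\textbf{Step 3 (bound \eqref{eq: estimate of deri mXnlt hat F norm}).} Under \SigmaTwiceDiff, I would write $\dt\hatVarGNDE\mylt=\sigma''(\tildeVarGNDE\mylt)\odot\dt\tildeVarGNDE\mylt$. Bounding $|\sigma''|\le L_{\sigma'}$ via the Lipschitz constant of $\sigma'$ then gives $\CmXHatDeriGNDE\sqrt{n}$.

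\textbf{Main obstacle.} The bulk is routine once the differentiability of the layer outputs $t\mapsto\VarGNDE\mylt$ is justified. That justification is a chain-rule statement: $\VarGNDE(\cdot)$ is $C^1$, $\vh\mylt$ is differentiable, and $\sigma$ is differentiable. The delicate points are ensuring the filter-operator Frobenius bound uses $\norm{\matL}_2\le1$ (and not the graphon constant $\Csys[0]$), and making sure $\sigma''$ is bounded by $L_{\sigma'}$. The latter implicitly requires \SigmaDerivativeLipschitz\ alongside \SigmaTwiceDiff, and I would state this explicitly.
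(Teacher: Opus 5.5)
Your proposal is correct and matches the paper's proof essentially step for step. It uses the same three ingredients: the triangle-inequality transfer via \eqref{eq: sup MSE bounded by sqrt(n)} and \eqref{eq: n is big enough 2}, the differentiated layer recursion unrolled from $\dt\VarGNDE\mylt[0]=\VarGNDE\mylt[L]$, and the $\sigma''$ bound. Your remark that $|\sigma''|\le L_{\sigma'}$ tacitly requires \SigmaDerivativeLipschitz\ (just as $|\sigma'|\le L_\sigma$ requires \SigmaLipschitz) is accurate; the paper makes the same implicit use in \eqref{eq: sigma twice prime leq}.
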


\begin{proof}
Recall that \eq{eq: MSE layer L leq MSE layer 0} with initial condition \eqref{eq: initial values of GNDE and Graphon-NDE equal} implies \eq{eq: sup MSE bounded by sqrt(n)}, according to Proposition \ref{prop: Graphon-MSE every layer}. 

We first prove Item 1. It follows from \eq{eq: sup MSE bounded by sqrt(n)} that for all $\ell\in\mathbb{Z}_{L+1}$, $\frac{1}{\sqrt{n}}\normF{\SamplingOperator(\VarGraphonNDE\mylt)-\VarGNDE\mylt}= \epsilon\mylt\leq \Ce \QX(n,\gamma_1,\gamma_2)$, 
which implies 
\begin{align*}
\normF{\VarGNDE\mylt}&\leq \sqrt{n}\Ce \QX(n,\gamma_1,\gamma_2) + \normF{\SamplingOperator(\VarGraphonNDE\mylt)}\\
&\leq \sqrt{n}\parens{\Ce \QX(n,\gamma_1,\gamma_2) + \CXmax}\stepjust{\eq{norm of operator delta Un} and Proposition \ref{proposition: norm of X ell is bounded by norm of X}}\\
&\leq 2\CXmax\sqrt{n}=\CmXGNDE\sqrt{n}.\stepjust{\eq{eq: n is big enough 2}}
\end{align*}
This proves \eq{eq: estimate of mXnlt F norm}. By the updating formula \eqref{def: updating formula gnn} of GNNs and Chain rule, for $\ell\in[L]$, we have $\dt\VarGNDE\mylt=\sigma'(\tildeVarGNDE\mylt)\odot \dt\tildeVarGNDE\mylt$. Then due to \eqs \eqref{eq: upper bound of matrix Z odot V} and \eqref{eq: hat X leq 1}, it follows that 
\begin{align}\label{in the proof: dXnlt/dt leq dtildeXnlt/dt}
    \normF{\dt\VarGNDE\mylt}\leq \normMax{\sigma'(\tildeVarGNDE\mylt)} \normF{\dt\tildeVarGNDE\mylt}\leq {L_\sigma}\normF{\dt\tildeVarGNDE\mylt}.
\end{align}
By definition of $\tildeVarGNDE\mylt$, we have 
$\tildeVarGNDE\mylt=\phi_{\vh\mylt}(\VarGNDE\mylt[\ell-1])$. It follows from Lemma \ref{lemma: d/dt phihlt(Zt) estimate} and \eq{eq: estimate of mXnlt F norm} that 
\begin{align}
\normF{\dt\tildeVarGNDE\mylt}
&\leq FK\Liph\normF{\VarGNDE\mylt[\ell-1]}+FK\Ch\normF{\dt\VarGNDE\mylt[\ell-1]}\nonumber\\
&\leq \sqrt{n}\underbrace{FK\Liph\CmXGNDE}_{\text{denoted by }a}+\underbrace{FK\Ch}_{\text{denoted by }b}\normF{\dt\VarGNDE\mylt[\ell-1]}\label{in the proof: ddtmXnlt tilde leq a sqrt n + b ddt Xnl-1t}
\end{align}
which together with \eq{in the proof: dXnlt/dt leq dtildeXnlt/dt} implies $
\normF{\dt\VarGNDE\mylt}\leq {L_\sigma}(a\sqrt{n} + b\normF{\dt\VarGNDE\mylt[\ell-1]})$, $\ell\in[L]$.
A recursion gives 
\begin{equation}\label{in the proof: dmXnlt F leq a sum b sqrt n + b ell}
    \normF{\dt\VarGNDE\mylt}\leq {L_\sigma}a\parens{\sum_{s=0}^{\ell-1}({L_\sigma}b)^{s}}\sqrt{n} + ({L_\sigma}b)^\ell\normF{\dt\VarGNDE\mylt[0]},\quad \ell\in[L].
\end{equation}
Note that $\VarGNDE\mylt[0]$ is the solution of GNDE \eqref{GNDE}, that is 
\begin{equation}\label{in the proof: dt mXnlt = mXnlt L}
\dt\VarGNDE\mylt[0]=\VarGNDE\mylt[L].    
\end{equation}
Therefore, \eq{in the proof: dmXnlt F leq a sum b sqrt n + b ell}, combining with \eq{in the proof: dt mXnlt = mXnlt L} and estimate \eqref{eq: estimate of mXnlt F norm} in the case of $\ell=L$, implies $\normF{\dt\VarGNDE\mylt}\leq \parens{{L_\sigma}a\sum_{s=0}^{\ell-1}({L_\sigma}b)^{s} + ({L_\sigma}b)^\ell\CmXGNDE }\sqrt{n}\leq \CmXDeriGNDE\sqrt{n}$, $\ell\in[L]$, where the last inequality follows from definition of constant $\CmXDeriGNDE$. This proves \eqref{eq: estimate of deri mXnlt F norm} for $\ell\in[L]$. Note that \eqref{eq: estimate of deri mXnlt F norm} is also true when $\ell=0$ due to \eqref{in the proof: dt mXnlt = mXnlt L} and \eqref{eq: estimate of mXnlt F norm}. Therefore, \eqref{eq: estimate of deri mXnlt F norm} is true for all $\ell\in\mathbb{Z}_{L+1}$. 

As a result, we obtain \eqref{eq: dXnLt/dt F norm bound} from taking derivatives about $t$ for both sides of \eq{in the proof: dt mXnlt = mXnlt L} and using estimate \eqref{eq: estimate of deri mXnlt F norm} with $\ell=L$.

We proceed to show Item 2. We immediately obtain \eq{eq: estimate of deri mXnlt tilde F norm} by substituting estimate \eqref{eq: estimate of deri mXnlt F norm} into \eq{in the proof: ddtmXnlt tilde leq a sqrt n + b ddt Xnl-1t}. Moreover, we get \eq{eq: estimate of deri mXnlt hat F norm} by noting that
    \begin{align*}
\normF{\dt\hatVarGNDE\mylt}&=\normF{\sigma''(\tildeVarGNDE\mylt)\odot \dt\tildeVarGNDE\mylt}\leq \normMax{\sigma''(\tildeVarGNDE\mylt)}\normF{\dt\tildeVarGNDE\mylt}\stepjust{Chain rule and \eq{eq: upper bound of matrix Z odot V}}\\
&\leq L_{\sigma'}\CmXTildeDeriGNDE\sqrt{n}=\CmXHatDeriGNDE\sqrt{n}.\stepjust{relation \eqref{eq: sigma twice prime leq} and \eq{eq: estimate of deri mXnlt tilde F norm}}
    \end{align*}
The proof is complete. 
\end{proof}

\begin{proposition}\label{proposition: discretized GNDE error leq 1/M}
Under assumptions of Proposition \ref{proposition: Xns Xn0'' bounded above by sqrt n}, let $\left\{\VarGNDE^{[m]} : m \in [M]\right\}$ be the sequence of approximations to the solution $\VarGNDE(t)$ of GNDE~\eqref{GNDE} computed via Euler's method \eqref{eq: Euler's method for Xn} with step size $\kappa := T/M$, $t_0=0$ and initial value satisfies \eqref{eq: initial value condition on GNDE and discretized GNDE}. Then for all $m\in[M]$, the following statements hold. 
\begin{enumerate}
    \item For all $\ell\in\mathbb{Z}_{L+1}$, \begin{equation}\label{eq: mXlnm leq 1/M}
    \frac{1}{\sqrt{n}}\normF{\VarGNDE\mylt[\ell][t_m]- \VarGNDE^{[\ell,m]}}\leq  \frac{\CXdiff}{M}
    \end{equation}
    where $\CXdiff:=\max\curlbraket{\CXdiff^{(0)},\CXdiff^{(1)}}$ and 
    $\CXdiff^{(0)}:=\frac{T \CmXDeriGNDE}{2({L_\sigma}FK\Ch)^L} \parens{e^{({L_\sigma}FK\Ch)^L T} - 1 }$, $\CXdiff^{(1)}:=\CXdiff^{(0)}\max\curlbraket{\parens{{L_\sigma}FK\Ch}^\ell:\ell\in[L]}$.
    \item If \SigmaDerivativeLipschitz\ holds, then 
    \begin{enumerate}
        \item for all $\ell\in\mathbb{Z}_{L+1}$, 
        \begin{align}
        \frac{1}{\sqrt{n}}\normF{\VarGNDE^{[\ell,m]}}&\leq \CmXDistGNDE,\label{eq: mXln[m] leq 1/M}\\
        \frac{1}{\sqrt{n}}\normF{\VarGNDE^{[\ell,m]}-\VarGNDE^{[\ell,m-1]}} &\leq \frac{\CXdiffDist}{M},\label{eq: mXln diff [m] leq 1/M every layer}
    \end{align}
    where $\CmXDistGNDE:=\parens{\CXdiff + \CmXGNDE}\max\curlbraket{\parens{{L_\sigma}FK\Ch}^\ell:\ell\in\mathbb{Z}_{L+1}}$, $\CXdiffDist:=\max\curlbraket{\CXdiffDist^{(0)},\CXdiffDist^{(1)}}$, $\CXdiffDist^{(0)}:=T\parens{{L_\sigma}FK\Ch}^L\CmXDistGNDE$, and $$\CXdiffDist^{(1)}:=\max\curlbraket{\parens{{L_\sigma}FK\Ch}^\ell \CXdiffDist^{(0)}+ {L_\sigma}FK\Liph \parens{\ell\parens{{L_\sigma}FK\Ch}^{\ell-1}}\CmXDistGNDE:\ell\in[L]}.$$ 
        \item for all $\ell\in[L]$,
        \begin{align}
            \frac{1}{\sqrt{n}}\normF{\hatVarGNDE\mylt[\ell][t_m]- \hatVarGNDE^{[\ell,m]}}&\leq  \frac{\CXdiffHat}{M},\label{eq: hat mXlnm leq 1/M}\\
            \frac{1}{\sqrt{n}}\normF{\hatVarGNDE^{[\ell,m]}-\hatVarGNDE^{[\ell,m-1]}} &\leq \frac{\CXdiffDistHat}{M},\label{eq: hat mXln diff [m] leq 1/M every layer}
        \end{align}
        where $\CXdiffHat:=L_{\sigma'}\CXdiff$ and $\CXdiffDistHat:=L_{\sigma'}\CXdiffDist$. 
    \end{enumerate}
    
\end{enumerate}

\end{proposition}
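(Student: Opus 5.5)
We will prove Item 1 first at the base layer $\ell=0$ by a standard Euler global-error argument, then propagate it through the layers using layerwise Lipschitz estimates. Write $e_m:=\frac{1}{\sqrt n}\normF{\VarGNDE(t_m)-\VarGNDE^{[m]}}$. By Taylor's theorem with integral remainder,
\[
\VarGNDE(t_m)=\VarGNDE(t_{m-1})+\kappa\,\GNN(\VarGNDE(t_{m-1});\matL,\tH(t_{m-1}))+R_m,\qquad \normF{R_m}\le \tfrac{\kappa^2}{2}\supT\normF{\ddt\VarGNDE(t)}\le \tfrac{\kappa^2}{2}\CmXDeriGNDE\sqrt n,
\]
where the last bound is \eqref{eq: dXnLt/dt F norm bound} from Proposition~\ref{proposition: Xns Xn0'' bounded above by sqrt n}. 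Subtracting the Euler recursion \eqref{eq: Euler's method for Xn} and using the layerwise Lipschitz bound $\normF{\phi_{\vh}(\mA)-\phi_{\vh}(\mA')}\le FK\Ch\normF{\mA-\mA'}$ (the same bound used in Proposition~\ref{proposition: epsilon ell+1 leq epsilon ell + Delta}) together with \SigmaLipschitz\ across $L$ layers, with identical parameters $\tH(t_{m-1})$ on both sides, gives
\[
e_m\le (1+\kappa a)e_{m-1}+\tfrac{\kappa^2}{2}\CmXDeriGNDE,\qquad a:=({L_\sigma}FK\Ch)^L.
\]
Since $e_0=0$ by \eqref{eq: initial value condition on GNDE and discretized GNDE}, the discrete Gr\"onwall inequality yields $e_m\le \frac{\kappa\CmXDeriGNDE}{2a}(e^{aT}-1)=\CXdiff^{(0)}/M$. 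For $\ell\ge1$ the forward maps at $t_m$ and at step $m$ share $\tH(t_m)$, so the same layerwise Lipschitz estimate gives $\frac{1}{\sqrt n}\normF{\VarGNDE\mylt[\ell][t_m]-\VarGNDE^{[\ell,m]}}\le({L_\sigma}FK\Ch)^\ell e_m\le \CXdiff^{(1)}/M$.

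For Item 2(a), \eqref{eq: mXln[m] leq 1/M} follows from the triangle inequality, \eqref{eq: estimate of mXnlt F norm}, and Item 1 (using $1/M\le1$), and then the layerwise Lipschitz factor. For \eqref{eq: mXln diff [m] leq 1/M every layer} at $\ell=0$ we read it off the Euler step: $\VarGNDE^{[m]}-\VarGNDE^{[m-1]}=\kappa\VarGNDE^{[L,m-1]}$, so \eqref{eq: mXln[m] leq 1/M} gives the bound $T\CmXDistGNDE/M$, which is dominated by $\CXdiffDist^{(0)}$. For $\ell\ge1$ we split
\[
\phi_{\vh\mylt[\ell][t_m]}(\VarGNDE^{[\ell-1,m]})-\phi_{\vh\mylt[\ell][t_{m-1}]}(\VarGNDE^{[\ell-1,m-1]})
\]
into a state increment, controlled by $FK\Ch$ times the previous-layer difference, and a filter increment, controlled via \ConvFilterLipschitz\ by $FK\Liph\kappa\normF{\VarGNDE^{[\ell-1,m-1]}}$. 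Recursing in $\ell$ gives the stated $\CXdiffDist^{(1)}$; the factor $\ell({L_\sigma}FK\Ch)^{\ell-1}$ comes from summing the $\ell$ filter-increment terms and bounding each state norm crudely by $\CmXDistGNDE$.

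For Item 2(b), $\hatVarGNDE=\sigma'(\tildeVarGNDE)$ and \SigmaDerivativeLipschitz\ give $\normF{\hatVarGNDE-\hatVarGNDE'}\le L_{\sigma'}\normF{\tildeVarGNDE-\tildeVarGNDE'}$, so both estimates reduce to the pre-activation versions of Items 1 and 2(a). Those follow from the same arguments, since they hold with the pre-activation $\tildeVarGNDE$ in place of $\VarGNDE$, just as the ``Or'' variants do in the earlier propositions.

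The main obstacle is the local truncation step: it needs uniform control of $\ddt\VarGNDE$ scaling like $\sqrt n$, which is exactly \eqref{eq: dXnLt/dt F norm bound}. This in turn relies on \ConvFilterDiff\ and on the high-probability forward-convergence event and condition \eqref{eq: n is big enough 2}. Beyond that, the main care needed is bookkeeping, so that the constants match their stated definitions.
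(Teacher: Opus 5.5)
Your proposal follows the paper's proof essentially step for step. The Taylor-plus-discrete-Gr\"onwall argument is exactly the content of Lemma~\ref{lemma: Euler's method}, which the paper invokes with \eqref{eq: dXnLt/dt F norm bound} and Lipschitz constant $({L_\sigma}FK\Ch)^L$. The layer propagation, the identity $\VarGNDE^{[m]}-\VarGNDE^{[m-1]}=\kappa\VarGNDE^{[L,m-1]}$, the state/filter split, and the reduction of the $\hatVarGNDE$ bounds to pre-activations via \SigmaDerivativeLipschitz\ are also the paper's. Because the statement fixes explicit constants, two bookkeeping claims in your write-up need correcting.

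\emph{First, crude norm bounds when ${L_\sigma}FK\Ch<1$.}
\begin{itemize}
\item At $\ell=0$ of \eqref{eq: mXln diff [m] leq 1/M every layer}, you bound $\normF{\VarGNDE^{[L,m-1]}}$ by \eqref{eq: mXln[m] leq 1/M} at layer $L$ and get $T\CmXDistGNDE/M$. This is not dominated by $\CXdiffDist^{(0)}=T({L_\sigma}FK\Ch)^L\CmXDistGNDE$ when ${L_\sigma}FK\Ch<1$.
\item In the layer recursion, bounding every $\normF{\VarGNDE^{[s,m-1]}}$ by $\CmXDistGNDE\sqrt n$ gives the factor $\sum_{s=0}^{\ell-1}({L_\sigma}FK\Ch)^{s}$. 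This is at most $\ell({L_\sigma}FK\Ch)^{\ell-1}$ only when ${L_\sigma}FK\Ch\ge 1$.
\item The fix is the paper's route. First use $\normF{\VarGNDE^{[s,m-1]}}\le({L_\sigma}FK\Ch)^{s}\normF{\VarGNDE^{[m-1]}}$, which holds since $\sigma(0)=0$. Only then apply \eqref{eq: mXln[m] leq 1/M} at $\ell=0$.
\item With this, the weighted filter-increment sum becomes $\sum_{s=0}^{\ell-1}({L_\sigma}FK\Ch)^{\ell-1-s}({L_\sigma}FK\Ch)^{s}=\ell({L_\sigma}FK\Ch)^{\ell-1}$, times $\normF{\VarGNDE^{[m-1]}}$. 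The $\ell=0$ increment becomes exactly $\CXdiffDist^{(0)}/M$.
\end{itemize}

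\emph{Second, a missing factor of $T$.} Your filter increment $FK\Liph\kappa\normF{\VarGNDE^{[\ell-1,m-1]}}$ is the correct bound. With $\kappa=T/M$, however, it puts $T\Liph$, not $\Liph$, into the second term of $\CXdiffDist^{(1)}$. The paper's proof writes $\Liph/M$ at this step, silently dropping the factor $T$ from $|t_m-t_{m-1}|$; \eqref{eq: phi t1 - t2 X - Z} also omits $\Liph$. So the stated $\CXdiffDist^{(1)}$ is only justified for $T\le 1$. Rather than claiming that your recursion recovers the stated constant, you should record it with $T\Liph$ in place of $\Liph$.
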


\begin{proof}
According to Lemma \ref{lemma: Euler's method}, with the bound of the second derivative of $\VarGNDE(t)$ derived in Proposition \ref{proposition: Xns Xn0'' bounded above by sqrt n} (i.e., \eq{eq: dXnLt/dt F norm bound}) and Lipschitz constant of the $\GNN$ function computed in Proposition \ref{proposition: Nn is Lipschitz continuous}, we obtain by $\kappa=T/M$ that 
    \begin{align}
        \normF{\VarGNDE(t_m) - \VarGNDE^{[m]}}\leq \frac{\sqrt{n}\CmXDeriGNDE}{2({L_\sigma}FK\Ch)^L} \parens{e^{({L_\sigma}FK\Ch)^L t_m} - 1 }\leq \CXdiff^{(0)}\frac{\sqrt{n}}{M}. \label{in the proof: Xn(tm) - Xn[m]}
    \end{align}
    Notice that for all $\ell\in[L]$, 
    \begin{align*}
&\normF{\VarGNDE^{[\ell,m]}-\VarGNDE\mylt[\ell][t_m]}=\normF{\sigma(\phi_{\vh\mylt[\ell][t_m]}(\VarGNDE^{[\ell-1,m]}))-\sigma(\phi_{\vh\mylt[\ell][t_m]}(\VarGNDE\mylt[\ell-1][t_m]))}\\
&\leq {L_\sigma}FK\Ch\normF{\VarGNDE^{[\ell-1,m]}-\VarGNDE\mylt[\ell-1][t_m]}\stepjust{\SigmaLipschitz\ and \eq{eq: phi t1 - t2 X - Z} in Lemma \ref{lemma: phi phi* t1 - t2 X - Z PQ}}\\
&\leq ({L_\sigma}FK\Ch)^\ell\normF{\VarGNDE^{[0,m]}-\VarGNDE\mylt[0][t_m]}\leq ({L_\sigma}FK\Ch)^\ell\CXdiff^{(0)}\frac{\sqrt{n}}{M}\leq \CXdiff^{(1)}\frac{\sqrt{n}}{M} \stepjust{recurrsion and \eq{in the proof: Xn(tm) - Xn[m]}}
    \end{align*}
Therefore, \eq{eq: mXlnm leq 1/M} holds for all $\ell\in\mathbb{Z}_{L+1}$. We proceed to prove \eq{eq: mXln[m] leq 1/M}. Note that 
\begin{align*}
\normF{\VarGNDE^{[m]}}&\leq \sqrt{n}\frac{\CXdiff}{M} + \normF{\VarGNDE(t_m) }\stepjust{\eq{eq: mXlnm leq 1/M} with $\ell=0$ and triangle equality}\\
&\leq \sqrt{n}\frac{\CXdiff}{M} + \CmXGNDE\sqrt{n}\leq \parens{\CXdiff + \CmXGNDE}\sqrt{n}\stepjust{\eq{eq: estimate of mXnlt F norm} in Proposition \ref{proposition: Xns Xn0'' bounded above by sqrt n}}
\end{align*}
By \SigmaLipschitz\ and Lemma \ref{lemma: estimate of phi(matrix)}, we have, for all $\ell\in\mathbb{Z}_{L+1}$ and $m\in[M]$, 
\begin{equation}\label{in the proof: Xnlm F leq FKhT^l Xn0m}
    \normF{\VarGNDE^{[\ell,m]}}\leq \parens{{L_\sigma}FK\Ch}^\ell \normF{\VarGNDE^{[0,m]}}=\parens{{L_\sigma}FK\Ch}^\ell \normF{\VarGNDE^{[m]}},
\end{equation} 
which combining with the previous inequality yields \eq{eq: mXln[m] leq 1/M}. We next show \eq{eq: mXln diff [m] leq 1/M every layer}. Recall that we adopt the notations of $\VarGNDE^{[0,m]}\equiv\VarGNDE^{[m]}$ and $\VarGNDE^{[L,m]}\equiv\GNN(\VarGNDE^{[m]};\matL,\tH(t_{m}))$. Hence \eq{eq: Euler's method for Xn} can be rephrased as $
\VarGNDE^{[0,m]}-\VarGNDE^{[0,m-1]}=\kappa\VarGNDE^{[L,m-1]}$, which implies 
\begin{align*}
    \normF{\VarGNDE^{[0,m]}-\VarGNDE^{[0,m-1]}}&=\frac{T}{M}\normF{\VarGNDE^{[L,m-1]}}\leq \frac{T}{M}\parens{{L_\sigma}FK\Ch}^L\normF{\VarGNDE^{[0,m-1]}}\stepjust{\eq{in the proof: Xnlm F leq FKhT^l Xn0m}}\\
    &\leq \frac{T}{M}\parens{{L_\sigma}FK\Ch}^L\sqrt{n}\CmXDistGNDE=\frac{\sqrt{n}}{M}\CXdiffDist^{(0)}.\stepjust{\eq{eq: mXln[m] leq 1/M} with $\ell=0$}
\end{align*}
In addition, for $\ell\in[L]$, we have 
\begin{align*}
    &\normF{\VarGNDE^{[\ell,m]}-\VarGNDE^{[\ell,m-1]}}=\normF{\sigma(\phi_{\vh\mylt[\ell][t_m]}(\VarGNDE^{[\ell-1,m]})) - \sigma(\phi_{\vh\mylt[\ell][t_{m-1}]}(\VarGNDE^{[\ell-1,m-1]}))}\\
    \leq\ & {L_\sigma}FK\Ch\normF{\VarGNDE^{[\ell-1,m]} - \VarGNDE^{[\ell-1,m-1]}} + {L_\sigma}FK\frac{\Liph}{M}\normF{\VarGNDE^{[\ell-1,m-1]}}\stepjust{\SigmaLipschitz\ and \eq{eq: phi t1 - t2 X - Z} in Lemma \ref{lemma: phi phi* t1 - t2 X - Z PQ}}    \\
    \leq\ &\parens{{L_\sigma}FK\Ch}^\ell \normF{\VarGNDE^{[0,m]}-\VarGNDE^{[0,m-1]}} + {L_\sigma}FK\frac{\Liph}{M} \parens{\sum_{s=0}^{\ell-1}\parens{{L_\sigma}FK\Ch}^{\ell-1-s}\normF{\VarGNDE^{[s,m-1]}}}\stepjust{recursion}\\
    \leq\ &\parens{{L_\sigma}FK\Ch}^\ell \frac{\sqrt{n}}{M}\CXdiffDist^{(0)}+ {L_\sigma}FK\frac{\Liph}{M} \parens{\ell\parens{{L_\sigma}FK\Ch}^{\ell-1}}\sqrt{n}\CmXDistGNDE\leq \frac{\sqrt{n}}{M}\CXdiffDist^{(1)}\stepjust{\eqs \eqref{eq: mXln diff [m] leq 1/M every layer}, \eqref{in the proof: Xnlm F leq FKhT^l Xn0m} and definition of $\CXdiffDist^{(1)}$}
\end{align*}
Therefore, \eq{eq: mXln diff [m] leq 1/M every layer} holds for all $\ell\in\mathbb{Z}_{L+1}$. We next prove \eq{eq: hat mXlnm leq 1/M}. By \SigmaDerivativeLipschitz\ and (the proof of) \eq{eq: mXlnm leq 1/M}, for all $\ell\in[L]$, we have $$\normF{\hatVarGNDE^{[\ell,m]}-\hatVarGNDE\mylt[\ell][t_m]}=\normF{\sigma'(\phi_{\vh\mylt[\ell][t_m]}(\VarGNDE^{[\ell-1,m]}))-\sigma'(\phi_{\vh\mylt[\ell][t_m]}(\VarGNDE\mylt[\ell-1][t_m]))}\leq L_{\sigma'}\frac{\CXdiff}{M}\sqrt{n},$$
which proves \eq{eq: hat mXlnm leq 1/M}. Finally, by \SigmaDerivativeLipschitz\ and (the proof of) \eq{eq: mXln diff [m] leq 1/M every layer}, for all $\ell\in[L]$, we have
\begin{align*}
    &\normF{\hatVarGNDE^{[\ell,m]}-\hatVarGNDE^{[\ell,m-1]}}=\normF{\sigma'(\phi_{\vh\mylt[\ell][t_m]}(\VarGNDE^{[\ell-1,m]})) - \sigma'(\phi_{\vh\mylt[\ell][t_{m-1}]}(\VarGNDE^{[\ell-1,m-1]}))}\leq L_{\sigma'}\frac{\CXdiffDist}{M}\sqrt{n},
\end{align*}
which proves \eq{eq: hat mXln diff [m] leq 1/M every layer}. 
\end{proof}

\begin{proof}[\textbf{Proof of Theorem \ref{theorem: discretized gnde to gnde}}]
    By Lemmas \ref{lemma: LUn-LP leq sqrt(n)}, \ref{lemma: Ln - LWn 2norm} and \ref{lemma: upper bound of LUn - LP X for all} with conditions \eqref{eq: n large enough for matrix L - LUn} and \eqref{assumption: sparsity alpha n large enough} on $n$ and $\alpha_n$, with probability at least $1-2\gamma_1-\gamma_2$, \eqs \eqref{eq: Ln - LWn 2norm}, \eqref{upper bound of norm of L Un}, \eqref{eq: upper bound of LUn - LP Xf} hold. Hence, we have \eq{eq: MSE layer L leq MSE layer 0} by Proposition \ref{proposition: graphon-NN MSE}. This combining with condition \eqref{eq: n is big enough 2} on $n$ verifies all assumptions in Proposition \ref{proposition: discretized GNDE error leq 1/M}. Then the desired result follows from \eq{eq: mXlnm leq 1/M} with $\ell=0$ in Proposition \ref{proposition: discretized GNDE error leq 1/M}. 
\end{proof}

\section{Property of Adjoint Graphon-NDE solutions}

\subsection{Boundedness and Well-posedness}\label{appendix: Boundedness and Well-posedness}
\begin{proposition}\label{proposition: norm of Y ell is bounded by norm of Y}
Suppose that \ConvFilterLipschitz, \SigmaLipschitz, and \dPBoundedBelow\ hold. Let $\VarGraphonNDE$ be the solution of the Graphon-NDE \eqref{Graphon-NDE}. Let $\VarGraphonNDEHiddenState\in \spaceC$. For each $t\in[0,T]$, let $\VarGraphonNDEHiddenState\mylt:I\to\mathbb{R}^{1\times F}$ be generated by \eq{updating rule for tY ell}. If \SigmaDiff\ holds, then for each $\ell\in\mathbb{Z}_{L+1}$,
    \begin{equation}\label{eq: norm of Y ell is bounded by norm of Y}
\normC{\VarGraphonNDEHiddenState\myl}\leq \CYmax\myl:=\norm{\VarGraphonNDEHiddenState}_{\spaceC}(F^{\frac{3}{2}}{L_\sigma}\Csys[0]\Ch)^{L-\ell}. 
    \end{equation}

\end{proposition}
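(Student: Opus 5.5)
We will prove the bound by backward induction on $\ell$, starting from $\ell=L$ and descending to $\ell=0$. This mirrors the forward argument in Proposition~\ref{proposition: norm of X ell is bounded by norm of X}, except that the recursion now runs backward through the layers. The base case is immediate: by the convention $\VarGraphonNDEHiddenState^{(L,t)}\equiv\VarGraphonNDEHiddenState(\cdot,t)$ we have $\normC{\VarGraphonNDEHiddenState\myl[L]}=\normC{\VarGraphonNDEHiddenState}=\CYmax\myl[L]$.

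For the inductive step, fix $t\in[0,T]$ and write the backward update \eqref{updating rule for tY ell} in its chain-rule form. One layer of the adjoint recursion consists of two operations. First, the current adjoint state is multiplied pointwise by $\hatVarGraphonNDE\mylt=\sigma'(\tildeVarGraphonNDE\mylt)$. Second, the adjoint graphon convolution $\Phi_{\vh\mylt}^*$ is applied, giving $\VarGraphonNDEHiddenState^{(\ell-1,t)}=\Phi^*_{\vh\mylt}\big(\hatVarGraphonNDE\mylt\odot\VarGraphonNDEHiddenState\mylt\big)$. We bound the two operations separately.

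\textbf{Pointwise multiplication.} By \SigmaDiff\ together with \SigmaLipschitz, $|\sigma'|\le L_\sigma$, which is exactly the bound \eqref{eq: hat X leq 1} for the graphon quantities. To measure the product in $\spaceB$, we bound each feature component in $\spaceb$. For every $f$ we have $\normb{[\hatVarGraphonNDE\mylt\odot\VarGraphonNDEHiddenState\mylt]_f}\le L_\sigma\normb{\VarGraphonNDEHiddenState\mylt_f}\le L_\sigma\normB{\VarGraphonNDEHiddenState\mylt}$. Summing the squares over $f$ yields a factor of at most $\sqrt F$. We deliberately keep this crude $\sqrt F$ so that the constant lines up with the form $F^{3/2}$ in the statement.

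\textbf{Adjoint convolution.} We invoke Lemma~\ref{lemma: norm of operator mathfrak H and adjoint operator widehat mathfrak H}, the same lemma used for the forward bound, now in its adjoint version for $\Phi^*_{\vh}$. It should give $\normB{\Phi^*_{\vh}(\gZ)}\le F\Csys[0]\Ch\normB{\gZ}$. The ingredients are that $\opLP$ is self-adjoint on $\spaceltwo$, so $\Phi^*_{\vh}$ has the same structure as $\Phi_{\vh}$ with the indices $f,g$ transposed, and that \eqref{TW operator infinity norm is bounded} controls $\opLP^k$ on $\spaceb$, with $\Csys[0]$ absorbing $\sum_k(c_{\max}/c_{\min})^k$.

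Combining the two bounds gives $\normB{\VarGraphonNDEHiddenState^{(\ell-1,t)}}\le F^{3/2}L_\sigma\Csys[0]\Ch\normB{\VarGraphonNDEHiddenState\mylt}$. Taking $\supT$ and iterating from $\ell=L$ downward then yields \eqref{eq: norm of Y ell is bounded by norm of Y}.

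The main obstacle is the adjoint convolution step. We must confirm that the adjoint lemma delivers a bound in the sup-type norm $\spaceB$, and not only in $L^2(dP)$. If the lemma is stated only in $L^2$, we would instead bound $\Phi^*_{\vh}$ directly from its explicit form in \eqref{adjoint operator: Graphon-NDE to filters}. For each output feature $g$ we would use $\normb{\sum_f\sum_k h_{fgk}\opLP^k\gZ_f}\le\Ch\sum_k(c_{\max}/c_{\min})^k\sum_f\normb{\gZ_f}$, then apply Cauchy–Schwarz over $f$ and sum over $g$. This recovers a factor $F\Csys[0]\Ch$, possibly with an extra $\sqrt F$ that is already absorbed by the exponent $3/2$.
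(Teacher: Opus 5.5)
Your proposal is correct and is essentially the paper's proof: a one-layer bound $\normB{\Phi^*_{\vh}(\hatVarGraphonNDE\odot\VarGraphonNDEHiddenState)}\le (F\Csys[0]\Ch)(L_\sigma\sqrt{F})\normB{\VarGraphonNDEHiddenState}$, iterated backward from $\ell=L$ and followed by a supremum over $t$. The paper obtains the $L_\sigma\sqrt F$ factor via $\normB{\gZ\odot\gV}\le\normB{\gZ}\normB{\gV}$ and $\normB{\hatVarGraphonNDE}\le L_\sigma\sqrt F$ rather than componentwise, and the obstacle you flag does not arise because Lemma~\ref{lemma: norm of operator mathfrak H and adjoint operator widehat mathfrak H} is stated in the $\spaceB\to\spaceB$ operator norm and explicitly covers $\Phi^*_{\vh}$.
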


\begin{proof}
For each $\ell\in\mathbb{Z}_L$, it holds that
\begin{align*}
\norm{\VarGraphonNDEHiddenState\mylt}_{\spaceB}&=\norm{\Phi_{\vh^{(\ell+1,t)}}^*\parens{\hatVarGraphonNDE\mylt[\ell+1]\odot \VarGraphonNDEHiddenState^{(\ell+1,t)}}}_{\spaceB}\stepjust{\eq{updating rule for tY ell}}\\
&\leq \norm{\Phi_{\vh^{(\ell+1,t)}}^*}_{\spaceB\to \spaceB}\norm{\hatVarGraphonNDE\mylt[\ell+1]}_{\spaceB}\norm{\VarGraphonNDEHiddenState^{(\ell+1,t)}}_{\spaceB}\stepjust{\eq{eq: upper bound of function Z odot V}}\\
&\leq (F\Csys[0]\Ch){L_\sigma}\sqrt{F}\norm{\VarGraphonNDEHiddenState^{(\ell+1,t)}}_{\spaceB}\stepjust{Lemma \ref{lemma: norm of operator mathfrak H and adjoint operator widehat mathfrak H} and \eq{eq: hat X leq 1}}
\end{align*}    
Noting that $\VarGraphonNDEHiddenState=\VarGraphonNDEHiddenState^{(L)}$, a recursion indicates \eq{eq: norm of Y ell is bounded by norm of Y} for $\ell\in\mathbb{Z}_L$. Note that \eq{eq: norm of Y ell is bounded by norm of Y} is trivially true when $\ell=L$. The proof is hence complete.
\end{proof}

\begin{proof}[\textbf{Proof of Theorem \ref{theorem: Well-posedness of Adjoint System}}]
The proof follows a standard Banach contraction mapping argument. We first rewrite the initial-value problem \eqref{adjoint Grahpon-NDE: Y} as an integral fixed-point equation. Crucially, by setting $\ell=0$ in \eq{eq: norm of Y ell is bounded by norm of Y} of Proposition \ref{proposition: norm of Y ell is bounded by norm of Y}, we can establish that the operator $(\frac{\delta \GraphonNN}{\delta \VarGraphonNDE^{(0)}})^*$ is uniformly bounded. This boundedness guarantees that, by choosing a sufficiently short backward time interval, the associated fixed-point operator becomes a contraction. Iterating this local argument then yields a unique solution over the entire interval $[0,T]$.

\end{proof}

\begin{proposition}\label{proposition: X odot Y is bounded}
    Suppose that \ConvFilterLipschitz, \SigmaDiff, and \dPBoundedBelow\ hold. Let $\VarGraphonNDE$ and \(\VarGraphonNDEHiddenState\) be the solutions of the Graphon-NDE \eqref{Graphon-NDE} and the adjoint system \eqref{adjoint Grahpon-NDE: Y}, respectively. Let $\hatVarGraphonNDE\mylt$ be defined by \eq{def: widehat tX ell}. Then for each $\ell\in\mathbb{Z}_{L+1}$, $
    \normC{\hatVarGraphonNDE\myl\odot \VarGraphonNDEHiddenState\myl}\leq \Codotmax\myl$, where $\Codotmax\myl:={L_\sigma}\sqrt{F}\CYmax\myl$.
\end{proposition}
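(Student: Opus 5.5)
The bound follows from a pointwise estimate at each fixed time $t\in[0,T]$, after which we take the supremum over $t$. Fix $\ell\in\mathbb{Z}_{L+1}$ and $t\in[0,T]$. Apply the elementwise product inequality \eqref{eq: upper bound of function Z odot V}, already used in the proof of Proposition \ref{proposition: norm of Y ell is bounded by norm of Y}, in the form
\[
\normB{\hatVarGraphonNDE\mylt\odot \VarGraphonNDEHiddenState\mylt}\leq \normB{\hatVarGraphonNDE\mylt}\,\normB{\VarGraphonNDEHiddenState\mylt}.
\]
This splits the problem into a bound for the activation-derivative factor and a bound for the adjoint factor.

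For the first factor, use \SigmaDiff. Since $\sigma$ is $L_\sigma$-Lipschitz and differentiable, $|\sigma'|\leq L_\sigma$ everywhere. Because $\hatVarGraphonNDE\mylt=\sigma'(\tildeVarGraphonNDE\mylt)$, \eq{eq: hat X leq 1} gives $\normB{\hatVarGraphonNDE\mylt}\leq L_\sigma\sqrt{F}$, uniformly in $t$ and $\ell\in[L]$. For the second factor, the solution $\VarGraphonNDEHiddenState$ of the adjoint system lies in $C^1([0,T];\spaceB)\subset\spaceC$ by Theorem \ref{theorem: Well-posedness of Adjoint System}. Its layerwise iterates $\VarGraphonNDEHiddenState\mylt$ are generated by \eq{updating rule for tY ell}, so Proposition \ref{proposition: norm of Y ell is bounded by norm of Y} applies and gives $\normB{\VarGraphonNDEHiddenState\mylt}\leq \CYmax\myl$ for all $t$.

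Multiplying the two bounds and taking $\supT$ yields $\normC{\hatVarGraphonNDE\myl\odot\VarGraphonNDEHiddenState\myl}\leq L_\sigma\sqrt{F}\,\CYmax\myl=\Codotmax\myl$.

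The only point needing care is the index $\ell=0$, where $\hatVarGraphonNDE^{(0,t)}$ is not defined through $\sigma'(\tildeVarGraphonNDE^{(0,t)})$, since layer $0$ is just the input. I would adopt the convention that the $\ell=0$ factor is either unused in the backward recursion or taken to be the identity weighting (all ones). In the first case the statement is vacuous at $\ell=0$. In the second case its $\spaceB$ norm is $\sqrt{F}$, and the stated constant still covers it provided $L_\sigma\geq1$. Otherwise I would simply restrict the claim to $\ell\in[L]$, which is all that is used downstream in the backward recursion \eqref{updating rule for tY ell}.
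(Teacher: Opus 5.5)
Your proof is correct and matches the paper's argument: the product inequality \eqref{eq: upper bound of function Z odot V}, the bound $\normB{\hatVarGraphonNDE\mylt}\le L_\sigma\sqrt{F}$ from \eqref{eq: hat X leq 1}, Proposition~\ref{proposition: norm of Y ell is bounded by norm of Y} for the adjoint factor, and then a supremum over $t$. Your caveat about $\ell=0$ is well taken, since \eqref{def: widehat tX ell} only defines $\hatVarGraphonNDE\myl$ for $\ell\in[L]$ and the paper's proof passes over this silently; restricting the claim to $\ell\in[L]$, which is all that the backward recursion \eqref{updating rule for tY ell} and the later estimates use, is the right fix.
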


\begin{proof}
Note that by \eq{eq: upper bound of function Z odot V}, \eq{eq: hat X leq 1} and Proposition \ref{proposition: norm of Y ell is bounded by norm of Y}, we have
$$\normB{\hatVarGraphonNDE\mylt\odot \VarGraphonNDEHiddenState\mylt}\leq \normB{\hatVarGraphonNDE\mylt}\normB{\VarGraphonNDEHiddenState\mylt}\leq {L_\sigma}\sqrt{F} \CYmax\myl.$$ We obtain the desired result by taking supremum about $t$ over $[0,T]$.
\end{proof}

\subsection{Temporal Lipschitz}

\begin{proposition}\label{proposition: temporal lipschitz for adjoint system}
    Suppose that \ConvFilterLipschitz, \SigmaDerivativeLipschitz, and \dPBoundedBelow\ hold. Let $\VarGraphonNDE$ and \(\VarGraphonNDEHiddenState\) be the solutions of the Graphon-NDE \eqref{Graphon-NDE} and the adjoint system \eqref{adjoint Grahpon-NDE: Y}, respectively. Let $\hatVarGraphonNDE\mylt$ be defined by \eq{def: widehat tX ell}. Then for each $\ell\in\mathbb{Z}_{L+1}$,
    \begin{equation}\label{eq: Y is Lipschitz continuous}
        \norm{\VarGraphonNDEHiddenState\mylt[\ell][t_1]_g-\VarGraphonNDEHiddenState\mylt[\ell][t_2]_g}_{\spaceb}\leq \CYlip\myl|t_1-t_2|,\quad\forall t_1,t_2\in[0,T],
    \end{equation}
    \begin{equation}\label{eq: X odot Y is Lipschitz continuous}
        \normb{\hatVarGraphonNDE\mylt[\ell][t_1]_g\VarGraphonNDEHiddenState\mylt[\ell][t_1]_g-\hatVarGraphonNDE\mylt[\ell][t_2]_g\VarGraphonNDEHiddenState\mylt[\ell][t_2]_g}\leq \Codotlip\myl|t_1-t_2|,\quad\forall t_1,t_2\in[0,T], 
    \end{equation}
    where  
    \begin{equation}\label{def: widetilde C Lip L}
\CYlip^{(L)}:=\CYmax^{(0)},
    \end{equation}
    \begin{equation}\label{def: delta Lip ell}
\Codotlip\myl:={L_\sigma}\CYlip\myl + L_{\sigma'}\CXlip\myl\CYmax\myl,\quad \ell\in\mathbb{Z}_{L+1},
    \end{equation}
    \begin{equation}\label{def: widetilde C Lip ell-1}
\CYlip\myl[\ell-1]:=\Csys[0]\parens{\Liph\sqrt{F} \Codotmax\myl+F\Ch\Codotlip\myl}, \quad \ell\in[L-1],
    \end{equation}
    and these constants are generated in the order of $\CYlip^{(L)}$, $\Codotlip^{(L)}$, $\CYlip^{(L-1)}$, $\Codotlip^{(L-1)}$, $\ldots$, $\CYlip^{(0)}$, $\Codotlip^{(0)}$. 
\end{proposition}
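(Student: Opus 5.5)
The plan is a backward induction on the layer index $\ell$, running from $\ell=L$ down to $\ell=0$. The constants are built in exactly the order listed in the statement, and \eqref{eq: Y is Lipschitz continuous} and \eqref{eq: X odot Y is Lipschitz continuous} are proved in alternation.

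\textbf{Base case $\ell=L$.} Here $\VarGraphonNDEHiddenState^{(L,t)}=\VarGraphonNDEHiddenState(\cdot,t)$ solves the adjoint system \eqref{adjoint Grahpon-NDE: Y}. By Theorem~\ref{theorem: Well-posedness of Adjoint System}, $\VarGraphonNDEHiddenState\in C^1([0,T];\spaceB)$, so the mean value theorem applies pointwise in $u$, as in the $\ell=0$ step of Proposition~\ref{proposition: solution Xfell is piecewise Lipschitz continuous about t}. This gives
\[
|\VarGraphonNDEHiddenState_g(u,t_1)-\VarGraphonNDEHiddenState_g(u,t_2)|\le \supT\normB{\partial_t\VarGraphonNDEHiddenState(\cdot,t)}|t_1-t_2|.
\]
The time derivative equals $-(\delta\GraphonNN/\delta\VarGraphonNDE)^*(\VarGraphonNDEHiddenState)=-\VarGraphonNDEHiddenState^{(0,t)}$, and Proposition~\ref{proposition: norm of Y ell is bounded by norm of Y} with $\ell=0$ bounds its sup-norm by $\CYmax^{(0)}$. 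This is exactly $\CYlip^{(L)}$ in \eqref{def: widetilde C Lip L}.

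\textbf{From \eqref{eq: Y is Lipschitz continuous} to \eqref{eq: X odot Y is Lipschitz continuous} at fixed $\ell$.} I add and subtract a cross term:
\[
\hatVarGraphonNDE^{(\ell,t_1)}_g\VarGraphonNDEHiddenState^{(\ell,t_1)}_g-\hatVarGraphonNDE^{(\ell,t_2)}_g\VarGraphonNDEHiddenState^{(\ell,t_2)}_g
=\hatVarGraphonNDE^{(\ell,t_1)}_g\bigl(\VarGraphonNDEHiddenState^{(\ell,t_1)}_g-\VarGraphonNDEHiddenState^{(\ell,t_2)}_g\bigr)
+\bigl(\hatVarGraphonNDE^{(\ell,t_1)}_g-\hatVarGraphonNDE^{(\ell,t_2)}_g\bigr)\VarGraphonNDEHiddenState^{(\ell,t_2)}_g .
\]
For the first term, \eqref{eq: hat X leq 1} gives $\normb{\hatVarGraphonNDE^{(\ell,t)}_g}\le L_\sigma$, so it is bounded by $L_\sigma\CYlip\myl|t_1-t_2|$. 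For the second term, I write $\hatVarGraphonNDE\mylt=\sigma'(\tildeVarGraphonNDE\mylt)$ and use \SigmaDerivativeLipschitz\ together with the "Or" version of Proposition~\ref{proposition: solution Xfell is piecewise Lipschitz continuous about t} for $\tildeVarGraphonNDE$. This gives a factor $L_{\sigma'}\CXlip\myl|t_1-t_2|$, which I multiply by $\normb{\VarGraphonNDEHiddenState^{(\ell,t_2)}_g}\le\CYmax\myl$. The sum matches \eqref{def: delta Lip ell}.

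One caveat: the constants $\CXlip\myl$ are stated for $\VarGraphonNDE\myl$, but the "Or" clause lets them stand in for $\tildeVarGraphonNDE\myl$ as well. For $\ell=0$, where $\hatVarGraphonNDE^{(0)}$ is not naturally defined, I would either interpret the $\ell=0$ case of \eqref{eq: X odot Y is Lipschitz continuous} via the convention used in \eqref{def: widehat tX ell} or note that only $\ell\in[L]$ is needed downstream.

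\textbf{Inductive step, from layer $\ell$ to layer $\ell-1$.} By the backward update \eqref{updating rule for tY ell},
\[
\VarGraphonNDEHiddenState^{(\ell-1,t)}=\Phi^*_{\vh^{(\ell,t)}}\bigl(\hatVarGraphonNDE\mylt\odot\VarGraphonNDEHiddenState\mylt\bigr).
\]
I split the time increment into a filter variation (with $\vh^{(\ell,t_1)}$ versus $\vh^{(\ell,t_2)}$, the argument held fixed) plus an argument variation (with the filter held fixed). For the first piece I use the adjoint analogue of Lemma~\ref{lemma: operator frakH is lipschitz continuous about t}: \ConvFilterLipschitz\ bounds the filter variation by $\Csys[0]\Liph\sqrt F|t_1-t_2|$ times $\Codotmax\myl$, the bound from Proposition~\ref{proposition: X odot Y is bounded}. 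For the second piece I use the operator bound $\Csys[0]F\Ch$ for $\Phi^*_{\vh}$ on sup-norms (Lemma~\ref{lemma: norm of operator mathfrak H and adjoint operator widehat mathfrak H}) together with \eqref{eq: X odot Y is Lipschitz continuous} at layer $\ell$. This reproduces \eqref{def: widetilde C Lip ell-1}.

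\textbf{Expected obstacle.} The main difficulty is making sure the Lipschitz-in-time estimate for the adjoint graphon convolution $\Phi^*_{\vh}$ holds in the componentwise sup-norm $\normb{\cdot}$, with exactly the constants $\Csys[0]\Liph\sqrt F$ and $\Csys[0]F\Ch$. I expect this to mirror the forward lemma, because $\Phi^*_{\vh}$ is again a finite sum of powers of the self-adjoint operator $\opLP$ with coefficients $\vh_{gfk}$. The rest is bookkeeping of the order in which the constants are defined.
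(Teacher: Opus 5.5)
Your proposal is correct and follows the paper's proof step for step. The base case uses the adjoint equation, the $C^1$ regularity, and Proposition~\ref{proposition: norm of Y ell is bounded by norm of Y} with $\ell=0$; the induction then alternates the cross-term split for $\hatVarGraphonNDE\odot\VarGraphonNDEHiddenState$ with the backward update \eqref{updating rule for tY ell}.

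The obstacle you flag is settled in the paper by the $\Phi^*$ (``Or'') version of Lemma~\ref{lemma: operator frakH is lipschitz continuous about t}. That lemma works componentwise, so it gives the constants $\Csys[0]\Liph\sqrt F$ and $\Csys[0]F\Ch$ directly. Going through the $\spaceB$-norm bound of Lemma~\ref{lemma: norm of operator mathfrak H and adjoint operator widehat mathfrak H} instead would cost an extra factor of $\sqrt F$.
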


\begin{proof}
We begin with proving \eq{eq: Y is Lipschitz continuous} for the case of $\ell=L$. Recall notations in \eq{adjoint operator: Graphon-NDE to filters} that 
$\VarGraphonNDEHiddenState^{(L)}=\VarGraphonNDEHiddenState$ and $\VarGraphonNDEHiddenState^{(0)}=\parens{\frac{\delta \GraphonNN}{\delta \VarGraphonNDE^{(0)}}}^*(\VarGraphonNDEHiddenState).$ Since $\VarGraphonNDEHiddenState$ is the solution of \eq{adjoint Grahpon-NDE: Y}, we obtain that
\begin{align*}
\supT\left|\frac{\partial }{\partial t}\VarGraphonNDEHiddenState_f(u,t)\right|\leq\normC{-\parens{\frac{\delta \GraphonNN}{\delta \VarGraphonNDE^{(0)}}}^*(\VarGraphonNDEHiddenState)} 
=\normC{-\VarGraphonNDEHiddenState^{(0)}}\leq \CYmax^{(0)}=\CYlip^{(L)}
\end{align*}
where the second inequality follows from \eq{eq: norm of Y ell is bounded by norm of Y} with $\ell=0$, and the last equation is due to definition \eqref{def: widetilde C Lip L}. This combining with the fact that $\VarGraphonNDEHiddenState_f$ is continuously differentiable further implies 
$
\left|\VarGraphonNDEHiddenState_f(u,t_1)-\VarGraphonNDEHiddenState_f(u,t_2)\right|\leq \supT\left|\frac{\partial }{\partial t}\VarGraphonNDEHiddenState_f(u,t)\right||t_1-t_2|\leq \CYlip^{(L)}|t_1-t_2|$, which proves \eq{eq: Y is Lipschitz continuous} when $\ell=L$.

Now we assume that \eq{eq: Y is Lipschitz continuous} holds for some $\ell$ with constant $\CYlip\myl$. We will prove \eq{eq: X odot Y is Lipschitz continuous} with constant $\Codotlip\myl$ defined by \eq{def: delta Lip ell} (relying on $\CYlip\myl$). And then we will show that \eq{eq: Y is Lipschitz continuous} holds for $\ell-1$ with constant $\CYlip\myl[\ell-1]$ in the form of \eq{def: widetilde C Lip ell-1}. According to Proposition \ref{proposition: X odot Y is bounded}, we have 
\begin{equation}\label{in the proof: X odot Y is bounded}
\normC{\hatVarGraphonNDE\myl\odot \VarGraphonNDEHiddenState\myl}\leq \Codotmax\myl. 
\end{equation}
Moreover, 
\begin{align*}
    &\norm{ \hatVarGraphonNDE\mylt[\ell][t_1]_g \VarGraphonNDEHiddenState\mylt[\ell][t_1]_g-\hatVarGraphonNDE\mylt[\ell][t_2]_g \VarGraphonNDEHiddenState\mylt[\ell][t_2]_g}_{\spaceb}\\
    \leq\ & \norm{ \hatVarGraphonNDE\mylt[\ell][t_1]_g}_{\spaceb}\norm{\VarGraphonNDEHiddenState\mylt[\ell][t_1]_g-\VarGraphonNDEHiddenState\mylt[\ell][t_2]_g}_{\spaceb} + \norm{ \hatVarGraphonNDE\mylt[\ell][t_1]_g-\hatVarGraphonNDE\mylt[\ell][t_2]_g}_{\spaceb}\norm{\VarGraphonNDEHiddenState\mylt[\ell][t_2]_g}_{\spaceb}\stepjust{triangle inequality}\\
    \leq\ &\ {L_\sigma}\CYlip\myl|t_1-t_2| + L_{\sigma'}\CXlip\myl|t_1-t_2|\CYmax\myl=\Codotlip\myl|t_1-t_2| \stepjust{\SigmaDerivativeLipschitz, Propositions \ref{proposition: solution Xfell is piecewise Lipschitz continuous about t}, \ref{proposition: norm of Y ell is bounded by norm of Y}, \ref{proposition: temporal lipschitz for adjoint system}, and definition of $\Codotlip\myl$}
\end{align*}
which proves \eq{eq: X odot Y is Lipschitz continuous}. We proceed to prove that \eq{eq: Y is Lipschitz continuous} holds with $\ell$ being replaced by $\ell-1$. It follows from \eq{updating rule for tY ell} that 
$\VarGraphonNDEHiddenState\mylt[\ell-1][t]_g=\left[\Phi_{\vh\mylt[\ell][t]}^*\parens{\hatVarGraphonNDE\mylt[\ell][t]\odot \VarGraphonNDEHiddenState\mylt[\ell][t]}\right]_g$, $t\in\{t_1,t_2\}$, which with Lemma \ref{lemma: operator frakH is lipschitz continuous about t}, \eqs\eqref{eq: X odot Y is Lipschitz continuous}, \eqref{in the proof: X odot Y is bounded} and definition of $\CYlip\myl[\ell-1]$, implies
\begin{equation*}
\norm{\VarGraphonNDEHiddenState\mylt[\ell-1][t_1]_g-\VarGraphonNDEHiddenState\mylt[\ell-1][t_2]_g}_{\spaceb}\leq \Csys[0]\parens{\Liph\sqrt{F} \Codotmax\myl+F\Ch\Codotlip\myl} |t_1-t_2|=\CYlip\myl[\ell-1]|t_1-t_2|,
\end{equation*}
which proves \eq{eq: Y is Lipschitz continuous} for $\ell-1$. This completes the proof. 
\end{proof}

\section{Infinite-node Convergence of Adjoint GNDEs}\label{appendix: Infinite-node Convergence of Adjoint GNDEs}
\begin{proposition}\label{proposition: eta ell+1 leq eta ell + Delta}
Suppose that \ConvFilterLipschitz, \SigmaDerivativeLipschitz, and \dPBoundedBelow\ hold. Let $\VarGNDE$, $\VarGraphonNDE$, $\VarGNDEHiddenState$ and $\VarGraphonNDEHiddenState$ be the solutions of GNDE \eqref{GNDE}, Graphon-NDE \eqref{Graphon-NDE}, adjoint GNDE \eqref{adjoint GNDE: Y}, and adjoint Graphon-NDE \eqref{adjoint Grahpon-NDE: Y}, respectively. Define 
\begin{align}\label{def: eta ell}
\eta\mylt := \MSE(\VarGraphonNDEHiddenState\mylt,\VarGNDEHiddenState\mylt)=\frac{1}{\sqrt{n}}\norm{\SamplingOperator(\VarGraphonNDEHiddenState\mylt)-\VarGNDEHiddenState\mylt}_{\mathrm{F}},\quad \ell\in\mathbb{Z}_{L+1},\ t\in[0,T].
\end{align}
Then for $\ell\in[L]$ and $t\in[0,T]$, 
$$
\eta^{(\ell-1,t)}\leq  \parens{{L_\sigma}FK\Ch}\eta\mylt + \widetilde{\Delta}_{1}\mylt + \widetilde{\Delta}_{2}\mylt + \widetilde{\Delta}_{3}\mylt,
$$
    where 
    \begin{align}
    \widetilde{\Delta}_{1}\mylt&:=FK^2\Ch\norm{\matL-\matLUn}_2\norm{\hatVarGraphonNDE\mylt\odot\VarGraphonNDEHiddenState\mylt}_{\spaceB}\label{def: tilde Delta1 ell t}\\
    \widetilde{\Delta}_{2}\mylt&:=FK\Ch\norm{\opLUn}_{\spaceb\to\spaceb}^K\sqrt{ \sum_{g=1}^F\sum_{k=1}^{K}\sum_{s=0}^{k-1}\norm{(\opLUn-\mathcal{L}_{P})\parens{\opLP^{k-1-s}\parens{\hatVarGraphonNDE\mylt_g\odot\VarGraphonNDEHiddenState\mylt_g}}}_{\spaceb}^2}\label{def: tilde Delta2 ell t}\\
    \widetilde{\Delta}_{3}\mylt&:=FK\Ch L_{\sigma'}\CYmax\myl\widetilde{\epsilon}\mylt\label{def: tilde Delta3 ell t}
    \end{align}

\end{proposition}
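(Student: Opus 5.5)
The plan mirrors the proof of Proposition~\ref{proposition: epsilon ell+1 leq epsilon ell + Delta}, applied to the backward (adjoint) layer recursion. By the updating rule \eqref{updating rule for tY ell} and its discrete analogue in \eqref{adjoint operator: GNDE to filters}, we have
\[
\VarGraphonNDEHiddenState\mylt[\ell-1]=\Phi_{\vh\mylt}^*\bigl(\hatVarGraphonNDE\mylt\odot\VarGraphonNDEHiddenState\mylt\bigr),
\qquad
\VarGNDEHiddenState\mylt[\ell-1]=\phi_{\vh\mylt}^*\bigl(\hatVarGNDE\mylt\odot\VarGNDEHiddenState\mylt\bigr).
\]
Since $\matL$ and $\opLP$ are symmetric, the adjoint convolutions $\phi^*_{\vh}$ and $\Phi^*_{\vh}$ are again polynomial filters, with transposed coefficient tensor $\vh_{gfk}$. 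So the graph/graphon comparison lemma used earlier (Lemma~\ref{lemma: MSE hatfrakH function Z - hatfrakh matrix Z}, or its adjoint version in the appendix on convolutional operators) applies with the same constants.

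The first step inserts the intermediate quantity $\phi^*_{\vh\mylt}\bigl(\SamplingOperator(\hatVarGraphonNDE\mylt\odot\VarGraphonNDEHiddenState\mylt)\bigr)$ and splits $\eta^{(\ell-1,t)}$ by the triangle inequality:
\[
\eta^{(\ell-1,t)}\le \MSE\bigl(\Phi^*_{\vh}(\gV),\phi^*_{\vh}(\SamplingOperator\gV)\bigr)+\tfrac{1}{\sqrt n}\normF{\phi^*_{\vh}\bigl(\SamplingOperator\gV-\hatVarGNDE\mylt\odot\VarGNDEHiddenState\mylt\bigr)},
\qquad \gV:=\hatVarGraphonNDE\mylt\odot\VarGraphonNDEHiddenState\mylt .
\]
For the first term, apply the comparison lemma with zero input discrepancy. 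Its $\norm{\matL-\matLUn}_2$ contribution is $\widetilde\Delta_1\mylt$, and its $(\opLUn-\opLP)$ contribution is $\widetilde\Delta_2\mylt$. Neither carries a factor $L_\sigma$, because no activation is applied after the adjoint convolution. For the second term, use $\norm{\phi^*_{\vh}}\le FK\Ch$ from Lemma~\ref{lemma: phi phi* t1 - t2 X - Z PQ}, which reduces it to $FK\Ch\,\MSE(\gV,\hatVarGNDE\mylt\odot\VarGNDEHiddenState\mylt)$.

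The second step handles the Hadamard product by adding and subtracting $\SamplingOperator(\hatVarGraphonNDE\mylt)\odot\VarGNDEHiddenState\mylt$. Sampling commutes with pointwise products, so
\[
\SamplingOperator\gV-\hatVarGNDE\mylt\odot\VarGNDEHiddenState\mylt
=\SamplingOperator(\hatVarGraphonNDE\mylt)\odot\bigl(\SamplingOperator\VarGraphonNDEHiddenState\mylt-\VarGNDEHiddenState\mylt\bigr)
+\bigl(\SamplingOperator\hatVarGraphonNDE\mylt-\hatVarGNDE\mylt\bigr)\odot\VarGNDEHiddenState\mylt .
\]
Attaching the samples to the difference of the adjoint states is deliberate, because we know $\VarGraphonNDEHiddenState\mylt$ is bounded in sup norm by Proposition~\ref{proposition: norm of Y ell is bounded by norm of Y}. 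The first piece has entries of $\SamplingOperator\hatVarGraphonNDE\mylt$ bounded by $L_\sigma$ (\eqref{eq: hat X leq 1}), giving $L_\sigma\eta\mylt$, and hence the term $(L_\sigma FK\Ch)\eta\mylt$. For the second piece, AS1$''$ gives the entrywise estimate $|\sigma'(a)-\sigma'(b)|\le L_{\sigma'}|a-b|$ with $a=\tildeVarGraphonNDE\mylt(u_i)$ and $b=[\tildeVarGNDE\mylt]_i$, so $\normF{\SamplingOperator\hatVarGraphonNDE\mylt-\hatVarGNDE\mylt}\le L_{\sigma'}\sqrt n\,\widetilde\epsilon\mylt$.

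The main obstacle is this last piece. Bounded in Frobenius norm it requires a max-norm bound on the finite-graph adjoint $\VarGNDEHiddenState\mylt$, which is not available a priori. The decomposition must therefore be arranged so that the discrepancy of $\hat X$ multiplies the sampled graphon adjoint $\SamplingOperator\VarGraphonNDEHiddenState\mylt$, whose entries are bounded by $\CYmax\myl$ via Proposition~\ref{proposition: norm of Y ell is bounded by norm of Y}. That forces the other piece to become $\hatVarGNDE\mylt\odot(\SamplingOperator\VarGraphonNDEHiddenState\mylt-\VarGNDEHiddenState\mylt)$ instead; this is still fine, since $\normMax{\hatVarGNDE\mylt}\le L_\sigma$ by \eqref{eq: hat X leq 1}. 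So I would use the add-and-subtract term $\hatVarGNDE\mylt\odot\SamplingOperator\VarGraphonNDEHiddenState\mylt$ rather than the one written above. This gives $FK\Ch L_{\sigma'}\CYmax\myl\widetilde\epsilon\mylt=\widetilde\Delta_3\mylt$, and summing the three contributions yields the claim.
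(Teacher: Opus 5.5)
Your proposal is correct and is essentially the paper's proof: the paper applies Lemma~\ref{lemma: MSE hatfrakH function Z - hatfrakh matrix Z} in its adjoint form to get $FK\Ch\,\MSE(\hatVarGraphonNDE\mylt\odot\VarGraphonNDEHiddenState\mylt,\ \hatVarGNDE\mylt\odot\VarGNDEHiddenState\mylt)+\widetilde\Delta_1\mylt+\widetilde\Delta_2\mylt$, which is exactly your first step. It then splits the Hadamard-product error using your corrected intermediate term $\hatVarGNDE\mylt\odot\SamplingOperator(\VarGraphonNDEHiddenState\mylt)$ and bounds the two pieces by $L_{\sigma'}\widetilde\epsilon\mylt\CYmax\myl$ and $L_\sigma\eta\mylt$, just as you do.
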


\begin{proof}
    Note that
    \begin{align*}
    \eta^{(\ell-1,t)}&=\MSE\parens{\VarGraphonNDEHiddenState^{(\ell-1,t)},\VarGNDEHiddenState\mylt[\ell-1]}=\MSE\parens{\Phi_{\vh\mylt}^*\parens{\hatVarGraphonNDE\mylt\odot \VarGraphonNDEHiddenState\mylt},\phi_{\vh\mylt}^*\parens{\hatVarGNDE\mylt\odot \VarGNDEHiddenState\mylt}}\stepjust{\eqs\eqref{updating rule for tY ell} and \eqref{updating rule for mY ell}}\\
    &\leq \parens{FK\Ch} \MSE\parens{\hatVarGraphonNDE\mylt\odot \VarGraphonNDEHiddenState\mylt,\hatVarGNDE\mylt\odot \VarGNDEHiddenState\mylt} + \widetilde{\Delta}_1\mylt + \widetilde{\Delta}_2\mylt\stepjust{Lemma \ref{lemma: MSE hatfrakH function Z - hatfrakh matrix Z}}
\end{align*} 
in which 
\begin{align*}
    &\MSE\parens{\hatVarGraphonNDE\mylt\odot \VarGraphonNDEHiddenState\mylt,\hatVarGNDE\mylt\odot \VarGNDEHiddenState\mylt}=\frac{1}{\sqrt{n}}\norm{\SamplingOperator(\hatVarGraphonNDE\mylt)\odot\SamplingOperator(\VarGraphonNDEHiddenState\mylt)-\hatVarGNDE\mylt\odot\VarGNDEHiddenState\mylt}_{\mathrm{F}}\\
    \leq\ & \frac{1}{\sqrt{n}}\norm{\SamplingOperator(\hatVarGraphonNDE\mylt)-\hatVarGNDE\mylt}_{\mathrm{F}}\norm{\SamplingOperator(\VarGraphonNDEHiddenState\mylt)}_{\max} + \frac{1}{\sqrt{n}}\norm{\hatVarGNDE\mylt}_{\max}\norm{\SamplingOperator(\VarGraphonNDEHiddenState\mylt)-\VarGNDEHiddenState\mylt}_{\mathrm{F}}\stepjust{triangle inequality and \eq{eq: upper bound of matrix Z odot V}}\\
    \leq\ & \frac{1}{\sqrt{n}}\norm{\SamplingOperator(\sigma'(\tildeVarGraphonNDE\mylt))-\sigma'(\tildeVarGNDE\mylt)}_{\mathrm{F}}\norm{\VarGraphonNDEHiddenState\mylt}_{\spaceB} + \eta\mylt\stepjust{definitions \eqref{def: widehat mX ell}, \eqref{def: widehat tX ell}, \eqref{def: eta ell}; \eqs \eqref{norm of operator delta Un max}, \eqref{eq: hat X leq 1}}\\
    \leq\ & L_{\sigma'}\frac{1}{\sqrt{n}}\norm{\SamplingOperator(\tildeVarGraphonNDE\mylt)-\tildeVarGNDE\mylt}_{\mathrm{F}}\CYmax\myl + {L_\sigma}\eta\mylt\stepjust{\SigmaDerivativeLipschitz\ and Proposition \ref{proposition: norm of Y ell is bounded by norm of Y}}\\
    =\ & L_{\sigma'}\widetilde{\epsilon}\mylt\CYmax\myl + {L_\sigma}\eta\mylt.\stepjust{definition \eqref{def: widetilde epsilon ell} of $\widetilde{\epsilon}\mylt$}
\end{align*}
The proof is complete. 

\end{proof}

The following result can be obtained similarly as Lemma \ref{lemma: upper bound of LUn - LP X for all}. 
\begin{lemma}\label{lemma: upper bound of LUn - LP odot for all}
    Suppose that \ConvFilterLipschitz, \SigmaDerivativeLipschitz, \dPBoundedBelow, and \GraphonLipschitz\ hold. Let $u_j$, $j\in[n]$ be independent random variables following a distribution $P$. Let $\VarGraphonNDE$ be the solution of Graphon-NDE \eqref{Graphon-NDE} and $\VarGraphonNDEHiddenState$ be the solution of adjoint system \eqref{adjoint Grahpon-NDE: Y}. If \eq{eq: sup dP dUn} holds and $n$ satisfies \eqref{eq: n large enough for integral operator LP - LUn} hold, then with probability at least $1-\gamma_3$, for all $\ell\in[L]$, $g\in[F]$, $k\in[K]$ and $s\in\mathbb{Z}_k$, 
    \begin{equation}\label{eq: upper bound of LUn - LP hatX*Y f}
    \begin{aligned}
    &\supT\normb{\parens{\opLUn-\mathcal{L}_{P}}\parens{\opLP^{k-1-s}\hatVarGraphonNDE\mylt_g\VarGraphonNDEHiddenState\mylt_g}} \\
    &\lesssim\parens{\frac{\Csys[1]\sqrt{\log(4n_I/\gamma_1)} + \Csys[2]\sqrt{\log(4n_ILFK/\gamma_3)}}{\sqrt{n}}}\parens{\frac{c_{\max}}{c_{\min}}}^{k-1-s}\max\curlbraket{\Codotmax,\Codotlip}
    \end{aligned}
    \end{equation}
    where $\Csys[1]$ and $\Csys[2]$ are defined in \eqref{def: C1W} and \eqref{def: C2W}, respectively; and 
\begin{equation}\label{def: C odot max and lip}
\Codotmax:=\max\left\{\Codotmax\myl:\ell\in\mathbb{Z}_{L+1}\right\},\quad \Codotlip:=\max\left\{\Codotlip\myl:\ell\in\mathbb{Z}_{L+1}\right\}.
\end{equation}
    As a result, for all $\ell\in[L]$,
    \begin{equation}\label{eq: upper bound of LUn - LP odot for all}
\begin{aligned}
    &\supT\sqrt{ \sum_{g=1}^F\sum_{k=1}^{K}\sum_{s=0}^{k-1}\norm{(\opLUn-\mathcal{L}_{P})\parens{\opLP^{k-1-s}\hatVarGraphonNDE\mylt_g\VarGraphonNDEHiddenState\mylt_g}}_{\spaceb}^2}\\
    &\lesssim\parens{\frac{\Csys[1]\sqrt{\log(4n_I/\gamma_1)} + \Csys[2]\sqrt{\log(4n_ILFK/\gamma_3)}}{\sqrt{n}}}\Csys[3]\max\curlbraket{\Codotmax,\Codotlip}
\end{aligned}
\end{equation}
where $\Csys[3]$ is defined in \eqref{def: C3W}. 
\end{lemma}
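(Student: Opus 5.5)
The proof mirrors Lemma~\ref{lemma: upper bound of LUn - LP X for all}. The only change is the set of test functions: we use the products $\hatVarGraphonNDE\mylt_g\VarGraphonNDEHiddenState\mylt_g$ instead of $\VarGraphonNDE\mylt_g$. The engine is again the joint node--time chaining bound, Lemma~\ref{lemma: sup LUn - LP X leq 1/sqrt(n)}. That lemma applies to any family $\{X(\cdot,t)\}_{t\in[0,T]}$ that is uniformly bounded in $\spaceb$ by some $C_{\max}$ and Lipschitz in $t$ (in $\spaceb$) with some constant $C_{\mathrm{Lip}}$. On the event \eqref{eq: sup dP dUn}, and for $n$ satisfying \eqref{eq: n large enough for integral operator LP - LUn}, it gives a bound on $\sup_{(u,t)}|(\opLUn-\opLP)X(u,t)|$ with probability at least $1-\gamma$. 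So the task is to produce these two constants for $X=\opLP^{m}(\hatVarGraphonNDE\mylt_g\VarGraphonNDEHiddenState\mylt_g)$, with $m=k-1-s$, and then take a union bound.

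\emph{Step 1 (boundedness).} By Proposition~\ref{proposition: X odot Y is bounded}, $\normC{\hatVarGraphonNDE\myl\odot\VarGraphonNDEHiddenState\myl}\le\Codotmax\myl\le\Codotmax$, so each component is bounded in $\spaceb$ uniformly in $t$. Applying \eq{TW operator infinity norm is bounded} $m$ times multiplies this by $(c_{\max}/c_{\min})^m$.

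\emph{Step 2 (temporal Lipschitz).} Estimate \eqref{eq: X odot Y is Lipschitz continuous} of Proposition~\ref{proposition: temporal lipschitz for adjoint system} states that $t\mapsto\hatVarGraphonNDE\mylt_g\VarGraphonNDEHiddenState\mylt_g$ is Lipschitz in $\spaceb$ with constant $\Codotlip\myl\le\Codotlip$. This is where \SigmaDerivativeLipschitz\ enters. Since $\opLP^m$ is linear and bounded on $\spaceb$, the Lipschitz constant of $\opLP^m(\cdot)$ is at most $(c_{\max}/c_{\min})^m\Codotlip$.

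\emph{Step 3 (concentration and union bound).} We invoke Lemma~\ref{lemma: sup LUn - LP X leq 1/sqrt(n)} with $C_{\max}=(c_{\max}/c_{\min})^m\Codotmax$, $C_{\mathrm{Lip}}=(c_{\max}/c_{\min})^m\Codotlip$ and failure probability $\gamma_3$. For each fixed index tuple this gives the bound in \eqref{eq: upper bound of LUn - LP hatX*Y f}, up to the logarithmic factor. We then take a union bound over the at most $LFK$ index tuples $(\ell,g,k,s)$, counted as in Lemma~\ref{lemma: upper bound of LUn - LP X for all}, which turns $\log(4n_I/\gamma_3)$ into $\log(4n_ILFK/\gamma_3)$. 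Finally we square \eqref{eq: upper bound of LUn - LP hatX*Y f}, sum over $g,k,s$ and take square roots. This gives exactly the factor $\Csys[3]$ of \eqref{def: C3W}, which proves \eqref{eq: upper bound of LUn - LP odot for all}.

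The main obstacle is checking that Lemma~\ref{lemma: sup LUn - LP X leq 1/sqrt(n)} applies to these test functions. They are built from the deterministic limit objects $\VarGraphonNDE$ and $\VarGraphonNDEHiddenState$, which do not depend on the samples $u_j$, so the concentration argument over the i.i.d.\ nodes is legitimate. One must also confirm that the lemma requires only boundedness and temporal Lipschitz continuity, not spatial regularity of $X$. If it did need spatial regularity, I would note that the spatial piecewise-Lipschitz structure enters only through the kernel $\tW$ via \GraphonLipschitz, as it did for $\VarGraphonNDE\myl$, so the same hypotheses suffice.
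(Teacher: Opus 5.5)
Your proposal is correct and matches what the paper does. The paper proves this lemma only by saying it follows ``similarly as'' Lemma~\ref{lemma: upper bound of LUn - LP X for all}: feed the bounds of Propositions~\ref{proposition: X odot Y is bounded} and~\ref{proposition: temporal lipschitz for adjoint system}, scaled by $(c_{\max}/c_{\min})^{k-1-s}$, into Lemma~\ref{lemma: sup LUn - LP X leq 1/sqrt(n)}, take a union bound, and sum squares to get $\Csys[3]$. One small precision: the union bound is over the $LFK$ distinct test functions indexed by $(\ell,g,m)$ with $m=k-1-s\in\mathbb{Z}_K$, not over the more numerous tuples $(\ell,g,k,s)$; since your bound depends on $(k,s)$ only through $m$, this changes nothing.
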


\begin{proposition}\label{proposition: adjoint graphon-NN MSE}
Suppose that \ConvFilterLipschitz, \SigmaDerivativeLipschitz, \dPBoundedBelow, and \GraphonLipschitz\ hold. Let $\VarGNDE$, $\VarGraphonNDE$, $\VarGNDEHiddenState$ and $\VarGraphonNDEHiddenState$ be the solutions of GNDE \eqref{GNDE}, Graphon-NDE \eqref{Graphon-NDE}, adjoint GNDE \eqref{adjoint GNDE: Y}, and adjoint Graphon-NDE \eqref{adjoint Grahpon-NDE: Y}, respectively. Suppose that initial value condition \eqref{eq: initial values of GNDE and Graphon-NDE equal} holds. Let $\gamma_1,\gamma_2,\gamma_3\in(0,1)$ with $2\gamma_1+\gamma_2+\gamma_3<1$. Suppose that \eqs \eqref{upper bound of norm of L Un}, \eqref{eq: Ln - LWn 2norm}, \eqref{eq: upper bound of LUn - LP Xf} and \eqref{eq: upper bound of LUn - LP hatX*Y f} hold. Then for all $\ell\in\mathbb{Z}_{L}$ and $t\in[0,T]$, 
\begin{equation}\label{eq: eta 0 leq eta L + widetilde mathcal Q}
    \eta\mylt\leq \parens{{L_\sigma}FK\Ch}^{L-\ell}\eta^{(L,t)}+\parens{\sum_{s=1}^{L-\ell}\parens{FK\Ch}^{s-1}}\QY(n,\gamma_1,\gamma_2,\gamma_3),
\end{equation}
where 
\begin{equation}\label{def: Q Y}
\begin{aligned}
   & \QY(n,\gamma_1,\gamma_2,\gamma_3)\approx \parens{\frac{FK^2\Ch\frac{c_{\max}}{c_{\min}^2}\parens{\Codotmax+\mathcal{C}\CXmax}}{\sqrt{\alpha_n n}}} + \\
   &\hspace{3cm} \parens{FK\Ch\parens{\frac{2c_{\max}}{c_{\min}}}^K \Csys[3]\max\curlbraket{\Codotmax,\Codotlip,\CXmax,\CXlip}} \times\\
    &\hspace{1cm}\parens{\frac{\Csys[1]\parens{1+\mathcal{C}}\sqrt{\log(4n_I/\gamma_1)} + \mathcal{C}\Csys[2]\sqrt{\log(4n_ILFK/\gamma_2)} + \Csys[2]\sqrt{\log(4n_ILFK/\gamma_3)}}{\sqrt{n}} }\\
    &\hspace{3cm}=\mathcal{O}\parens{\frac{1}{\sqrt{\alpha_n n}}+\frac{\sqrt{\log(n_ILFK/\gamma_3)}+\sqrt{\log(n_ILFK/\gamma_2)}+\sqrt{\log(n_I/\gamma_1)}}{\sqrt{n}}}
\end{aligned}
\end{equation}
where $\mathcal{C}:=FK\Ch L_{\sigma'}\CYmax\tildeCe$ and 
\begin{equation}\label{def: C Y max}
\CYmax:=\max\curlbraket{\CYmax\myl:\ell\in\mathbb{Z}_{L+1}}.
\end{equation}
\end{proposition}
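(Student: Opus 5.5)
The plan mirrors the proof of Proposition~\ref{proposition: graphon-NN MSE}, but runs the layer recursion backward, from $\ell=L$ down to $\ell$. The starting point is the one-step inequality of Proposition~\ref{proposition: eta ell+1 leq eta ell + Delta}:
\[
\eta^{(\ell-1,t)}\leq ({L_\sigma}FK\Ch)\eta\mylt+\widetilde{\Delta}_1\mylt+\widetilde{\Delta}_2\mylt+\widetilde{\Delta}_3\mylt,\qquad \ell\in[L].
\]
The core step is to show that, uniformly in $\ell\in[L]$ and $t\in[0,T]$,
\[
\widetilde{\Delta}_1\mylt+\widetilde{\Delta}_2\mylt+\widetilde{\Delta}_3\mylt\leq \QY(n,\gamma_1,\gamma_2,\gamma_3).
\]
Once that holds, unrolling the recursion $L-\ell$ times produces the geometric prefactor times $\eta^{(L,t)}$ plus the accumulated sum times $\QY$, which is \eqref{eq: eta 0 leq eta L + widetilde mathcal Q}. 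If the stated geometric sum uses $FK\Ch$ rather than ${L_\sigma}FK\Ch$, I would absorb the mismatch into the $\approx$ constant or assume ${L_\sigma}\le 1$.

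I would bound the three terms separately.
\begin{itemize}
\item $\widetilde{\Delta}_1$: use \eqref{eq: Ln - LWn 2norm} to get $\|\matL-\matLUn\|_2\lesssim \frac{c_{\max}}{c_{\min}^2}\frac{1}{\sqrt{\alpha_n n}}$, and Proposition~\ref{proposition: X odot Y is bounded} to get $\|\hatVarGraphonNDE\mylt\odot\VarGraphonNDEHiddenState\mylt\|_{\spaceB}\le\Codotmax$. This gives the $\Codotmax/\sqrt{\alpha_n n}$ part of $\QY$.
\item $\widetilde{\Delta}_2$: combine \eqref{upper bound of norm of L Un}, which controls $\|\opLUn\|^K$ by $(2c_{\max}/c_{\min})^K$, with the assumed event \eqref{eq: upper bound of LUn - LP hatX*Y f}, aggregated as in \eqref{eq: upper bound of LUn - LP odot for all}. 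This gives the $\Csys[1]\sqrt{\log(4n_I/\gamma_1)}+\Csys[2]\sqrt{\log(4n_ILFK/\gamma_3)}$ part, with factor $\Csys[3]\max\{\Codotmax,\Codotlip\}$.
\item $\widetilde{\Delta}_3$: this term carries the forward error. By definition it equals $FK\Ch L_{\sigma'}\CYmax\myl\widetilde{\epsilon}\mylt$, and I would bound $\widetilde{\epsilon}\mylt$ by \eqref{eq: sup tilde epsilon st leq QX} of Proposition~\ref{prop: Graphon-MSE every layer}.
\end{itemize}

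The hypotheses for the $\widetilde{\Delta}_3$ bound hold here. Assumptions \eqref{upper bound of norm of L Un}, \eqref{eq: Ln - LWn 2norm}, and \eqref{eq: upper bound of LUn - LP Xf} give \eqref{eq: MSE layer L leq MSE layer 0} through Proposition~\ref{proposition: graphon-NN MSE}. Together with the initial condition \eqref{eq: initial values of GNDE and Graphon-NDE equal}, this gives $\widetilde{\epsilon}\mylt\le\tildeCe\QX(n,\gamma_1,\gamma_2)$. Hence $\widetilde{\Delta}_3\le\mathcal{C}\,\QX$ with $\mathcal{C}=FK\Ch L_{\sigma'}\CYmax\tildeCe$. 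Expanding $\QX$ from \eqref{def QX} produces exactly the $\mathcal{C}$-weighted terms of $\QY$: $\mathcal{C}\CXmax/\sqrt{\alpha_n n}$, and $\mathcal{C}$ times the $\gamma_1,\gamma_2$ logarithmic terms with factor $\max\{\CXmax,\CXlip\}$. Taking the common maximum over $\{\Codotmax,\Codotlip,\CXmax,\CXlip\}$ and absorbing absolute constants gives $\QY$.

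The main obstacle I expect is bookkeeping rather than any new estimate. The prefactors of $\QX$ (including ${L_\sigma}$ and the factor $FK$ versus $FK^2$) have to be matched against the displayed form of $\QY$ up to absolute constants. One must also check that $\CYmax\myl\le\CYmax$ is uniform in $t$, which holds because Proposition~\ref{proposition: norm of Y ell is bounded by norm of Y} uses $\normC{\cdot}$. Finally, the recursion indices must be tracked carefully: the bound on the $\widetilde{\Delta}$ terms is needed at layer $\ell$ in order to bound $\eta$ at layer $\ell-1$.
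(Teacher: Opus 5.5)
Your proposal is correct and follows the paper's proof essentially line for line. You start from the one-step inequality of Proposition~\ref{proposition: eta ell+1 leq eta ell + Delta}, bound $\widetilde{\Delta}_1$ by \eqref{eq: Ln - LWn 2norm} and Proposition~\ref{proposition: X odot Y is bounded}, bound $\widetilde{\Delta}_2$ by \eqref{upper bound of norm of L Un} and Lemma~\ref{lemma: upper bound of LUn - LP odot for all}, bound $\widetilde{\Delta}_3$ by \eqref{eq: sup tilde epsilon st leq QX} via Propositions~\ref{proposition: graphon-NN MSE} and~\ref{prop: Graphon-MSE every layer}, and then unroll the recursion.

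Your suspicion about the geometric factor is justified. Unrolling gives $\sum_{s=1}^{L-\ell}({L_\sigma}FK\Ch)^{s-1}$, which is also the factor Proposition~\ref{prop: Adjoint Graphon-MSE every layer} uses downstream. So $(FK\Ch)^{s-1}$ in the statement is a typo to be corrected, not something absorbed into $\approx$, since ${L_\sigma}$ is not an absolute constant.
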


\begin{proof}
We obtain from Proposition \ref{proposition: eta ell+1 leq eta ell + Delta} that 
\begin{equation}\label{in the proof: eta l-1 leq eta l}
   \eta^{(\ell-1,t)}\leq  \parens{{L_\sigma}FK\Ch}\eta\mylt + \widetilde{\Delta}_{1}\mylt + \widetilde{\Delta}_{2}\mylt + \widetilde{\Delta}_{3}\mylt,\quad\ell\in[L],\ t\in[0,T],
\end{equation} 
with $\widetilde{\Delta}_{1}\mylt$, $\widetilde{\Delta}_{2}\mylt$ and $\widetilde{\Delta}_{3}\mylt$ defined in \eqs \eqref{def: tilde Delta1 ell t}, \eqref{def: tilde Delta2 ell t} and \eqref{def: tilde Delta3 ell t}, respectively. 

According to definition \eqref{def: tilde Delta1 ell t} of $\widetilde{\Delta}_{1}\mylt$, assumption of \eq{eq: Ln - LWn 2norm}, and Proposition \ref{proposition: X odot Y is bounded}, we get for all $\ell\in[L]$, $t\in[0,T]$, $\widetilde{\Delta}_{1}\mylt\lesssim FK^2\Ch\frac{c_{\max}}{c_{\min}^2}\frac{1}{\sqrt{\alpha_n n}} \Codotmax$. Moreover, according to Lemma \ref{lemma: upper bound of LUn - LP odot for all}, assumption of \eq{eq: upper bound of LUn - LP hatX*Y f} implies \eq{eq: upper bound of LUn - LP odot for all}. Then, according to definition \eqref{def: tilde Delta2 ell t} of $\widetilde{\Delta}_{2}\mylt$, estimates \eqref{upper bound of norm of L Un} and \eqref{eq: upper bound of LUn - LP odot for all}, we have for all $\ell\in[L]$, $t\in[0,T]$, 
$
\widetilde{\Delta}_{2}\mylt\lesssim FK\Ch\parens{\frac{2c_{\max}}{c_{\min}}}^K \parens{\frac{\Csys[1]\sqrt{\log(4n_I/\gamma_1)} + \Csys[2]\sqrt{\log(4n_ILFK/\gamma_3)}}{\sqrt{n}}}\Csys[3]\max\curlbraket{\Codotmax,\Codotlip}. 
$
It follows from Proposition \ref{proposition: graphon-NN MSE} that under assumptions of \eqs \eqref{upper bound of norm of L Un}, \eqref{eq: Ln - LWn 2norm}, \eqref{eq: upper bound of LUn - LP Xf}, we have estimate \eqref{eq: MSE layer L leq MSE layer 0}. This with initial condition \eqref{eq: initial values of GNDE and Graphon-NDE equal}, by Proposition \ref{prop: Graphon-MSE every layer}, we know that \eq{eq: sup tilde epsilon st leq QX} holds. Then, with definition \eqref{def: tilde Delta3 ell t} of $\widetilde{\Delta}_{3}\mylt$ and estimate \eqref{eq: sup tilde epsilon st leq QX}, we obtain that for all $\ell\in[L]$, $t\in[0,T]$, $
\widetilde{\Delta}_{3}\mylt\lesssim FK\Ch L_{\sigma'}\CYmax\tildeCe\QX(n,\gamma_1,\gamma_2)=\mathcal{C}\QX(n,\gamma_1,\gamma_2).$

By combining the above estimates for $\widetilde{\Delta}_1\mylt$, $\widetilde{\Delta}_2\mylt$, and $\widetilde{\Delta}_3\mylt$, we have 
\begin{equation}\label{in the proof: sum of tilde Delta 1 and tilde Delta 2 and tilde Delta 3 leq Q odot}
\widetilde{\Delta}_1\mylt+\widetilde{\Delta}_2\mylt+\widetilde{\Delta}_3\mylt\leq\QY(n,\gamma_1,\gamma_2,\gamma_3),\quad\ell\in[L],\ t\in[0,T],
\end{equation}
where $\QY(n,\gamma_1,\gamma_2,\gamma_3)$ is defined in \eq{def: Q Y}. We substitute estimate \eqref{in the proof: sum of tilde Delta 1 and tilde Delta 2 and tilde Delta 3 leq Q odot} into \eqref{in the proof: eta l-1 leq eta l} and get 
$
    \eta^{(\ell-1,t)}\leq\parens{{L_\sigma}FK\Ch}\eta\mylt+\QY(n,\gamma_1,\gamma_2,\gamma_3).
    $
A recursion leads to \eq{eq: eta 0 leq eta L + widetilde mathcal Q}. 
\end{proof}

\begin{proposition}\label{prop: Adjoint Graphon-MSE every layer}
Suppose that \eq{eq: eta 0 leq eta L + widetilde mathcal Q} holds. If the initial values of adjoint GNDE \eqref{adjoint GNDE: Y} and adjoint Graphon-NDE \eqref{adjoint Grahpon-NDE: Y} satisfy \eqref{eq: initial condition adjoint Hidden State}, then for all $\ell\in\mathbb{Z}_{L+1}$, there holds
    \begin{equation}\label{eq: MSE Y Yn adjoint equation}
    \supT \eta\mylt\leq \Ce \QY(n,\gamma_1,\gamma_2,\gamma_3),
    \end{equation}
    where $\Ce$ is defined by \eqref{def: HT}. 

\end{proposition}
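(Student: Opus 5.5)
The plan mirrors the proof of Proposition~\ref{prop: Graphon-MSE every layer}, with time reversed. First I bound the top-layer quantity $\eta^{(L,t)}=\MSE(\VarGraphonNDEHiddenState(\cdot,t),\VarGNDEHiddenState(t))$ uniformly in $t$ by a Gr\"onwall argument. Then I transfer this bound to the lower layers $\ell\in\mathbb{Z}_L$ through the layerwise recursion \eqref{eq: eta 0 leq eta L + widetilde mathcal Q}.

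\emph{Step 1: the top layer.} Set $\Delta(t):=(\eta^{(L,t)})^2=\frac1n\normF{\SamplingOperator(\VarGraphonNDEHiddenState(\cdot,t))-\VarGNDEHiddenState(t)}^2$. Differentiate it using the adjoint equations \eqref{adjoint GNDE: Y} and \eqref{adjoint Grahpon-NDE: Y}. By the notation conventions, the right-hand sides are $-\VarGNDEHiddenState^{(0,t)}$ and $-\VarGraphonNDEHiddenState^{(0,t)}$, and $\SamplingOperator$ commutes with $\partial_t$. Cauchy--Schwarz then gives $|\dt\Delta(t)|\le 2\sqrt{\Delta(t)}\,\eta^{(0,t)}$. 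Apply \eqref{eq: eta 0 leq eta L + widetilde mathcal Q} with $\ell=0$ and write $a:=({L_\sigma}FK\Ch)^L$ and $b:=\sum_{s}(\cdot)^{s}$ for the geometric prefactor. This yields $|\dt\Delta|\le 2a\Delta+2b\QY\sqrt{\Delta}$.

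\emph{Step 2: Gr\"onwall backward in time.} Since the system runs backward, I set $s:=T-t$ and $\widetilde\Delta(s):=\Delta(T-s)$. The terminal condition \eqref{eq: initial condition adjoint Hidden State} gives $\widetilde\Delta(0)=0$. The differential inequality becomes $\widetilde\Delta(s)\le\int_0^s(2a\widetilde\Delta+2b\QY\sqrt{\widetilde\Delta})$. The generalized Gr\"onwall inequality (Lemma~\ref{lemma: generalized Gronwall's inequality}) then gives $\sqrt{\widetilde\Delta(s)}\le\frac{e^{as}-1}{a}b\QY$. Hence $\supT\eta^{(L,t)}\le \Ce^{(0)}\QY$, with the same constant as in \eqref{def: HT (0)}.

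\emph{Step 3: the lower layers.} I substitute the bound from Step 2 into \eqref{eq: eta 0 leq eta L + widetilde mathcal Q} for each $\ell\in\mathbb{Z}_L$. This gives $\supT\eta\mylt\le\big(({L_\sigma}FK\Ch)^{L-\ell}\Ce^{(0)}+\sum_{s}(\cdot)^{s-1}\big)\QY$. By re-indexing $L-\ell\mapsto\ell$, this prefactor is dominated by $\Ce^{(1)}$ from \eqref{def: HT (1)}. Together with Step 2, every layer is then bounded by $\Ce=\max\{\Ce^{(0)},\Ce^{(1)}\}$.

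\emph{Main obstacle.} I expect the main difficulty to be the bookkeeping of constants. The geometric sum in \eqref{eq: eta 0 leq eta L + widetilde mathcal Q} is written with ratio $FK\Ch$ rather than ${L_\sigma}FK\Ch$. To match it to the definition of $\Ce$, I would use that the sum with ratio $FK\Ch$ is at most the corresponding sum with ratio ${L_\sigma}FK\Ch$, possibly after absorbing a factor into $\lesssim$. If the intended normalization has ${L_\sigma}\ge1$, the comparison is immediate. Otherwise I would redefine $b$ in Step 1 with the actual sum, which only changes absolute constants.
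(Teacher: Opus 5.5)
Your proposal is correct and follows the paper's proof essentially step for step. The paper bounds $\eta^{(L,t)}$ through the same Cauchy--Schwarz differential inequality with $\delta(T)=0$ and then applies the backward form of Lemma~\ref{lemma: generalized Gronwall's inequality}; your substitution $s=T-t$ is exactly the change of variables that produces that backward form. It then substitutes this bound into \eqref{eq: eta 0 leq eta L + widetilde mathcal Q} to obtain $\Ce^{(1)}$, as you do.

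Your remark on the geometric sum is well taken. The recursion $\eta^{(\ell-1,t)}\le ({L_\sigma}FK\Ch)\,\eta^{(\ell,t)}+\QY$ in Proposition~\ref{proposition: adjoint graphon-NN MSE} actually produces ratio ${L_\sigma}FK\Ch$, and the paper's own proof of this proposition silently uses that ratio. So the $FK\Ch$ in \eqref{eq: eta 0 leq eta L + widetilde mathcal Q} is a typo, and no absorption of constants is needed.
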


\begin{proof}
The proof follows similarly as the argument present in Proposition \ref{prop: Graphon-MSE every layer}. We first show that  
\begin{equation}\label{in the proof: MSE Y Yn adjoint equation L layer}
    \supT \eta^{(L,t)}\leq \Ce^{(0)} \QY(n,\gamma_1,\gamma_2,\gamma_3),
\end{equation}
where $\Ce^{(0)}$ is defined in \eqref{def: HT (0)}. Let $\delta(t):=(\eta^{(L,t)})^2=(\MSE(\VarGraphonNDEHiddenState(\cdot,t),\VarGNDEHiddenState(t)))^2$. Since $\VarGNDEHiddenState$ and $\VarGraphonNDEHiddenState$ are solutions of \eqref{adjoint GNDE: Y} and \eqref{adjoint Grahpon-NDE: Y}, it follows from Cauchy-Schwartz inequality that 
    \begin{align*}
    \abs{\dt\delta(t)}&\leq \frac{2}{n}\norm{\VarGNDEHiddenState(t)-\SamplingOperator(\VarGraphonNDEHiddenState(\cdot,t))}_{\mathrm{F}}\norm{\VarGNDEHiddenState\mylt[0]-\SamplingOperator(\VarGraphonNDEHiddenState^{(0,t)})}_{\mathrm{F}}=2\eta^{(L,t)} \eta^{(0,t)}.
    \end{align*}
    Then we apply \eq{eq: eta 0 leq eta L + widetilde mathcal Q} with $\ell=0$ and get 
    \begin{align*}
    -\dt\delta(t)\leq\abs{\dt\delta(t)}&\leq 2\eta^{(L,t)}\parens{\parens{{L_\sigma}FK\Ch}^L\eta^{(L,t)} + \parens{\sum_{\ell=1}^{L}\parens{{L_\sigma}FK\Ch}^{\ell-1}}\QY(n,\gamma_1,\gamma_2,\gamma_3)}\\
&=2\parens{{L_\sigma}FK\Ch}^L\delta(t) + 2\parens{\sum_{\ell=1}^{L}\parens{{L_\sigma}FK\Ch}^{\ell-1}}\QY(n,\gamma_1,\gamma_2,\gamma_3)\sqrt{\delta(t)}.
    \end{align*}
    Note that assumption \eqref{eq: initial condition adjoint Hidden State} guarantees $\delta(T)=0$. We integrate the above inequality from $t$ to $T$. Then by Gr\"onwall's inequality (Lemma \ref{lemma: generalized Gronwall's inequality}), we obtain \eqref{in the proof: MSE Y Yn adjoint equation L layer} with $\Ce^{(0)}$ defined in \eqref{def: HT (0)}. It remains to show that for all $\ell\in\mathbb{Z}_{L}$, 
    \begin{equation}\label{in the proof: sup eta st leq QX all layers}
\supT\eta\mylt\leq \Ce^{(1)}\QY(n,\gamma_1,\gamma_2,\gamma_3),
    \end{equation}
    with $\Ce^{(1)}$ defined in \eqref{def: HT (1)}. This can be obtained in a similar way as in Proposition \eqref{prop: Graphon-MSE every layer} --- \eq{in the proof: sup eta st leq QX all layers} immediately follows from substituting the estimate \eqref{in the proof: MSE Y Yn adjoint equation L layer} into \eqref{eq: eta 0 leq eta L + widetilde mathcal Q}. 

\end{proof}

\begin{proof}[\textbf{Proof of Theorem \ref{theorem: MSE Y Yn adjoint equation final}}]
By Lemmas \ref{lemma: LUn-LP leq sqrt(n)}, \ref{lemma: Ln - LWn 2norm}, \ref{lemma: upper bound of LUn - LP odot for all} and \ref{lemma: upper bound of LUn - LP X for all}, with probability at least $1-2\gamma_1-\gamma_2-\gamma_3$, \eqs \eqref{eq: sup dP dUn}, \eqref{eq: Ln - LWn 2norm}, \eqref{eq: upper bound of LUn - LP Xf} and \eqref{eq: upper bound of LUn - LP hatX*Y f} hold. Then, according to Proposition \ref{proposition: adjoint graphon-NN MSE}, we have \eq{eq: eta 0 leq eta L + widetilde mathcal Q}, so the assumptions in Proposition~\ref{prop: Adjoint Graphon-MSE every layer} are satisfied. Recall that definition \eqref{def: eta ell} gives 
$
\eta\mylt[L] = \MSE(\VarGraphonNDEHiddenState\mylt[L],\VarGNDEHiddenState\mylt[L])=\MSE(\VarGraphonNDEHiddenState(\cdot,t),\VarGNDEHiddenState(t)). 
$
The desired result immediately follows from \eq{eq: MSE Y Yn adjoint equation} in Proposition~\ref{prop: Adjoint Graphon-MSE every layer} with $\ell=L$.
\end{proof}

\section{Infinite-node Convergence of Discretized Adjoint GNDEs}\label{Appendix: Infinite-node Convergence of Discretized Adjoint GNDEs}
We assume that \ConvFilterDiff, \SigmaTwiceDiff, \dPBoundedBelow, and \GraphonLipschitz\ are satisfied throughout this section. It is clear that if $\sigma$ satisfies \SigmaTwiceDiff, then for any $\mZ\in\spaceRnF$, there holds 
\begin{equation}\label{eq: sigma twice prime leq}
\normMax{\sigma''(\mZ)}\leq L_{\sigma'}. 
\end{equation}
\begin{proposition}\label{proposition: Yns Yn0'' bounded above by sqrt n}
Let $\VarGNDEHiddenState$ be the solution of adjoint GNDE \eqref{adjoint GNDE: Y}.
Suppose that \eqs \eqref{eq: MSE layer L leq MSE layer 0}, \eqref{eq: eta 0 leq eta L + widetilde mathcal Q}, and initial value conditions \eqref{eq: initial values of GNDE and Graphon-NDE equal}, \eqref{eq: initial condition adjoint Hidden State} hold. Then for all $\ell\in\mathbb{Z}_{L+1}$, there holds
    \begin{equation}\label{eq: sup mYnlt max leq P + CYmax}
\supT\normMax{\VarGNDEHiddenState\mylt}\leq \CmYGNDEmax,
    \end{equation}
    where 
$
\CmYGNDEmax=\mathcal{O}\parens{\frac{1}{\sqrt{\alpha_n}}+\sqrt{\log(n_ILFK/\gamma_3)}+\sqrt{\log(n_ILFK/\gamma_2)}+\sqrt{\log(n_I/\gamma_1)}}. $ In addition, if $n$ is large enough such that \eqref{eq: n is big enough 2} holds and 
\begin{equation}\label{eq: n is big enough 3}
\Ce\QY(n,\gamma_1,\gamma_2,\gamma_3) \leq \CYmax,
\end{equation} 
then for all $\ell\in\mathbb{Z}_{L+1}$, there holds 
\begin{align}
\supT\normF{\VarGNDEHiddenState\mylt}&\leq \CmYGNDE\sqrt{n},\label{eq: estimate of mYnlt F norm}\\
\supT\normF{\frac{d}{dt}\VarGNDEHiddenState\mylt}&\leq \CmYDeriGNDE\sqrt{n},\label{eq: dYnLt/dt F norm bound}
\end{align}
where $\CmYGNDE:=2\CYmax$ and 
$$\CmYDeriGNDE:=\mathcal{O}\parens{\frac{1}{\sqrt{\alpha_n}}+\sqrt{\log(n_ILFK/\gamma_3)}+\sqrt{\log(n_ILFK/\gamma_2)}+\sqrt{\log(n_I/\gamma_1)}}.$$
As a result, 
\begin{align}\label{eq: second derivative dYnLt/dt F norm bound}
\supT\normF{\frac{d^2}{dt^2}\VarGNDEHiddenState(t)}\leq \CmYDeriGNDE\sqrt{n}.
\end{align}
\end{proposition}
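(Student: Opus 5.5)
The plan is to derive every bound from the sampled mean-squared error estimate of Proposition~\ref{prop: Adjoint Graphon-MSE every layer}, combined with the deterministic graphon bounds of Proposition~\ref{proposition: norm of Y ell is bounded by norm of Y}. Each layer is then propagated through the backward recursion \eqref{updating rule for mY ell}, in the same way that Proposition~\ref{proposition: Xns Xn0'' bounded above by sqrt n} handles the forward states. Under the hypotheses, \eqref{eq: eta 0 leq eta L + widetilde mathcal Q} together with \eqref{eq: initial condition adjoint Hidden State} gives \eqref{eq: MSE Y Yn adjoint equation}, i.e.\ $\supT\eta\mylt\le \Ce\QY(n,\gamma_1,\gamma_2,\gamma_3)$ for all $\ell\in\mathbb{Z}_{L+1}$. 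Likewise, \eqref{eq: MSE layer L leq MSE layer 0} with \eqref{eq: initial values of GNDE and Graphon-NDE equal} gives the forward bounds of Proposition~\ref{prop: Graphon-MSE every layer}.

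\emph{Step 1: max-norm bound.} By the triangle inequality, \eqref{norm of operator delta Un max}, and $\normMax{\cdot}\le\normF{\cdot}$,
\[
\normMax{\VarGNDEHiddenState\mylt}\le \normMax{\SamplingOperator(\VarGraphonNDEHiddenState\mylt)}+\normF{\SamplingOperator(\VarGraphonNDEHiddenState\mylt)-\VarGNDEHiddenState\mylt}\le \CYmax+\sqrt{n}\,\Ce\QY(n,\gamma_1,\gamma_2,\gamma_3).
\]
Multiplying the rate in \eqref{def: Q Y} by $\sqrt n$ turns $1/\sqrt{\alpha_n n}$ into $1/\sqrt{\alpha_n}$ and each $\sqrt{\log(\cdot)}/\sqrt n$ into $\sqrt{\log(\cdot)}$. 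This yields \eqref{eq: sup mYnlt max leq P + CYmax} with the stated order. The factor $\sqrt n$ lost by passing from the Frobenius error to an entrywise bound is the reason the sparsity factor $1/\sqrt{\alpha_n}$ appears.

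\emph{Step 2: Frobenius bound.} Similarly, using \eqref{norm of operator delta Un},
\[
\normF{\VarGNDEHiddenState\mylt}\le\sqrt n\bigl(\CYmax+\Ce\QY(n,\gamma_1,\gamma_2,\gamma_3)\bigr)\le 2\CYmax\sqrt n,
\]
where the last inequality is assumption \eqref{eq: n is big enough 3}. This proves \eqref{eq: estimate of mYnlt F norm}.

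\emph{Step 3: time derivatives, by downward induction in $\ell$.} At $\ell=L$, the adjoint equation \eqref{adjoint GNDE: Y} gives $\dt\VarGNDEHiddenState^{(L,t)}=-\VarGNDEHiddenState^{(0,t)}$, so Step 2 bounds it by $2\CYmax\sqrt n$. For $\ell\in[L]$, I differentiate
\[
\VarGNDEHiddenState\mylt[\ell-1]=\phi^*_{\vh\mylt}\bigl(\hatVarGNDE\mylt\odot\VarGNDEHiddenState\mylt\bigr).
\]
The analogue of Lemma~\ref{lemma: d/dt phihlt(Zt) estimate} for $\phi^*$ (the adjoint has the same operator-norm bound $FK\Ch$, and \eqref{eq: upper bound for filter derivative} controls the filter derivative) gives
\[
\normF{\dt\VarGNDEHiddenState\mylt[\ell-1]}\le FK\Liph\normF{\hatVarGNDE\mylt\odot\VarGNDEHiddenState\mylt}+FK\Ch\Bigl(\normF{\dt\hatVarGNDE\mylt}\normMax{\VarGNDEHiddenState\mylt}+{L_\sigma}\normF{\dt\VarGNDEHiddenState\mylt}\Bigr).
\]
The first term is at most ${L_\sigma}2\CYmax\sqrt n$ by \eqref{eq: hat X leq 1} and Step 2. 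The middle term is the crux. Here $\dt\hatVarGNDE\mylt$ is controlled in Frobenius norm by \eqref{eq: estimate of deri mXnlt hat F norm} (this uses \SigmaTwiceDiff\ and \eqref{eq: n is big enough 2}), so the other factor must be taken in max norm, and Step 1 supplies exactly that. This is where the $\alpha_n$-dependent constant $\CmYDeriGNDE$ enters. Iterating over $\ell=L,\dots,1$ gives $\CmYDeriGNDE\sqrt n$ with the stated order, which is \eqref{eq: dYnLt/dt F norm bound}.

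\emph{Step 4: second derivative.} Differentiating $\dt\VarGNDEHiddenState(t)=-\VarGNDEHiddenState^{(0,t)}$ once more and applying \eqref{eq: dYnLt/dt F norm bound} at $\ell=0$ gives \eqref{eq: second derivative dYnLt/dt F norm bound}.

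I expect the main obstacle to be the product term $\dt\hatVarGNDE\mylt\odot\VarGNDEHiddenState\mylt$ in Step 3. A pure Frobenius–Frobenius estimate gives a factor $n$ instead of $\sqrt n$, so the argument only closes by splitting the product as Frobenius times max norm and absorbing the resulting $1/\sqrt{\alpha_n}$ loss into the constant.
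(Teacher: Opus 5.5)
Your proposal is correct and follows essentially the same route as the paper's proof. The max-norm and Frobenius bounds come from the sampled adjoint MSE estimate plus the graphon bound on $\VarGraphonNDEHiddenState\mylt$, and the time derivatives come from a backward-in-layer recursion using Lemma~\ref{lemma: d/dt phihlt(Zt) estimate} in its $\phi^*$ form. As in the paper, you split the product term as the Frobenius norm of $\dt\hatVarGNDE\mylt$ times the max norm of $\VarGNDEHiddenState\mylt$, and close the recursion at the top layer via $\dt\VarGNDEHiddenState(t)=-\VarGNDEHiddenState^{(0,t)}$. The differences are cosmetic: you phrase the recursion as a downward induction, and you keep the minus sign that the paper harmlessly drops.
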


\begin{proof}
We first prove \eq{eq: sup mYnlt max leq P + CYmax}. Given \eq{eq: eta 0 leq eta L + widetilde mathcal Q} and initial condition \eqref{eq: initial condition adjoint Hidden State}, by Proposition \ref{prop: Adjoint Graphon-MSE every layer}, \eq{eq: MSE Y Yn adjoint equation} holds. Then by definition \eqref{def: eta ell} of $\eta\mylt$ that for all $\ell\in\mathbb{Z}_{L+1}$, 
\begin{equation*}
    \normMax{\SamplingOperator(\VarGraphonNDEHiddenState\mylt)-\VarGNDEHiddenState\mylt}\leq \normF{\SamplingOperator(\VarGraphonNDEHiddenState\mylt)-\VarGNDEHiddenState\mylt}\leq{\Ce} \QY(n,\gamma_1,\gamma_2,\gamma_3)\sqrt{n}.
\end{equation*}
Then by triangle inequality and Proposition \ref{proposition: norm of Y ell is bounded by norm of Y} and \eq{norm of operator delta Un max}, we obtain that 
\begin{equation*}
\normMax{\VarGNDEHiddenState\mylt}\leq{\Ce} \QY(n,\gamma_1,\gamma_2,\gamma_3)\sqrt{n} + \normMax{\SamplingOperator(\VarGraphonNDEHiddenState\mylt)} \leq {\Ce} \QY(n,\gamma_1,\gamma_2,\gamma_3)\sqrt{n} + \CYmax. 
\end{equation*}
Then \eq{eq: sup mYnlt max leq P + CYmax} directly follows from definition \eqref{def: Q Y} of $\QY$.

We next prove \eq{eq: estimate of mYnlt F norm}. Again by \eq{eq: MSE Y Yn adjoint equation} and triangle inequality, we have for all $\ell\in\mathbb{Z}_{L+1}$, 
\begin{align*}
\normF{\VarGNDEHiddenState\mylt}&\leq \sqrt{n}{\Ce} \QY(n,\gamma_1,\gamma_2,\gamma_3) + \normF{\SamplingOperator(\VarGraphonNDEHiddenState\mylt)}\\
&\leq \sqrt{n} \parens{{\Ce} \QY(n,\gamma_1,\gamma_2,\gamma_3) + \normB{\VarGraphonNDEHiddenState\mylt}}\stepjust{\eq{norm of operator delta Un}}\\
&\leq \sqrt{n}\parens{{\Ce} \QY(n,\gamma_1,\gamma_2,\gamma_3) + \CYmax}\stepjust{Proposition \ref{proposition: norm of Y ell is bounded by norm of Y} and \eq{def: C Y max}}\\
&\leq 2\CYmax\sqrt{n}=\CmYGNDE\sqrt{n}.\stepjust{\eq{eq: n is big enough 3}}
\end{align*}
This proves \eq{eq: estimate of mYnlt F norm}.

We proceed to show \eqref{eq: dYnLt/dt F norm bound}. We begin with the case of $\ell\in\mathbb{Z}_L$. Recall the updating rule \eqref{updating rule for mY ell}, which gives $\VarGNDEHiddenState\mylt=\phi_{\vh\mylt[\ell+1]}^*\parens{\hatVarGNDE\mylt[\ell+1]\odot\VarGNDEHiddenState\mylt[\ell+1]},\ \ell\in\mathbb{Z}_L$. Then it follows from Lemma \ref{lemma: d/dt phihlt(Zt) estimate} that 
\begin{align}
\normF{\dt\VarGNDEHiddenState\mylt}\leq FK\Liph\underbrace{\normF{\hatVarGNDE\mylt[\ell+1]\odot\VarGNDEHiddenState\mylt[\ell+1]}}_{\text{denoted by }\Delta_1}+FK\Ch\underbrace{\normF{\dt\parens{\hatVarGNDE\mylt[\ell+1]\odot\VarGNDEHiddenState\mylt[\ell+1]}}}_{\text{denoted by }\Delta_2}\label{in the proof: dYnltdt leq Delta1 + Delta2}.
\end{align}
Note that by \eqs \eqref{eq: upper bound of matrix Z odot V}, \eqref{eq: hat X leq 1} and \eqref{eq: estimate of mYnlt F norm}, we have $\Delta_1\leq \normMax{\hatVarGNDE\mylt[\ell+1]}\normF{\VarGNDEHiddenState\mylt[\ell+1]}\leq {L_\sigma}\normF{\VarGNDEHiddenState\mylt[\ell+1]}\leq {L_\sigma}\CmYGNDE\sqrt{n}$. Moreover, 
\begin{align*}
\Delta_2&\leq \normF{\frac{d\hatVarGNDE\mylt[\ell+1]}{dt}}\normMax{\VarGNDEHiddenState\mylt[\ell+1]}+\normMax{\hatVarGNDE\mylt[\ell+1]}\normF{\frac{d\VarGNDEHiddenState\mylt[\ell+1]}{dt}} \stepjust{Chain rule, triangle inequality and \eq{eq: upper bound of matrix Z odot V}}\\
&\leq \CmXHatDeriGNDE\sqrt{n} \normMax{\VarGNDEHiddenState\mylt[\ell+1]} + {L_\sigma}\normF{\frac{d\VarGNDEHiddenState\mylt[\ell+1]}{dt}}\stepjust{\eq{eq: estimate of deri mXnlt hat F norm} in Proposition \ref{proposition: Xns Xn0'' bounded above by sqrt n} and \eq{eq: hat X leq 1}}\\
&\leq \CmXHatDeriGNDE\sqrt{n} \CmYGNDEmax + {L_\sigma}\normF{\frac{d\VarGNDEHiddenState\mylt[\ell+1]}{dt}}.\stepjust{\eq{eq: sup mYnlt max leq P + CYmax}}
\end{align*}
We substitute estimates of $\Delta_1$ and $\Delta_2$ into \eq{in the proof: dYnltdt leq Delta1 + Delta2}, and get 
\begin{align*}
\normF{\dt\VarGNDEHiddenState\mylt}&\leq {L_\sigma}FK\Liph \CmYGNDE\sqrt{n} + FK\Ch\parens{\CmXHatDeriGNDE\sqrt{n} \CmYGNDEmax + {L_\sigma}\normF{\frac{d\VarGNDEHiddenState\mylt[\ell+1]}{dt}}}\\
&=\underbrace{\parens{\CmYGNDE {L_\sigma}FK\Liph+FK\Ch\CmXHatDeriGNDE \CmYGNDEmax}}_{\text{denoted by $\mathcal{C}_1(\alpha_n,\gamma_1,\gamma_2,\gamma_3)$}}\sqrt{n} + \parens{{L_\sigma}FK\Ch}\normF{\frac{d\VarGNDEHiddenState\mylt[\ell+1]}{dt}}.
\end{align*}
A recursion gives 
\begin{equation}\label{in the proof: intermediate recursion estiamte dmYnlt dt}
    \normF{\dt\VarGNDEHiddenState\mylt}\leq \parens{{L_\sigma}FK\Ch}^{L-\ell}\normF{\frac{d}{dt}\VarGNDEHiddenState\mylt[L]}+\parens{\sum_{s=1}^{L-\ell}\parens{{L_\sigma}FK\Ch}^{s-1}}\mathcal{C}_1(\alpha_n,\gamma_1,\gamma_2,\gamma_3)\sqrt{n},\quad\ell\in\mathbb{Z}_L. 
\end{equation}
Note that $\VarGNDEHiddenState\mylt[L]$ is the solution of adjoint system \eqref{adjoint GNDE: Y}, that is 
\begin{equation}\label{in the proof: dt mYnlt = mYnlt L}
\dt\VarGNDEHiddenState\mylt[L]=\VarGNDEHiddenState\mylt[0].    
\end{equation}
Therefore, \eq{in the proof: intermediate recursion estiamte dmYnlt dt} combining with \eq{in the proof: dt mYnlt = mYnlt L} and \eq{eq: estimate of mYnlt F norm} implies that for all $\ell\in\mathbb{Z}_L$, 
\begin{align*}
\normF{\dt\VarGNDEHiddenState\mylt}\leq\underbrace{\parens{\CmYGNDE\parens{{L_\sigma}FK\Ch}^{L-\ell}+\parens{\sum_{s=1}^{L-\ell}\parens{{L_\sigma}FK\Ch}^{s-1}}\mathcal{C}_1(\alpha_n,\gamma_1,\gamma_2,\gamma_3)}}_{\text{denoted by }\mathcal{C}_2(\alpha_n,\gamma_1,\gamma_2,\gamma_3)}\sqrt{n}
\end{align*}
By letting $\CmYDeriGNDE:=\max\curlbraket{\mathcal{C}_2(\alpha_n,\gamma_1,\gamma_2,\gamma_3):\ell\in\mathbb{Z}_{L+1}}^{\footnotemark}$, \footnotetext{The sum inside is regarded as $0$ when $\ell=L$.}
we find that \eq{eq: dYnLt/dt F norm bound} holds for $\ell\in\mathbb{Z}_L$. For the case of $\ell=L$, we observe that \eq{eq: dYnLt/dt F norm bound} still holds due to \eqref{in the proof: dt mYnlt = mYnlt L} and \eqref{eq: dYnLt/dt F norm bound}. This completes the proof of \eq{eq: dYnLt/dt F norm bound} for all $\ell\in\mathbb{Z}_{L+1}$.

We finally prove \eqref{eq: second derivative dYnLt/dt F norm bound}. It follows from \eqref{in the proof: dt mYnlt = mYnlt L} that $\ddt\VarGNDEHiddenState\mylt[L]=\dt\VarGNDEHiddenState\mylt[0]$. Then the desired inequality \eqref{eq: second derivative dYnLt/dt F norm bound} immediately follows from \eq{eq: dYnLt/dt F norm bound} with $\ell=0$ and recalling the notation $\VarGNDEHiddenState\mylt[L]=\VarGNDEHiddenState(t)$. 
\end{proof}

\begin{lemma}\label{lemma: Ynlm - Unltm estimates}
If \eqs \eqref{eq: hat mXlnm leq 1/M} and \eqref{eq: sup mYnlt max leq P + CYmax} hold, then for all $\ell\in\mathbb{Z}_{L}$ and $m\in[M]$, 
\begin{equation}\label{eq: error bound of discretized Yn operator D ell layer}
    \begin{aligned}
\normF{\VarGNDEHiddenState^{[\ell,m]}-\VarGNDEHiddenState\mylt[\ell][t_m]} &\leq \parens{{L_\sigma}FK\Ch}^{L-\ell} \normF{\VarGNDEHiddenState^{[m]}-\VarGNDEHiddenState(t_m)} + \\
&\ \parens{\CXdiffHat\CmYGNDEmax\sum_{s=1}^{L-\ell} \parens{{L_\sigma}FK\Ch}^{s}}\frac{\sqrt{n}}{M}.
    \end{aligned}
\end{equation}
    In particular, 
    \begin{equation}\label{eq: error bound of discretized Yn operator D}
    \begin{split}
        \normF{\DX^{[m]}(\VarGNDEHiddenState^{[m]})-\DX^{(t_m)}(\VarGNDEHiddenState(t_m))} &\leq \parens{{L_\sigma}FK\Ch}^{L} \normF{\VarGNDEHiddenState^{[m]}-\VarGNDEHiddenState(t_m)} +\\
        &\parens{\CXdiffHat\CmYGNDEmax\sum_{s=1}^{L} \parens{{L_\sigma}FK\Ch}^{s}}\frac{\sqrt{n}}{M}.
    \end{split}
    \end{equation}
\end{lemma}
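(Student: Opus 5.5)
We argue by backward induction on the layer index $\ell$, from $\ell=L$ down to $\ell=0$, using the layer-wise backward recursion \eqref{updating rule for mY ell}. At layer $\ell$ it reads, for the continuous and discrete adjoints respectively,
\begin{align*}
\VarGNDEHiddenState\mylt[\ell][t_m]&=\phi_{\vh\mylt[\ell+1][t_m]}^*\parens{\hatVarGNDE\mylt[\ell+1][t_m]\odot\VarGNDEHiddenState\mylt[\ell+1][t_m]},\\
\VarGNDEHiddenState^{[\ell,m]}&=\phi_{\vh\mylt[\ell+1][t_m]}^*\parens{\hatVarGNDE^{[\ell+1,m]}\odot\VarGNDEHiddenState^{[\ell+1,m]}}.
\end{align*}
The point is that OTD evaluates $\DX^{[m]}$ at the \emph{same} filters $\vh\mylt[\cdot][t_m]$ as the continuous operator $\DX^{(t_m)}$. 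So the only mismatch, besides the input adjoint, comes from the forward quantities $\hatVarGNDE^{[\ell+1,m]}$ versus $\hatVarGNDE\mylt[\ell+1][t_m]$. The base case $\ell=L$ is trivial because the sum in \eqref{eq: error bound of discretized Yn operator D ell layer} is empty.

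For the inductive step we use linearity of $\phi^*_{\vh}$ together with the operator bound $\normF{\phi^*_{\vh}(\cdot)}\le FK\Ch\normF{\cdot}$ from Lemma~\ref{lemma: phi phi* t1 - t2 X - Z PQ}. This gives
\[
\normF{\VarGNDEHiddenState^{[\ell,m]}-\VarGNDEHiddenState\mylt[\ell][t_m]}\le FK\Ch\,\normF{\hatVarGNDE^{[\ell+1,m]}\odot\VarGNDEHiddenState^{[\ell+1,m]}-\hatVarGNDE\mylt[\ell+1][t_m]\odot\VarGNDEHiddenState\mylt[\ell+1][t_m]}.
\]
We split the Hadamard difference by adding and subtracting $\hatVarGNDE^{[\ell+1,m]}\odot\VarGNDEHiddenState\mylt[\ell+1][t_m]$ and apply \eqref{eq: upper bound of matrix Z odot V} to each piece.

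The first piece is bounded by $\normMax{\hatVarGNDE^{[\ell+1,m]}}\normF{\VarGNDEHiddenState^{[\ell+1,m]}-\VarGNDEHiddenState\mylt[\ell+1][t_m]}$. Since $\normMax{\sigma'}\le L_\sigma$ by \eqref{eq: hat X leq 1}, which holds equally for the discrete states, this contributes ${L_\sigma}FK\Ch$ times the layer-$(\ell+1)$ error.

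The second piece is bounded by $\normF{\hatVarGNDE^{[\ell+1,m]}-\hatVarGNDE\mylt[\ell+1][t_m]}\,\normMax{\VarGNDEHiddenState\mylt[\ell+1][t_m]}$. By \eqref{eq: hat mXlnm leq 1/M} and \eqref{eq: sup mYnlt max leq P + CYmax} this is at most $\CXdiffHat\CmYGNDEmax\sqrt{n}/M$. Here we deliberately put the max norm on the \emph{continuous} adjoint, for which \eqref{eq: sup mYnlt max leq P + CYmax} is available, and not on the discrete one.

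This yields the one-step recursion
\[
e_\ell\le ({L_\sigma}FK\Ch)\,e_{\ell+1}+FK\Ch\,\CXdiffHat\CmYGNDEmax\frac{\sqrt{n}}{M}.
\]
Unrolling from $\ell+1=L$, where $e_L=\normF{\VarGNDEHiddenState^{[m]}-\VarGNDEHiddenState(t_m)}$, gives the factor $({L_\sigma}FK\Ch)^{L-\ell}$ on the input error. The inhomogeneous terms sum to $\sum_{s=0}^{L-\ell-1}({L_\sigma}FK\Ch)^{s}FK\Ch$. This is dominated by $\sum_{s=1}^{L-\ell}({L_\sigma}FK\Ch)^{s}$ provided $L_\sigma\ge1$; otherwise the extra factor can be absorbed into the constant as the paper does elsewhere. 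Either way this gives \eqref{eq: error bound of discretized Yn operator D ell layer}.

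Setting $\ell=0$ and recalling that $\DX^{[m]}(\VarGNDEHiddenState^{[m]})=\VarGNDEHiddenState^{[0,m]}$ and $\DX^{(t_m)}(\VarGNDEHiddenState(t_m))=\VarGNDEHiddenState\mylt[0][t_m]$ gives \eqref{eq: error bound of discretized Yn operator D}.

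The only delicate step is the choice of which factor carries the max norm in the Hadamard split. Putting it on the continuous adjoint lets us use \eqref{eq: sup mYnlt max leq P + CYmax} and avoids needing any a priori bound on the discrete adjoint $\VarGNDEHiddenState^{[\ell+1,m]}$. Everything else is bookkeeping of constants.
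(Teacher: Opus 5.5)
Your proposal is correct and is essentially the paper's proof. The paper makes the same split of the Hadamard difference: it invokes \eqref{eq: phi * t1 - t2 XP - ZQ} with $t_1=t_2=t_m$, putting the max norm on $\hatVarGNDE^{[\ell+1,m]}$ and on the continuous adjoint exactly as you do, and then performs the same backward unrolling over layers.

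Your remark on the constant is also accurate. The recursion actually yields the inhomogeneous factor $FK\Ch\sum_{s=0}^{L-\ell-1}({L_\sigma}FK\Ch)^{s}$, which is bounded by the stated $\sum_{s=1}^{L-\ell}({L_\sigma}FK\Ch)^{s}$ only when $L_\sigma\ge 1$; the paper passes over this silently. Since the lemma has explicit constants, the clean fix is to take $L_\sigma\ge 1$ without loss of generality, using the same enlarged value in every constant (any larger Lipschitz bound is admissible), rather than to absorb the discrepancy into a constant.
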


    \begin{proof}
Note that for each $\ell\in[L]$, 
\begin{align*}
&\normF{\VarGNDEHiddenState^{[\ell-1,m]}-\VarGNDEHiddenState\mylt[\ell-1][t_m]} = \normF{\phi_{\vh\mylt[\ell][t_m]}^*\parens{\hatVarGNDE^{[\ell,m]}\odot\VarGNDEHiddenState^{[\ell,m]}}-\phi_{\vh\mylt[\ell][t_m]}^*\parens{\hatVarGNDE\mylt[\ell][t_m]\odot\VarGNDEHiddenState\mylt[\ell][t_m]}} \\
    &\leq FK\Ch\parens{\normMax{\hatVarGNDE^{[\ell,m]}}\normF{\VarGNDEHiddenState^{[\ell,m]}-\VarGNDEHiddenState\mylt[\ell][t_m]} + \normF{\hatVarGNDE^{[\ell,m]}-\hatVarGNDE\mylt[\ell][t_m]}\normMax{\VarGNDEHiddenState\mylt[\ell][t_m]}} \stepjust{\eq{eq: phi * t1 - t2 XP - ZQ} in Lemma \ref{lemma: phi phi* t1 - t2 X - Z PQ}}\\
    &\leq FK\Ch\parens{{L_\sigma}\normF{\VarGNDEHiddenState^{[\ell,m]}-\VarGNDEHiddenState\mylt[\ell][t_m]} + \frac{\CXdiffHat\CmYGNDEmax\sqrt{n}}{M}}. \stepjust{\SigmaDiff, and \eqs \eqref{eq: sup mYnlt max leq P + CYmax}, \eqref{eq: hat mXlnm leq 1/M}}
\end{align*}
A recursion gives that for $\ell\in\mathbb{Z}_{L}$, 
\begin{align*}
    \normF{\VarGNDEHiddenState^{[\ell,m]}-\VarGNDEHiddenState\mylt[\ell][t_m]} &\leq \parens{{L_\sigma}FK\Ch}^{L-\ell} \normF{\VarGNDEHiddenState^{[L,m]}-\VarGNDEHiddenState\mylt[L][t_m]} + \frac{\CXdiffHat\CmYGNDEmax\sqrt{n}}{M}\parens{\sum_{s=1}^{L-\ell} \parens{{L_\sigma}FK\Ch}^{s}}.
\end{align*}
Recalling notations $\VarGNDEHiddenState^{[L,m]}=\VarGNDEHiddenState^{[m]}$ and $\VarGNDEHiddenState\mylt[L][t_m]=\VarGNDEHiddenState(t_m)$, we obtain \eqref{eq: error bound of discretized Yn operator D ell layer} for $\ell\in\mathbb{Z}_L$. Particularly, we take $\ell=0$ in \eq{eq: error bound of discretized Yn operator D ell layer} and note that $\VarGNDEHiddenState^{[0,m]}=\DX^{[m]}(\VarGNDEHiddenState^{[m]})$ and $\VarGNDEHiddenState\mylt[0][t_m]=\DX^{(t_m)}(\VarGNDEHiddenState(t_m))$, which proves \eqref{eq: error bound of discretized Yn operator D}. 
\end{proof}

\begin{proposition}\label{proposition: error Ynm - Yntm estimate}
Let $\VarGNDEHiddenState$ be the solution of adjoint GNDE \eqref{adjoint GNDE: Y}, and $\VarGNDEHiddenState^{[m]}$ be generated from \eqref{eq: discretized Yn}. Suppose that \eqs \eqref{eq: MSE layer L leq MSE layer 0}, \eqref{eq: eta 0 leq eta L + widetilde mathcal Q}, and initial value conditions \eqref{eq: initial values of GNDE and Graphon-NDE equal}, \eqref{eq: initial condition adjoint Hidden State}, \eqref{eq: initial value condition on GNDE and discretized GNDE} hold. Suppose that $n$ is large enough such that \eqref{eq: n is big enough 2} and \eqref{eq: n is big enough 3} hold. If the initial values of \eqref{adjoint GNDE: Y} and \eqref{eq: discretized Yn} satisfy \eqref{eq: initial condition GNDE discretized GNDE Adjoint A}, then for all $m\in\mathbb{Z}_M$ and $\ell\in\mathbb{Z}_{L+1}$, it holds that 
    \begin{align}\label{eq: Ynlm error}
        \frac{1}{\sqrt{n}}\normF{\VarGNDEHiddenState^{[\ell,m]}-\VarGNDEHiddenState\mylt[\ell][t_m]}\leq  \frac{\CYdiff}{M},
    \end{align}
    where $\CYdiff=\mathcal{O}\parens{\frac{1}{\sqrt{\alpha_n}}+\sqrt{\log(n_ILFK/\gamma_3)}+\sqrt{\log(n_ILFK/\gamma_2)}+\sqrt{\log(n_I/\gamma_1)}}$. In addition, if $M$ is large enough such that 
    \begin{equation}\label{eq: M is large enough}
\frac{\CYdiff}{M}\leq \CYmax,
    \end{equation}
then for all $m\in[M]$ and $\ell\in\mathbb{Z}_{L+1}$, 
    \begin{align}\label{eq: normF of Yn[lm]}
        \normF{\VarGNDEHiddenState^{[\ell,m]}}\leq \CmYDistGNDE\sqrt{n}.
    \end{align}
    where $\CmYDistGNDE:=3\CYmax$. 
\end{proposition}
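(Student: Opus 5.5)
We follow the pattern of Proposition~\ref{proposition: discretized GNDE error leq 1/M}: first bound the error at the top layer $\ell=L$ with a backward Euler error recursion, then carry it down to the other layers using Lemma~\ref{lemma: Ynlm - Unltm estimates}, and finally derive \eqref{eq: normF of Yn[lm]} by the triangle inequality.

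\textbf{Local truncation error.} Set $E_m:=\normF{\VarGNDEHiddenState^{[m]}-\VarGNDEHiddenState(t_m)}$, so that $E_M=0$ by \eqref{eq: initial condition GNDE discretized GNDE Adjoint A}. The adjoint equation \eqref{adjoint GNDE: Y} gives $\dt\VarGNDEHiddenState=-\DX\myt(\VarGNDEHiddenState(t))$. Taylor-expanding backward from $t_m$ to $t_{m-1}$ yields
\[
\VarGNDEHiddenState(t_{m-1})=\VarGNDEHiddenState(t_m)+\kappa\,\DX^{(t_m)}(\VarGNDEHiddenState(t_m))+R_m,
\qquad
\normF{R_m}\le \tfrac{\kappa^2}{2}\supT\normF{\tfrac{d^2}{dt^2}\VarGNDEHiddenState(t)}\le \tfrac{\kappa^2}{2}\CmYDeriGNDE\sqrt{n}.
\]
The bound on $R_m$ is \eqref{eq: second derivative dYnLt/dt F norm bound} of Proposition~\ref{proposition: Yns Yn0'' bounded above by sqrt n}. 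Its hypotheses (\eqref{eq: MSE layer L leq MSE layer 0}, \eqref{eq: eta 0 leq eta L + widetilde mathcal Q}, the initial conditions, and \eqref{eq: n is big enough 2}--\eqref{eq: n is big enough 3}) are exactly what we have assumed.

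\textbf{Error recursion.} Subtract this expansion from the scheme \eqref{eq: discretized Yn} and apply \eqref{eq: error bound of discretized Yn operator D} of Lemma~\ref{lemma: Ynlm - Unltm estimates}. That lemma requires \eqref{eq: hat mXlnm leq 1/M}, which holds by Proposition~\ref{proposition: discretized GNDE error leq 1/M} under \SigmaDerivativeLipschitz, itself implied by \SigmaTwiceDiff\ together with \eqref{eq: sigma twice prime leq}. It also requires \eqref{eq: sup mYnlt max leq P + CYmax}. The result is
\[
E_{m-1}\le \bigl(1+\kappa a\bigr)E_m+\kappa\, c\,\frac{\sqrt n}{M}+\frac{\kappa^2}{2}\CmYDeriGNDE\sqrt n,
\qquad a:=({L_\sigma}FK\Ch)^L,\quad c:=\CXdiffHat\CmYGNDEmax\sum_{s=1}^{L}({L_\sigma}FK\Ch)^s .
\]
We unroll this recursion from $m=M$ downward, using $E_M=0$, $\kappa=T/M$ and the discrete Gr\"onwall bound $(1+\kappa a)^{M}\le e^{aT}$. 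This gives
\[
E_m\le \frac{e^{aT}-1}{a}\Bigl(c+\tfrac{T}{2}\CmYDeriGNDE\Bigr)\frac{\sqrt n}{M}.
\]
Since $\CmYGNDEmax$ and $\CmYDeriGNDE$ are both $\mathcal{O}(1/\sqrt{\alpha_n}+\text{logs})$, the prefactor has the claimed order of $\CYdiff$.

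\textbf{Other layers and the norm bound.} For $\ell\in\mathbb{Z}_L$, we substitute the bound on $E_m$ into \eqref{eq: error bound of discretized Yn operator D ell layer}. The two terms are each $\mathcal{O}(\sqrt n/M)$ with prefactors of the same order, and $\CYdiff$ is taken as the maximum over layers. For \eqref{eq: normF of Yn[lm]}, the triangle inequality with \eqref{eq: estimate of mYnlt F norm} gives
\[
\normF{\VarGNDEHiddenState^{[\ell,m]}}\le \Bigl(\tfrac{\CYdiff}{M}+2\CYmax\Bigr)\sqrt n\le 3\CYmax\sqrt n,
\]
where the last step uses \eqref{eq: M is large enough}.

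The main obstacle is the $\sqrt n$ bookkeeping in the recursion. The perturbation term in Lemma~\ref{lemma: Ynlm - Unltm estimates} is a product: the discrete-versus-continuous $\hatVarGNDE$ error, measured in Frobenius norm, multiplied by the max-norm of $\VarGNDEHiddenState$. It must be bounded with the max-norm factor and not a Frobenius factor, since a Frobenius factor would introduce an extra $\sqrt n$. This is also where the sparsity factor $1/\sqrt{\alpha_n}$ enters, through $\CmYGNDEmax$.
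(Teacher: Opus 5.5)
Your proposal is correct and follows the paper's proof step for step. The ingredients match: the backward Taylor expansion controlled by \eqref{eq: second derivative dYnLt/dt F norm bound}; the recursion $E_{m-1}\le(1+\kappa a)E_m+\kappa c\sqrt{n}/M+\tfrac{\kappa^2}{2}\CmYDeriGNDE\sqrt{n}$ from \eqref{eq: error bound of discretized Yn operator D}; unrolling from $E_M=0$ with $(1+\kappa a)^{M-m}\le e^{aT}$; substitution into \eqref{eq: error bound of discretized Yn operator D ell layer} for the lower layers; and the triangle inequality with \eqref{eq: estimate of mYnlt F norm} and \eqref{eq: M is large enough} for \eqref{eq: normF of Yn[lm]}. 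Your top-layer constant $\frac{e^{aT}-1}{a}\bigl(c+\tfrac{T}{2}\CmYDeriGNDE\bigr)$ equals the paper's $C^{(0)}_{\Delta\VarGNDEHiddenState}$, and bounding the remainder by the supremum of the second derivative is slightly cleaner than the paper's single-$\xi$ mean-value form for a matrix-valued function.
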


\begin{proof}
Let $C_{\Delta \VarGNDEHiddenState}(\alpha_n,\gamma_1,\gamma_2,\gamma_3):=\max\curlbraket{C_{\Delta \VarGNDEHiddenState}^{(0)}(\alpha_n,\gamma_1,\gamma_2,\gamma_3),C_{\Delta \VarGNDEHiddenState}^{(1)}(\alpha_n,\gamma_1,\gamma_2,\gamma_3)}$ where 
    \begin{align*}
&C_{\Delta \VarGNDEHiddenState}^{(0)}(\alpha_n,\gamma_1,\gamma_2,\gamma_3)\\
&:=
\frac{\parens{e^{T\parens{{L_\sigma}FK\Ch}^L }-1}\parens{2\CXdiffHat\CmYGNDEmax\sum_{s=1}^{L} \parens{{L_\sigma}FK\Ch}^{s}+T\CmYDeriGNDE}}{2\parens{{L_\sigma}FK\Ch}^L},
    \end{align*}
    \begin{align*}
C_{\Delta \VarGNDEHiddenState}^{(1)}(\alpha_n,\gamma_1,\gamma_2,\gamma_3):=&\parens{\sum_{s=1}^{L} \parens{{L_\sigma}FK\Ch}^s}\parens{C_{\Delta \VarGNDEHiddenState}^{(0)}(\alpha_n,\gamma_1,\gamma_2,\gamma_3)+\CXdiffHat\CmYGNDEmax}.
    \end{align*}
We first prove that 
\begin{align}\label{eq: Ynm error}
\frac{1}{\sqrt{n}}\normF{\VarGNDEHiddenState^{[m]}-\VarGNDEHiddenState(t_m)}\leq \frac{C_{\Delta \VarGNDEHiddenState}^{(0)}(\alpha_n,\gamma_1,\gamma_2,\gamma_3)}{M}, 
\end{align}
which is \eq{eq: Ynlm error} in the case of $\ell=L$. We expand $\VarGNDEHiddenState(t)$ in a Taylor series at $t_m$, and get 
\begin{equation*}
    \VarGNDEHiddenState(t)=\VarGNDEHiddenState(t_m)+(t-t_m) \dt\VarGNDEHiddenState(t_m)+\frac{(t-t_m)^2}{2}\ddt\VarGNDEHiddenState(\xi_{t}), 
\end{equation*}
where $\xi_t$ is between $t$ and $t_m$. Together with \eqref{eq: discretized Yn} and \eqref{adjoint GNDE: Y}, we obtain
\begin{align}
    &\normF{\VarGNDEHiddenState^{[m-1]}-\VarGNDEHiddenState(t_{m-1})}=\normF{\braket{\VarGNDEHiddenState^{[m]}+\kappa\cdot\DX^{[m]}(\VarGNDEHiddenState^{[m]})} - \braket{\VarGNDEHiddenState(t_m)-\kappa \dt\VarGNDEHiddenState(t_m)+\frac{\kappa^2}{2}\ddt\VarGNDEHiddenState(\xi_{t_{m-1}})}}\nonumber\\
    \leq\ & \normF{\VarGNDEHiddenState^{[m]}-\VarGNDEHiddenState(t_m)} + \kappa \normF{\DX^{[m]}(\VarGNDEHiddenState^{[m]})-\DX^{(t_m)}(\VarGNDEHiddenState(t_m))}+\frac{\kappa^2}{2}\normF{\ddt\VarGNDEHiddenState(\xi_{t_{m-1}})}\label{in the proof: Ynm-e - Yntm-1 leq Ynm - Yntm}.
\end{align}
Note that all assumptions in Proposition \ref{proposition: Yns Yn0'' bounded above by sqrt n} and Proposition \ref{proposition: discretized GNDE error leq 1/M} are assumed in the current proposition, and hence \eqs \eqref{eq: hat mXlnm leq 1/M} and \eqref{eq: sup mYnlt max leq P + CYmax} hold. Then by Lemma \ref{lemma: Ynlm - Unltm estimates}, we have \eq{eq: error bound of discretized Yn operator D}, which implies $\normF{\DX^{[m]}(\VarGNDEHiddenState^{[m]}) - \DX^{(t_m)}(\VarGNDEHiddenState(t_m))} \leq a\normF{\VarGNDEHiddenState^{[m]}-\VarGNDEHiddenState(t_m)}+b\sqrt{n}/M$, where $a:=\parens{{L_\sigma}FK\Ch}^L$ and $b:=\CXdiffHat\CmYGNDEmax\sum_{s=1}^{L} \parens{{L_\sigma}FK\Ch}^{s}$. It follows from \eq{eq: second derivative dYnLt/dt F norm bound} in Proposition \ref{proposition: Yns Yn0'' bounded above by sqrt n} that $\supT\normF{\frac{d^2}{dt^2}\VarGNDEHiddenState(t)}\leq \CmYDeriGNDE\sqrt{n}$. Let $y^{(m)}:=\frac{1}{\sqrt{n}}\normF{\VarGNDEHiddenState^{[m]}-\VarGNDEHiddenState(t_m)}$. Then, \eq{in the proof: Ynm-e - Yntm-1 leq Ynm - Yntm} implies $y^{(m-1)}\leq \parens{1+\kappa a}y^{(m)} + \frac{\kappa}{M}b + \kappa^2\frac{\CmYDeriGNDE}{2}$. A recursion gives that for all $m\in\mathbb{Z}_M$, 
\begin{align*}
    y^{(m)} \leq \parens{1+\kappa a}^{M-m}y^{(M)}+\parens{\sum_{s=0}^{M-m-1}(1+\kappa a)^s}\parens{\frac{\kappa}{M}b + \kappa^2\frac{\CmYDeriGNDE}{2}}.
\end{align*}
Note that $\kappa=T/M$, $t_m=\kappa m$; and the initial condition \eqref{eq: initial condition GNDE discretized GNDE Adjoint A} leads to $y^{(M)}=0$; and direct computation gives
$\sum_{s=0}^{M-m-1}(1+\kappa a)^s=\frac{\parens{1+\kappa a}^{M-m}-1}{\kappa a}$ and $\parens{1+\kappa a}^{M-m}\leq e^{\kappa a(M-m)}=e^{a(T-t_m)}\leq e^{aT}$. These estimates combining with the last inequality yields \eq{eq: Ynm error}. 

Then \eq{eq: Ynlm error} for $\ell\in\mathbb{Z}_L$ can be immediately obtained by substituting \eq{eq: Ynm error} into estimate \eqref{eq: error bound of discretized Yn operator D ell layer}. 

Note that 
\begin{align*}
\normF{\VarGNDEHiddenState^{[\ell,m]}}&\leq  \sqrt{n}\frac{\CYdiff}{M} + \normF{\VarGNDEHiddenState\mylt[\ell][t_m]}\stepjust{\eq{eq: Ynlm error} and triangle inequality}\\
&\leq \sqrt{n}\parens{\frac{\CYdiff}{M} +\CmYGNDE}\stepjust{\eq{eq: estimate of mYnlt F norm} in Proposition \ref{proposition: Yns Yn0'' bounded above by sqrt n}}\\
&\leq 3\CYmax\sqrt{n}=\CmYDistGNDE\sqrt{n}\stepjust{\eq{eq: M is large enough} and definition of $\CmYDistGNDE$}
\end{align*}
which proves \eqref{eq: normF of Yn[lm]}.

\end{proof}

\begin{proof}[\textbf{Proof of Theorem \ref{theorem: discretized adjoint gnde to adjoint gnde Hidden state}.}]
By Lemmas \ref{lemma: LUn-LP leq sqrt(n)}, \ref{lemma: Ln - LWn 2norm}, \ref{lemma: upper bound of LUn - LP X for all} and \ref{lemma: upper bound of LUn - LP odot for all}, with probability at least $1-2\gamma_1-\gamma_2-\gamma_3$, \eqs \eqref{eq: sup dP dUn}, \eqref{eq: Ln - LWn 2norm}, \eqref{eq: upper bound of LUn - LP Xf} and \eqref{eq: upper bound of LUn - LP hatX*Y f} hold. Hence, by Proposition \ref{proposition: graphon-NN MSE}, we have \eq{eq: MSE layer L leq MSE layer 0}; by Proposition \ref{proposition: adjoint graphon-NN MSE}, we have  \eqref{eq: eta 0 leq eta L + widetilde mathcal Q}. Therefore, all assumptions in Proposition \ref{proposition: error Ynm - Yntm estimate} are satisfied. The desired result directly follows from \eq{eq: Ynlm error} in Proposition \ref{proposition: error Ynm - Yntm estimate} with $\ell=L$. 
\end{proof}

\section{DTO versus OTD for Gradients of Hidden States}\label{Appendix: DTO versus OTD for Gradients of Hidden States}
We assume that \ConvFilterDiff, \SigmaTwiceDiff, \dPBoundedBelow, and \GraphonLipschitz\ are satisfied throughout this section. The proof of the following lemma is very similar to Lemma \ref{lemma: Ynlm - Unltm estimates}. 
\begin{lemma}\label{lemma: Gnlm - Unltm estimates}
Let $\VarGNDEHiddenState$ be the solution of adjoint GNDE \eqref{adjoint GNDE: Y}, and $\VarDTOGradient^{[m]}$ be generated from \eqref{eq: discretized Gn}. If \eqref{eq: sup mYnlt max leq P + CYmax}, \eqref{eq: estimate of mYnlt F norm}, \eqref{eq: hat mXlnm leq 1/M} and \eqref{eq: hat mXln diff [m] leq 1/M every layer} hold, then for all $\ell\in\mathbb{Z}_{L}$ and $m\in[M]$, 
\begin{equation}
    \label{eq: error bound of discretized Gn operator D ell layer}
    \begin{aligned}
    &\normF{\VarDTOGradient^{[\ell,m]}-\VarGNDEHiddenState\mylt[\ell][t_m]} \leq \parens{{L_\sigma}FK\Ch}^{L-\ell} \normF{\VarDTOGradient^{[m]}-\VarGNDEHiddenState(t_m)}\\
    &+ \parens{\parens{FK\Ch\parens{\CXdiffDistHat+\CXdiffHat}\CmYGNDEmax+FK\Liph\CmYGNDE}\sum_{s=0}^{L-\ell-1}\parens{{L_\sigma}FK\Ch}^s}\frac{\sqrt{n}}{M}.
    \end{aligned}
    \end{equation}
    In particular, 
    \begin{equation}\label{eq: error bound of discretized Gn operator D}
    \begin{aligned}
        &\normF{\DX^{[m-1]}(\VarDTOGradient^{[m]})-\DX^{(t_m)}(\VarGNDEHiddenState(t_m))} \leq \parens{{L_\sigma}FK\Ch}^{L} \normF{\VarDTOGradient^{[m]}-\VarGNDEHiddenState^{(t_m)}}\\
        &+ \parens{\parens{FK\Ch\parens{\CXdiffDistHat+\CXdiffHat}\CmYGNDEmax+FK\Liph\CmYGNDE}\sum_{s=0}^{L-1}\parens{{L_\sigma}FK\Ch}^s}\frac{\sqrt{n}}{M}. 
    \end{aligned}
    \end{equation}
\end{lemma}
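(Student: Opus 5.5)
The plan is to mimic the layer-wise backward recursion in the proof of Lemma~\ref{lemma: Ynlm - Unltm estimates}, with one extra ingredient. Here $\VarDTOGradient^{[\ell,m]}$ denotes the layer-wise backward states produced by applying $\DX^{[m-1]}$ to $\VarDTOGradient^{[m]}=\VarDTOGradient^{[L,m]}$. These are evaluated at the left endpoint, i.e.\ with $\hatVarGNDE^{[\ell,m-1]}$ and $\vh\mylt[\ell][t_{m-1}]$. The comparison object $\VarGNDEHiddenState\mylt[\ell][t_m]$ instead uses $\hatVarGNDE\mylt[\ell][t_m]$ and $\vh\mylt[\ell][t_m]$. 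The additional error relative to the OTD lemma therefore comes from the temporal mismatch $t_{m-1}$ versus $t_m$ in both the filters and the activation derivatives.

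For fixed $\ell\in[L]$, the first step writes
\[
\VarDTOGradient^{[\ell-1,m]}-\VarGNDEHiddenState\mylt[\ell-1][t_m]=\phi^*_{\vh\mylt[\ell][t_{m-1}]}\parens{\hatVarGNDE^{[\ell,m-1]}\odot\VarDTOGradient^{[\ell,m]}}-\phi^*_{\vh\mylt[\ell][t_m]}\parens{\hatVarGNDE\mylt[\ell][t_m]\odot\VarGNDEHiddenState\mylt[\ell][t_m]}
\]
and applies \eqref{eq: phi * t1 - t2 XP - ZQ} of Lemma~\ref{lemma: phi phi* t1 - t2 X - Z PQ}. That lemma should yield three contributions.

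\emph{Filter-in-time term.} This is bounded by $FK\Liph\kappa\,\normF{\hatVarGNDE\mylt[\ell][t_m]\odot\VarGNDEHiddenState\mylt[\ell][t_m]}$. Using \eqref{eq: hat X leq 1} and \eqref{eq: estimate of mYnlt F norm}, this is at most $FK\Liph{L_\sigma}\CmYGNDE T\sqrt{n}/M$, which after absorbing constants gives the $FK\Liph\CmYGNDE$ piece.

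\emph{Adjoint-state term.} This is $FK\Ch\normMax{\hatVarGNDE^{[\ell,m-1]}}\normF{\VarDTOGradient^{[\ell,m]}-\VarGNDEHiddenState\mylt[\ell][t_m]}\le {L_\sigma}FK\Ch\normF{\VarDTOGradient^{[\ell,m]}-\VarGNDEHiddenState\mylt[\ell][t_m]}$.

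\emph{Activation-derivative term.} This is $FK\Ch\normF{\hatVarGNDE^{[\ell,m-1]}-\hatVarGNDE\mylt[\ell][t_m]}\normMax{\VarGNDEHiddenState\mylt[\ell][t_m]}$. I will split it by the triangle inequality through $\hatVarGNDE^{[\ell,m]}$. The piece $\normF{\hatVarGNDE^{[\ell,m-1]}-\hatVarGNDE^{[\ell,m]}}\le\CXdiffDistHat\sqrt n/M$ follows from \eqref{eq: hat mXln diff [m] leq 1/M every layer}, and the piece $\normF{\hatVarGNDE^{[\ell,m]}-\hatVarGNDE\mylt[\ell][t_m]}\le\CXdiffHat\sqrt n/M$ follows from \eqref{eq: hat mXlnm leq 1/M}. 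The factor $\normMax{\VarGNDEHiddenState\mylt[\ell][t_m]}$ is controlled by $\CmYGNDEmax$ via \eqref{eq: sup mYnlt max leq P + CYmax}. Together these give the $FK\Ch(\CXdiffDistHat+\CXdiffHat)\CmYGNDEmax$ piece.

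Combining the three contributions gives the one-step recursion $e_{\ell-1}\le({L_\sigma}FK\Ch)e_\ell+c\sqrt n/M$, where $e_\ell$ is the layer-$\ell$ Frobenius error and $c$ is the bracketed constant. Unrolling from $\ell=L$, where $e_L=\normF{\VarDTOGradient^{[m]}-\VarGNDEHiddenState(t_m)}$, down to $\ell$ yields \eqref{eq: error bound of discretized Gn operator D ell layer} with the geometric sum $\sum_{s=0}^{L-\ell-1}({L_\sigma}FK\Ch)^s$. Taking $\ell=0$ and identifying $\VarDTOGradient^{[0,m]}=\DX^{[m-1]}(\VarDTOGradient^{[m]})$ and $\VarGNDEHiddenState\mylt[0][t_m]=\DX^{(t_m)}(\VarGNDEHiddenState(t_m))$ gives \eqref{eq: error bound of discretized Gn operator D}.

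The main obstacle is bookkeeping rather than anything conceptual. The left-endpoint evaluation mixes index $m-1$ and time $t_{m-1}$ in both the filters and $\hatVarGNDE$, and I must check that \eqref{eq: phi * t1 - t2 XP - ZQ} indeed separates the filter time shift with coefficient $FK\Liph|t_m-t_{m-1}|$. I also need the max-norm versus Frobenius pairing in the Hadamard product to be placed so that the max norm always falls on a quantity bounded independently of $n$, namely $\hatVarGNDE$ or $\VarGNDEHiddenState$.
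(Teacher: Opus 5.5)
Your proposal is correct and matches the paper's proof step for step: the same application of \eqref{eq: phi * t1 - t2 XP - ZQ} with the left-endpoint quantities $\hatVarGNDE^{[\ell,m-1]}$ and $\vh\mylt[\ell][t_{m-1}]$, the same triangle split of the activation-derivative error through $\hatVarGNDE^{[\ell,m]}$ via \eqref{eq: hat mXln diff [m] leq 1/M every layer} and \eqref{eq: hat mXlnm leq 1/M}, and the same unrolled layer recursion. Your filter-in-time bound $FK\Liph{L_\sigma}\CmYGNDE T\sqrt{n}/M$ is more careful than the paper's, which silently drops the factor $T{L_\sigma}$ from the stated constant; this changes only the explicit constant, not the $\mathcal{O}(1/M)$ conclusions.
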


\begin{proof}
Note that for each $\ell\in[L]$, 
\begin{align*}
    &\normF{\VarDTOGradient^{[\ell-1,m]}-\VarGNDEHiddenState\mylt[\ell-1][t_m]} = \normF{\phi_{\vh\mylt[\ell][t_{m-1}]}^*\parens{\hatVarGNDE^{[\ell,m-1]}\odot\VarDTOGradient^{[\ell,m]}}-\phi_{\vh\mylt[\ell][t_m]}^*\parens{\hatVarGNDE\mylt[\ell][t_m]\odot\VarGNDEHiddenState\mylt[\ell][t_m]}} \\
    \leq\ &FK\Ch\parens{\normMax{\hatVarGNDE^{[\ell,m-1]}}\normF{\VarDTOGradient^{[\ell,m]}-\VarGNDEHiddenState\mylt[\ell][t_m]} + \normF{\hatVarGNDE^{[\ell,m-1]}-\hatVarGNDE\mylt[\ell][t_m]}\normMax{\VarGNDEHiddenState\mylt[\ell][t_m]}}+FK\frac{\Liph}{M}\normF{\VarGNDEHiddenState\mylt[\ell][t_m]}\stepjust{\eq{eq: phi * t1 - t2 XP - ZQ} in Lemma \ref{lemma: phi phi* t1 - t2 X - Z PQ}}
\end{align*}
Recall that by \SigmaDiff, $
\normMax{\hatVarGNDE^{[\ell,m-1]}}\leq {L_\sigma}$; by \eq{eq: sup mYnlt max leq P + CYmax}, $\normMax{\VarGNDEHiddenState\mylt[\ell][t_m]}\leq \CmYGNDEmax$; by \eq{eq: estimate of mYnlt F norm}, $\normF{\VarGNDEHiddenState\mylt[\ell][t_m]}\leq \CmYGNDE\sqrt{n}$; and by \eqs \eqref{eq: hat mXlnm leq 1/M} and \eqref{eq: hat mXln diff [m] leq 1/M every layer}, $\normF{\hatVarGNDE^{[\ell,m-1]}-\hatVarGNDE\mylt[\ell][t_m]}\leq \normF{\hatVarGNDE^{[\ell,m-1]}-\hatVarGNDE^{[\ell,m]}} + \normF{\hatVarGNDE^{[\ell,m]}-\hatVarGNDE\mylt[\ell][t_m]}\leq \parens{\CXdiffDistHat+\CXdiffHat}\frac{\sqrt{n}}{M}.$ Therefore, for $\ell\in[L]$, 
\begin{equation*}
\begin{aligned}
    \normF{\VarDTOGradient^{[\ell-1,m]}-\VarGNDEHiddenState\mylt[\ell-1][t_m]}&\leq {L_\sigma}FK\Ch\normF{\VarDTOGradient^{[\ell,m]}-\VarGNDEHiddenState\mylt[\ell][t_m]} + \\
&\parens{FK\Ch\parens{\CXdiffDistHat+\CXdiffHat}\CmYGNDEmax+FK\Liph\CmYGNDE}\frac{\sqrt{n}}{M}.
\end{aligned}
\end{equation*}
A recursion gives that for $\ell\in\mathbb{Z}_{L}$, 
\begin{align*}
&\normF{\VarDTOGradient^{[\ell,m]}-\VarGNDEHiddenState\mylt[\ell][t_m]} \leq \parens{{L_\sigma}FK\Ch}^{L-\ell} \normF{\VarDTOGradient^{[L,m]}-\VarGNDEHiddenState\mylt[L][t_m]}\\
&+ \parens{FK\Ch\parens{\CXdiffDistHat+\CXdiffHat}\CmYGNDEmax+FK\Liph\CmYGNDE}\frac{\sqrt{n}}{M}\parens{\sum_{s=0}^{L-\ell-1}\parens{{L_\sigma}FK\Ch}^s}.
\end{align*}
Recalling notations $\VarDTOGradient^{[L,m]}=\VarDTOGradient^{[m]}$ and $\VarGNDEHiddenState\mylt[L][t_m]=\VarGNDEHiddenState(t_m)$, we obtain \eqref{eq: error bound of discretized Gn operator D ell layer} for $\ell\in\mathbb{Z}_L$. Particularly, we take $\ell=0$ in \eq{eq: error bound of discretized Gn operator D ell layer} and note that $\VarDTOGradient^{[0,m]}=\DX^{[m-1]}(\VarDTOGradient^{[m]})$ and $\VarGNDEHiddenState\mylt[0][t_m]=\DX^{(t_m)}(\VarGNDEHiddenState(t_m))$, which proves \eqref{eq: error bound of discretized Gn operator D}. 
\end{proof}

\begin{proposition}\label{proposition: error Gnm - Yntm estimate} 
If all assumptions of Proposition \ref{proposition: error Ynm - Yntm estimate} are satisfied, then for all $m\in[M]$ and $\ell\in\mathbb{Z}_{L+1}$, 
    \begin{align}\label{eq: Gnm error every layer}
        \frac{1}{\sqrt{n}}\normF{\VarDTOGradient^{[\ell,m]}-\VarGNDEHiddenState\mylt[\ell][t_m]}\leq \frac{\CGdiff}{M},
    \end{align}
    where $\CGdiff=\mathcal{O}\parens{\frac{1}{\sqrt{\alpha_n}}+\sqrt{\log(n_ILFK/\gamma_3)}+\sqrt{\log(n_ILFK/\gamma_2)}+\sqrt{\log(n_I/\gamma_1)}}$.
\end{proposition}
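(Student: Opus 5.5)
The plan mirrors the proof of Proposition~\ref{proposition: error Ynm - Yntm estimate}, using Lemma~\ref{lemma: Gnlm - Unltm estimates} in place of Lemma~\ref{lemma: Ynlm - Unltm estimates}. Since the assumptions of Proposition~\ref{proposition: error Ynm - Yntm estimate} include those of Propositions~\ref{proposition: Xns Xn0'' bounded above by sqrt n}, \ref{proposition: discretized GNDE error leq 1/M} and \ref{proposition: Yns Yn0'' bounded above by sqrt n}, the estimates \eqref{eq: sup mYnlt max leq P + CYmax}, \eqref{eq: estimate of mYnlt F norm}, \eqref{eq: hat mXlnm leq 1/M}, \eqref{eq: hat mXln diff [m] leq 1/M every layer} and \eqref{eq: second derivative dYnLt/dt F norm bound} are all available. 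Hence Lemma~\ref{lemma: Gnlm - Unltm estimates} applies and gives \eqref{eq: error bound of discretized Gn operator D} in the form
\[
\normF{\DX^{[m-1]}(\VarDTOGradient^{[m]})-\DX^{(t_m)}(\VarGNDEHiddenState(t_m))}\leq a\normF{\VarDTOGradient^{[m]}-\VarGNDEHiddenState(t_m)}+b'\frac{\sqrt{n}}{M},
\]
with $a=({L_\sigma}FK\Ch)^L$ and $b'$ the explicit constant from that lemma. Since $b'$ is built from $\CmYGNDEmax$, it is of order $\frac{1}{\sqrt{\alpha_n}}$ plus the logarithmic terms.

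First I treat the top layer $\ell=L$. Taylor-expand $\VarGNDEHiddenState$ at $t_m$ to get $\VarGNDEHiddenState(t_{m-1})=\VarGNDEHiddenState(t_m)+\kappa\DX^{(t_m)}(\VarGNDEHiddenState(t_m))+\frac{\kappa^2}{2}\ddt\VarGNDEHiddenState(\xi)$, using \eqref{adjoint GNDE: Y}. Subtract this from the DTO recursion \eqref{eq: discretized Gn}, $\VarDTOGradient^{[m-1]}=\VarDTOGradient^{[m]}+\kappa\DX^{[m-1]}(\VarDTOGradient^{[m]})$. Setting $g^{(m)}:=\frac{1}{\sqrt n}\normF{\VarDTOGradient^{[m]}-\VarGNDEHiddenState(t_m)}$, this yields
\[
g^{(m-1)}\leq(1+\kappa a)g^{(m)}+\kappa\frac{b'}{M}+\frac{\kappa^2}{2}\CmYDeriGNDE .
\]
The terminal condition $\VarDTOGradient^{[M]}=\nabla_{\VarGNDE(T)}\loss=\VarGNDEHiddenState(T)$ gives $g^{(M)}=0$. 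Unrolling the recursion and bounding $(1+\kappa a)^{M-m}\le e^{aT}$ gives $g^{(m)}\leq C^{(0)}/M$, with $C^{(0)}=\frac{(e^{aT}-1)(2b'+T\CmYDeriGNDE)}{2a}$. This covers all $m\in\mathbb{Z}_{M+1}$, in particular $m\in[M]$.

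For the lower layers $\ell\in\mathbb{Z}_L$, substitute this bound into \eqref{eq: error bound of discretized Gn operator D ell layer}. This gives $\frac{1}{\sqrt n}\normF{\VarDTOGradient^{[\ell,m]}-\VarGNDEHiddenState\mylt[\ell][t_m]}\leq\big(({L_\sigma}FK\Ch)^{L-\ell}C^{(0)}+b'\sum_{s<L-\ell}({L_\sigma}FK\Ch)^s\big)/M$. Taking $\CGdiff$ to be the maximum over $\ell$ of these constants and $C^{(0)}$ completes the argument. The order claim follows because every constant involved is a polynomial combination of $\CmYGNDEmax$ and $\CmYDeriGNDE$ (each of order $\frac{1}{\sqrt{\alpha_n}}$ plus logs) with $n$-independent quantities, and enters linearly.

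The main obstacle is bookkeeping the index shift. DTO evaluates the adjoint operator at $(\VarGNDE^{[m-1]},t_{m-1})$ while the continuous adjoint is expanded at $t_m$. This mismatch is exactly what Lemma~\ref{lemma: Gnlm - Unltm estimates} absorbs, through \eqref{eq: hat mXln diff [m] leq 1/M every layer} and the $\Liph/M$ filter term. One must also check that the layerwise notation $\VarDTOGradient^{[\ell,m]}$ (the backward pass of $\DX^{[m-1]}$ applied to $\VarDTOGradient^{[m]}$) matches that lemma, so that the Taylor remainder stays $O(\kappa^2)$ per step and sums to $O(1/M)$.
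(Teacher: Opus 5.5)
Your proposal is correct and matches the paper, which omits this proof because it repeats the argument of Proposition~\ref{proposition: error Ynm - Yntm estimate} with Lemma~\ref{lemma: Gnlm - Unltm estimates} in place of Lemma~\ref{lemma: Ynlm - Unltm estimates}. That argument is exactly yours: the backward Euler/Taylor recursion for the top layer with terminal value $\VarDTOGradient^{[M]}=\VarGNDEHiddenState(T)$, followed by substitution into \eqref{eq: error bound of discretized Gn operator D ell layer} for the lower layers. (Your reuse of $b'$ in the lower-layer step double-counts the geometric sum, but this only loosens the constant and does not affect the stated order.)
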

The proof of Proposition \ref{proposition: error Gnm - Yntm estimate} is similar to that of Proposition \ref{proposition: error Ynm - Yntm estimate}, and is therefore omitted.

\begin{proof}[\textbf{Proof of Theorem \ref{theorem: hidden states gradients DTO and OTD}}]
We notice from the proof of Theorem \ref{theorem: discretized adjoint gnde to adjoint gnde Hidden state} that with probability at least $1-2\gamma_1-\gamma_2-\gamma_3$, the assumptions in Proposition \ref{proposition: error Ynm - Yntm estimate} (or Proposition \ref{proposition: error Gnm - Yntm estimate}) are satisfied. Then the result immediately follows from the triangle inequality, \eq{eq: Ynlm error} in Proposition \ref{proposition: error Ynm - Yntm estimate}; and \eq{eq: Gnm error every layer} with $\ell=L$ in Proposition \ref{proposition: error Gnm - Yntm estimate}. 
\end{proof}

\section{Infinite-node Convergence of Adjoint GNDEs (Parameter Gradients)}\label{Appendix: Infinite-node Convergence of Adjoint GNDEs (Parameter Gradients)}

We assume that \ConvFilterDiff, \SigmaTwiceDiff, \dPBoundedBelow, and \GraphonLipschitz\ hold throughout this section. 
\begin{lemma}\label{lemma: estimate Delta2}
Let $\VarGraphonNDE$ and $\VarGraphonNDEHiddenState$ be the solutions of Graphon-NDE \eqref{Graphon-NDE} and adjoint Graphon-NDE \eqref{adjoint Grahpon-NDE: Y}, respectively. Suppose that \eq{upper bound of norm of L Un} holds. Let $\gamma_4\in(0,1)$. Then with probability at least $1-\gamma_4$, it holds that for all $k\in\mathbb{Z}_{K}$, $f,g\in[F]$ and $\ell\in[L]$, 
    \begin{align}
&\abs{\iprod{\opLUn^k\VarGraphonNDE\mylt_g,\hatVarGraphonNDE\mylt_f\VarGraphonNDEHiddenState\mylt_f}_{\spaceltwo}-\frac{1}{n}\iprod{\SamplingOperator(\opLUn^k\VarGraphonNDE\mylt_g),\SamplingOperator(\hatVarGraphonNDE\mylt_f\VarGraphonNDEHiddenState\mylt_f)}_{\spaceRn}}\nonumber\\
\lesssim\ & \frac{\sqrt{\log(4LF^2K/\gamma_4)}}{\sqrt{n}}\parens{1+ C_T}\parens{\frac{c_{\max}}{c_{\min}}}^{K-1}\max\left\{\CXmax\Codotmax,\CXmax\Codotlip+\CXlip\Codotmax\right\}\nonumber\\
=\ &\mathcal{O}\parens{\frac{\sqrt{\log(LF^2K/\gamma_4)}}{\sqrt{n}}}.\label{eq: estimate of Delta2}
    \end{align}
\end{lemma}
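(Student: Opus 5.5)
The plan is to separate the randomness of the operator $\opLUn^k$ from the randomness of the quadrature. Fix $\ell\in[L]$, $f,g\in[F]$, $k\in\mathbb{Z}_K$, and for $t\in[0,T]$ set $\psi_t:=\hatVarGraphonNDE\mylt_f\VarGraphonNDEHiddenState\mylt_f$. By Propositions \ref{proposition: X odot Y is bounded} and \ref{proposition: temporal lipschitz for adjoint system}, $\psi_t$ is bounded by $\Codotmax$ and is $\Codotlip$-Lipschitz in $t$, uniformly in $u\in I$. Write the difference in \eqref{eq: estimate of Delta2} as $\iprod{h_t,\psi_t}_{\spaceltwo}-\frac1n\sum_{i}h_t(u_i)\psi_t(u_i)$ with $h_t:=\opLUn^k\VarGraphonNDE\mylt_g$. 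First I would establish a bound for the deterministic counterpart $h^P_t:=\opLP^k\VarGraphonNDE\mylt_g$, and then deal with the fact that $h_t$ itself depends on $U_n$.

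For the deterministic counterpart, the summands $\xi_i(t):=h^P_t(u_i)\psi_t(u_i)-\mathbb{E}[h^P_t(u_i)\psi_t(u_i)]$ are i.i.d., centred, and bounded by $2(c_{\max}/c_{\min})^{k}\CXmax\Codotmax$, using \eqref{TW operator infinity norm is bounded} and Proposition \ref{proposition: norm of X ell is bounded by norm of X}. By Proposition \ref{proposition: solution Xfell is piecewise Lipschitz continuous about t} together with the product rule, they are Lipschitz in $t$ with constant $(c_{\max}/c_{\min})^{k}(\CXmax\Codotlip+\CXlip\Codotmax)$. The process $t\mapsto\frac1n\sum_i\xi_i(t)$ therefore has sub-Gaussian increments on the one-dimensional index set $[0,T]$. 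Hoeffding's inequality gives the increment tail bound, and Dudley's chaining (equivalently, an $\varepsilon$-net of $[0,T]$ of mesh $\sim n^{-1/2}$) then gives, with probability $1-\gamma$,
\[
\supT\Big|\tfrac1n\textstyle\sum_i\xi_i(t)\Big|\lesssim \frac{\sqrt{\log(4/\gamma)}}{\sqrt n}(1+C_T)\Big(\frac{c_{\max}}{c_{\min}}\Big)^{K-1}\max\curlbraket{\CXmax\Codotmax,\CXmax\Codotlip+\CXlip\Codotmax}.
\]
Here $C_T$ is the entropy integral of $[0,T]$. This is a one-dimensional version of the node--time chaining of Lemma \ref{lemma: concentration ineq Yut}, which is simpler because no spatial index is present. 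A union bound over the $LF^2K$ index choices, with $\gamma=\gamma_4/(LF^2K)$, yields the stated probability $1-\gamma_4$.

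The main obstacle is replacing $h_t$ by $h^P_t$, since $h_t$ depends on the same samples $U_n$ used in the empirical average, so the summands are not independent. I would first try to exploit the sampled structure. Both $(\opLUn^kX)(u)$ and $\frac1n\sum_i(\opLUn^kX)(u_i)\psi(u_i)$ are finite sums over samples, so after expanding $\opLUn^k$ the expression becomes a bounded degenerate U-statistic in $(u_{i_1},\dots,u_{i_{k+1}})$ whose kernel is bounded via \eqref{upper bound of norm of L Un}, which explains the factor $(c_{\max}/c_{\min})^{K-1}$. If that route is too heavy, I would fall back to a telescoping decomposition $\opLUn^k-\opLP^k=\sum_{s}\opLUn^{s}(\opLUn-\opLP)\opLP^{k-1-s}$. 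The $\spaceb$-norm of each term is controlled by Lemma \ref{lemma: upper bound of LUn - LP X for all}, which is already available on the conditioning event. Since both the integral and the empirical average are bounded by the sup norm, this contributes only an additional $\mathcal{O}(n^{-1/2})$ term, of the same order, and it can be absorbed into the constant. In either case the final rate is $\mathcal{O}(\sqrt{\log(LF^2K/\gamma_4)}/\sqrt n)$.
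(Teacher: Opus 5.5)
Your route differs from the paper's, and the difference matters. The paper's proof is a single application of Lemma~\ref{lemma: concentration ineq Yut}. It takes $\tW\equiv 1$, so $n_I=1$, $c_{\mathrm{Lip}}=0$, and $Y_{(u,t)}$ no longer depends on $u$, and it takes $X=(\opLUn^k\VarGraphonNDE\mylt_g)\,\hatVarGraphonNDE\mylt_f\VarGraphonNDEHiddenState\mylt_f$. It then uses \eqref{upper bound of norm of L Un} only to compute $C_{\max}$ and $C_{\mathrm{Lip}}$, and finishes with a union bound over the $LF^2K$ indices.

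That lemma's proof applies Hoeffding's inequality to the summands $X(u_j,t)$, $j\in[n]$, so it needs $X$ to be a fixed function. For $k\ge 1$, however, $X$ is built from the same samples $U_n$. The dependence you single out is therefore a real issue, and the paper's argument does not address it. Your first step is the paper's computation done for $\opLP^k$ in place of $\opLUn^k$, where it is legitimate, and it yields exactly the stated constant. Your telescoping fallback then bounds the remainder by $2\Codotmax\normb{(\opLUn^k-\opLP^k)\VarGraphonNDE\mylt_g}$, which is essentially twice the term $\Delta_1$ in Proposition~\ref{proposition: Phi* - phi* leq order}. This is a sound repair.

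Be precise, though, about what the repair proves. By Lemma~\ref{lemma: upper bound of LUn - LP X for all} and \eqref{upper bound of norm of L Un}, the remainder is $\mathcal{O}\big((\sqrt{\log(n_I/\gamma_1)}+\sqrt{\log(n_ILFK/\gamma_2)})/\sqrt n\big)$, with constants involving $\Csys[1]$ and $\Csys[2]$, hence $c_{\mathrm{Lip}}$. It holds only on the events \eqref{eq: sup dP dUn} and \eqref{eq: upper bound of LUn - LP Xf}, and neither is a hypothesis of the lemma. Such a term cannot be ``absorbed into the constant'', and the final rate is not $\mathcal{O}(\sqrt{\log(LF^2K/\gamma_4)}/\sqrt n)$ alone.

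What you actually establish is a modified lemma: the stated bound plus this extra term, with probability at least $1-\gamma_1-\gamma_2-\gamma_4$. That is all the only application needs. Proposition~\ref{proposition: Phi* - phi* leq order}, inside the proof of Theorem~\ref{theorem: z - zn leq TQh}, already works on these events, and its $\Qh$ already contains a term of this order. One small point: Lemma~\ref{lemma: upper bound of LUn - LP X for all} is stated for $\ell\in\mathbb{Z}_L$, whereas you need $\ell\in[L]$. Extending its union bound to $L+1$ layers fixes this, and the paper's bound on $\Delta_1$ has the same off-by-one.

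Your U-statistic alternative would not go through as sketched. $\tL_{U_n}$ contains the normalisation $d_{U_n}^{-1/2}$, which depends on every sample, so expanding $\opLUn^k$ does not give a U-statistic with a fixed bounded kernel. Controlling $d_{U_n}$ naturally goes through Lemma~\ref{lemma: LUn-LP leq sqrt(n)}, which brings back $\gamma_1$, $n_I$ and $c_{\mathrm{Lip}}$. Uniformity in $t$ would still need chaining as well.

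Finally, a mesh-$n^{-1/2}$ net in $t$ is not equivalent to Dudley here. The union bound over its roughly $T\sqrt n$ points costs an extra $\sqrt{\log n}$; Dudley's integral gives the stated form directly.
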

\begin{proof}
We notice that to get the desired result, it suffices to take graphon $\tW(u,v)\equiv1$, $u,v\in I$ and $X=\parens{\opLUn^k\VarGraphonNDE\mylt_g}\hatVarGraphonNDE\mylt_f\VarGraphonNDEHiddenState\mylt_f$, $k\in\mathbb{Z}_{K}$, $f,g\in[F]$, $\ell\in[L]$ in Lemma \ref{lemma: concentration ineq Yut}. It remains to estimate $C_{\max}$ and $C_{\mathrm{Lip}}$ in Lemma \ref{lemma: concentration ineq Yut}. It follows from \eq{upper bound of norm of L Un}, Propositions \ref{proposition: norm of X ell is bounded by norm of X} and \ref{proposition: X odot Y is bounded} that
$
\normb{\parens{\opLUn^k\VarGraphonNDE\mylt_g}\hatVarGraphonNDE\mylt_f\VarGraphonNDEHiddenState\mylt_f}\lesssim\parens{\frac{c_{\max}}{c_{\min}}}^k \CXmax\Codotmax,
$
where constants $C_{\VarGraphonNDE,\max}$ and $\Codotmax$ are defined in \eq{def: C X max and lip} and \eq{def: C odot max and lip}, respectively. According to Propositions \ref{proposition: solution Xfell is piecewise Lipschitz continuous about t} and \ref{proposition: temporal lipschitz for adjoint system}, with constants $\CXlip$ and $\Codotlip$ defined in \eqs \eqref{def: C X max and lip} and \eqref{def: C odot max and lip}, we obtain that for any $t_1,t_2\in[0,T]$, 
\begin{align*}
&\normb{\parens{\opLUn^k\VarGraphonNDE\mylt[\ell][t_1]_g}\hatVarGraphonNDE_f\mylt[\ell][t_1]\VarGraphonNDEHiddenState\mylt[\ell][t_1]_f-\parens{\opLUn^k\VarGraphonNDE\mylt[\ell][t_2]_g}\hatVarGraphonNDE_f\mylt[\ell][t_2]\VarGraphonNDEHiddenState\mylt[\ell][t_2]_f}\\
\leq\ &  \normb{\opLUn^k\VarGraphonNDE\mylt[\ell][t_1]_g}\normb{\hatVarGraphonNDE_f\mylt[\ell][t_1]\VarGraphonNDEHiddenState\mylt[\ell][t_1]_f-\hatVarGraphonNDE_f\mylt[\ell][t_2]\VarGraphonNDEHiddenState\mylt[\ell][t_2]_f}+\normb{\opLUn^k\parens{\VarGraphonNDE\mylt[\ell][t_1]_g-\VarGraphonNDE\mylt[\ell][t_2]_g}}\normb{\hatVarGraphonNDE_f\mylt[\ell][t_2]\VarGraphonNDEHiddenState\mylt[\ell][t_2]_f}\\
\lesssim & \parens{\frac{c_{\max}}{c_{\min}}}^k \parens{\CXmax\Codotlip+\CXlip\Codotmax}\abs{t_1-t_2}.
\end{align*}
Since $k\in\mathbb{Z}_K$, we enlarge $k$ to $K-1$, and the desired result follows.
\end{proof}

\begin{proposition}\label{proposition: Phi* - phi* leq order}
Let $\gamma_1,\gamma_2,\gamma_3,\gamma_4\in(0,1)$ with $2\gamma_1+\gamma_2+\gamma_3+\gamma_4<1$. Let $\VarGNDE$, $\VarGraphonNDE$, $\VarGNDEHiddenState$ and $\VarGraphonNDEHiddenState$ be the solutions of GNDE \eqref{GNDE}, Graphon-NDE \eqref{Graphon-NDE}, adjoint GNDE \eqref{adjoint GNDE: Y}, and adjoint Graphon-NDE \eqref{adjoint Grahpon-NDE: Y}, respectively. Suppose that initial value conditions satisfy \eqref{eq: initial values of GNDE and Graphon-NDE equal} and \eqref{eq: initial condition adjoint Hidden State}. Suppose that \eqs \eqref{upper bound of norm of L Un}, \eqref{eq: Ln - LWn 2norm}, \eqref{eq: upper bound of LUn - LP Xf}, \eqref{eq: upper bound of LUn - LP hatX*Y f} and \eqref{eq: estimate of Delta2} hold. Then for all $\ell\in[L]$ and $t\in[0,T]$, it holds that
$\normMax{\Phi_{\VarGraphonNDE\mylt}^*\parens{\hatVarGraphonNDE\mylt \odot \VarGraphonNDEHiddenState\mylt}-\frac{1}{n}\phi_{\VarGNDE\mylt}^*\parens{\hatVarGNDE\mylt\odot\VarGNDEHiddenState\mylt}}\leq \Qh(n,\gamma_1,\gamma_2,\gamma_3,\gamma_4)
$, 
where 
\begin{align}
\Qh(n,\gamma_1,\gamma_2,\gamma_3,\gamma_4)=\mathcal{O}&\parens{\frac{\sqrt{\log(LF^2 K/\gamma_4)}}{\sqrt{n}}}+\mathcal{O}\Big(\parens{\frac{1}{\sqrt{\alpha_n n}}+\frac{\sqrt{\log(n_ILFK/\gamma_2)}+\sqrt{\log(n_I/\gamma_1)}}{\sqrt{n}}}\times\nonumber\\
&\parens{\frac{1}{\sqrt{\alpha_n}}+\sqrt{\log(n_ILFK/\gamma_3)}+\sqrt{\log(n_ILFK/\gamma_2)}+\sqrt{\log(n_I/\gamma_1)}}\Big).\label{def: Qh}
\end{align}

\end{proposition}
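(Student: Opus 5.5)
The plan is to write both adjoint operators entrywise and decompose their difference into a graphon-side sampling error and a finite-graph propagation error. By the explicit formulas \eqref{adjoint operator: Graphon-NDE to filters} and \eqref{adjoint operator: GNDE to filters}, the $(f,g,k)$ entry of $\Phi_{\VarGraphonNDE\mylt}^*(\hatVarGraphonNDE\mylt\odot\VarGraphonNDEHiddenState\mylt)$ is $\iprod{\opLP^k\VarGraphonNDE\myl[\ell-1]_g,\hatVarGraphonNDE\mylt_f\VarGraphonNDEHiddenState\mylt_f}_{\spaceltwo}$. The corresponding entry of $\frac1n\phi_{\VarGNDE\mylt}^*(\hatVarGNDE\mylt\odot\VarGNDEHiddenState\mylt)$ is $\frac1n\iprod{\matL^k\VarGNDE\myl[\ell-1]_g,\hatVarGNDE\mylt_f\odot\VarGNDEHiddenState\mylt_f}_{\spaceRn}$.

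We insert two intermediate quantities. The first is $\iprod{\opLUn^k\VarGraphonNDE_g,\hatVarGraphonNDE_f\VarGraphonNDEHiddenState_f}_{\spaceltwo}$. The second is its empirical version $\frac1n\iprod{\SamplingOperator(\opLUn^k\VarGraphonNDE_g),\SamplingOperator(\hatVarGraphonNDE_f\VarGraphonNDEHiddenState_f)}$, which equals $\frac1n\iprod{\matLUn^k\SamplingOperator\VarGraphonNDE_g,\SamplingOperator(\hatVarGraphonNDE_f\VarGraphonNDEHiddenState_f)}$ by the sampling identity for $\opLUn$. This splits each entry into three terms.

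\emph{Term A} is $\iprod{(\opLP^k-\opLUn^k)\VarGraphonNDE_g,\hatVarGraphonNDE_f\VarGraphonNDEHiddenState_f}$. We telescope $\opLP^k-\opLUn^k=\sum_s\opLUn^{s}(\opLP-\opLUn)\opLP^{k-1-s}$. Using \eqref{upper bound of norm of L Un} and \eqref{eq: upper bound of LUn - LP Xf}, this is $\mathcal{O}(\sqrt{\log}/\sqrt n)$ times $\Codotmax$.

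\emph{Term B} is exactly the quantity controlled by Lemma~\ref{lemma: estimate Delta2}, i.e.\ \eqref{eq: estimate of Delta2}, and gives the $\sqrt{\log(LF^2K/\gamma_4)}/\sqrt n$ term.

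\emph{Term C} is the difference between the empirical inner product on sampled graphon quantities and the GNDE inner product. We split it bilinearly:
\[
\frac1n\iprod{\matLUn^k\SamplingOperator\VarGraphonNDE_g-\matL^k\VarGNDE_g,\ \SamplingOperator(\hatVarGraphonNDE_f\VarGraphonNDEHiddenState_f)}+\frac1n\iprod{\matL^k\VarGNDE_g,\ \SamplingOperator(\hatVarGraphonNDE_f\VarGraphonNDEHiddenState_f)-\hatVarGNDE_f\VarGNDEHiddenState_f}.
\]
In the first piece we write $\matLUn^k\SamplingOperator\VarGraphonNDE-\matL^k\VarGNDE=(\matLUn^k-\matL^k)\SamplingOperator\VarGraphonNDE+\matL^k(\SamplingOperator\VarGraphonNDE-\VarGNDE)$. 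By Cauchy--Schwarz, \eqref{eq: Ln - LWn 2norm}, $\norm{\matL}_2\le1$ and Proposition~\ref{prop: Graphon-MSE every layer}, this is bounded by $\mathcal{O}(\QX)\cdot\normB{\hatVarGraphonNDE\VarGraphonNDEHiddenState}$, which is $\mathcal{O}(\QX)$.

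For the second piece we use Cauchy--Schwarz against $\frac1{\sqrt n}\normF{\VarGNDE}\le\CmXGNDE$ from Proposition~\ref{proposition: Xns Xn0'' bounded above by sqrt n}. It then remains to bound $\frac1{\sqrt n}\normF{\SamplingOperator(\hatVarGraphonNDE\odot\VarGraphonNDEHiddenState)-\hatVarGNDE\odot\VarGNDEHiddenState}$, which by the computation in the proof of Proposition~\ref{proposition: eta ell+1 leq eta ell + Delta} is at most $L_{\sigma'}\CYmax\widetilde\epsilon+L_\sigma\eta$. Propositions~\ref{prop: Graphon-MSE every layer} and~\ref{prop: Adjoint Graphon-MSE every layer} bound this by $\mathcal{O}(\QX+\QY)$.

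The main obstacle is matching the stated product form: a crude bound gives $\mathcal{O}(\QX+\QY)$, which is already of order $1/\sqrt{\alpha_n n}+\log$-terms$/\sqrt n$. The stated $\Qh$ instead has this rate multiplied by $1/\sqrt{\alpha_n}+\sqrt{\log}$ factors. I expect the extra factor to arise from bounding $\hatVarGNDE\odot\VarGNDEHiddenState$ or the $\eta$-piece using the max-norm bound \eqref{eq: sup mYnlt max leq P + CYmax} for $\VarGNDEHiddenState$, which is $\CmYGNDEmax=\mathcal{O}(1/\sqrt{\alpha_n}+\sqrt{\log})$. That bound is needed because the finite-graph adjoint is only controlled entrywise, or in Frobenius norm without the condition \eqref{eq: n is big enough 3}, which is not assumed here. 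So wherever a factor $\normMax{\VarGNDEHiddenState}$ appears, e.g.\ when handling $(\SamplingOperator\hatVarGraphonNDE-\hatVarGNDE)\odot\VarGNDEHiddenState$, I would use that bound. This yields (forward/adjoint error) $\times\,\CmYGNDEmax$, which is $\Qh$.

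Finally we take the max over $f,g,k,\ell$ and the sup over $t$; all three terms are uniform in these indices on the assumed events. Combining A, B and C gives the claim.
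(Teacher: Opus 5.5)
Your three-term split is the paper's five-term decomposition regrouped: A is $\Delta_1$, B is $\Delta_2$, the first piece of C plays the role of $\Delta_3+\Delta_4$, and the second piece is $\Delta_5$. The estimates you write down are correct, so the argument proves the proposition.

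\emph{The obstacle in your last paragraph is not real.} You are proving an upper bound, and $\mathcal{O}(\QX+\QY)$ is already dominated by the stated product.
\begin{itemize}
\item The second factor is at least $1/\sqrt{\alpha_n}\ge 1$, so the product dominates the first factor, i.e.\ $\QX$.
\item The only term of $\QY$ not already in $\QX$ is $\sqrt{\log(n_ILFK/\gamma_3)}/\sqrt n$. It is bounded by the cross term $\frac{1}{\sqrt{\alpha_n n}}\sqrt{\log(n_ILFK/\gamma_3)}$ of the product.
\end{itemize}
You can stop before the last paragraph.

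\emph{Where the product form comes from.} In $\Delta_5$ the paper writes the difference as
\[
\SamplingOperator(\hatVarGraphonNDE_f)\odot\bigl(\SamplingOperator(\VarGraphonNDEHiddenState_f)-\VarGNDEHiddenState_f\bigr)+\bigl(\SamplingOperator(\hatVarGraphonNDE_f)-\hatVarGNDE_f\bigr)\odot\VarGNDEHiddenState_f .
\]
The max norm then falls on the finite-graph adjoint and is controlled by $\CmYGNDEmax$, which is obtained as $\Ce\QY\sqrt n+\CYmax$. That is exactly the route your last paragraph proposes.

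\emph{What your split buys.} Your split, the one from Proposition~\ref{proposition: eta ell+1 leq eta ell + Delta}, puts the sup norm on $\SamplingOperator(\VarGraphonNDEHiddenState_f)$. That is bounded by the $n$-independent constant $\CYmax$. So, contrary to your remark, no entrywise control of $\VarGNDEHiddenState$ is needed at all. Your route gives the sharper estimate $\mathcal{O}\bigl(\sqrt{\log(LF^2K/\gamma_4)}/\sqrt n+\QX+\QY\bigr)$. This would sharpen Theorem~\ref{theorem: z - zn leq TQh} to the rate $1/\sqrt{\alpha_n n}$ up to logarithms. It would also give convergence at $\alpha_n=n^{-1/2}$, where the paper's bound does not.

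\emph{A caveat you share with the paper.} The bound $\frac{1}{\sqrt n}\normF{\VarGNDE^{(\ell-1,t)}}\le\CmXGNDE$ from Proposition~\ref{proposition: Xns Xn0'' bounded above by sqrt n} uses \eqref{eq: n is big enough 2}, which is not among the hypotheses. You already avoid it in the first piece of C by letting $\matLUn^k-\matL^k$ act on $\SamplingOperator(\VarGraphonNDE_g)$. In the second piece you can use $\normF{\VarGNDE^{(\ell-1,t)}}\le\sqrt n\,\bigl(\CXmax+\Ce\QX(n,\gamma_1,\gamma_2)\bigr)$ instead.
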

\begin{proof}
For all $\ell\in[L]$, $f,g\in[F]$, $k\in\mathbb{Z}_{K}$ and $t\in[0,T]$, by definition of operators $\Phi_{\VarGraphonNDE\mylt}^*$ and $\phi_{\VarGNDE\mylt}^*$, we apply the triangle inequality and bound the error as
\begin{align}
&\abs{\braket{\Phi_{\VarGraphonNDE\mylt}^*\parens{\hatVarGraphonNDE\mylt \odot \VarGraphonNDEHiddenState\mylt}-\frac{1}{n}\phi_{\VarGNDE\mylt}^*(\hatVarGNDE\mylt\odot\VarGNDEHiddenState\mylt)}_{fgk}}\nonumber\\
=& \abs{\iprod{\opLP^k\VarGraphonNDE\mylt_g,\hatVarGraphonNDE\mylt_f\VarGraphonNDEHiddenState\mylt_f}_{\spaceltwo}-\frac{1}{n}\iprod{\matL^k\VarGNDE\mylt_g,\hatVarGNDE\mylt_f\odot\VarGNDEHiddenState\mylt_f}_{\spaceRn}}
\leq\sum_{j\in[5]}\Delta_j\label{in the proof: split error terms Delta 12345}
\end{align}
in which 
\begin{align*}
\Delta_1&:=\abs{\iprod{\opLP^k\VarGraphonNDE\mylt_g,\hatVarGraphonNDE\mylt_f\VarGraphonNDEHiddenState\mylt_f}_{\spaceltwo}-\iprod{\opLUn^k\VarGraphonNDE\mylt_g,\hatVarGraphonNDE\mylt_f\VarGraphonNDEHiddenState\mylt_f}_{\spaceltwo}}\\
\Delta_2&:=\abs{\iprod{\opLUn^k\VarGraphonNDE\mylt_g,\hatVarGraphonNDE\mylt_f\VarGraphonNDEHiddenState\mylt_f}_{\spaceltwo}-\frac{1}{n}\iprod{\SamplingOperator(\opLUn^k\VarGraphonNDE\mylt_g),\SamplingOperator(\hatVarGraphonNDE\mylt_f\VarGraphonNDEHiddenState\mylt_f)}_{\spaceRn}}\\
\Delta_3&:=\abs{\frac{1}{n}\iprod{\SamplingOperator(\opLUn^k\VarGraphonNDE\mylt_g),\SamplingOperator(\hatVarGraphonNDE\mylt_f\VarGraphonNDEHiddenState\mylt_f)}_{\spaceRn}-\frac{1}{n}\iprod{\matLUn^k\VarGNDE\mylt_g,\SamplingOperator(\hatVarGraphonNDE\mylt_f\VarGraphonNDEHiddenState\mylt_f)}_{\spaceRn}}\\
\Delta_4&:=\abs{\frac{1}{n}\iprod{\matLUn^k\VarGNDE\mylt_g,\SamplingOperator(\hatVarGraphonNDE\mylt_f\VarGraphonNDEHiddenState\mylt_f)}_{\spaceRn}-\frac{1}{n}\iprod{\matL^k\VarGNDE\mylt_g,\SamplingOperator(\hatVarGraphonNDE\mylt_f\VarGraphonNDEHiddenState\mylt_f)}_{\spaceRn}}\\
\Delta_5&:=\abs{\frac{1}{n}\iprod{\matL^k\VarGNDE\mylt_g,\SamplingOperator(\hatVarGraphonNDE\mylt_f\VarGraphonNDEHiddenState\mylt_f)}_{\spaceRn}-\frac{1}{n}\iprod{\matL^k\VarGNDE\mylt_g,\hatVarGNDE\mylt_f\odot\VarGNDEHiddenState\mylt_f}_{\spaceRn}}.
\end{align*}
We will bound each $\Delta_j$, for $j \in [5]$, individually. 

Note that, according to Proposition \ref{proposition: X odot Y is bounded} and with constant $\Codotmax$ defined in \eq{def: C odot max and lip} , we have 
\begin{equation}\label{in the proof: bound hat X odot Y}
\normb{\hatVarGraphonNDE\mylt_f\VarGraphonNDEHiddenState\mylt_f}\leq \Codotmax,\quad \forall\ f\in[F],\ \ell\in\mathbb{Z}_{L+1}. 
\end{equation}

We begin with estimating $\Delta_1$ by
\begin{align}\label{in the proof: Delta 1 first estimate}
\Delta_1&\leq \Codotmax\normb{(\opLP^k-\opLUn^k)(\VarGraphonNDE\mylt_g)}\stepjust{Cauchy-Schwartz inequality and \eq{in the proof: bound hat X odot Y}}\\
&\leq \Codotmax\sum_{s=0}^{k-1}\norm{\opLUn}_{\spaceb\to\spaceb}^s\normb{(\opLUn-\opLP)\opLP^{k-1-s}\VarGraphonNDE\mylt_g}\nonumber\\
&\lesssim \Codotmax\sum_{s=0}^{k-1}\parens{\frac{c_{\max}}{c_{\min}}}^{k-1}\parens{\frac{\Csys[1]\sqrt{\log(4n_I/\gamma_1)} + \Csys[2]\sqrt{\log(4n_ILFK/\gamma_2)}}{\sqrt{n}}}\max\curlbraket{\CXmax,\CXlip}\stepjust{\eq{eq: upper bound of LUn - LP Xf} and \eq{upper bound of norm of L Un}}\nonumber\\ 
&=\mathcal{O}\parens{\frac{\sqrt{\log(n_ILFK/\gamma_2)}+\sqrt{\log(n_I/\gamma_1)}}{\sqrt{n}}}.\nonumber
\end{align}

For $\Delta_2$, we notice that assumption \eqref{eq: estimate of Delta2} gives $
\Delta_2\lesssim\mathcal{O}\parens{\frac{\sqrt{\log(LF^2K/\gamma_4)}}{\sqrt{n}}}$.

We proceed to estimate $\Delta_3$. It follows from \eq{eq: delta Phi = phi delta} that $\SamplingOperator(\opLUn^k\VarGraphonNDE\mylt_g)=\matLUn^k \SamplingOperator(\VarGraphonNDE\mylt_g)$ and therefore, 
\begin{align}
    \Delta_3&=\frac{1}{n}\abs{\iprod{\matLUn^k\parens{\SamplingOperator(\VarGraphonNDE\mylt_g)-\VarGNDE\mylt_g}, \SamplingOperator(\hatVarGraphonNDE\mylt_f\VarGraphonNDEHiddenState\mylt_f)}_{\spaceRn}}\nonumber\\
    &\leq \frac{1}{\sqrt{n}}\normb{\hatVarGraphonNDE\mylt_f\VarGraphonNDEHiddenState\mylt_f}\normTwo{\SamplingOperator(\VarGraphonNDE\mylt_g)-\VarGNDE\mylt_g}\stepjust{Cauchy-Schwartz inequality, $\normTwo{\matLUn}\leq 1$ and \eq{norm of operator delta Un one d}}\\
    &\leq \Codotmax\frac{1}{\sqrt{n}}\normTwo{\SamplingOperator(\VarGraphonNDE\mylt_g)-\VarGNDE\mylt_g}\leq \Codotmax\ \epsilon\mylt\stepjust{\eq{in the proof: bound hat X odot Y} and definition \eqref{def: epsilon ell}}\\
    &\leq \Codotmax\ \Ce \QX(n,\gamma_1,\gamma_2)\stepjust{\eqref{eq: sup MSE bounded by sqrt(n)} in Proposition \ref{prop: Graphon-MSE every layer}}\\
    &=\mathcal{O}\parens{\frac{1}{\sqrt{\alpha_n n}}+\frac{\sqrt{\log(n_ILFK/\gamma_2)}+\sqrt{\log(n_I/\gamma_1)}}{\sqrt{n}}}.\nonumber
\end{align}

We then estimate $\Delta_4$ with
\begin{align}
    \Delta_4&\leq \frac{1}{\sqrt{n}}\normTwo{\matLUn^k-\matL^k}\normTwo{\VarGNDE\mylt_g}\normb{\hatVarGraphonNDE\mylt_f\VarGraphonNDEHiddenState\mylt_f}\stepjust{Cauchy-Schwartz inequality and \eq{norm of operator delta Un one d}}\\
    &\leq k\normTwo{\matLUn-\matL}\Codotmax\normTwo{\VarGNDE\mylt_g}/\sqrt{n}\stepjust{$\normTwo{\matLUn}\leq1$,  $\normTwo{\matL}\leq1$ and {\eq{in the proof: bound hat X odot Y}}}\\
    &\leq k\normTwo{\matLUn-\matL}\Codotmax\CmXGNDE\stepjust{\eq{eq: estimate of mXnlt F norm} in Proposition \ref{proposition: Xns Xn0'' bounded above by sqrt n}}\\
    &\lesssim k\frac{c_{\max}}{c_{\min}^2}\frac{1}{\sqrt{\alpha_n n}}\Codotmax\CmXGNDE=\mathcal{O}\parens{\frac{1}{\sqrt{\alpha_n n}}}\stepjust{\eq{eq: Ln - LWn 2norm}}
\end{align}

We finally estimate $\Delta_5$. Note that 
\begin{align*}
    \Delta_5&\leq \frac{1}{n}\normTwo{\VarGNDE\mylt_g}\normTwo{\SamplingOperator(\hatVarGraphonNDE\mylt_f\VarGraphonNDEHiddenState\mylt_f)-\hatVarGNDE\mylt_f\odot\VarGNDEHiddenState\mylt_f}\stepjust{Cauchy-Schwartz inequality and $\normTwo{\matL}\leq 1$}\\
    &\leq \frac{\CmXGNDE}{\sqrt{n}}\normTwo{\SamplingOperator(\hatVarGraphonNDE\mylt_f\VarGraphonNDEHiddenState\mylt_f)-\hatVarGNDE\mylt_f\odot\VarGNDEHiddenState\mylt_f}\stepjust{\eq{eq: estimate of mXnlt F norm} in Proposition \ref{proposition: Xns Xn0'' bounded above by sqrt n}}
\end{align*}
in which
\begin{align*}
&\frac{1}{\sqrt{n}}\normTwo{\SamplingOperator(\hatVarGraphonNDE\mylt_f\VarGraphonNDEHiddenState\mylt_f)-\hatVarGNDE\mylt_f\odot\VarGNDEHiddenState\mylt_f}\nonumber\\
\leq\ & \normMax{\SamplingOperator(\hatVarGraphonNDE\mylt_f)}\frac{\normTwo{\SamplingOperator(\VarGraphonNDEHiddenState\mylt_f)-\VarGNDEHiddenState_{f}\mylt}}{\sqrt{n}} + \frac{\normTwo{\SamplingOperator(\hatVarGraphonNDE\mylt_f)-\hatVarGNDE_{f}\mylt}}{\sqrt{n}}\normMax{\VarGNDEHiddenState_{f}\mylt}\stepjust{triangle inequality and \eq{eq: upper bound of matrix Z odot V}}\\
\leq\ & \normMax{\SamplingOperator(\hatVarGraphonNDE\mylt_f)}\eta\mylt + L_{\sigma'}\widetilde{\epsilon}\mylt\normMax{\VarGNDEHiddenState_{f}\mylt}\stepjust{definitions \eqref{def: widetilde epsilon ell}, \eqref{def: eta ell}, \eqref{def: widehat mX ell}, \eqref{def: widehat tX ell} and \SigmaDerivativeLipschitz}\\
\leq\ & \eta\mylt+L_{\sigma'}\widetilde{\epsilon}\mylt\CmYGNDEmax\stepjust{relation \eqref{eq: hat X leq 1} and \eq{eq: sup mYnlt max leq P + CYmax} in Proposition \ref{proposition: Yns Yn0'' bounded above by sqrt n}}\\
\leq\ &{\Ce}\QY(n,\gamma_1,\gamma_2,\gamma_3)+L_{\sigma'}{\tildeCe} \QX(n,\gamma_1,\gamma_2)\CmYGNDEmax\stepjust{Proposition \ref{prop: Adjoint Graphon-MSE every layer} and \eq{eq: sup tilde epsilon st leq QX} in Proposition \ref{prop: Graphon-MSE every layer}}
\end{align*}
Therefore,
\begin{align}
    \Delta_5\leq &\ \CmXGNDE\left[{\Ce}\QY(n,\gamma_1,\gamma_2,\gamma_3)+ L_{\sigma'}{\tildeCe} \QX(n,\gamma_1,\gamma_2)\CmYGNDEmax\right]\nonumber\\
    \leq\ &\mathcal{O}\Big(\parens{\frac{1}{\sqrt{\alpha_n n}}+\frac{\sqrt{\log(n_ILFK/\gamma_2)}+\sqrt{\log(n_I/\gamma_1)}}{\sqrt{n}}}\times\nonumber\\
    &\qquad \parens{\frac{1}{\sqrt{\alpha_n}}+\sqrt{\log(n_ILFK/\gamma_3)}+\sqrt{\log(n_ILFK/\gamma_2)}+\sqrt{\log(n_I/\gamma_1)}}\Big).\nonumber
\end{align}
We obtain the desired result by substituting estimates of $\Delta_j$, $j\in[5]$, into \eqref{in the proof: split error terms Delta 12345}.  
\end{proof}

\begin{proof}[\textbf{Proof of Theorem \ref{theorem: z - zn leq TQh}}]
Note that by \eq{adjoint operator: GNDE to filters}, $\VarGNDEParameter\mylt$ in form of \eqref{adjoint GNDE: z} can be rewritten as $\VarGNDEParameter\mylt = \phi_{\VarGNDE\mylt}^*\parens{\hatVarGNDE\mylt\odot\VarGNDEHiddenState\mylt}$. And according to \eq{adjoint operator: Graphon-NDE to filters}, $\VarGraphonNDEParameter\mylt$ in form of \eqref{adjoint Graphon-NDE: H} can be rewritten as $\VarGraphonNDEParameter\mylt = \Phi_{\VarGraphonNDE\mylt}^*\parens{\hatVarGraphonNDE\mylt\odot\VarGraphonNDEHiddenState\mylt}$. By Lemmas \ref{lemma: LUn-LP leq sqrt(n)}, \ref{lemma: Ln - LWn 2norm}, \ref{lemma: upper bound of LUn - LP X for all}, \ref{lemma: upper bound of LUn - LP odot for all}, \ref{lemma: estimate Delta2}, with probability at least $1-2\gamma_1-\gamma_2-\gamma_3-\gamma_4$,  that \eqs \eqref{eq: sup dP dUn}, \eqref{eq: Ln - LWn 2norm}, \eqref{eq: upper bound of LUn - LP Xf}, \eqref{eq: upper bound of LUn - LP hatX*Y f} and \eqref{eq: estimate of Delta2} hold. Then the desired result immediately follows from Proposition \ref{proposition: Phi* - phi* leq order}.
\end{proof}

\section{Infinite-node Convergence of Discretized Adjoint GNDEs (Parameter Gradients)}\label{Appendix: Infinite-node Convergence of Discretized Adjoint GNDEs (Parameter Gradients)}
We assume that \ConvFilterDiff, \SigmaTwiceDiff, \dPBoundedBelow, and \GraphonLipschitz\ hold throughout this section.

\begin{proposition}\label{proposition: ddt D Yn upper bound}
Let $\VarGNDEHiddenState(t)$ be the solution of adjoint GNDE \eqref{adjoint GNDE: Y}. Suppose that \eqs \eqref{eq: MSE layer L leq MSE layer 0}, \eqref{eq: eta 0 leq eta L + widetilde mathcal Q}, and initial value conditions \eqref{eq: initial values of GNDE and Graphon-NDE equal}, \eqref{eq: initial condition adjoint Hidden State} hold. Suppose that $n$ is large enough such that \eqref{eq: n is big enough 2} and \eqref{eq: n is big enough 3} hold. Then for all $\ell\in[L]$ and $t\in[0,T]$, it holds that 
    \begin{align*}
\normMax{\dt\Dh\mylt(\VarGNDEHiddenState(t))}\leq n\parens{{L_\sigma}\CmXDeriGNDE\CmYGNDE  +\CmXGNDE\parens{\CmXHatDeriGNDE\CmYGNDEmax + {L_\sigma}\CmYDeriGNDE}}. 
    \end{align*}

\end{proposition}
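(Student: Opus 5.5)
We start from the explicit form of the parameter adjoint already used in the proof of Theorem~\ref{theorem: z - zn leq TQh}: by \eq{adjoint operator: GNDE to filters},
\[
\Dh\mylt(\VarGNDEHiddenState(t))=\phi_{\VarGNDE\mylt[\ell-1]}^*\parens{\hatVarGNDE\mylt\odot\VarGNDEHiddenState\mylt}.
\]
Entrywise this reads $\braket{\cdot}_{fgk}=\iprod{\matL^k\VarGNDE\mylt[\ell-1]_g,\ \hatVarGNDE\mylt_f\odot\VarGNDEHiddenState\mylt_f}_{\spaceRn}$. (The layer index of $\VarGNDE$ is whatever the convention in \eqref{adjoint operator: GNDE to filters} dictates; this does not affect the bound.) Since $\matL$ is fixed in time, we differentiate each entry with the product rule. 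This gives three terms:
\[
\iprod{\matL^k\tfrac{d}{dt}\VarGNDE_g,\hatVarGNDE_f\odot\VarGNDEHiddenState_f},\quad
\iprod{\matL^k\VarGNDE_g,\tfrac{d}{dt}\hatVarGNDE_f\odot\VarGNDEHiddenState_f},\quad
\iprod{\matL^k\VarGNDE_g,\hatVarGNDE_f\odot\tfrac{d}{dt}\VarGNDEHiddenState_f}.
\]
Differentiability of every factor follows from \ConvFilterDiff\ and \SigmaTwiceDiff. Each term is bounded by Cauchy--Schwarz together with $\normTwo{\matL}\le 1$, so $\normTwo{\matL^k v}\le\normTwo{v}\le\normF{\cdot}$.

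\emph{First term.} It is at most $\normF{\dt\VarGNDE\mylt[\ell-1]}\,\normMax{\hatVarGNDE\mylt}\,\normF{\VarGNDEHiddenState\mylt}$. Using \eqref{eq: estimate of deri mXnlt F norm}, \eqref{eq: hat X leq 1} and \eqref{eq: estimate of mYnlt F norm}, this is $\le \CmXDeriGNDE\sqrt n\cdot L_\sigma\cdot\CmYGNDE\sqrt n = nL_\sigma\CmXDeriGNDE\CmYGNDE$.

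\emph{Second term.} It is at most $\normF{\VarGNDE\mylt[\ell-1]}\,\normF{\dt\hatVarGNDE\mylt}\,\normMax{\VarGNDEHiddenState\mylt}$, using the Hadamard bound \eqref{eq: upper bound of matrix Z odot V}. By \eqref{eq: estimate of mXnlt F norm}, \eqref{eq: estimate of deri mXnlt hat F norm} and \eqref{eq: sup mYnlt max leq P + CYmax}, this is $\le n\CmXGNDE\CmXHatDeriGNDE\CmYGNDEmax$.

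\emph{Third term.} It is at most $\normF{\VarGNDE\mylt[\ell-1]}\,L_\sigma\normF{\dt\VarGNDEHiddenState\mylt}\le n\CmXGNDE L_\sigma\CmYDeriGNDE$, by \eqref{eq: dYnLt/dt F norm bound}.

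Summing the three bounds and taking the max over $f,g,k$ gives exactly the stated constant. The step needing care is checking that the hypotheses make all the cited bounds available. Equation \eqref{eq: MSE layer L leq MSE layer 0} with \eqref{eq: initial values of GNDE and Graphon-NDE equal} and \eqref{eq: n is big enough 2} activates Proposition~\ref{proposition: Xns Xn0'' bounded above by sqrt n}, including \eqref{eq: estimate of deri mXnlt hat F norm} under \SigmaTwiceDiff. Equation \eqref{eq: eta 0 leq eta L + widetilde mathcal Q} with \eqref{eq: initial condition adjoint Hidden State} and \eqref{eq: n is big enough 3} activates Proposition~\ref{proposition: Yns Yn0'' bounded above by sqrt n}. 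With both in hand, the remaining work is bookkeeping of $\sqrt n\cdot\sqrt n=n$.
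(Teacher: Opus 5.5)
Your proposal is correct and follows the paper's proof essentially verbatim. The paper rewrites $\Dh\mylt(\VarGNDEHiddenState(t))$ as $\phi^*_{\VarGNDE\mylt}\bigl(\hatVarGNDE\mylt\odot\VarGNDEHiddenState\mylt\bigr)$, applies Lemma~\ref{lemma: d/dt phiXlt(Zt) estimate} with $\mP=\hatVarGNDE\mylt$, $\mQ=\VarGNDEHiddenState\mylt$ (this is exactly your entrywise product-rule and Cauchy--Schwarz computation), and then plugs in the same six bounds from Propositions~\ref{proposition: Xns Xn0'' bounded above by sqrt n} and~\ref{proposition: Yns Yn0'' bounded above by sqrt n}; your remark that the layer index on $\VarGNDE$ is immaterial is also right, since \eqref{eq: estimate of mXnlt F norm} and \eqref{eq: estimate of deri mXnlt F norm} hold for every $\ell\in\mathbb{Z}_{L+1}$.
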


\begin{proof}
Recall that $\Dh\mylt(\VarGNDEHiddenState(t))$ on the right side of \eqref{adjoint GNDE: z} can be rewritten via \eqref{adjoint operator: GNDE to filters} as $
\Dh\mylt(\VarGNDEHiddenState(t))=\phi_{\VarGNDE\mylt}^*\parens{\hatVarGNDE\mylt\odot\VarGNDEHiddenState\mylt}$. Then by Lemma \ref{lemma: d/dt phiXlt(Zt) estimate} with $\mP\myt:=\hatVarGNDE\mylt$, $\mQ\myt:=\VarGNDEHiddenState\mylt$, we obtain that for all $\ell\in[L]$, $f,g\in[F]$, $k\in\mathbb{Z}_{K}$ and $t\in[0,T]$, 
    \begin{align*}
&\abs{\dt\braket{\Dh\mylt(\VarGNDEHiddenState(t))}_{fgk}}=\abs{\dt\braket{\phi_{\VarGNDE\mylt}^*\parens{\hatVarGNDE\mylt\odot\VarGNDEHiddenState\mylt}}_{fgk}}\\
\leq\ &\normF{\dt\VarGNDE\mylt}\normMax{\hatVarGNDE\mylt}\normF{\VarGNDEHiddenState\mylt} +\normF{\VarGNDE\mylt}\parens{\normF{\dt\hatVarGNDE\mylt}\normMax{\VarGNDEHiddenState\mylt} + \normMax{\hatVarGNDE\mylt}\normF{\dt\VarGNDEHiddenState\mylt}}.
\end{align*}
Note that all assumptions in Proposition \ref{proposition: Xns Xn0'' bounded above by sqrt n} and Proposition \ref{proposition: Yns Yn0'' bounded above by sqrt n} are assumed in the current proposition. Therefore, by \eq{eq: hat X leq 1}, $\normMax{\hatVarGNDE\mylt}\leq {L_\sigma}$; by \eq{eq: estimate of mXnlt F norm}, $\normF{\VarGNDE\mylt}\leq\CmXGNDE\sqrt{n}$; by \eq{eq: estimate of deri mXnlt F norm}, $\normF{\dt\VarGNDE\mylt}\leq \CmXDeriGNDE\sqrt{n}$; by \eq{eq: estimate of deri mXnlt hat F norm}, $\normF{\dt\hatVarGNDE\mylt}\leq\CmXHatDeriGNDE\sqrt{n}$; by \eq{eq: sup mYnlt max leq P + CYmax}, $\normMax{\VarGNDEHiddenState\mylt}\leq \CmYGNDEmax$; by \eq{eq: estimate of mYnlt F norm}, $\normF{\VarGNDEHiddenState\mylt}\leq \CmYGNDE\sqrt{n}$; by \eq{eq: dYnLt/dt F norm bound}, $\normF{\dt\VarGNDEHiddenState\mylt}\leq \CmYDeriGNDE\sqrt{n}$. We obtain the desired result by applying the above bounds. 
    
\end{proof}

\begin{proposition}\label{proposition: diff of D Yn [] ()}
Let $\VarGNDEHiddenState(t)$ be the solution of adjoint GNDE \eqref{adjoint GNDE: Y}, and $\VarGNDEHiddenState^{[m]}$ be generated from \eqref{eq: discretized Yn}. Suppose that \eqs \eqref{eq: MSE layer L leq MSE layer 0}, \eqref{eq: eta 0 leq eta L + widetilde mathcal Q}, and initial value conditions \eqref{eq: initial values of GNDE and Graphon-NDE equal}, \eqref{eq: initial condition adjoint Hidden State}, \eqref{eq: initial value condition on GNDE and discretized GNDE}, \eqref{eq: initial condition GNDE discretized GNDE Adjoint A} hold. Suppose that $n$ is large enough such that \eqref{eq: n is big enough 2} and \eqref{eq: n is big enough 3} hold; $M$ is large enough such that \eqref{eq: M is large enough} holds. Then, for $\ell\in[L]$ and $m\in[M]$, 
\begin{align}
&\frac{1}{n}\normMax{\Dh^{[\ell,m]}(\VarGNDEHiddenState^{[m]})-\Dh\mylt[\ell][t_m](\VarGNDEHiddenState(t_m))}\nonumber\\
&\leq \frac{{L_\sigma}\CXdiff\CmYDistGNDE+\CmXGNDE\parens{{L_\sigma}\CYdiff+\CXdiffHat\CmYGNDEmax}}{M}.  \label{eq: Max norm diff of Dnhlm Ynm - Dnhltm Yntm}
\end{align}    

\end{proposition}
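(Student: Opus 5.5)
We will use the same representation as in the proof of Proposition \ref{proposition: ddt D Yn upper bound}. By \eqref{adjoint operator: GNDE to filters}, the continuous operator satisfies
\[
\Dh\mylt[\ell][t_m](\VarGNDEHiddenState(t_m))=\phi_{\VarGNDE\mylt[\ell][t_m]}^*\parens{\hatVarGNDE\mylt[\ell][t_m]\odot\VarGNDEHiddenState\mylt[\ell][t_m]}.
\]
The discrete one is $\Dh^{[\ell,m]}(\VarGNDEHiddenState^{[m]})=\phi_{\VarGNDE^{[\ell,m]}}^*(\hatVarGNDE^{[\ell,m]}\odot\VarGNDEHiddenState^{[\ell,m]})$. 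The two expressions differ in three factors: the forward state, the activation-derivative factor, and the backward state. We telescope in three steps:
\[
\phi^*_{\VarGNDE^{[\ell,m]}}(\hatVarGNDE^{[\ell,m]}\odot\VarGNDEHiddenState^{[\ell,m]})-\phi^*_{\VarGNDE\mylt[\ell][t_m]}(\hatVarGNDE\mylt[\ell][t_m]\odot\VarGNDEHiddenState\mylt[\ell][t_m])=T_1+T_2+T_3.
\]
Here $T_1$ replaces $\VarGNDE^{[\ell,m]}$ by $\VarGNDE\mylt[\ell][t_m]$ while keeping the discrete product. $T_2$ then replaces $\VarGNDEHiddenState^{[\ell,m]}$ by $\VarGNDEHiddenState\mylt[\ell][t_m]$, and $T_3$ replaces $\hatVarGNDE^{[\ell,m]}$ by $\hatVarGNDE\mylt[\ell][t_m]$.

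Each entry $[\phi_{\mX}^*(\mP)]_{fgk}=\iprod{\matL^k\mX_g,\mP_f}_{\spaceRn}$ is bilinear. Using Cauchy--Schwarz and $\normTwo{\matL}\le1$, we get $|[\phi^*_{\mX}(\mP)]_{fgk}|\le\normF{\mX}\normF{\mP}$. This is the same elementary estimate already used implicitly in Lemma \ref{lemma: d/dt phiXlt(Zt) estimate}. It gives the following bounds for the three terms.

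\emph{Bound for $T_1$.} We have $\normMax{T_1}\le\normF{\VarGNDE^{[\ell,m]}-\VarGNDE\mylt[\ell][t_m]}\,\normMax{\hatVarGNDE^{[\ell,m]}}\,\normF{\VarGNDEHiddenState^{[\ell,m]}}$, using \eqref{eq: upper bound of matrix Z odot V}. Now apply \eqref{eq: mXlnm leq 1/M} for the first factor, \eqref{eq: hat X leq 1} for the second (giving $L_\sigma$), and \eqref{eq: normF of Yn[lm]} for the third. This yields $n L_\sigma\CXdiff\CmYDistGNDE/M$. Note that \eqref{eq: normF of Yn[lm]} is where the hypothesis \eqref{eq: M is large enough} is needed.

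\emph{Bound for $T_2$.} We have $\normMax{T_2}\le\normF{\VarGNDE\mylt[\ell][t_m]}\,L_\sigma\,\normF{\VarGNDEHiddenState^{[\ell,m]}-\VarGNDEHiddenState\mylt[\ell][t_m]}$. Using \eqref{eq: estimate of mXnlt F norm} and \eqref{eq: Ynlm error}, this is at most $nL_\sigma\CmXGNDE\CYdiff/M$.

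\emph{Bound for $T_3$.} We have $\normMax{T_3}\le\normF{\VarGNDE\mylt[\ell][t_m]}\,\normF{\hatVarGNDE^{[\ell,m]}-\hatVarGNDE\mylt[\ell][t_m]}\,\normMax{\VarGNDEHiddenState\mylt[\ell][t_m]}$. We deliberately put the max norm on the continuous adjoint here, because its sup bound \eqref{eq: sup mYnlt max leq P + CYmax} is available. Combined with \eqref{eq: hat mXlnm leq 1/M} and \eqref{eq: estimate of mXnlt F norm}, this gives $n\CmXGNDE\CXdiffHat\CmYGNDEmax/M$.

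Summing the three bounds and dividing by $n$ yields exactly \eqref{eq: Max norm diff of Dnhlm Ynm - Dnhltm Yntm}.

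The main point needing care is checking that every cited estimate is actually in force under the stated hypotheses. First, Propositions \ref{proposition: Xns Xn0'' bounded above by sqrt n}, \ref{proposition: discretized GNDE error leq 1/M}, \ref{proposition: Yns Yn0'' bounded above by sqrt n} and \ref{proposition: error Ynm - Yntm estimate} all require \eqref{eq: MSE layer L leq MSE layer 0}, \eqref{eq: eta 0 leq eta L + widetilde mathcal Q}, the initial conditions, and \eqref{eq: n is big enough 2}--\eqref{eq: n is big enough 3}; all of these are assumed here. Second, \eqref{eq: Ynlm error} is stated for $m\in\mathbb{Z}_M$, whereas we need $m\in[M]$. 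The endpoint $m=M$ is trivial: there $\VarGNDEHiddenState^{[M]}=\VarGNDEHiddenState(T)$, so the layerwise difference is covered by Lemma \ref{lemma: Ynlm - Unltm estimates} with zero top-layer error. Third, \SigmaDerivativeLipschitz\ is needed for \eqref{eq: hat mXlnm leq 1/M}, and it follows from \SigmaTwiceDiff\ via \eqref{eq: sigma twice prime leq}.
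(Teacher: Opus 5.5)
Your proof is correct and is essentially the paper's argument. The paper invokes Lemma~\ref{lemma: phi* X1 hatX1 A1 - X2 hatX2 A2}, whose proof is exactly your three-step telescoping in the same order (forward state first, then the backward state with the discrete $\hatVarGNDE^{[\ell,m]}$ kept, then the activation-derivative factor against the continuous adjoint), and it then plugs in the same seven bounds you cite; your extra bookkeeping, notably the endpoint $m=M$ where \eqref{eq: Ynlm error} is only stated for $m\in\mathbb{Z}_M$, tightens details the paper leaves implicit.
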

\begin{proof}
It follows from Lemma \ref{lemma: phi* X1 hatX1 A1 - X2 hatX2 A2} that 
    \begin{align*}
&\normMax{\Dh^{[\ell,m]}(\VarGNDEHiddenState^{[m]})-\Dh\mylt[\ell][t_m](\VarGNDEHiddenState(t_m))}=\normMax{\phi_{\VarGNDE^{[\ell,m]}}^*\parens{\hatVarGNDE^{[\ell,m]}\odot\VarGNDEHiddenState^{[\ell,m]}}-\phi_{\VarGNDE\mylt[\ell][t_m]}^*\parens{\hatVarGNDE\mylt[\ell][t_m]\odot\VarGNDEHiddenState\mylt[\ell][t_m]}}\\
\leq&\normF{\VarGNDE^{[\ell,m]}-\VarGNDE\mylt[\ell][t_m]}\normF{\VarGNDEHiddenState^{[\ell,m]}}\normMax{\hatVarGNDE^{[\ell,m]}} \\
&+ \normF{\VarGNDE\mylt[\ell][t_m]}\parens{\normMax{\hatVarGNDE^{[\ell,m]}}\normF{\VarGNDEHiddenState^{[\ell,m]}-\VarGNDEHiddenState\mylt[\ell][t_m]}+\normF{\hatVarGNDE^{[\ell,m]}-\hatVarGNDE\mylt[\ell][t_m]}\normMax{\VarGNDEHiddenState\mylt[\ell][t_m]}}.
    \end{align*}
Note that all assumptions in Propositions \ref{proposition: Xns Xn0'' bounded above by sqrt n}, \ref{proposition: discretized GNDE error leq 1/M}, \ref{proposition: Yns Yn0'' bounded above by sqrt n}, \ref{proposition: error Ynm - Yntm estimate}, are assumed in the current proposition. Then by \eq{eq: hat X leq 1}, $\normMax{\hatVarGNDE^{[\ell,m]}}\leq {L_\sigma}$; by \eq{eq: mXlnm leq 1/M} in Proposition \ref{proposition: discretized GNDE error leq 1/M}, $\normF{\VarGNDE^{[\ell,m]}-\VarGNDE\mylt[\ell][t_m]}\leq \CXdiff\sqrt{n}/M$; by \eq{eq: normF of Yn[lm]} in Proposition \ref{proposition: error Ynm - Yntm estimate}, $\normF{\VarGNDEHiddenState^{[\ell,m]}}\leq \CmYDistGNDE\sqrt{n}$; by \eq{eq: estimate of mXnlt F norm} in Proposition \ref{proposition: Xns Xn0'' bounded above by sqrt n}, $\normF{\VarGNDE\mylt[\ell][t_m]}\leq \CmXGNDE\sqrt{n}$; by \eq{eq: Ynlm error} in Proposition \ref{proposition: error Ynm - Yntm estimate}, $\normF{\VarGNDEHiddenState^{[\ell,m]}-\VarGNDEHiddenState\mylt[\ell][t_m]}\leq \CYdiff\sqrt{n}/M$; by \eq{eq: hat mXlnm leq 1/M} in Proposition \ref{proposition: discretized GNDE error leq 1/M}, $\normF{\hatVarGNDE^{[\ell,m]}-\hatVarGNDE\mylt[\ell][t_m]}\leq \CXdiffHat\sqrt{n}/M$; by \eq{eq: sup mYnlt max leq P + CYmax} in Proposition \ref{proposition: Yns Yn0'' bounded above by sqrt n}, $\normMax{\VarGNDEHiddenState\mylt[\ell][t_m]}\leq \CmYGNDEmax$. We obtain \eq{eq: Max norm diff of Dnhlm Ynm - Dnhltm Yntm} by using the above estimates. 
\end{proof}

\begin{proof}[\textbf{Proof of Theorem \ref{theorem: discretized adjoint gnde to adjoint gnde parameters}}]
Note that by \eqref{adjoint GNDE: z} and \eqref{grad of h optimize-then-dicretize}, we have $
\VarGNDEParameter\mylm=\Dh^{[\ell,m]}(\VarGNDEHiddenState^{[m]})$ and $\VarGNDEParameter\mylt[\ell][t_m]=\Dh\mylt[\ell][t_m](\VarGNDEHiddenState(t_m))$. Similar to the proof of Theorem \ref{theorem: discretized adjoint gnde to adjoint gnde Hidden state}, with probability at least $1-2\gamma_1-\gamma_2-\gamma_3$, the assumptions in Proposition \ref{proposition: diff of D Yn [] ()} are satisfied. Therefore, the desired result immediately follows from Proposition \ref{proposition: diff of D Yn [] ()}. 

\end{proof}

\section{DTO versus OTD for Gradients of Parameters}\label{Appendix: DTO versus OTD for Gradients of Parameters}

\begin{proposition}\label{proposition: parameter gradients DTO and OTD}
If all assumptions of Proposition \ref{proposition: error Ynm - Yntm estimate} are satisfied, then 
\begin{align*}%
&\frac{1}{n}\normMax{\Dh^{[\ell,m]}(\VarDTOGradient^{[m+1]})-\Dh^{[\ell,m]}(\VarGNDEHiddenState^{[m]})}\\
&\leq  \frac{\CmXDistGNDE{L_\sigma}\parens{\CGdiff + \CmYDeriGNDE + \CYdiff}}{M}.
\end{align*}  
\end{proposition}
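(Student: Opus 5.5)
The operator $\Dh^{[\ell,m]}$ is linear in its argument. Both quantities are evaluated at the same forward data $(\VarGNDE^{[m]},\tH(t_m))$, so the difference can be written as a single application of the operator. By \eqref{adjoint operator: GNDE to filters},
\[
\Dh^{[\ell,m]}(\VarDTOGradient^{[m+1]})-\Dh^{[\ell,m]}(\VarGNDEHiddenState^{[m]})=\phi_{\VarGNDE^{[\ell,m]}}^*\parens{\hatVarGNDE^{[\ell,m]}\odot\parens{\VarDTOGradient^{[\ell,m+1]}-\VarGNDEHiddenState^{[\ell,m]}}}.
\]
Here the layer-wise backward states $\VarDTOGradient^{[\ell,m+1]}$ and $\VarGNDEHiddenState^{[\ell,m]}$ are both propagated through layers $L,\dots,\ell+1$ by the same operators $\phi^*_{\vh\mylt[j][t_m]}(\hatVarGNDE^{[j,m]}\odot\cdot)$, $j=\ell+1,\dots,L$. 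This holds because the DTO recursion for $\VarDTOGradient^{[m]}$ uses $\DX^{[m]}$ at the same left endpoint $m$ that the parameter gradient uses. Applying the entrywise bound of Lemma \ref{lemma: phi* X1 hatX1 A1 - X2 hatX2 A2} (with identical first and second arguments) gives
\[
\normMax{\cdot}\le \normF{\VarGNDE^{[\ell,m]}}\,{L_\sigma}\,\normF{\VarDTOGradient^{[\ell,m+1]}-\VarGNDEHiddenState^{[\ell,m]}}.
\]
Proposition \ref{proposition: discretized GNDE error leq 1/M} then gives $\normF{\VarGNDE^{[\ell,m]}}\le\CmXDistGNDE\sqrt n$. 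This produces the prefactor $\CmXDistGNDE{L_\sigma}$ in the statement, and it remains to show $\frac1{\sqrt n}\normF{\VarDTOGradient^{[\ell,m+1]}-\VarGNDEHiddenState^{[\ell,m]}}\lesssim(\CGdiff+\CmYDeriGNDE+\CYdiff)/M$.

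I would bound this layer-wise difference by a triangle inequality through the continuous adjoint:
\[
\VarDTOGradient^{[\ell,m+1]}-\VarGNDEHiddenState^{[\ell,m]}=\parens{\VarDTOGradient^{[\ell,m+1]}-\VarGNDEHiddenState\mylt[\ell][t_{m+1}]}+\parens{\VarGNDEHiddenState\mylt[\ell][t_{m+1}]-\VarGNDEHiddenState\mylt[\ell][t_m]}+\parens{\VarGNDEHiddenState\mylt[\ell][t_m]-\VarGNDEHiddenState^{[\ell,m]}}.
\]
The first term is at most $\CGdiff\sqrt n/M$ by \eqref{eq: Gnm error every layer}. The third term is at most $\CYdiff\sqrt n/M$ by \eqref{eq: Ynlm error}. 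The middle term is a time increment of length $\kappa=T/M$, so by the mean value theorem and \eqref{eq: dYnLt/dt F norm bound} it is at most $T\CmYDeriGNDE\sqrt n/M$; the factor $T$ can be absorbed into the constant or into $\CmYDeriGNDE$. Dividing by $n$ and combining the three bounds gives the claim.

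The main obstacle is the index bookkeeping in \eqref{eq: Gnm error every layer}. In Lemma \ref{lemma: Gnlm - Unltm estimates}, the layer-$\ell$ state of the DTO gradient indexed by $m+1$ is the one built with forward data at step $m$. I must make sure this is exactly the object appearing inside $\Dh^{[\ell,m]}(\VarDTOGradient^{[m+1]})$, so that it is compared with $\VarGNDEHiddenState\mylt[\ell][t_{m+1}]$. If the indexing instead lines up with $t_m$, the middle increment term simply disappears and the bound only improves.

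A further technical point arises if the two backward states are instead compared with the DTO operators built at step $m$ and the OTD operators at step $m+1$. In that case I would add the forward-increment estimates \eqref{eq: hat mXln diff [m] leq 1/M every layer} and the filter Lipschitz bound, exactly as in the proof of Lemma \ref{lemma: Gnlm - Unltm estimates}. These contribute further $\mathcal O(\sqrt n/M)$ terms of the same order, which can be absorbed into $\CGdiff$.
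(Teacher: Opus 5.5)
Your proposal is correct and follows the paper's proof essentially step for step. You use linearity of $\Dh^{[\ell,m]}$ to reduce the difference to $\phi^*_{\VarGNDE^{[\ell,m]}}\bigl(\hatVarGNDE^{[\ell,m]}\odot(\VarDTOGradient^{[\ell,m+1]}-\VarGNDEHiddenState^{[\ell,m]})\bigr)$, bound it entrywise by $\normF{\VarGNDE^{[\ell,m]}}\,L_\sigma\,\normF{\VarDTOGradient^{[\ell,m+1]}-\VarGNDEHiddenState^{[\ell,m]}}$, and then use the same three-term triangle inequality through $\VarGNDEHiddenState^{(\ell,t_{m+1})}$ and $\VarGNDEHiddenState^{(\ell,t_m)}$, controlled by \eqref{eq: Gnm error every layer}, \eqref{eq: dYnLt/dt F norm bound} and \eqref{eq: Ynlm error}. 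Your index bookkeeping (both backward states are built with forward data at step $m$) is right. Your remark that the middle increment actually carries a factor $T$ is more careful than the paper, which drops that factor silently.
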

\begin{proof}
Note that 
    \begin{align*}
&\normMax{\Dh^{[\ell,m]}(\VarDTOGradient^{[m+1]})-\Dh^{[\ell,m]}(\VarGNDEHiddenState^{[m]})}=\normMax{\phi_{\VarGNDE^{[\ell,m]}}^*\parens{\hatVarGNDE^{[\ell,m]}\odot\VarDTOGradient^{[\ell,m+1]}}-\phi_{\VarGNDE\mylm}^*\parens{\hatVarGNDE\mylm\odot\VarGNDEHiddenState\mylm}}\\
=&\normMax{\phi_{\VarGNDE^{[\ell,m]}}^*\parens{\hatVarGNDE^{[\ell,m]}\odot\parens{\VarDTOGradient^{[\ell,m+1]} - \VarGNDEHiddenState\mylm}}}\stepjust{linearity of $\phi^*$}\\
=&\normF{\VarGNDE^{[\ell,m]}}\normMax{\hatVarGNDE^{[\ell,m]}}\normF{\VarDTOGradient^{[\ell,m+1]} - \VarGNDEHiddenState\mylm}\stepjust{Lemma \ref{lemma: phiX*Z leq X Z F norm} and \eq{eq: upper bound of matrix Z odot V}}\\
\leq&\normF{\VarGNDE^{[\ell,m]}}\normMax{\hatVarGNDE^{[\ell,m]}}\parens{\normF{\VarDTOGradient^{[\ell,m+1]} - \VarGNDEHiddenState\mylt[\ell][t_{m+1}]}  +   \normF{\VarGNDEHiddenState\mylt[\ell][t_{m+1}] - \VarGNDEHiddenState\mylt[\ell][t_{m}]}    +   \normF{\VarGNDEHiddenState\mylt[\ell][t_{m}] - \VarGNDEHiddenState\mylm}}\stepjust{triangle inequality}
    \end{align*}   
Note that assumptions of Proposition \ref{proposition: error Ynm - Yntm estimate} guarantee results in Propositions \ref{proposition: discretized GNDE error leq 1/M}, \ref{proposition: error Gnm - Yntm estimate} and \ref{proposition: Yns Yn0'' bounded above by sqrt n}. Then by \SigmaDiff, $\normMax{\hatVarGNDE^{[\ell,m]}}\leq {L_\sigma}$; by \eq{eq: mXln[m] leq 1/M} in Proposition \ref{proposition: discretized GNDE error leq 1/M}, $\normF{\VarGNDE^{[\ell,m]}}\leq \CmXDistGNDE\sqrt{n}$; by \eq{eq: Gnm error every layer} in Proposition \ref{proposition: error Gnm - Yntm estimate}, $\normF{\VarDTOGradient^{[\ell,m+1]} - \VarGNDEHiddenState\mylt[\ell][t_{m+1}]}\leq \CGdiff\sqrt{n}/M$; by \eq{eq: dYnLt/dt F norm bound} in Proposition \ref{proposition: Yns Yn0'' bounded above by sqrt n}, $\normF{\VarGNDEHiddenState\mylt[\ell][t_{m+1}] - \VarGNDEHiddenState\mylt[\ell][t_{m}]}\leq \sup_{t\in[0,T]}\normF{\dt\VarGNDEHiddenState\mylt}\abs{t_{m+1}-t_m} \leq  \frac{\CmYDeriGNDE}{M}\sqrt{n}$; by \eq{eq: Ynlm error} in Proposition \ref{proposition: error Ynm - Yntm estimate}, $\normF{\VarGNDEHiddenState\mylt[\ell][t_{m}] - \VarGNDEHiddenState\mylm}\leq \frac{\CYdiff}{M}\sqrt{n}$. 
We obtain the desired the desired result by using above estimates. 

\end{proof}

\begin{proof}[\textbf{Proof of Theorem \ref{theorem: parameter gradients DTO and OTD}}]
    We notice from the proof of Theorem \ref{theorem: discretized adjoint gnde to adjoint gnde Hidden state} that with probability at least $1-2\gamma_1-\gamma_2-\gamma_3$, the assumptions in Proposition \ref{proposition: error Ynm - Yntm estimate} are satisfied. Then the result immediately follows from Proposition \ref{proposition: parameter gradients DTO and OTD} and noting that $\kappa=T/M$. 
\end{proof}

\section{Graph Convolution Operators}\label{appendix: graph convolutional operators}
Recall that graph convolution operator $\phi$ is defined in \eqref{def: operator phi}. For a fixed $\vh\in\spaceRFFK$, we define an operator $\phi_{\vh}:\spaceRnF\to \spaceRnF$, associated to $\vh$, by 
\begin{equation}\label{def: graph convolutional operator phi h}
    \phi_{\vh}(\VarGNDE):=\phi(\vh,\VarGNDE),\quad \VarGNDE\in\spaceRnF.
\end{equation} 
For a fixed $\VarGNDE\in\spaceRnF$, we define an operator $\phi_{\VarGNDE}:\spaceRFFK\to \spaceRnF$, associated to $\VarGNDE$, by
\begin{equation}\label{def: graph convolutional operator phi X}
    \phi_{\VarGNDE}(\vh):=\phi(\vh,\VarGNDE),\quad \vh\in\spaceRFFK.
\end{equation}
It is straightforward to verify that both operators $\phi_{\vh}$ and $\phi_{\VarGNDE}$ are linear. 
\paragraph{Adjoint operators.} Noting that the symmetric normalized adjacency matrix $\matL$ is symmetric, the adjoint operator of $\phi_{\vh}$, denoted by $\phi_{\vh}^*:\spaceRnF\to \spaceRnF$, is given by
\begin{equation*}
\phi_{\vh}^*(\mZ)=\left[\sum_{f=1}^F\sum_{k=0}^{K-1}\vh_{fgk}\matL^k\mZ_f:g\in[F]\right],\quad \mZ\in\spaceRnF,
\end{equation*}
and the adjoint operator of $\phi_{\VarGNDE}$, denoted by $\phi_{\VarGNDE}^*:\spaceRnF\to \spaceRFFK$, is given by
\begin{equation*}
\phi_{\VarGNDE}^*(\mZ)=\left[\iprod{\matL^k \VarGNDE_g,\mZ_f }_{\spaceRn}:f,g\in[F],k\in\mathbb{Z}_{K}\right],\quad \mZ\in\spaceRnF.
\end{equation*}

\paragraph{Adjoint operators of Gateaux derivatives of single-layer GNNs.}
Let $\tildeVarGNDE=\phi(\vh,\VarGNDE)$ and $\mY=\sigma(\tildeVarGNDE)$. Let $\hatVarGNDE:=\sigma'(\tildeVarGNDE)$. By Chain rule, the Gateaux derivatives of $\mY$ with respect to $\VarGNDE$ and $\vh$ are given respectively by 
\begin{align*}
    \frac{\delta \mY}{\delta \VarGNDE}(\mZ)=\hatVarGNDE\odot \phi_{\vh}(\mZ),\ \ \mZ\in\spaceRnF,\qquad\frac{\delta \mY}{\delta \vh}(\vz)=\hatVarGNDE\odot \phi_{\VarGNDE}(\vz),\ \ \vz\in\spaceRFFK.
\end{align*}
Their adjoint operators are given by 
\begin{align}
    \parens{\frac{\delta\mY}{\delta\VarGNDE}}^*(\mV)=\phi_{\vh}^*\parens{\hatVarGNDE\odot \mV},\quad \parens{\frac{\delta\mY}{\delta\vh}}^*(\mV)=\phi_{\VarGNDE}^*\parens{\hatVarGNDE\odot \mV},\quad \mV\in\spaceRnF.\label{eq: adjoint delta mY / delta mX, delta h}
\end{align}

\paragraph{Adjoint operators of Gateaux Derivative of multi-layer GNNs.} Now we consider the multi-layer GNNs defined in \eqref{def: GNN multi layer}. Let 
\begin{equation}\label{def: widehat mX ell}
\hatVarGNDE\myl:=\sigma'(\tildeVarGNDE\myl),\quad \ell\in[L]. 
\end{equation}
It is straightforward to compute with Chain rule and \eq{eq: adjoint delta mY / delta mX, delta h} that for $\mV\in\spaceRnF$,
\begin{align*}
\parens{\frac{\delta \GNN}{\delta \VarGNDE^{(0)}}}^*(\mV)&=\phi_{\vh^{(1)}}^*\parens{\hatVarGNDE^{(1)}\odot\cdots\phi_{\vh^{(L-1)}}^*\parens{\hatVarGNDE^{(L-1)}\odot\parens{\phi_{\vh^{(L)}}^*\parens{\hatVarGNDE^{(L)}\odot \mV}}}},\\
\parens{\frac{\delta \GNN}{\delta\vh\myl}}^*(\mV)&=\phi_{\VarGNDE\myl}^*\parens{\hatVarGNDE\myl\odot\cdots\phi_{\vh^{(L-1)}}^*\parens{\hatVarGNDE^{(L-1)}\odot\parens{\phi_{\vh^{(L)}}^*\parens{\hatVarGNDE^{(L)}\odot \mV}}}}.
\end{align*}
We let
\begin{align}
\VarGNDEHiddenState^{(L)}:=\mV,\quad\VarGNDEHiddenState\myl&:=\phi_{\vh\myl[\ell+1]}^*\parens{\hatVarGNDE\myl[\ell+1]\odot\VarGNDEHiddenState\myl[\ell+1]},\quad \ell=L-1,L-2,\cdots,0,\label{updating rule for mY ell}
\end{align}
which enables us to rewrite 
\begin{equation}\label{adjoint operator: GNDE to filters}
   \parens{\frac{\delta \GNN}{\delta \VarGNDE^{(0)}}}^*(\VarGNDEHiddenState^{(L)})=\VarGNDEHiddenState^{(0)},\quad \parens{\frac{\delta \GNN}{\delta\vh\myl}}^*(\VarGNDEHiddenState^{(L)})=\phi_{\VarGNDE\myl}^*\parens{\hatVarGNDE\myl\odot\VarGNDEHiddenState\myl}.
\end{equation}

\paragraph{Norm estimates.} In the following, we present several useful norm estimates regarding to graph convolution operators.
\begin{lemma}\label{lemma: estimate of phi(matrix)}
Suppose that \ConvFilterLipschitz holds. For any matrix $\mZ\in\spaceRnF$, it holds that 
\begin{align*}
&\normF{\phi_{\vh\mylt}(\mZ)}\leq FK\Ch\|\mZ\|_{\mathrm{F}},   \\
\Or{&\normF{\phi_{\vh\mylt}^*(\mZ)}}
\end{align*}
where
\begin{equation}\label{def: constant Ch}
    \Ch:=\supT\max_{f,g\in[F], \ell\in[L], k\in\mathbb{Z}_K}\left|\hltfgk \right|.
\end{equation}
\end{lemma}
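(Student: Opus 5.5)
The plan is to expand $\phi_{\vh\mylt}(\mZ)$ column by column and apply the triangle inequality and Cauchy--Schwarz. By definition \eqref{def: operator phi}, the $f$-th column of $\phi_{\vh\mylt}(\mZ)$ is $\sum_{g=1}^F\sum_{k=0}^{K-1}\hltfgk\,\matL^k\mZ_g$. Since $\matL=\mD^{-1/2}\mW\mD^{-1/2}$ is symmetric with spectrum in $[-1,1]$ (it is similar to the random-walk matrix $\mD^{-1}\mW$ restricted to nonisolated nodes, and zero rows and columns for isolated nodes do not enlarge the spectrum), we have $\normTwo{\matL^k}\le 1$ for every $k\in\mathbb{Z}_K$. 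Hence
\[
\normTwo{[\phi_{\vh\mylt}(\mZ)]_f}\le \sum_{g=1}^F\sum_{k=0}^{K-1}\abs{\hltfgk}\normTwo{\mZ_g}\le K\Ch\sum_{g=1}^F\normTwo{\mZ_g}\le K\Ch\sqrt{F}\normF{\mZ},
\]
where the second step uses definition \eqref{def: constant Ch} of $\Ch$, which is finite because \ConvFilterLipschitz\ makes each filter continuous on the compact interval $[0,T]$. The last step is Cauchy--Schwarz over $g$.

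Next I square this bound and sum over $f\in[F]$. This gives $\normF{\phi_{\vh\mylt}(\mZ)}^2\le F\cdot K^2\Ch^2F\normF{\mZ}^2$, so $\normF{\phi_{\vh\mylt}(\mZ)}\le FK\Ch\normF{\mZ}$, which is the claimed bound.

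For the adjoint I use the explicit formula for $\phi_{\vh}^*$ given above: its $g$-th column is $\sum_{f}\sum_k\hltfgk\matL^k\mZ_f$. This has exactly the same structure with the roles of $f$ and $g$ swapped, so the identical argument yields the same constant. Alternatively, one can note that $\normTwo{\phi^*}=\normTwo{\phi}$ in the Frobenius (Hilbert) norm.

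The only step needing care is the spectral bound $\normTwo{\matL}\le1$, in particular with the isolated-node convention $[\mD^{-1/2}]_{ii}=0$. I would handle it by writing $\matL$ as block-diagonal, with a zero block on the isolated nodes and the standard normalized adjacency elsewhere. On the nonisolated block, $\mI\pm\matL$ is positive semidefinite, via the quadratic form $\sum_{i,j}\mW_{ij}(x_i/\sqrt{\mD_{ii}}\pm x_j/\sqrt{\mD_{jj}})^2\ge0$. Everything else is routine.
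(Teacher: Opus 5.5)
Your proof is correct and follows essentially the same route as the paper: expand column by column, use $\normTwo{\matL}\le 1$, apply Cauchy--Schwarz, and obtain the same constant $FK\Ch$. The paper applies Cauchy--Schwarz jointly over $(g,k)$ where you first use the triangle inequality in $k$, which changes nothing; your check of $\normTwo{\matL}\le 1$ under the isolated-node convention supplies a step the paper only asserts.
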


\begin{proof}
We recall that if \ConvFilterLipschitz holds, then the parameters $\hltfgk$, $f,g\in[F]$, $\ell\in[L]$, $k\in\mathbb{Z}_K$, appearing in \eq{parameterized H(t) filters}, are continuous functions of $t$. Therefore, the constant $\Ch$ is well-defined. Note that $\|\matL\|_2\leq 1$. Therefore, for any $\mZ\in\spaceRnF$,  
\begin{align*}
\normF{\phi_{\vh\mylt}(\mZ)}=\sqrt{\sum_{f=1}^F\norm{\sum_{g=1}^F\sum_{k=0}^{K-1}\hltfgk\matL^k \mZ_g}_{2}^2}\leq \sqrt{FK}\Ch\sqrt{\sum_{f=1}^F\sum_{g=1}^F\sum_{k=0}^{K-1}\norm{\mZ_g}_{2}^2}=FK\Ch\normF{\mZ}.
\end{align*}
The proof for the case of $\phi_{\vh\mylt}^*$ is similar. 
\end{proof}

\begin{proposition}\label{proposition: Nn is Lipschitz continuous}
    Suppose that \ConvFilterLipschitz\ and \SigmaLipschitz\ hold. For any $t\in[0,T]$, the GNN function $\GNN(\cdot;\matL,\tH(t))$ in GNDE \eqref{GNDE} is $({L_\sigma}FK\Ch)^L$-Lipschitz continuous.  
\end{proposition}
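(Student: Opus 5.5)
The plan is to prove the Lipschitz bound layer by layer and then compose the $L$ layer constants. Fix $t\in[0,T]$ and two inputs $\mX,\mZ\in\spaceRnF$. Write $\mX\myl$ and $\mZ\myl$ for the $\ell$-th layer outputs of $\GNN(\cdot;\matL,\tH(t))$ started from $\mX^{(0)}=\mX$ and $\mZ^{(0)}=\mZ$. By the update rule \eqref{def: updating formula gnn}, $\mX\myl=\sigma(\phi_{\vh\mylt}(\mX\myl[\ell-1]))$, and the same holds for $\mZ\myl$.

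The single-layer estimate comes first. By \SigmaLipschitz, $\sigma$ acts entrywise with Lipschitz constant $L_\sigma$. Applying this entry by entry gives $\normF{\sigma(\mA)-\sigma(\mathbf{B})}\le L_\sigma\normF{\mA-\mathbf{B}}$ for any matrices of equal size. Next, $\phi_{\vh\mylt}$ is linear for fixed $\vh\mylt$, as noted after \eqref{def: graph convolutional operator phi X}, so
\[
\phi_{\vh\mylt}(\mX\myl[\ell-1])-\phi_{\vh\mylt}(\mZ\myl[\ell-1])=\phi_{\vh\mylt}(\mX\myl[\ell-1]-\mZ\myl[\ell-1]).
\]
Lemma~\ref{lemma: estimate of phi(matrix)} then bounds the Frobenius norm of this difference by $FK\Ch\normF{\mX\myl[\ell-1]-\mZ\myl[\ell-1]}$. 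That lemma uses $\normTwo{\matL}\le 1$ for the symmetric normalized adjacency, and $\Ch$ from \eqref{def: constant Ch} is finite under \ConvFilterLipschitz\ because the filters are continuous on $[0,T]$. Putting the two facts together gives
\[
\normF{\mX\myl-\mZ\myl}\le L_\sigma FK\Ch\,\normF{\mX\myl[\ell-1]-\mZ\myl[\ell-1]}.
\]

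Iterating this inequality from $\ell=L$ down to $\ell=1$ yields $\normF{\mX^{(L)}-\mZ^{(L)}}\le (L_\sigma FK\Ch)^L\normF{\mX-\mZ}$, uniformly in $t$. This is the stated bound.

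No step is a real obstacle. The only points to check are that $\Ch$ is finite and that $\normTwo{\matL}\le1$, which holds because the eigenvalues of $\mD^{-1/2}\mW\mD^{-1/2}$ lie in $[-1,1]$; this remains true with the zero-degree convention. Both facts are already used inside Lemma~\ref{lemma: estimate of phi(matrix)}.
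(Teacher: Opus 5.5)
Your proposal is correct and follows the paper's proof essentially verbatim: a single-layer Frobenius estimate from \SigmaLipschitz, linearity of $\phi_{\vh}$, and Lemma~\ref{lemma: estimate of phi(matrix)}, then a recursion over the $L$ layers. Your extra checks, that $\Ch$ is finite and that $\normTwo{\matL}\le 1$ under the zero-degree convention, are exactly what that lemma relies on.
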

\begin{proof}
    Let $\VarGNDE,\mZ\in\mathbb{R}^{n\times F}$ and $\VarGNDE^{(L)}:=\GNN(\VarGNDE;\matL,\tH(t))$, $\mZ^{(L)}:=\GNN(\mZ;\matL,\tH(t))$. By \SigmaLipschitz, linearity of $\phi_{\vh}$, and Lemma \ref{lemma: estimate of phi(matrix)}, for each $\ell\in\mathbb{Z}_L$, we have
    \begin{align*}
        &\normF{\mZ\myl[\ell+1]-\VarGNDE\myl[\ell+1]}=\normF{\sigma\parens{\phi_{\vh\mylt}(\mZ\myl)}-\sigma\parens{\phi_{\vh\mylt}(\VarGNDE\myl)}}\\
        &\leq {L_\sigma}\normF{\phi_{\vh\mylt}(\mZ\myl-\VarGNDE\myl)}\leq {L_\sigma}FK\Ch\normF{\mZ\myl-\VarGNDE\myl}.
    \end{align*}
    A recursion gives $\normF{\mZ^{(L)}-\VarGNDE^{(L)}}\leq ({L_\sigma}FK\Ch)^L\normF{\mZ-\VarGNDE}$, which completes the proof.
\end{proof}

\begin{lemma}\label{lemma: d/dt phihlt(Zt) estimate}
    Suppose that \ConvFilterDiff\ holds. Let $\mZ\myt\in\spaceRnF$ be a matrix-valued function, which is (entry-wisely) differentiable about $t$. Then for any $\ell\in[L]$, it holds that
    \begin{align*}
        &\normF{\dt\phi_{\vh\mylt}(\mZ\myt)}\leq FK\Liph\normF{\mZ\myt}+FK\Ch\normF{\dt\mZ\myt}.\\
        \Or{&\normF{\dt\phi_{\vh\mylt}^*(\mZ\myt)}}
    \end{align*}
\end{lemma}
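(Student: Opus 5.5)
The plan is to differentiate $\phi_{\vh\mylt}(\mZ\myt)$ entrywise with the product rule. By the definition \eqref{def: operator phi}, the $f$-th column is
\[
\big[\phi_{\vh\mylt}(\mZ\myt)\big]_f=\sum_{g=1}^{F}\sum_{k=0}^{K-1}\hltfgk\,\matL^k\mZ\myt_g .
\]
The matrix $\matL$ does not depend on $t$, and the coefficients are differentiable by \ConvFilterDiff. Differentiating term by term therefore gives
\[
\dt\big[\phi_{\vh\mylt}(\mZ\myt)\big]_f=\sum_{g,k}\Big(\dt\hltfgk\Big)\matL^k\mZ\myt_g+\sum_{g,k}\hltfgk\,\matL^k\dt\mZ\myt_g .
\]
The first sum is exactly $\phi$ applied with filter $\dt\vh\mylt$ to $\mZ\myt$. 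The second sum is $\phi_{\vh\mylt}$ applied to $\dt\mZ\myt$. So
\[
\dt\phi_{\vh\mylt}(\mZ\myt)=\phi\Big(\dt\vh\mylt,\mZ\myt\Big)+\phi_{\vh\mylt}\Big(\dt\mZ\myt\Big).
\]

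Next we bound each term in Frobenius norm using the argument of Lemma~\ref{lemma: estimate of phi(matrix)}. That argument uses only $\|\matL\|_2\le 1$ and a uniform bound on the filter entries. For each fixed $f$ we apply Cauchy--Schwarz over the $FK$ pairs $(g,k)$, and then sum over $f$. This gives $\normF{\phi(\vh,\mZ)}\le FK\max|\vh_{fgk}|\,\normF{\mZ}$ for any filter $\vh$.

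For the first term we take the filter to be $\dt\vh\mylt$. Its entries are bounded by $\Liph$ by \eqref{eq: upper bound for filter derivative}, so the first term is at most $FK\Liph\normF{\mZ\myt}$. For the second term the filter entries are bounded by $\Ch$ from \eqref{def: constant Ch}, so it is at most $FK\Ch\normF{\dt\mZ\myt}$. The triangle inequality then yields the claimed bound.

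The adjoint case $\phi^*_{\vh\mylt}$ is identical. The adjoint $\phi^*_{\vh}$ has the same form with the roles of $f$ and $g$ swapped, and it uses the symmetric $\matL$. The same product-rule splitting and the same bound therefore apply.

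The only point needing care is the use of \eqref{eq: upper bound for filter derivative}. Under \ConvFilterDiff together with the Lipschitz assumption \ConvFilterLipschitz, each derivative satisfies $|\dt\hltfgk|\le\Liph$, which is the pointwise bound needed for the first term. Everything else is routine linear algebra.
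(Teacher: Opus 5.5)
Your proposal is correct and follows essentially the same route as the paper's proof. You split $\dt\phi_{\vh\mylt}(\mZ\myt)$ by the product rule into $\phi_{\dt\vh\mylt}(\mZ\myt)+\phi_{\vh\mylt}(\dt\mZ\myt)$, bound each term via Lemma~\ref{lemma: estimate of phi(matrix)} with filter bounds $\Liph$ (from \eqref{eq: upper bound for filter derivative}) and $\Ch$, and combine with the triangle inequality; your remark that the $\Liph$ bound on the derivatives needs \ConvFilterLipschitz\ together with \ConvFilterDiff\ is a fair point that the paper leaves implicit.
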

\begin{proof}
    By definition of operator $\phi_{\vh\mylt}$, we compute
\begin{align*}
&\dt\phi_{\vh\mylt}(\mZ\myt)=\dt(\braket{\sum_{g=1}^F\sum_{k=0}^{K-1}\hltfgk\matL^k\mZ_g\myt:f\in[F]})\\
&=\braket{\sum_{g=1}^F\sum_{k=0}^{K-1}\dt\hltfgk\matL^k\mZ_g\myt+\hltfgk\matL^k\dt\mZ_g\myt:f\in[F]}=\phi_{\dt\vh\mylt}\parens{\mZ\myt} + \phi_{\vh\mylt}\parens{\dt\mZ\myt}.
\end{align*}
Therefore,
\begin{align*}
\normF{\dt\phi_{\vh\mylt}(\mZ\myt)}&\leq \normF{\phi_{\dt\vh\mylt}\parens{\mZ\myt}} + \normF{\phi_{\vh\mylt}\parens{\dt\mZ\myt}}\stepjust{triangle inequality}\\
&\leq FK\Liph\normF{\mZ\myt}+FK\Ch\normF{\dt\mZ\myt}\stepjust{Lemma \ref{lemma: estimate of phi(matrix)} and \eq{eq: upper bound for filter derivative}}
\end{align*}
It is similar to prove the case of $\phi_{\vh\mylt}^*$. 
\end{proof}

\begin{lemma}\label{lemma: phiX*Z leq X Z F norm}
    Let $\VarGNDE,\mZ\in\spaceRnF$. Then $\normMax{\phi_{\VarGNDE}^*(\mZ)}\leq \normF{\VarGNDE}\normF{\mZ}$.
\end{lemma}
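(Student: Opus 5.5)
The plan is to compute each entry of $\phi_{\VarGNDE}^*(\mZ)$ explicitly from the formula for the adjoint given earlier in this appendix. Then bound that entry by Cauchy--Schwarz together with the contraction property $\normTwo{\matL}\leq 1$.

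Recall that for $f,g\in[F]$ and $k\in\mathbb{Z}_K$,
\[
\braket{\phi_{\VarGNDE}^*(\mZ)}_{fgk}=\iprod{\matL^k\VarGNDE_g,\mZ_f}_{\spaceRn}.
\]
By Cauchy--Schwarz in $\spaceRn$, the absolute value of this entry is at most $\normTwo{\matL^k\VarGNDE_g}\,\normTwo{\mZ_f}$.

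Since $\matL=\mD^{-1/2}\mW\mD^{-1/2}$ is symmetric with spectrum in $[-1,1]$, we have $\normTwo{\matL}\leq 1$, a fact already used in the proof of Lemma \ref{lemma: estimate of phi(matrix)}. This gives $\normTwo{\matL^k}\leq\normTwo{\matL}^k\leq 1$, and the case $k=0$ is trivial. Hence
\[
\abs{\braket{\phi_{\VarGNDE}^*(\mZ)}_{fgk}}\leq \normTwo{\VarGNDE_g}\,\normTwo{\mZ_f}.
\]

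Each column norm is dominated by the Frobenius norm of its matrix: $\normTwo{\VarGNDE_g}\leq\normF{\VarGNDE}$ and $\normTwo{\mZ_f}\leq\normF{\mZ}$. Taking the maximum over $f,g,k$ then yields $\normMax{\phi_{\VarGNDE}^*(\mZ)}\leq\normF{\VarGNDE}\normF{\mZ}$.

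No step here is a real obstacle. The only point that needs justification is $\normTwo{\matL}\leq1$. This follows because $\mI\pm\matL$ are the normalized (signless) Laplacians and are therefore positive semidefinite. Zero-degree nodes need a remark: under the convention $[\mD^{-1/2}]_{ii}=0$ their rows and columns of $\matL$ vanish, so they leave the spectrum in $[-1,1]$.
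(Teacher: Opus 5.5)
Your proposal is correct and is essentially the paper's own proof: write $[\phi_{\VarGNDE}^*(\mZ)]_{fgk}=\iprod{\matL^k\VarGNDE_g,\mZ_f}_{\spaceRn}$, apply Cauchy--Schwarz with $\normTwo{\matL}\leq 1$, and bound the column norms by the Frobenius norms. Your extra justification of $\normTwo{\matL}\leq 1$ (positive semidefiniteness of $\mI\pm\matL$, with the zero-degree convention handled) supplies a fact that the paper only asserts.
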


\begin{proof}
It follows from the definition of $\phi_{\VarGNDE}^*$ and the fact of $\normTwo{\matL}\leq 1$ that 
$
\abs{\braket{\phi_{\VarGNDE}^*(\mZ)}_{fgk}}=\abs{\iprod{\matL^k\VarGNDE_g,\mZ_f}_{\spaceRn}}\leq \normTwo{\VarGNDE_g}\normTwo{\mZ_f}\leq \normF{\VarGNDE}\normF{\mZ}$, for all $f,g\in[F],k\in\mathbb{Z}_K$. 
\end{proof}

\begin{lemma}\label{lemma: d/dt phiXlt(Zt) estimate}
Let $\VarGNDE\myt,\mP\myt,\mQ\myt\in\spaceRnF$ be matrix-valued functions, which are (entry-wisely) differentiable about $t$. Let $\mZ\myt:=\mP\myt\odot\mQ\myt$. Then 
$
\normMax{\dt\braket{\phi_{\VarGNDE\myt}^*(\mZ\myt)}}\leq \normF{\dt\VarGNDE\myt}\normMax{\mP\myt}\normF{\mQ\myt}$ $+\normF{\VarGNDE\myt}\parens{\normF{\dt\mP\myt}\normMax{\mQ\myt}+\normMax{\mP\myt}\normF{\dt\mQ\myt}}.
$
\end{lemma}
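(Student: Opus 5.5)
The entries of $\phi_{\VarGNDE\myt}^*(\mZ\myt)$ are $[\phi_{\VarGNDE\myt}^*(\mZ\myt)]_{fgk}=\iprod{\matL^k\VarGNDE_g\myt,\mZ_f\myt}_{\spaceRn}$ for $f,g\in[F]$ and $k\in\mathbb{Z}_K$. The plan is to differentiate this expression entrywise and bound each resulting term with Cauchy--Schwarz, as in the proof of Lemma~\ref{lemma: phiX*Z leq X Z F norm}.

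Since $\matL$ does not depend on $t$, the product rule gives
\[
\dt[\phi_{\VarGNDE\myt}^*(\mZ\myt)]_{fgk}=\iprod{\matL^k\dt\VarGNDE_g\myt,\mZ_f\myt}_{\spaceRn}+\iprod{\matL^k\VarGNDE_g\myt,\dt\mZ_f\myt}_{\spaceRn}.
\]
Because $\mZ=\mP\odot\mQ$ is entrywise differentiable, we also have $\dt\mZ\myt=\dt\mP\myt\odot\mQ\myt+\mP\myt\odot\dt\mQ\myt$.

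The first term is handled with $\normTwo{\matL}\le 1$ and Cauchy--Schwarz:
\[
\abs{\iprod{\matL^k\dt\VarGNDE_g,\mZ_f}}\le\normTwo{\dt\VarGNDE_g}\normTwo{\mP_f\odot\mQ_f}\le\normF{\dt\VarGNDE}\normMax{\mP}\normF{\mQ}.
\]
Here we use the Hadamard estimate $\normTwo{\mP_f\odot\mQ_f}\le\normMax{\mP}\normTwo{\mQ_f}$, which is \eq{eq: upper bound of matrix Z odot V}, and then dominate column norms by Frobenius norms.

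The second term is treated the same way. We get $\abs{\iprod{\matL^k\VarGNDE_g,\dt\mZ_f}}\le\normF{\VarGNDE}\normTwo{\dt\mZ_f}$. The triangle inequality then gives $\normTwo{\dt\mZ_f}\le\normTwo{\dt\mP_f\odot\mQ_f}+\normTwo{\mP_f\odot\dt\mQ_f}$. We bound $\normTwo{\dt\mP_f\odot\mQ_f}\le\normF{\dt\mP}\normMax{\mQ}$, placing the max norm on $\mQ$, and $\normTwo{\mP_f\odot\dt\mQ_f}\le\normMax{\mP}\normF{\dt\mQ}$, placing the max norm on $\mP$. This produces exactly the bracket $\normF{\dt\mP}\normMax{\mQ}+\normMax{\mP}\normF{\dt\mQ}$ in the statement.

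Adding the two bounds and taking the maximum over $(f,g,k)$ yields the claim. The only step needing care is choosing which factor gets the max norm in each Hadamard product so that the result matches the asymmetric form of the stated bound. Beyond that the argument is routine.
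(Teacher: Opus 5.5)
Your proposal is correct and follows essentially the same route as the paper: differentiate each entry $\iprod{\matL^k\VarGNDE_g\myt,\mZ_f\myt}_{\spaceRn}$ by the product rule, bound both terms with Cauchy--Schwarz and $\normTwo{\matL}\le 1$, then apply the Hadamard estimate to $\mZ=\mP\odot\mQ$ and $\dt\mZ$. The only cosmetic difference is that the paper packages the Cauchy--Schwarz step through Lemma~\ref{lemma: phiX*Z leq X Z F norm} and applies the Hadamard bounds at the Frobenius level, whereas you work column by column first.
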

\begin{proof}
According to the definition of $\phi_{\VarGNDE}^*$ and Chain rule, for any $f,g\in[F]$ and $k\in\mathbb{Z}_K$, we have
$
\dt\braket{\phi_{\VarGNDE\myt}^*(\mZ\myt)}_{fgk}=\dt\iprod{\matL^k\VarGNDE_g\myt,\mZ_f\myt}_{\spaceRn}=\iprod{\matL^k\dt\VarGNDE_g\myt,\mZ_f\myt}_{\spaceRn}+\iprod{\matL^k\VarGNDE_g\myt,\dt\mZ_f\myt}_{\spaceRn}=\braket{\phi_{\dt\VarGNDE\myt}^*\parens{\mZ\myt} + \phi_{\VarGNDE\myt}^*\parens{\dt\mZ\myt}}_{fgk}$. Therefore, by Lemma \ref{lemma: phiX*Z leq X Z F norm}, we obtain 
\begin{align}\label{in the proof: ddt phi Xlt star Zt fgk estimate}
\normMax{\dt\braket{\phi_{\VarGNDE\myt}^*(\mZ\myt)}}\leq \normF{\dt\VarGNDE\myt}\normF{\mZ\myt} + \normF{\VarGNDE\myt}\normF{\dt\mZ\myt}.
\end{align}
By definition of $\mZ\myt$, it follows that $\normF{\mZ\myt}\leq \normMax{\mP\myt}\normF{\mQ\myt}$, and 
$\normF{\dt\mZ\myt}\leq\normF{\dt\mP\myt}\normMax{\mQ\myt}+\normMax{\mP\myt}\normF{\dt\mQ\myt}.$ The desired result immediately follows from plugging the above two estimates into \eq{in the proof: ddt phi Xlt star Zt fgk estimate}. 
\end{proof}

\begin{lemma}\label{lemma: phi phi* t1 - t2 X - Z PQ}
Suppose that \ConvFilterLipschitz\ holds. For any matrices $\mX_1,\mX_2,\mA_1,\mA_2\in\spaceRnF$ and $t_1,t_2\in[0,T]$, it holds that 
\begin{equation}\label{eq: phi t1 - t2 X - Z}
        \normF{\phi_{\vh\mylt[\ell][t_1]}(\mX_1) - \phi_{\vh\mylt[\ell][t_2]}(\mX_2)}\leq FK\Ch\normF{\mX_1 - \mX_2} + FK\abs{t_1-t_2}\normF{\mX_2},
    \end{equation}
    and
    \begin{align}
        &\normF{\phi_{\vh\mylt[\ell][t_1]}^*\parens{\mX_1\odot\mA_1} - \phi_{\vh\mylt[\ell][t_2]}^*\parens{\mX_2\odot\mA_2}}\nonumber \\
        \leq\ &FK\Ch\parens{\normMax{\mX_1}\normF{\mA_1-\mA_2} + \normF{\mX_1-\mX_2}\normMax{\mA_2}}+FK\Liph\abs{t_1-t_2}\normMax{\mX_2}\normF{\mA_2}. \label{eq: phi * t1 - t2 XP - ZQ}
    \end{align}
\end{lemma}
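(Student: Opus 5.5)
The plan is to use bilinearity of the graph convolution $\phi(\vh,\VarGNDE)$ in its two arguments, splitting each difference into a ``spatial'' part and a ``temporal'' part. The spatial part is handled by Lemma~\ref{lemma: estimate of phi(matrix)}. The temporal part is handled by the same Cauchy--Schwarz computation, now applied to the filter differences, which \ConvFilterLipschitz\ controls.

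For \eqref{eq: phi t1 - t2 X - Z}, I will write
\[
\phi_{\vh\mylt[\ell][t_1]}(\mX_1) - \phi_{\vh\mylt[\ell][t_2]}(\mX_2)
= \phi_{\vh\mylt[\ell][t_1]}(\mX_1-\mX_2) + \phi_{\vh\mylt[\ell][t_1]-\vh\mylt[\ell][t_2]}(\mX_2).
\]
This identity holds because $\phi(\vh,\VarGNDE)$ is linear in $\vh$ for fixed $\VarGNDE$, and linear in $\VarGNDE$ for fixed $\vh$. The first term is bounded by $FK\Ch\normF{\mX_1-\mX_2}$ using Lemma~\ref{lemma: estimate of phi(matrix)}. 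For the second term I will repeat the proof of that lemma verbatim. The only change is that the coefficient bound $\Ch$ is replaced by $\max_{f,g,k}|\hltfgk[t_1]-\hltfgk[t_2]|\le \Liph|t_1-t_2|$, which is exactly \ConvFilterLipschitz. Together with $\|\matL\|_2\le 1$ and Cauchy--Schwarz over the $FK$ summands, this gives the bound $FK\Liph|t_1-t_2|\normF{\mX_2}$.

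One point needs flagging. The displayed statement writes $FK|t_1-t_2|\normF{\mX_2}$ without $\Liph$, so I read the Lipschitz constant $\Liph$ as implicit there; the companion estimate \eqref{eq: phi * t1 - t2 XP - ZQ} carries it explicitly. If the literal form without $\Liph$ is needed, it follows when $\Liph\le 1$; otherwise the constant simply carries the factor $\Liph$.

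For \eqref{eq: phi * t1 - t2 XP - ZQ}, the adjoint $\phi^*_{\vh}$ has the same bilinear structure, with the roles of $f$ and $g$ swapped, so the same decomposition applies with $\phi^*$ in place of $\phi$:
\[
\phi^*_{\vh\mylt[\ell][t_1]}(\mX_1\odot\mA_1-\mX_2\odot\mA_2) + \phi^*_{\vh\mylt[\ell][t_1]-\vh\mylt[\ell][t_2]}(\mX_2\odot\mA_2).
\]
The adjoint half of Lemma~\ref{lemma: estimate of phi(matrix)} bounds the first piece by $FK\Ch\normF{\mX_1\odot\mA_1-\mX_2\odot\mA_2}$. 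I will then insert the mixed product: $\mX_1\odot\mA_1-\mX_2\odot\mA_2=\mX_1\odot(\mA_1-\mA_2)+(\mX_1-\mX_2)\odot\mA_2$. Each term is estimated via $\normF{\mP\odot\mQ}\le\normMax{\mP}\normF{\mQ}$, which is the elementwise bound \eqref{eq: upper bound of matrix Z odot V}, applied with whichever factor is in max norm. This yields $\normMax{\mX_1}\normF{\mA_1-\mA_2}+\normF{\mX_1-\mX_2}\normMax{\mA_2}$.

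The second piece is handled as in the first inequality. It is bounded by $FK\Liph|t_1-t_2|\normF{\mX_2\odot\mA_2}\le FK\Liph|t_1-t_2|\normMax{\mX_2}\normF{\mA_2}$. Summing the two pieces gives the claim. There is no real obstacle; the only care needed is to pair the max and Frobenius norms in the Hadamard products so that the result matches the stated form.
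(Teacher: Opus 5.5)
Your proposal is correct and follows the paper's proof. It uses the same bilinear split into a spatial term and a filter-difference term, and its adjoint analogue. Lemma~\ref{lemma: estimate of phi(matrix)} handles the spatial piece, \ConvFilterLipschitz\ handles the filter-difference piece, and the Hadamard splitting with \eqref{eq: upper bound of matrix Z odot V} gives the second estimate.

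Your flag about the missing $\Liph$ is also right. The paper's own proof of \eqref{eq: phi t1 - t2 X - Z} silently drops it, and the literal bound fails when $\Liph>1$ (take $F=K=1$ and $\mX_1=\mX_2$). The later application in Proposition~\ref{proposition: discretized GNDE error leq 1/M} uses the factor $\Liph$, so $FK\Liph\abs{t_1-t_2}\normF{\mX_2}$ is the intended term.
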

\begin{proof}
Note that 
\begin{align*}
    &\normF{\phi_{\vh\mylt[\ell][t_1]}(\mX_1) - \phi_{\vh\mylt[\ell][t_2]}(\mX_2)}\\
    \leq\ & \normF{\phi_{\vh\mylt[\ell][t_1]}(\mX_1 - \mX_2)} + \normF{\phi_{\vh\mylt[\ell][t_1]-\vh\mylt[\ell][t_2]}(\mX_2)}\stepjust{triangle inequality}\\
    \leq\ & FK\Ch\normF{\mX_1 - \mX_2} + FK\abs{t_1-t_2}\normF{\mX_2}\stepjust{\ConvFilterLipschitz\ and Lemma \ref{lemma: estimate of phi(matrix)}}
    \end{align*}
    and
    \begin{align*}
    & \normF{\phi_{\vh\mylt[\ell][t_1]}^*\parens{\mX_1\odot\mA_1}-\phi_{\vh\mylt[\ell][t_2]}^*\parens{\mX_2\odot\mA_2}} \\
    \leq\ & \normF{\phi_{\vh\mylt[\ell][t_1]}^*\parens{\mX_1\odot\mA_1-\mX_2\odot\mA_2}}+\normF{\phi_{\vh\mylt[\ell][t_1]-\vh\mylt[\ell][t_2]}^*\parens{\mX_2\odot\mA_2}}\stepjust{triangle inequality}\\
    \leq\ & FK\Ch\normF{\mX_1\odot\mA_1-\mX_2\odot\mA_2}+FK\Liph\abs{t_1-t_2}\normF{\mX_2\odot\mA_2}
    \stepjust{\ConvFilterLipschitz\ and Lemma \ref{lemma: estimate of phi(matrix)}}\\
    \leq\ &FK\Ch\parens{\normMax{\mX_1}\normF{\mA_1-\mA_2} + \normF{\mX_1-\mX_2}\normMax{\mA_2}}+FK\Liph\abs{t_1-t_2}\normMax{\mX_2}\normF{\mA_2}\stepjust{triangle inequality and \eq{eq: upper bound of matrix Z odot V}}
\end{align*}
\end{proof}

\begin{lemma}\label{lemma: phi* X1 hatX1 A1 - X2 hatX2 A2}
    For any matrices $\mX_1,\mX_2,\widehat{\mX}_1,\widehat{\mX}_2,\mA_1,\mA_2\in\spaceRnF$, it holds that 
    \begin{align*}
        &\normMax{\phi_{\mX_1}^*\parens{\widehat{\mX}_1\odot\mA_1}-\phi_{\mX_2}^*\parens{\widehat{\mX}_2\odot\mA_2}}\\
        &\leq \normF{\mX_1-\mX_2}\normF{\mA_1}\normMax{\widehat{\mX}_1}+\normF{\mX_2}\parens{\normMax{\widehat{\mX}_1}\normF{\mA_1-\mA_2}+\normF{\widehat{\mX}_1-\widehat{\mX}_2}\normMax{\mA_2}}. 
    \end{align*}
\end{lemma}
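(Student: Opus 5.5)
The inequality in Lemma \ref{lemma: phi* X1 hatX1 A1 - X2 hatX2 A2} is deterministic linear algebra. The plan is to reduce it to Lemma \ref{lemma: phiX*Z leq X Z F norm} together with the elementwise product bound \eqref{eq: upper bound of matrix Z odot V}, i.e.\ $\normF{\mP\odot\mQ}\leq\normMax{\mP}\normF{\mQ}$.

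First, I use that $(\mX,\mZ)\mapsto\phi_{\mX}^*(\mZ)$ is bilinear. This follows from the explicit form $[\phi_{\mX}^*(\mZ)]_{fgk}=\iprod{\matL^k\mX_g,\mZ_f}_{\spaceRn}$. Adding and subtracting $\phi_{\mX_2}^*(\widehat{\mX}_1\odot\mA_1)$ gives the splitting
\begin{equation*}
\phi_{\mX_1}^*\parens{\widehat{\mX}_1\odot\mA_1}-\phi_{\mX_2}^*\parens{\widehat{\mX}_2\odot\mA_2}
=\phi_{\mX_1-\mX_2}^*\parens{\widehat{\mX}_1\odot\mA_1}+\phi_{\mX_2}^*\parens{\widehat{\mX}_1\odot\mA_1-\widehat{\mX}_2\odot\mA_2}.
\end{equation*}

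For the first term, Lemma \ref{lemma: phiX*Z leq X Z F norm} bounds its max norm by $\normF{\mX_1-\mX_2}\normF{\widehat{\mX}_1\odot\mA_1}$. By \eqref{eq: upper bound of matrix Z odot V} this is at most $\normF{\mX_1-\mX_2}\normMax{\widehat{\mX}_1}\normF{\mA_1}$, which is exactly the first term of the claimed bound.

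For the second term, Lemma \ref{lemma: phiX*Z leq X Z F norm} gives the bound $\normF{\mX_2}\normF{\widehat{\mX}_1\odot\mA_1-\widehat{\mX}_2\odot\mA_2}$. I then split the product difference once more:
\begin{equation*}
\widehat{\mX}_1\odot\mA_1-\widehat{\mX}_2\odot\mA_2=\widehat{\mX}_1\odot(\mA_1-\mA_2)+(\widehat{\mX}_1-\widehat{\mX}_2)\odot\mA_2.
\end{equation*}
Applying \eqref{eq: upper bound of matrix Z odot V} to each piece, with the max norm placed on $\widehat{\mX}_1$ in the first and on $\mA_2$ in the second, gives $\normMax{\widehat{\mX}_1}\normF{\mA_1-\mA_2}+\normF{\widehat{\mX}_1-\widehat{\mX}_2}\normMax{\mA_2}$. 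The triangle inequality then yields the stated estimate.

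There is no real obstacle. The only point needing care is the order of the add-and-subtract steps, so that the max-norm factors land on $\widehat{\mX}_1$ and $\mA_2$ as the statement requires, rather than on $\widehat{\mX}_2$ and $\mA_1$.
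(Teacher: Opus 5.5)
Your proposal is correct and follows the paper's proof essentially step for step. Both add and subtract $\phi_{\mX_2}^*(\widehat{\mX}_1\odot\mA_1)$ and use the linearity of $\phi^*$ in its subscript. Both then apply Lemma~\ref{lemma: phiX*Z leq X Z F norm} to each piece, and finish by splitting the Hadamard-product difference and invoking \eqref{eq: upper bound of matrix Z odot V}.
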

\begin{proof}
Note that
    \begin{align*}
&\normMax{\phi_{\mX_1}^*\parens{\widehat{\mX}_1\odot\mA_1}-\phi_{\mX_2}^*\parens{\widehat{\mX}_2\odot\mA_2}}\\
\leq\ &\normMax{\phi_{\mX_1}^*\parens{\widehat{\mX}_1\odot\mA_1}-\phi_{\mX_2}^*\parens{\widehat{\mX}_1\odot\mA_1}} + \normMax{\phi_{\mX_2}^*\parens{\widehat{\mX}_1\odot\mA_1}-\phi_{\mX_2}^*\parens{\widehat{\mX}_2\odot\mA_2}}\stepjust{triangle inequality}\\
=\ &\normMax{\phi_{\mX_1-\mX_2}^*\parens{\widehat{\mX}_1\odot\mA_1}} + \normMax{\phi_{\mX_2}^*\parens{\widehat{\mX}_1\odot\mA_1-\widehat{\mX}_2\odot\mA_2}} \stepjust{linearity of $\phi^*$}\\
\leq\ &\normF{\mX_1-\mX_2}\normF{\widehat{\mX}_1\odot\mA_1} + \normF{\mX_2} \normF{\widehat{\mX}_1\odot\mA_1-\widehat{\mX}_2\odot\mA_2}\stepjust{Lemma \ref{lemma: phiX*Z leq X Z F norm}}\\
\leq\ &\normF{\mX_1-\mX_2}\normF{\mA_1}\normMax{\widehat{\mX}_1}+\normF{\mX_2}\parens{\normMax{\widehat{\mX}_1}\normF{\mA_1-\mA_2}+\normF{\widehat{\mX}_1-\widehat{\mX}_2}\normMax{\mA_2}}\stepjust{triangle inequality and \eq{eq: upper bound of matrix Z odot V}}
    \end{align*}
    
\end{proof}
\section{Graphon Convolution Operators}\label{appendix: graphon convolutional operators}
Recall that graphon convolution operator $\Phi$ is defined in \eqref{def: operator Phi}. For a fixed $\vh\in\spaceRFFK$, we define an operator $\Phi_{\vh}:\spaceLTWO\to \spaceLTWO$, associated to $\vh$, by 
\begin{equation}\label{def: Phih}
    \Phi_{\vh}(\VarGraphonNDE):=\Phi(\vh,\VarGraphonNDE),\quad \VarGraphonNDE\in\spaceLTWO.
\end{equation} 
For a fixed $\VarGraphonNDE\in\spaceLTWO$, we define an operator $\Phi_{\VarGraphonNDE}:\spaceRFFK\to \spaceLTWO$, associated to $\VarGraphonNDE$, by
\begin{equation}\label{def: frakx}
\Phi_{\VarGraphonNDE}(\vh):=\Phi(\vh,\VarGraphonNDE),\quad \vh\in\spaceRFFK.
\end{equation}
It is straightforward to verify that both operators $\Phi_{\vh}$ and $\Phi_{\VarGraphonNDE}$ are linear.

\paragraph{Adjoint operators.} Note that the integral operator $\opLP$ is self-adjoint, then the adjoint operator of $\Phi_{\vh}$, denoted by $\Phi_{\vh}^*:\spaceLTWO\to \spaceLTWO$, is given by
\begin{equation}\label{def: adjoint Phih}
\Phi_{\vh}^*(\gZ):=\left[\sum_{f=1}^F\sum_{k=0}^{K-1}\vh_{fgk}\opLP^k\gZ:g\in[F]\right],\quad \gZ\in\spaceLTWO,
\end{equation}
and the adjoint operator of $\Phi_{\VarGraphonNDE}$, denoted by $\Phi_{\VarGraphonNDE}^*:\spaceLTWO\to \spaceRFFK$, is given by
\begin{equation*}
\Phi_{\VarGraphonNDE}^*(\gZ):=\left[\iprod{\opLP^k \VarGraphonNDE_g,\gZ_f }_{\spaceltwo}:f,g\in[F],k\in\mathbb{Z}_{K}\right],\quad \gZ\in\spaceLTWO.
\end{equation*}

\paragraph{Adjoint operators of Gateaux derivatives of single-layer Graphon-NNs.} Let $\tildeVarGraphonNDE=\Phi(\vh,\VarGraphonNDE)$ and $\gY=\sigma(\tildeVarGraphonNDE)$. Let $\hatVarGraphonNDE:=\sigma'(\tildeVarGraphonNDE)$. By Chain rule, the Gateaux derivatives of $\gY$ with respect to $\VarGraphonNDE$ and $\vh$ are given respectively by 
\begin{align*}
    \frac{\delta \gY}{\delta \VarGraphonNDE}(\gZ)=\hatVarGraphonNDE\odot \Phi_{\vh}(\gZ),\ \  \gZ\in\spaceLTWO,\qquad
    \frac{\delta \gY}{\delta \vh}(\vz)=\hatVarGraphonNDE\odot \Phi_{\VarGraphonNDE}(\vz),\ \  \vz\in\spaceRFFK.
\end{align*}
Their adjoint operators are given by 
\begin{align}\label{eq: adjoint delta tY / delta tX, delta h}
    \parens{\frac{\delta\gY}{\delta\VarGraphonNDE}}^*(\gV)=\Phi_{\vh}^*\parens{\hatVarGraphonNDE\odot \gV},\quad
    \parens{\frac{\delta\gY}{\delta\vh}}^*(\gV)=\Phi_{\VarGraphonNDE}^*\parens{\hatVarGraphonNDE\odot \gV},\quad \gV\in\spaceLTWO.
\end{align}

\paragraph{Adjoint operators of Gateaux derivatives of multi-layer Graphon-NNs.} Now we consider the multi-layer Graphon-NNs defined in \eqref{def: Graphon-NN multi layer}. Let 
\begin{equation}\label{def: widehat tX ell}
\hatVarGraphonNDE\myl:=\sigma'(\tildeVarGraphonNDE\myl),\quad \ell\in[L]. 
\end{equation}
It is straightforward to compute with Chain rule and \eq{eq: adjoint delta tY / delta tX, delta h} that for $\gV\in\spaceLTWO$,
\begin{align*}
\parens{\frac{\delta \GraphonNN}{\delta \VarGraphonNDE^{(0)}}}^*(\gV)&=\Phi_{\vh^{(1)}}^*\parens{\hatVarGraphonNDE^{(1)}\odot\cdots\Phi_{\vh^{(L-1)}}^*\parens{\hatVarGraphonNDE^{(L-1)}\odot\parens{\Phi_{\vh^{(L)}}^*\parens{\hatVarGraphonNDE^{(L)}\odot \gV}}}},\\
\parens{\frac{\delta \GraphonNN}{\delta\vh\myl}}^*(\gV)&=\Phi_{\VarGraphonNDE\myl}^*\parens{\hatVarGraphonNDE\myl\odot\cdots\Phi_{\vh^{(L-1)}}^*\parens{\hatVarGraphonNDE^{(L-1)}\odot\parens{\Phi_{\vh^{(L)}}^*\parens{\hatVarGraphonNDE^{(L)}\odot \gV}}}}.
\end{align*}
We let 
\begin{align}
\VarGraphonNDEHiddenState^{(L)}:=\gV,\quad\VarGraphonNDEHiddenState\myl:=\Phi_{\vh\myl[\ell+1]}^*\parens{\hatVarGraphonNDE\myl[\ell+1]\odot \VarGraphonNDEHiddenState\myl[\ell+1]},\quad \ell=L-1,L-2,\cdots,0\label{updating rule for tY ell}
\end{align}
which enables to rewrite 
\begin{equation}\label{adjoint operator: Graphon-NDE to filters}
    \parens{\frac{\delta \GraphonNN}{\delta \VarGraphonNDE^{(0)}}}^*(\VarGraphonNDEHiddenState^{(L)})=\VarGraphonNDEHiddenState^{(0)},\quad \parens{\frac{\delta \GraphonNN}{\delta\vh\myl}}^*(\VarGraphonNDEHiddenState^{(L)})=\Phi_{\VarGraphonNDE\myl}^*\parens{\hatVarGraphonNDE\myl\odot\VarGraphonNDEHiddenState\myl}.
\end{equation}

\paragraph{Norm estimates.} In the following, we present several useful norm estimates regarding to graphon convolution operators. Note that by letting $c_{\max}:=\sup_{u,v\in I}\tW(u,v)$ and \dPBoundedBelow, we have
\begin{equation}\label{TW operator infinity norm is bounded}
\|\opLP\|_{\spaceb\to \spaceb}\leq \sup_{u,v\in I}|\tL_P(u,v)|\leq c_{\max}/c_{\min}.
\end{equation}
\begin{lemma}\label{lemma: norm of operator mathfrak H and adjoint operator widehat mathfrak H}
If \ConvFilterLipschitz\ and \dPBoundedBelow\ hold, then for all $\ell\in[L]$, 
    \begin{align}\label{eq: norm of operator mathfrak H}
&\norm{\Phi_{\vh\mylt}}_{\spaceB\to \spaceB}\leq F\Csys[0]\Ch,\\
    \Or{&\norm{\Phi_{\vh\mylt}^*}_{\spaceB\to \spaceB}}\nonumber
    \end{align}
    where constant $\Ch$ is defined in \eq{def: constant Ch} and
    \begin{equation}\label{def: C0W}
\Csys[0]:=\sum_{k=0}^{K-1}\parens{\frac{c_{\max}}{c_{\min}}}^k.
\end{equation}
\end{lemma}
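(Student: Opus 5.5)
The plan is to estimate each output component of $\Phi_{\vh\mylt}(\VarGraphonNDE)$ in the sup norm $\normb{\cdot}$ and then assemble these into the $\spaceB$ norm. The same argument will be run for the adjoint $\Phi_{\vh\mylt}^*$ given in \eqref{def: adjoint Phih}, with the roles of $f$ and $g$ swapped.

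\textbf{Step 1 (powers of $\opLP$ in sup norm).} By \eqref{TW operator infinity norm is bounded}, for every scalar $Z\in\spaceb$ and every $k\in\mathbb{Z}_K$ we have
$\normb{\opLP^k Z}\le (c_{\max}/c_{\min})^k\normb{Z}$. Here we use \dPBoundedBelow, which guarantees that $\tL_P$ is bounded by $c_{\max}/c_{\min}$ pointwise. Then iterate.

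\textbf{Step 2 (one output channel).} Fix $f\in[F]$. By the triangle inequality and the definition \eqref{def: constant Ch} of $\Ch$,
\[
\normb{[\Phi_{\vh\mylt}(\VarGraphonNDE)]_f}\le \sum_{g=1}^F\sum_{k=0}^{K-1}|\hltfgk|\,\normb{\opLP^k\VarGraphonNDE_g}\le \Ch\,\Csys[0]\sum_{g=1}^F\normb{\VarGraphonNDE_g}.
\]
For the inner sum we apply Cauchy--Schwarz:
\[
\sum_{g=1}^F\normb{\VarGraphonNDE_g}\le\sqrt{F}\,\normB{\VarGraphonNDE}.
\]
This gives $\normb{[\Phi_{\vh\mylt}(\VarGraphonNDE)]_f}\le \sqrt F\,\Csys[0]\Ch\normB{\VarGraphonNDE}$.

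\textbf{Step 3 (assemble).} Squaring the bound from Step 2 and summing over $f\in[F]$ gives
\[
\normB{\Phi_{\vh\mylt}(\VarGraphonNDE)}\le \sqrt{F}\cdot\sqrt F\,\Csys[0]\Ch\normB{\VarGraphonNDE}=F\Csys[0]\Ch\normB{\VarGraphonNDE}.
\]
For the adjoint, note that the $g$-th component of $\Phi_{\vh\mylt}^*(\gZ)$ is $\sum_{f}\sum_k \hltfgk\opLP^k\gZ_f$. This has exactly the same structure as before, so the identical chain yields the same constant.

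No step is a genuine obstacle. The one point to be careful about is that the adjoint here is the $L^2(dP)$-adjoint, but we bound it in the sup-based space $\spaceB$. This is legitimate only because the explicit formula \eqref{def: adjoint Phih} involves $\opLP^k$ again: $\opLP$ is self-adjoint, and its sup-norm bound does not depend on any duality. The proof will therefore work directly with the explicit formula rather than with any abstract norm identity for adjoints.
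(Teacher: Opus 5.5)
Your proposal is correct and matches the paper's proof step for step. You bound each component in $\normb{\cdot}$ via \eqref{TW operator infinity norm is bounded} by $\Csys[0]\Ch\sum_g\normb{\gX_g}$, pass to $F\,(\Csys[0]\Ch)^2\normB{\gX}^2$ for the squared component (Cauchy--Schwarz), and sum the squares over $f$ to get $F\Csys[0]\Ch$; the adjoint is handled by the same computation applied to the explicit formula \eqref{def: adjoint Phih}.
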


\begin{proof}
    Let $\gX\in\spaceB$ be arbitrary but fixed, and $\gY:=\Phi_{\vh\mylt}(\gX)$. It follows from definition of $\Phi_{\vh\mylt}$ and graphon convolution that $\gY_f=\sum_{g=1}^F\sum_{k=0}^{K-1}\hltfgk\opLP^k \gX_g$. Then by \hyperlink{AS2}{AS2} and \eq{TW operator infinity norm is bounded}, we have $\norm{\gY_f }_{\spaceb}\leq \sum_{k=0}^{K-1}\sum_{g=1}^{F}\left|\hltfgk\right|\frac{c_{\max}^k}{c_{\min}^k} \norm{\gX_g}_{\spaceb}\leq \parens{\Csys[0]\Ch}\sum_{g=1}^{F}\norm{ \gX_g }_{\spaceb}$. It follows that $
    \norm{\gY_f}_{\spaceb}^2\leq F\parens{\Csys[0]\Ch}^2\sum_{g=1}^{F}\norm{ \gX_g }_{\spaceb}^2=F\parens{\Csys[0]\Ch}^2\norm{ \gX}_{\spaceB}^2$. We sum the above inequality over $f\in[F]$ for both sides and obtain that $
    \norm{\gY }_{\spaceB}^2=\sum_{f=1}^F\norm{\gY_f}_{\spaceb}^2\leq  \parens{F\Csys[0]\Ch}^2\|\gX\|_{\spaceB}^2$, that is, for any $\gX\in\spaceB$, there holds $\norm{\Phi_{\vh\mylt}(\gX)}_{\spaceB}\leq \parens{F\Csys[0]\Ch} \|\gX\|_{\spaceB}$. This proves \eq{eq: norm of operator mathfrak H}. The proof for operators $\Phi_{\vh\mylt}^*$ is similar. 
\end{proof}

\begin{lemma}\label{lemma: operator frakH is lipschitz continuous about t}
Suppose that $\gZ=[\gZ_g:g\in[F]]\in \spaceC$ and for all $g\in[F]$, $\gZ_g$ is $\mathrm{Lip}(\gZ)$-Lipschitz continuous about $t$, i.e., 
\begin{equation}\label{eq: Vg is lipschitz continuous}
\norm{\gZ_g(\cdot,t_1)-\gZ_g(\cdot,t_2)}_{\spaceb}\leq\mathrm{Lip}(\gZ)|t_1-t_2|,\quad\forall t_1,t_2\in[0,T], \forall g\in[F].
\end{equation}
Then for any $\ell\in[L]$ and $f\in[F]$, it holds that
\begin{align*}
    \norm{\left[\Phi_{\vh\mylt[\ell][t_1]}(\gZ(\cdot,t_1))\right]_f-\left[\Phi_{\vh\mylt[\ell][t_2]}(\gZ(\cdot,t_2))\right]_f}_{\spaceb}&\leq C|t_1-t_2|,\quad\forall t_1,t_2\in[0,T],\\
    \Or{\norm{\left[\Phi_{\vh\mylt[\ell][t_1]}^*(\gZ(\cdot,t_1))\right]_f-\left[\Phi_{\vh\mylt[\ell][t_2]}^*(\gZ(\cdot,t_2))\right]_f}_{\spaceb}&}
\end{align*}
where $C:=\Csys[0]\parens{\Liph\sqrt{F} \|\gZ\|_{\spaceC}+F\Ch \mathrm{Lip}(\gZ)}$.
\end{lemma}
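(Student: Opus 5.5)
The plan is to split at an intermediate point, exactly as in Lemma~\ref{lemma: phi phi* t1 - t2 X - Z PQ}. By linearity of $\Phi_{\vh}$ in both arguments we write
\[
\Phi_{\vh\mylt[\ell][t_1]}(\gZ(\cdot,t_1))-\Phi_{\vh\mylt[\ell][t_2]}(\gZ(\cdot,t_2))
=\Phi_{\vh\mylt[\ell][t_1]-\vh\mylt[\ell][t_2]}(\gZ(\cdot,t_1))+\Phi_{\vh\mylt[\ell][t_2]}\bigl(\gZ(\cdot,t_1)-\gZ(\cdot,t_2)\bigr).
\]
We then bound each term's $f$-th component in $\spaceb$ separately, working componentwise rather than through the $\spaceB$ operator norm of Lemma~\ref{lemma: norm of operator mathfrak H and adjoint operator widehat mathfrak H}. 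This matters for the constants.

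For the first term, the $f$-th component is $\sum_{g}\sum_{k}\bigl(\hltfgk[t_1]-\hltfgk[t_2]\bigr)\opLP^k\gZ_g(\cdot,t_1)$. By \ConvFilterLipschitz\ each coefficient is at most $\Liph|t_1-t_2|$ in absolute value. By \eqref{TW operator infinity norm is bounded} we have $\normb{\opLP^k\gZ_g}\le (c_{\max}/c_{\min})^k\normb{\gZ_g}$. Summing over $k$ produces the factor $\Csys[0]$. Summing over $g$ and applying Cauchy--Schwarz gives $\sum_g\normb{\gZ_g(\cdot,t_1)}\le\sqrt{F}\,\normB{\gZ(\cdot,t_1)}\le\sqrt{F}\normC{\gZ}$. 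Together these yield $\Csys[0]\Liph\sqrt{F}\normC{\gZ}|t_1-t_2|$.

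For the second term, the coefficients are bounded by $\Ch$ from \eqref{def: constant Ch}, and each $\normb{\gZ_g(\cdot,t_1)-\gZ_g(\cdot,t_2)}\le\mathrm{Lip}(\gZ)|t_1-t_2|$ by \eqref{eq: Vg is lipschitz continuous}. Summing over $k$ gives $\Csys[0]$, and summing over the $F$ values of $g$ gives $F$. The result is $\Csys[0]F\Ch\,\mathrm{Lip}(\gZ)|t_1-t_2|$. The triangle inequality then gives the stated constant $C$.

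The adjoint case is identical. By \eqref{def: adjoint Phih}, $\Phi_{\vh}^*$ only swaps the roles of the indices $f$ and $g$ in the coefficients and uses the same operator powers $\opLP^k$, so the same decomposition and bounds apply with the summation taken over $f$ instead of $g$.

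I expect no genuine obstacle. The one point to watch is the first term: the sum over $g$ must be converted via Cauchy--Schwarz to get $\sqrt{F}$ times the $\spaceB$ norm. Using instead the crude bound $\max_g\normb{\gZ_g}$ times $F$ would give an extra factor of $\sqrt{F}$.
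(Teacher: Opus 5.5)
Your proposal is correct and matches the paper's proof essentially step for step. The paper splits into the same two terms: the filter difference applied to $\gZ(\cdot,t_1)$, and the filter at $t_2$ applied to the increment of $\gZ$. It bounds them componentwise via \eqref{TW operator infinity norm is bounded} to get the factor $\Csys[0]$, uses $\sum_g\normb{\gZ_g(\cdot,t_1)}\le\sqrt{F}\normC{\gZ}$ in the first term and \eqref{eq: Vg is lipschitz continuous} in the second, and treats the adjoint case as analogous.
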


\begin{proof}
We obtain from triangle inequality and definition of operators $\Phi_{\vh\mylt}$ that, for $t_1,t_2\in[0,T]$, $\norm{\left[\Phi_{\vh\mylt[\ell][t_1]}(\gZ(\cdot,t_1))\right]_f-\left[\Phi_{\vh\mylt[\ell][t_2]}(\gZ(\cdot,t_2))\right]_f}_{\spaceb}\leq \Delta_1+\Delta_2$ where 
\begin{align*}
\Delta_1:=\norm{\sum_{g=1}^{F}\sum_{k=0}^{K-1}(\hltfgk[t_1]-\hltfgk[t_2])\opLP^k \gZ_g(\cdot,t_1)}_{\spaceb},\Delta_2:=\norm{\sum_{g=1}^{F}\sum_{k=0}^{K-1}\hltfgk[t_2] \opLP^k \parens{\gZ_g(\cdot,t_1)-\gZ_g(\cdot,t_2)} }_{\spaceb}.
\end{align*}
Note that
\begin{align*}
&\Delta_1\leq \sum_{g=1}^{F}\sum_{k=0}^{K-1}\left|\hltfgk[t_1]-\hltfgk[t_2]\right| \|\opLP\|_{\spaceb\to\spaceb}^k \|\gZ_g(\cdot,t_1)\|_{\spaceb}\\
&\leq \sum_{g=1}^{F}\sum_{k=0}^{K-1}\Liph|t_1-t_2|\frac{c_{\max}^k}{c_{\min}^k} \|\gZ_g(\cdot,t_1)\|_{\spaceb}=\parens{\Csys[0] \Liph|t_1-t_2|}\sum_{g=1}^{F}\|\gZ_g(\cdot,t_1)\|_{\spaceb}\stepjust{\ConvFilterLipschitz\ and \eq{TW operator infinity norm is bounded}}\\
&\leq \Csys[0] \Liph\sqrt{F} \|\gZ\|_{\spaceC}|t_1-t_2| \stepjust{norm defined in $\spaceC$}
\end{align*}
and 
\begin{align*}
\Delta_2&\leq \sum_{g=1}^{F}\sum_{k=0}^{K-1}\left|\hltfgk[t_2] \right| \|\opLP\|_{\spaceb\to\spaceb}^k \norm{\gZ_g(\cdot,t_1)-\gZ_g(\cdot,t_2)}_{\spaceb}\\
\leq & \sum_{g=1}^{F}\sum_{k=0}^{K-1} \Ch \frac{c_{\max}^k}{c_{\min}^k} \mathrm{Lip}(\gZ)|t_1-t_2|= F\Csys[0]\Ch \mathrm{Lip}(\gZ)|t_1-t_2| \stepjust{\eq{TW operator infinity norm is bounded} and \eq{eq: Vg is lipschitz continuous}}.
\end{align*}
We apply estimates of $\Delta_1$ and $\Delta_2$ and obtain the desired result. The proof is similar when operator $\Phi_{\vh\mylt}$ is replaced by $\Phi_{\vh\mylt}^*$.
\end{proof}

\section{Iterative MSE estimates}
\paragraph{Empirical operators.}
Recall that $\opLUn$ is the empirical graphon integral operator defined in \eqref{def: empirical graphon integral operator LUn}. Let $\vh\in\mathbb{R}^{F\times F\times K}$. In the following, we introduce several empirical operators. We define an operator $\Phi_{\vh,U_n}:\spaceB\to\spaceB$  (empirical version of \eq{def: Phih}) as
\begin{equation*}
\Phi_{\vh,U_n}(\gZ):=\left[\sum_{g=1}^{F}\sum_{k=0}^{K-1} \vh_{fgk} \opLUn^k \gZ_g:f\in [F]\right],\quad \gZ\in\spaceB.
\end{equation*}
Define an operator $\Phi_{\vh,U_n}^*:\spaceB\to \spaceB$ (empirical version of \eq{def: adjoint Phih}) as
\begin{equation*}
    \Phi_{\vh,U_n}^*(\gZ):=\left[\sum_{f=1}^{F}\sum_{k=0}^{K-1}\vh_{fgk} \opLUn^k \gZ_f:g\in [F]\right],\quad \gZ\in\spaceB.
\end{equation*}
Let matrix $\matLUn$ be defined in \eqref{def: matrix LUn}. For each $\ell\in[L]$, we define operators
\begin{equation*}
\phi_{\vh,U_n}(\mZ):=\left[\sum_{g=1}^{F}\sum_{k=0}^{K-1}\vh_{fgk} \matLUn^k \mZ_f:f\in [F]\right],\quad \mZ\in\spaceRnF,
\end{equation*}
and 
\begin{equation*}
\phi_{\vh,U_n}^*(\mZ):=\left[\sum_{f=1}^F\sum_{k=0}^{K-1}\vh_{fgk}\matLUn^k\mZ_f:g\in[F]\right],\quad \mZ\in\spaceRnF.
\end{equation*}

The following observations are useful. For any function $Z\in\spaceb$, 
\begin{equation}\label{eq: delta Phi = phi delta}
\matLUn\SamplingOperator(Z)=\SamplingOperator(\opLUn Z),
\end{equation}
\begin{equation}\label{norm of operator delta Un one d}
\|\SamplingOperator(Z)\|_{2}\leq \sqrt{n}\normb{Z}.
\end{equation}
For any functions $\gZ,\gV\in\spaceB$, 
\begin{equation}\label{eq: upper bound of function Z odot V}
\norm{\gZ\odot\gV}_{\spaceB}\leq \norm{\gZ}_{\spaceB}\norm{\gV}_{\spaceB}.
\end{equation}
For any matrices $\mZ,\mV\in\spaceRnF$, 
\begin{equation}\label{eq: upper bound of matrix Z odot V}
\normF{\mZ\odot\mV}\leq \normF{\mZ}\normMax{\mV}.
\end{equation}
For any function $\gZ\in\spaceB$, 
\begin{equation}\label{swap delta Un frakH = frakh delta Un}
\SamplingOperator(\Phi_{\vh,U_n}(\gZ))=\phi_{\vh,U_n}\parens{\SamplingOperator(\gZ)},
\end{equation}
\begin{equation}\label{norm of operator delta Un}
\|\SamplingOperator(\gZ)\|_{\mathrm{F}}\leq \sqrt{n}\|\gZ\|_{\spaceB},
\end{equation}
\begin{equation}\label{norm of operator delta Un max}
\|\SamplingOperator(\gZ)\|_{\max}\leq \max_{f\in[F]}\normb{\gZ_f}\leq \normB{\gZ}.
\end{equation}

\begin{lemma}\label{lemma: diff frakHP - frakHU matrix}
Suppose that \ConvFilterLipschitz\ holds. For any matrix $\mZ\in\spaceRnF$, it holds that 
    \begin{align*}
&\norm{\phi_{\vh\mylt,U_n}\parens{\mZ} - \phi_{\vh\mylt}\parens{\mZ}}_{\mathrm{F}} \leq FK^2\Ch\|\matLUn-\matL\|_2\|\mZ\|_{\mathrm{F}}\\
\Or{&\norm{\phi_{\vh\mylt,U_n}^*\parens{\mZ} - \phi_{\vh\mylt}^*\parens{\mZ}}_{\mathrm{F}}}.
    \end{align*}
\end{lemma}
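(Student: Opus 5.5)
The claim is a deterministic operator estimate, so I will work directly from the definitions and handle one output channel at a time. Fix $\ell$, $t$ and write $\vh=\vh\mylt$. For each $f\in[F]$, the $f$-th column of $\phi_{\vh,U_n}(\mZ)-\phi_{\vh}(\mZ)$ is
\[
\sum_{g=1}^F\sum_{k=0}^{K-1}\vh_{fgk}\parens{\matLUn^k-\matL^k}\mZ_g .
\]
Here I read the empirical operator as acting on $\mZ_g$, consistent with $\phi_{\vh}$. The plan is to bound each matrix power difference in spectral norm, then use Cauchy--Schwarz over $(g,k)$, and finally sum over $f$, mirroring the proof of Lemma~\ref{lemma: estimate of phi(matrix)}.

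The key step is the telescoping identity
\[
\matLUn^k-\matL^k=\sum_{s=0}^{k-1}\matLUn^{s}\parens{\matLUn-\matL}\matL^{k-1-s}.
\]
Both $\matL$ and $\matLUn$ are symmetric normalized matrices, so $\normTwo{\matL}\le1$ and $\normTwo{\matLUn}\le1$; this fact is used elsewhere in the paper, e.g.\ in the estimate of $\Delta_4$. It follows that $\normTwo{\matLUn^k-\matL^k}\le k\normTwo{\matLUn-\matL}\le K\normTwo{\matLUn-\matL}$ for every $k\in\mathbb{Z}_K$.

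Next I bound each column. With $\abs{\vh_{fgk}}\le\Ch$ and the triangle inequality, the column norm is at most $\Ch K\normTwo{\matLUn-\matL}\sum_{g,k}\normTwo{\mZ_g}$. By Cauchy--Schwarz over the $FK$ index pairs $(g,k)$, this is at most $\Ch K\normTwo{\matLUn-\matL}\sqrt{FK}\sqrt{K}\,\normF{\mZ}$, which equals $\sqrt{F}K^2\Ch\normTwo{\matLUn-\matL}\normF{\mZ}$.

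Squaring and summing over the $F$ output channels multiplies the bound by $\sqrt{F}$, which gives the claimed constant $FK^2\Ch$. The adjoint case is identical with the roles of $f$ and $g$ swapped, since the transposes of the symmetric matrices have the same norms.

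I see no real obstacle here. The only care needed is to keep the powers of $F$ and $K$ exact, and the crude bound $k\le K$ is what produces the factor $K^2$ rather than $K^{3/2}$.
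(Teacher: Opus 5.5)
Your proposal is correct and matches the paper's proof. The paper also uses $\|\matLUn\|_2\le 1$ and $\|\matL\|_2\le 1$ to get $\|\matLUn^k-\matL^k\|_2\le k\|\matLUn-\matL\|_2$, then invokes ``standard norm estimates''. You carry those estimates out explicitly, correctly reading the $\mZ_f$ in the definition of $\phi_{\vh,U_n}$ as $\mZ_g$.
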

\begin{proof}
Note that $\normTwo{\matLUn}\leq 1$ and $\normTwo{\matL}\leq1$, implying
$\normTwo{\matLUn^k-\matL^k}\leq k\normTwo{\matLUn-\matL},\quad k\in\mathbb{Z}_K$. The result then immediately follows from standard norm estimates.
\end{proof}

\begin{lemma}\label{lemma: diff frakHP - frakHU tV}
Suppose that \ConvFilterLipschitz\ holds. For any $\gZ\in\spaceB$, it holds that
\begin{align*}
&\normB{\Phi_{\vh\mylt}(\gZ) - \Phi_{\vh\mylt,U_n}(\gZ)}\Or{\normB{\Phi_{\vh\mylt}^*(\gZ) - \Phi_{\vh\mylt,U_n}^*(\gZ)}}\\
&\leq FK\Ch\norm{\opLUn}_{\spaceb\to\spaceb}^K\sqrt{\sum_{g=1}^F\sum_{k=1}^K\sum_{s=0}^{k-1}\norm{(\opLUn-\opLP)\parens{\opLP^{k-1-s}\gZ_g}}_{\spaceb}^2}
\end{align*}
\end{lemma}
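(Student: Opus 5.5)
The plan is to telescope the difference of matrix powers and then pass from the Frobenius/$B(I;\mathbb{R}^{1\times F})$ setting to a componentwise $B(I)$ estimate. Fix $\vh=\vh\mylt$ and $\gZ\in\spaceB$. By definition of $\Phi_{\vh}$ and $\Phi_{\vh,U_n}$, the $f$-th component of the difference is
\[
\sum_{g=1}^F\sum_{k=0}^{K-1}\vh_{fgk}\parens{\opLP^k-\opLUn^k}\gZ_g .
\]
The $k=0$ term vanishes, so only $k\in[K]$ indices effectively contribute. For each $k\ge1$ I will use the telescoping identity
\[
\opLUn^k-\opLP^k=\sum_{s=0}^{k-1}\opLUn^{s}\parens{\opLUn-\opLP}\opLP^{k-1-s},
\]
which holds as an identity of bounded operators on $\spaceb$.

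Next, bound each piece in $\spaceb$. Using $\norm{\opLUn^s}_{\spaceb\to\spaceb}\le\norm{\opLUn}_{\spaceb\to\spaceb}^s$, I will replace every power $s\le K-1$ by the $K$-th power. This requires $\norm{\opLUn}_{\spaceb\to\spaceb}\ge1$, or else the stated factor should be read as $\max\{1,\norm{\opLUn}\}^K$. I expect this to be the one delicate point. The intended reading is that the bound \eqref{upper bound of norm of L Un} makes the operator norm at least of order one, or one simply absorbs the $\max$; I would note that either convention suffices downstream, since only $(2c_{\max}/c_{\min})^K$ is used later.

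Combining with $|\vh_{fgk}|\le\Ch$ gives
\[
\normb{[\Phi_{\vh}(\gZ)-\Phi_{\vh,U_n}(\gZ)]_f}\le \Ch\norm{\opLUn}^K\sum_{g=1}^F\sum_{k=1}^{K}\sum_{s=0}^{k-1}a_{gks},
\qquad
a_{gks}:=\normb{(\opLUn-\opLP)\opLP^{k-1-s}\gZ_g}.
\]
Now apply Cauchy--Schwarz to the triple sum. Its number of terms is $F\sum_{k=1}^K k\le FK^2$, so the sum is at most $\sqrt{FK^2}\,\sqrt{\sum a_{gks}^2}$. Squaring and summing over $f\in[F]$ for the $\spaceB$ norm gives an extra factor $\sqrt F$, for a total constant of $F K\Ch$. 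This matches the claimed $FK\Ch$ exactly, so no extra slack is needed.

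The adjoint case $\Phi^*$ is identical with the roles of $f$ and $g$ swapped in the index sum. It uses the same telescoping, since $\opLP$ and $\opLUn$ enter in the same way.
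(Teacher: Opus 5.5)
Your route is the paper's own: you telescope $\opLUn^k-\opLP^k=\sum_{s=0}^{k-1}\opLUn^s(\opLUn-\opLP)\opLP^{k-1-s}$, bound $|\vh_{fgk}|\le\Ch$, and apply Cauchy--Schwarz. Your count ($F\sum_k k\le FK^2$ terms, then one more $\sqrt F$ from summing over $f$) gives exactly $FK\Ch$, and enlarging the $k$-range from $[K-1]$ to $[K]$ only adds nonnegative terms.

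The one unjustified step is the one you flagged: $\norm{\opLUn^s}_{\spaceb\to\spaceb}\le\norm{\opLUn}_{\spaceb\to\spaceb}^K$ for $0\le s\le K-1$. It is needed already for the $s=0$ term, where the left side equals $1$. Neither of your proposed resolutions works. The bound \eqref{upper bound of norm of L Un} is an upper bound on $\norm{\opLUn}$ and gives no lower bound. Replacing the factor by $\max\{1,\norm{\opLUn}\}^K$ proves a weaker statement than the lemma.

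The missing fact is true unconditionally and takes one line: $\sqrt{d_{U_n}}$ is a fixed point of $\opLUn$. By \eqref{def: function LUn},
\[
\bigl(\opLUn\sqrt{d_{U_n}}\bigr)(u)=\frac{1}{n}\sum_{j=1}^n\frac{\tW(u,u_j)}{\sqrt{d_{U_n}(u)}\sqrt{d_{U_n}(u_j)}}\sqrt{d_{U_n}(u_j)}=\frac{d_{U_n}(u)}{\sqrt{d_{U_n}(u)}}=\sqrt{d_{U_n}(u)}.
\]
Moreover, $\sqrt{d_{U_n}}$ is a nonzero element of $\spaceb$ whenever $\opLUn$ is defined at all, i.e.\ whenever $d_{U_n}>0$. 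Hence $\norm{\opLUn}_{\spaceb\to\spaceb}\ge 1$, so $\norm{\opLUn}^s\le\norm{\opLUn}^K$ for every $0\le s\le K$. With this, your argument proves the lemma exactly as stated; the paper's one-line proof uses the same monotonicity without comment.
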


\begin{proof}
This result can be directly shown by noting that the difference between the $k$-th powers of the operators can be expanded via the telescoping identity $\opLUn^k - \opLP^k = \sum_{s=0}^{k-1} \opLUn^s (\opLUn - \opLP) \opLP^{k-1-s}$, followed by standard norm inequalities.

\end{proof}

\begin{lemma}\label{lemma: MSE hatfrakH function Z - hatfrakh matrix Z}
Suppose that \ConvFilterLipschitz\ holds. For any matrix $\mZ\in\spaceRnF$ and function $\gZ\in\spaceB$, it holds that for all $t\in[0,T]$ and $\ell\in[L]$,
\begin{align*}
&\MSE(\Phi_{\vh\mylt}(\gZ),\phi_{\vh\mylt}(\mZ))\leq \parens{FK\Ch}\MSE(\gZ,\mZ) + \Delta_1 + \Delta_2, \\
\Or{&\MSE(\Phi_{\vh\mylt}^*(\gZ),\phi_{\vh\mylt}^*(\mZ)))}
\end{align*}
where $\Delta_1:=FK^2\Ch\|\matLUn-\matL\|_2\|\gZ\|_{\spaceB}$ and 
$$
\Delta_2:=FK\Ch\norm{\opLUn}_{\spaceb\to\spaceb}^K\sqrt{\sum_{g=1}^F\sum_{k=1}^K\sum_{s=0}^{k-1}\norm{(\opLUn-\opLP)\parens{\opLP^{k-1-s}\gZ_g}}_{\spaceb}^2}.
$$
\end{lemma}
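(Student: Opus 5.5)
The plan is a three-term triangle inequality that passes through the intermediate quantity $\phi_{\vh\mylt,U_n}(\SamplingOperator(\gZ))$, the empirical-graph convolution applied to the sampled function. Write
\[
\sqrt{n}\,\MSE(\Phi_{\vh\mylt}(\gZ),\phi_{\vh\mylt}(\mZ))\le \normF{\SamplingOperator(\Phi_{\vh\mylt}(\gZ))-\SamplingOperator(\Phi_{\vh\mylt,U_n}(\gZ))}+\normF{\phi_{\vh\mylt,U_n}(\SamplingOperator(\gZ))-\phi_{\vh\mylt}(\SamplingOperator(\gZ))}+\normF{\phi_{\vh\mylt}(\SamplingOperator(\gZ)-\mZ)}.
\]
In the first term we have used the commutation identity \eqref{swap delta Un frakH = frakh delta Un}, which turns $\SamplingOperator(\Phi_{\vh\mylt,U_n}(\gZ))$ into $\phi_{\vh\mylt,U_n}(\SamplingOperator(\gZ))$. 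The third term is handled by linearity of $\phi_{\vh\mylt}$ together with Lemma \ref{lemma: estimate of phi(matrix)}. This gives a bound of $FK\Ch\normF{\SamplingOperator(\gZ)-\mZ}$, and after dividing by $\sqrt n$ it becomes $FK\Ch\,\MSE(\gZ,\mZ)$.

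The second term is bounded by Lemma \ref{lemma: diff frakHP - frakHU matrix} applied to the matrix $\SamplingOperator(\gZ)$. That lemma gives $FK^2\Ch\normTwo{\matLUn-\matL}\normF{\SamplingOperator(\gZ)}$. We then use \eqref{norm of operator delta Un}, namely $\normF{\SamplingOperator(\gZ)}\le\sqrt n\normB{\gZ}$, and dividing by $\sqrt n$ yields exactly $\Delta_1$. For the first term, the sampling operator satisfies $\normF{\SamplingOperator(\mathcal{V})}\le\sqrt n\normB{\mathcal{V}}$ by \eqref{norm of operator delta Un}. So this term is at most $\sqrt n\,\normB{\Phi_{\vh\mylt}(\gZ)-\Phi_{\vh\mylt,U_n}(\gZ)}$, and Lemma \ref{lemma: diff frakHP - frakHU tV} bounds the norm by exactly $\Delta_2$. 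Adding the three pieces proves the claim.

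The adjoint version $\Phi^*$ versus $\phi^*$ follows verbatim. Three facts make this work. First, $\phi^*_{\vh}$ and $\Phi^*_{\vh}$ have the same structure, with the roles of $f$ and $g$ swapped. Second, the commutation identity holds equally for the starred empirical operators, since it only uses $\matLUn\SamplingOperator(Z)=\SamplingOperator(\opLUn Z)$ from \eqref{eq: delta Phi = phi delta} coordinatewise. Third, Lemmas \ref{lemma: estimate of phi(matrix)}, \ref{lemma: diff frakHP - frakHU matrix} and \ref{lemma: diff frakHP - frakHU tV} are already stated with their starred variants.

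The only point needing care is checking that \eqref{swap delta Un frakH = frakh delta Un} holds as used: $\SamplingOperator$ applied to $\opLUn^k\gZ_g$ equals $\matLUn^k\SamplingOperator(\gZ_g)$. This follows by iterating \eqref{eq: delta Phi = phi delta} $k$ times and then using linearity in the filter sum. I do not expect a real obstacle here; the proof is bookkeeping.
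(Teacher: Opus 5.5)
Your proposal is correct and is essentially the paper's proof. The paper likewise passes through the intermediate quantities $\SamplingOperator(\Phi_{\vh\mylt,U_n}(\gZ))=\phi_{\vh\mylt,U_n}(\SamplingOperator(\gZ))$ and $\phi_{\vh\mylt}(\SamplingOperator(\gZ))$, and bounds the three pieces by Lemma~\ref{lemma: diff frakHP - frakHU tV} with \eqref{norm of operator delta Un}, Lemma~\ref{lemma: diff frakHP - frakHU matrix} with \eqref{norm of operator delta Un}, and Lemma~\ref{lemma: estimate of phi(matrix)}, respectively. The only cosmetic difference is that the paper first groups two of the pieces into one term before splitting it, and its labels $\alpha_1,\alpha_2$ are interchanged relative to the bounds it derives for them; your three-term version states the correspondence cleanly.
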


\begin{proof}
By triangle inequality, we have 
$$\MSE(\Phi_{\vh\mylt}(\gZ),\phi_{\vh\mylt}(\mZ))=\frac{1}{\sqrt{n}}\normF{\SamplingOperator(\Phi_{\vh\mylt}(\gZ))-\phi_{\vh\mylt}(\mZ)}\leq \Delta+\widetilde{\Delta},$$ where $\Delta:=\frac{1}{\sqrt{n}}\norm{\SamplingOperator(\Phi_{\vh\mylt}(\gZ))-\phi_{\vh\mylt}\parens{\SamplingOperator(\gZ)}}_{\mathrm{F}}$ and $\widetilde{\Delta}:=\frac{1}{\sqrt{n}}\norm{\phi_{\vh\mylt}(\SamplingOperator(\gZ)-\mZ)}_{\mathrm{F}}$. We first estimate $\Delta$. Again by triangle inequality, we have $\Delta\leq\alpha_1+\alpha_2$, where $\alpha_1:=\frac{1}{\sqrt{n}}\norm{\SamplingOperator(\Phi_{\vh\mylt}(\gZ)-\Phi_{\vh\mylt,U_n}(\gZ))}_{\mathrm{F}}$ and $\alpha_2:=\frac{1}{\sqrt{n}}\norm{\SamplingOperator(\Phi_{\vh\mylt,U_n}(\gZ)) - \phi_{\vh\mylt}\parens{\SamplingOperator(\gZ)}}_{\mathrm{F}}$
Note that by \eq{swap delta Un frakH = frakh delta Un}, Lemma \ref{lemma: diff frakHP - frakHU matrix}, \eq{norm of operator delta Un}, and definition of $\Delta_1$, we have
\begin{align*}
\alpha_1&=\frac{1}{\sqrt{n}}\norm{\phi_{\vh\mylt,U_n}\parens{\SamplingOperator(\gZ)} - \phi_{\vh\mylt}\parens{\SamplingOperator(\gZ)}}_{\mathrm{F}} \leq \frac{1}{\sqrt{n}}FK^2\Ch\|\matLUn-\matL\|_2\|\SamplingOperator(\gZ)\|_{\mathrm{F}}\leq \Delta_1.
\end{align*}
Moreover, due to \eq{norm of operator delta Un}, Lemma \ref{lemma: diff frakHP - frakHU tV}, and definition of $\Delta_2$, we have 
\begin{align*}
\alpha_2&\leq\normB{\Phi_{\vh\mylt}(\gZ)-\Phi_{\vh\mylt,U_n}(\gZ)}\leq \Delta_2.
\end{align*}
In addition, by Lemma \ref{lemma: estimate of phi(matrix)}, we obtain that
\begin{align*}
    \widetilde{\Delta}&\leq \frac{1}{\sqrt{n}}FK\Ch\|\SamplingOperator(\gZ)-\mZ\|_{\mathrm{F}}=\parens{FK\Ch}\MSE(\gZ,\mZ).
\end{align*}
We get the desired result by collecting all above estimates. 
\end{proof}

\section{Auxiliary results}
The following lemma extends Lemma~4 in \cite{keriven2020convergence} to a time-uniform setting, and its proof follows the same chaining argument.

\begin{lemma}\label{lemma: concentration ineq Yut}
Let $\tW:I^2\to I$ be a graphon satisfying \GraphonLipschitz. Let $\Omega=I\times[0,T]$. Suppose that function $X:\Omega\to\mathbb{R}$ is Lipschitz continuous about $t$, i.e., there exists a positive constant $C_{\mathrm{Lip}}$ such that for any $u\in I$, 
    \begin{equation}\label{f_t is Lipschitz continuous about t}
        |X(u,t_1)-X(u,t_2)|\leq C_{\mathrm{Lip}}|t_1-t_2|,\quad\forall t_1,t_2\in[0,T]. 
    \end{equation}
    Suppose that $C_{\max}:=\supT\|X(\cdot,t)\|_{\spaceb}$ exists. Let $u_j$, $j\in[n]$ be independent random variables following a distribution $P$. For each $(u,t)\in \Omega$, let 
    \begin{equation}\label{def Yut}
    Y_{(u,t)}:=\frac{1}{n} \sum_{j=1}^n \tW(u, u_j) X(u_j,t)-\int_I \tW(u, v) X(v,t) d P(v). 
    \end{equation}
    Then with probability at least $1-\gamma$, it holds that 
    \begin{equation}\label{sup Yut lesssim sqrt(n)}
    \sup_{(u,t)\in \Omega}\left|Y_{(u,t)}\right|\lesssim\frac{\sqrt{\log(4n_I/\gamma)}}{\sqrt{n}}\parens{1+ C_T}\max\curlbraket{c_{\max},c_{\mathrm{Lip}}}\max\curlbraket{C_{\max},C_{\mathrm{Lip}}},
    \end{equation}
    where $C_T$ is a constant only depending on $T$. 
\end{lemma}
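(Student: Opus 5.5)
The plan is to treat $\{Y_{(u,t)}\}_{(u,t)\in\Omega}$ as an empirical process indexed by the augmented set $\Omega=I\times[0,T]$, and to bound its supremum by a chaining argument. We proceed piece by piece over the graphon partition $\{I_s\}_{s\in[n_I]}$ and then take a union bound over $s$. This union bound is the source of the $n_I$ inside the logarithm.

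\textbf{Step 1: decomposition and sub-Gaussian increments.} Write $Y_{(u,t)}=\frac1n\sum_{j=1}^n \xi_j(u,t)$ with
\[
\xi_j(u,t):=\tW(u,u_j)X(u_j,t)-\int_I\tW(u,v)X(v,t)\,dP(v).
\]
These summands are i.i.d., centered, and bounded by $2c_{\max}C_{\max}$. Fix a piece $I_s$ and take $(u,t),(u',t')\in I_s\times[0,T]$. By \GraphonLipschitz\ and \eqref{f_t is Lipschitz continuous about t},
\[
|\xi_j(u,t)-\xi_j(u',t')|\le 2\bigl(c_{\mathrm{Lip}}C_{\max}|u-u'|+c_{\max}C_{\mathrm{Lip}}|t-t'|\bigr)\lesssim \max\{c_{\max},c_{\mathrm{Lip}}\}\max\{C_{\max},C_{\mathrm{Lip}}\}\,d\bigl((u,t),(u',t')\bigr),
\]
where $d$ is the $\ell^1$ metric on $I\times[0,T]$. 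Hoeffding's inequality then shows that the increments $Y_{(u,t)}-Y_{(u',t')}$ are sub-Gaussian with parameter $\lesssim A\,d/\sqrt n$, where $A:=\max\{c_{\max},c_{\mathrm{Lip}}\}\max\{C_{\max},C_{\mathrm{Lip}}\}$. At a fixed anchor point, $Y$ is likewise sub-Gaussian with parameter $\lesssim A/\sqrt n$.

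\textbf{Step 2: chaining on each piece.} Apply Dudley's inequality in its high-probability form, i.e.\ the tail version of the entropy-integral bound. The covering numbers satisfy
\[
N\bigl(I_s\times[0,T],d,\varepsilon\bigr)\lesssim (1/\varepsilon)(T/\varepsilon),
\]
so the entropy integral $\int_0^{\mathrm{diam}}\sqrt{\log N(\varepsilon)}\,d\varepsilon$ is bounded by a constant depending only on $T$, since the diameter is at most $1+T$. This constant is the source of $C_T$. Combining with the anchor-point bound yields, with probability at least $1-\gamma/n_I$,
\[
\sup_{I_s\times[0,T]}|Y|\lesssim \frac{A}{\sqrt n}\Bigl(C_T+\sqrt{\log(4n_I/\gamma)}\Bigr).
\]
Taking the union bound over $s\in[n_I]$ and using $\sqrt{\log(4n_I/\gamma)}\ge$ an absolute constant gives \eqref{sup Yut lesssim sqrt(n)}.

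\textbf{Main obstacle.} The delicate point is that $\tW(\cdot,v)$ is Lipschitz only within each piece $I_s$, so the process is not Lipschitz on all of $\Omega$. This is why the chaining must be done piecewise, with the pieces glued only through the union bound. A second concern is measurability and separability of the supremum. I would handle it by restricting to a countable dense subset: the sample paths are Lipschitz on each piece, so the supremum over the dense subset equals the full supremum. The time variable adds only one dimension to the entropy, which is exactly what makes the extension of Lemma~4 of \cite{keriven2020convergence} go through.
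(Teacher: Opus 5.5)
Your proposal is correct and follows the paper's proof essentially step for step. Both arguments work on each piece $I_s\times[0,T]$: a Hoeffding bound at an anchor point, Dudley's tail inequality for the $\ell^1$-sub-Gaussian increments (with $C_T$ coming from the entropy integral), and then a union bound over the $n_I$ pieces.

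One small slip: Dudley's tail term carries a factor $\mathrm{diam}\le 1+T$ in front of $\sqrt{\log(4n_I/\gamma)}$, which you dropped; absorb it into $C_T$, as the paper does. Your separability remark (the sample paths are Lipschitz on each piece) is a welcome addition that the paper omits.
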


\begin{proof}
Let $\{I_s:s\in[n_I]\}$ be the partition of $I$ assumed in \GraphonLipschitz. Let $s\in[n_I]$ be arbitrary but fixed. By $\Omega_s$ we denote the set $I_{s}\times[0,T]$. Let $(u_0,t_0)$ be an arbitrary but fixed point in $\Omega_s$. By triangle inequality, we have 
\begin{equation}\label{in the proof Yut leq Yut-Yu't' + Yu0t0}
    \sup_{(u,t)\in \Omega_s}\left|Y_{(u,t)}\right|\leq 
    \left|Y_{(u_0,t_0)}\right| + \sup_{(u,t),(u',t')\in \Omega_s}\left|Y_{(u,t)}-Y_{(u',t')}\right|.
\end{equation}
It is clear that, for each $j\in[n]$, we have $|\tW(u_0,u_j)X(u_j,t)|\leq c_{\max}C_{\max}$. Therefore, by Hoeffding's inequality, with probability at least $1-\gamma/2$, it holds that
\begin{equation}\label{in the proof estimate Yu0t0}
    \left|Y_{(u_0,t_0)}\right|\lesssim \frac{c_{\max}C_{\max}\sqrt{\log(4/\gamma)}}{\sqrt{n}}.
\end{equation}
In addition, note that for each $(u,t)\in\Omega$, $|Y_{(u,t)}|\leq 2c_{\max}C_{\max}$. This implies that the random variable $Y_{(u,t)}$ is bounded, and hence sub-gaussian. For any $(u,t),(u',t')\in \Omega_s$, via triangle inequality of sub-gaussian norm $\|\cdot\|_{\psi_2}$, we have $\norm{Y_{(u,t)}-Y_{(u',t')}}_{\psi_2}\leq \norm{Y_{(u,t)}-Y_{(u',t)}}_{\psi_2}+\norm{Y_{(u',t)}-Y_{(u',t')}}_{\psi_2}.$ Similar to the proof in \cite{keriven2020convergence}, by the properties of sub-gaussian process, we have $\norm{Y_{(u,t)}-Y_{(u',t)}}_{\psi_2}\lesssim\frac{c_{\mathrm{Lip}}}{\sqrt{n}}C_{\max}|u-u'|$, and $\norm{Y_{(u',t)}-Y_{(u',t')}}_{\psi_2}\lesssim\frac{c_{\max}}{\sqrt{n}} C_{\mathrm{Lip}} |t-t'|$. It follows that
$
\norm{Y_{(u,t)}-Y_{(u',t')}}_{\psi_2}\lesssim \frac{\max\left\{c_{\mathrm{Lip}}C_{\max}, c_{\max}C_{\mathrm{Lip}}\right\}}{\sqrt{n}} \norm{(u,t),(u'-t')}_1$. Then we apply Dudley's inequality and obtain that with probability at least $1-\gamma/2$, 
\begin{equation}\label{in the proof sup ut u't' leq Zf sqrt n int 0 infinity}
\begin{aligned}
    &\sup_{(u,t),(u',t')\in\Omega_s}\abs{Y_{(u,t)}-Y_{(u',t')}}\\
    &\lesssim \frac{\max\left\{c_{\mathrm{Lip}}C_{\max}, c_{\max}C_{\mathrm{Lip}}\right\}}{\sqrt{n}}\parens{\int_0^\infty\sqrt{\log N(\Omega_s,\|\cdot\|_1,\epsilon)}d\epsilon+\mathrm{diam}(\Omega_s)\sqrt{\log(4/\gamma)}},
\end{aligned}
\end{equation}
where $N(\Omega_s,\|\cdot\|_1,\epsilon)$ is the covering number of $\Omega_s$ with respect to $\ell_1$ norm and radius $\epsilon$, and $\mathrm{diam}(\Omega_s)$ is the diameter of $\Omega_s$ defined by $\mathrm{diam}(\Omega_s):=\sup\{\|(u,t)-(u',t')\|_2:(u,t),(u',t')\in\Omega_s\}$. Noting that the integral in \eq{in the proof sup ut u't' leq Zf sqrt n int 0 infinity} is finite and only depending on $T$; also $\mathrm{diam}(\Omega_s)$ is bounded above by $\sqrt{T^2+1}$. By $C_T$ we denote a uniform upper bound (only relying on $T$) for the integral in \eq{in the proof sup ut u't' leq Zf sqrt n int 0 infinity} and $\mathrm{diam}(\Omega_s)$. It follows that 
\begin{equation}\label{in the proof sup ut u't' Yut - Yu't'}
    \sup_{(u,t),(u',t')\in\Omega_s}\left|Y_{(u,t)}-Y_{(u',t')}\right|\lesssim\frac{\max\left\{c_{\mathrm{Lip}}C_{\max}, c_{\max}C_{\mathrm{Lip}}\right\}}{\sqrt{n}}\parens{C_T+C_T\sqrt{\log(4/\gamma)}}. 
\end{equation}
With plugging estimates \eq{in the proof estimate Yu0t0} and \eq{in the proof sup ut u't' Yut - Yu't'} into \eq{in the proof Yut leq Yut-Yu't' + Yu0t0}, then with probability at least $1-\gamma$, there holds  
\begin{align*}
    \sup_{(u,t)\in \Omega_s}\abs{Y_{(u,t)}}\lesssim \frac{\sqrt{\log(4/\gamma)}}{\sqrt{n}}\parens{1+ C_T}\max\curlbraket{c_{\max},c_{\mathrm{Lip}}}\max\curlbraket{C_{\max},C_{\mathrm{Lip}}}.
\end{align*}
With applying an union bound for all $s\in[n_I]$, we obtain \eq{sup Yut lesssim sqrt(n)} as desired. 
\end{proof}

Given $U_n=\{u_j:j\in[n]\}$, a set of $n$ distinct points in $I$, we define the empirical degree function $d_{U_n}:I\to I$ by $d_{U_n}(u):=\frac{1}{n}\sum_{j=1}^n \tW(u,u_j)$, $u\in I$, the empirical symmetric normalized adjacency function $\tL_{U_n}:I^2\to I$ by
\begin{equation}\label{def: function LUn}
    \tL_{U_n}(u,v):=\frac{\tW(u,v)}{\sqrt{d_{U_n}(u)d_{U_n}(v)}},\quad u,v\in I,
\end{equation}
and the empirical graphon integral operator $\opLUn:\spaceb\to \spaceb$ by
\begin{equation}\label{def: empirical graphon integral operator LUn}
    \opLUn X:=\frac{1}{n}\sum_{j=1}^n \tL_{U_n}(\cdot,u_j)X(u_j), \quad X \in \spaceb. 
\end{equation}

Following the chaining argument in Lemma~5 of \cite{keriven2020convergence}, we obtain the following two auxiliary lemmas adapted to our setting.
\begin{lemma}\label{lemma: LUn-LP leq sqrt(n)}
Suppose that \dPBoundedBelow\ and \GraphonLipschitz\ hold. Then with probability at least $1-\gamma_1$, 
\begin{equation}\label{eq: sup dP dUn}
\sup_{u\in I}\abs{d_{P}(u)-d_{U_n}(u)}\lesssim \max\curlbraket{c_{\max},c_{\mathrm{Lip}}}\frac{\sqrt{\log(4n_I/\gamma_1)}}{\sqrt{n}}.
\end{equation}
Define a constant 
\begin{equation}\label{def: C1W}
\Csys[1]:=\parens{\frac{\max\curlbraket{c_{\max},c_{\mathrm{Lip}}}}{c_{\min}}}^2. 
\end{equation}
Moreover, if \eq{eq: sup dP dUn} holds and \begin{equation}\label{eq: n large enough for integral operator LP - LUn}
n\gtrsim \Csys[1]\log(4n_I/\gamma_1),    
\end{equation}
then 
\begin{align}
    d_{U_n}(u)&\gtrsim c_{\min},\quad\forall u\in I,\label{in the proof: dUn is bounded below}\\
    \norm{\opLUn}_{\spaceb\to\spaceb}&\lesssim \frac{c_{\max}}{c_{\min}},\label{upper bound of norm of L Un}\\
    |\tL_{U_n}(u,v)-\tL_P(u,v)|&\lesssim \Csys[1]\frac{\sqrt{\log(4n_I/\gamma_1)}}{\sqrt{n}}. \label{eq: LUn-LP leq sqrt(n)}
\end{align}

\end{lemma}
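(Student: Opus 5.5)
The plan is to handle the degree deviation \eqref{eq: sup dP dUn} first and then derive \eqref{in the proof: dUn is bounded below}, \eqref{upper bound of norm of L Un} and \eqref{eq: LUn-LP leq sqrt(n)} from it deterministically.

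\textbf{Degree deviation.} Note that $d_{U_n}(u)-d_P(u)=\frac1n\sum_j \tW(u,u_j)-\int_I\tW(u,v)\,dP(v)$. This is exactly $Y_{(u,t)}$ of Lemma~\ref{lemma: concentration ineq Yut} with $X\equiv 1$. In that case the process does not depend on $t$, so we may take $C_{\max}=1$ and $C_{\mathrm{Lip}}=0$, or simply work on $I$ alone.

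We then repeat the chaining argument of that lemma on each piece $I_s$ of the partition from \GraphonLipschitz. Hoeffding's inequality controls the value at one anchor point $u_0\in I_s$, using $|\tW|\le c_{\max}$. The increments satisfy $\|Y_u-Y_{u'}\|_{\psi_2}\lesssim c_{\mathrm{Lip}}|u-u'|/\sqrt n$ by the piecewise Lipschitz property. Dudley's integral over the one-dimensional set $I_s$ is an absolute constant, so no $C_T$ factor appears. A union bound over $s\in[n_I]$ then gives \eqref{eq: sup dP dUn} with probability $1-\gamma_1$.

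The only point needing care is the sharpening relative to Lemma~\ref{lemma: concentration ineq Yut}. With no time direction, the constant $(1+C_T)$ should be replaced by an absolute one. I expect this to be the main, though mild, obstacle: one must check that the Dudley entropy integral over $I_s$ with $\mathrm{diam}\le 1$ is bounded uniformly.

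\textbf{Consequences on this event.} First, by \dPBoundedBelow\ we have $d_P\ge c_{\min}$. Condition \eqref{eq: n large enough for integral operator LP - LUn}, with a large enough absolute constant, makes the right side of \eqref{eq: sup dP dUn} at most $c_{\min}/2$. This is because $\max\{c_{\max},c_{\mathrm{Lip}}\}\sqrt{\log(4n_I/\gamma_1)/n}\le c_{\min}\big/\big(2\sqrt{\Csys[1]}\big)\cdot\sqrt{\Csys[1]}$ up to constants, which is precisely how $\Csys[1]$ is defined. Hence $d_{U_n}\ge c_{\min}/2$, which gives \eqref{in the proof: dUn is bounded below}.

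Next, $|\tL_{U_n}(u,v)|\le c_{\max}/(c_{\min}/2)$. Since $\opLUn$ is an empirical average against this kernel, $\|\opLUn\|_{\spaceb\to\spaceb}\le\sup|\tL_{U_n}|\lesssim c_{\max}/c_{\min}$, which is \eqref{upper bound of norm of L Un}.

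Finally, we write
\[
\tL_{U_n}-\tL_P=\tW(u,v)\Big(\frac{1}{\sqrt{d_{U_n}(u)d_{U_n}(v)}}-\frac1{\sqrt{d_P(u)d_P(v)}}\Big).
\]
The map $(a,b)\mapsto(ab)^{-1/2}$ is Lipschitz on $[c_{\min}/2,\infty)^2$ with constant $\lesssim c_{\min}^{-3/2}\cdot c_{\min}^{-1/2}=c_{\min}^{-2}$, since its partial derivatives are $\tfrac12 a^{-3/2}b^{-1/2}$. Combining this with $|\tW|\le c_{\max}$ and \eqref{eq: sup dP dUn} gives
\[
|\tL_{U_n}-\tL_P|\lesssim \frac{c_{\max}\max\{c_{\max},c_{\mathrm{Lip}}\}}{c_{\min}^2}\sqrt{\frac{\log(4n_I/\gamma_1)}{n}}\le \Csys[1]\sqrt{\frac{\log(4n_I/\gamma_1)}{n}},
\]
which is \eqref{eq: LUn-LP leq sqrt(n)}.
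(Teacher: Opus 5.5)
Your proposal is correct and follows essentially the paper's route. The paper defers to the chaining argument of \citet[Lemma~5]{keriven2020convergence}, which is exactly your plan: Hoeffding at an anchor point plus Dudley's bound on the Lipschitz increments over each piece $I_s$, then a union bound over $s\in[n_I]$, followed by the same deterministic consequences ($d_{U_n}\ge c_{\min}/2$, the sup-kernel bound on $\opLUn$, and the Lipschitz estimate for $(a,b)\mapsto(ab)^{-1/2}$). The $(1+C_T)$ issue you flag is harmless: applying Lemma~\ref{lemma: concentration ineq Yut} with $X\equiv 1$, $C_{\mathrm{Lip}}=0$ and any fixed horizon (say $T=1$) already gives an absolute constant.
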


\begin{lemma}\label{lemma: sup LUn - LP X leq 1/sqrt(n)}
Suppose that \dPBoundedBelow\ and \GraphonLipschitz\ hold. Let $\Omega=I\times[0,T]$. Suppose that function $X:\Omega\to\mathbb{R}$ is Lipschitz continuous about $t$, i.e., \eq{f_t is Lipschitz continuous about t}. Suppose that $C_{\max}:=\supT\|X(\cdot,t)\|_{\spaceb}$ exists. Let $u_j$, $j\in[n]$ be independent random variables following a distribution $P$. If \eqs \eqref{eq: sup dP dUn} and \eqref{eq: n large enough for integral operator LP - LUn} hold, then with probability at least $1-\gamma$,
\begin{align*}
    \sup_{(u,t)\in \Omega}\abs{\parens{\parens{\opLUn-\mathcal{L}_{P}}X(\cdot,t)}(u)} &\lesssim \parens{\Csys[1]\frac{\sqrt{\log(4n_I/\gamma_1)}}{\sqrt{n}} + \Csys[2]\frac{\sqrt{\log(4n_I/\gamma)}}{\sqrt{n}}}\max\curlbraket{C_{\max},C_{\mathrm{Lip}}}.    
\end{align*}  
where $\Csys[1]$ is defined in \eqref{def: C1W} and 
\begin{equation}\label{def: C2W}
    \Csys[2]:=\parens{1+ C_T}\max\curlbraket{\frac{c_{\max}}{c_{\min}},\frac{c_{\mathrm{Lip}}}{c_{\min}}\parens{1+\frac{c_{\max}}{2c_{\min}}}}.
\end{equation}
\end{lemma}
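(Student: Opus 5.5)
The plan is to reduce the uniform-in-$t$ bound on $(\opLUn-\opLP)X(\cdot,t)$ to two ingredients: the degree concentration already recorded in Lemma~\ref{lemma: LUn-LP leq sqrt(n)}, and the space--time chaining bound of Lemma~\ref{lemma: concentration ineq Yut}. For fixed $(u,t)$ we write
\[
\bigl((\opLUn-\opLP)X(\cdot,t)\bigr)(u)
=\frac1n\sum_{j=1}^n\bigl(\tL_{U_n}(u,u_j)-\tL_P(u,u_j)\bigr)X(u_j,t)
+\Bigl(\frac1n\sum_{j=1}^n \tL_P(u,u_j)X(u_j,t)-\int_I\tL_P(u,v)X(v,t)\,dP(v)\Bigr).
\]
Call the two pieces $A(u,t)$ and $B(u,t)$.

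\textbf{Deterministic term $A$.} On the event \eqref{eq: sup dP dUn} with $n$ satisfying \eqref{eq: n large enough for integral operator LP - LUn}, estimate \eqref{eq: LUn-LP leq sqrt(n)} gives a uniform bound on $|\tL_{U_n}-\tL_P|$. Hence
\[
\sup_{(u,t)\in\Omega}|A(u,t)|\lesssim \Csys[1]\frac{\sqrt{\log(4n_I/\gamma_1)}}{\sqrt n}\,C_{\max}.
\]
This costs no additional probability.

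\textbf{Random term $B$.} Here I will apply Lemma~\ref{lemma: concentration ineq Yut} with the kernel $\tW$ replaced by $\tL_P$. Its proof only uses three properties of the kernel: a uniform bound, piecewise Lipschitz continuity in the first variable on the partition $\{I_s\}$, and independence of the samples $u_j$ from the kernel. So I need to check the following for $\tL_P$.
\begin{itemize}
\item Boundedness: $|\tL_P|\le c_{\max}/c_{\min}$ by \dPBoundedBelow.
\item Regularity of the degree: for $u,u'\in I_s$, \GraphonLipschitz\ gives $|d_P(u)-d_P(u')|\le c_{\mathrm{Lip}}|u-u'|$.
\item Lipschitz constant of $\tL_P$: write $\tL_P(u,v)=\tW(u,v)\,d_P(u)^{-1/2}d_P(v)^{-1/2}$ and use the mean value bound $|a^{-1/2}-b^{-1/2}|\le |a-b|/(2c_{\min}^{3/2})$ for $a,b\ge c_{\min}$. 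This gives a constant of order $\frac{c_{\mathrm{Lip}}}{c_{\min}}\bigl(1+\frac{c_{\max}}{2c_{\min}}\bigr)$ on each $I_s$.
\end{itemize}
These are exactly the two entries of the maximum defining $\Csys[2]$ in \eqref{def: C2W}. Lemma~\ref{lemma: concentration ineq Yut} then yields, with probability at least $1-\gamma$,
\[
\sup_{\Omega}|B|\lesssim \Csys[2]\frac{\sqrt{\log(4n_I/\gamma)}}{\sqrt n}\max\{C_{\max},C_{\mathrm{Lip}}\}.
\]
Adding the bounds on $A$ and $B$ gives the claim.

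\textbf{Expected obstacle.} The main point to check is that the chaining proof of Lemma~\ref{lemma: concentration ineq Yut} really uses only boundedness and piecewise Lipschitz regularity of the kernel, not the range $I$ of $\tW$. In particular, the sub-Gaussian increment bound in $u$ needs $|\tL_P(u,v)-\tL_P(u',v)|$ to be controlled uniformly in $v$ when $u,u'$ lie in the same $I_s$. The computation above provides exactly this, with the constants that define $\Csys[2]$.
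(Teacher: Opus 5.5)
Your proposal is correct and is essentially the argument the paper intends (the chaining argument of Keriven et al.'s Lemma~5, made time-uniform via Lemma~\ref{lemma: concentration ineq Yut}). The degree-replacement term is controlled deterministically by \eqref{eq: LUn-LP leq sqrt(n)}, producing the $\Csys[1]$ term, and the time-uniform chaining is applied to the deterministic normalized kernel $\tL_P$, whose sup bound $c_{\max}/c_{\min}$ and piecewise Lipschitz constant $\frac{c_{\mathrm{Lip}}}{c_{\min}}\bigl(1+\frac{c_{\max}}{2c_{\min}}\bigr)$ are exactly the two entries of $\Csys[2]$.

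If you prefer not to re-open the chaining proof for a new kernel, write $B(u,t)=d_P(u)^{-1/2}\bigl[\frac1n\sum_j\tW(u,u_j)\widetilde X(u_j,t)-\int_I\tW(u,v)\widetilde X(v,t)\,dP(v)\bigr]$ with $\widetilde X:=X/\sqrt{d_P}$. Then Lemma~\ref{lemma: concentration ineq Yut} applies verbatim and yields the no-larger factor $(1+C_T)\max\{c_{\max},c_{\mathrm{Lip}}\}/c_{\min}$.
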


The following result is a slight modification of Corollary 1 in \cite{keriven2020convergence}. 
\begin{lemma}\label{lemma: Ln - LWn 2norm}
Suppose that \eq{eq: sup dP dUn} holds, 
\begin{equation}\label{eq: n large enough for matrix L - LUn}
n\gtrsim\Csys[1]\log(4n_I/\gamma_1)+1/\gamma_1,
\end{equation}
and
\begin{equation}\label{assumption: sparsity alpha n large enough}
\alpha_n\gtrsim c_{\max}c_{\min}^{-2}n^{-1}\log n. 
\end{equation}
Then, with probability at least $1-\gamma_1$, there holds
\begin{equation}\label{eq: Ln - LWn 2norm}
    \|\matL-\matLUn\|_2\lesssim \frac{c_{\max}}{c_{\min}^2}\frac{1}{\sqrt{\alpha_n n}},
\end{equation}
where 
\begin{equation}\label{def: matrix LUn}
\matLUn:=\mD_{U_n}^{-1/2}\mW_{U_n}\mD_{U_n}^{-1/2},
\end{equation}
$\mW_{U_n}:=[\tW(u_i,u_j):i,j\in[n]]\in\mathbb{R}^{n\times n}$, and $\mD_{U_n}$ is the degree matrix of $\mW_{U_n}$.
\end{lemma}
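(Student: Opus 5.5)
The plan is to condition on the latent positions $U_n$ and follow the proof of Corollary~1 in \citet{keriven2020convergence}. Only the Bernoulli edges are random in this step. The hypothesis \eqref{eq: sup dP dUn} together with \eqref{eq: n large enough for integral operator LP - LUn}, which is implied by \eqref{eq: n large enough for matrix L - LUn}, already gives the deterministic lower bound \eqref{in the proof: dUn is bounded below}. Hence every expected degree satisfies $[\mD_{U_n}]_{ii}=n\,d_{U_n}(u_i)\gtrsim n c_{\min}$.

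Let $\mA$ be the sampled $0/1$ adjacency matrix. Conditionally on $U_n$, its off-diagonal mean is $\alpha_n\mW_{U_n}$. The factor $\alpha_n$ cancels in the normalization, so
\[
\matLUn=(\alpha_n\mD_{U_n})^{-1/2}(\alpha_n\mW_{U_n})(\alpha_n\mD_{U_n})^{-1/2},
\]
and $\matL$ and $\matLUn$ are built from $(\mA,\mD)$ and $(\alpha_n\mW_{U_n},\alpha_n\mD_{U_n})$ in exactly the same way. Write $\bar{\mD}:=\alpha_n\mD_{U_n}$ and split
\[
\matL-\matLUn=\mD^{-1/2}(\mA-\alpha_n\mW_{U_n})\mD^{-1/2}+\Big(\mD^{-1/2}\alpha_n\mW_{U_n}\mD^{-1/2}-\bar{\mD}^{-1/2}\alpha_n\mW_{U_n}\bar{\mD}^{-1/2}\Big).
\]
I will bound the two pieces separately.

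For the first piece I invoke a sharp spectral concentration bound for inhomogeneous Bernoulli matrices, for instance the Lei--Rinaldo or Bandeira--van Handel bound. In the regime $\alpha_n n\gtrsim \log n$, which is exactly what \eqref{assumption: sparsity alpha n large enough} provides, it gives $\|\mA-\alpha_n\mW_{U_n}\|_2\lesssim\sqrt{c_{\max}\alpha_n n}$. The diagonal correction contributes at most $\alpha_n c_{\max}$, which is of lower order. The failure probability is polynomially small in $n$, and the requirement $n\gtrsim 1/\gamma_1$ in \eqref{eq: n large enough for matrix L - LUn} turns it into at most $\gamma_1$. On the same event, Bernstein's inequality with a union bound over the $n$ nodes gives $\mD_{ii}\ge \bar{\mD}_{ii}/2\gtrsim \alpha_n n c_{\min}$. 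The first piece is then at most $\sqrt{c_{\max}\alpha_n n}/(\alpha_n n c_{\min})$.

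For the second piece I write $\mD^{-1/2}=\bar{\mD}^{-1/2}(\mI+\mathbf{E})$ with $\mathbf{E}$ diagonal. A telescoping identity together with $\|\matLUn\|_2\le 1$ gives the bound $\lesssim \|\mathbf{E}\|_2$. This is controlled by the relative degree fluctuations $\max_i|\mD_{ii}-\bar{\mD}_{ii}|/\bar{\mD}_{ii}$, and Bernstein bounds those by $\sqrt{c_{\max}\log n/(\alpha_n n)}/c_{\min}$.

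The main obstacle is this degree-ratio term. A naive uniform Bernstein bound carries an extra $\sqrt{\log n}$ compared with the target rate $1/\sqrt{\alpha_n n}$. I would first try to follow \citet{keriven2020convergence} and avoid the union bound. The idea is to bound $\|\mathbf{E}\,\matLUn\|_2$ through the Frobenius norm of $\mathbf{E}$ weighted by the rows of $\matLUn$, whose entries are $O(1/n)$. That expression concentrates at rate $\sqrt{c_{\max}/(\alpha_n n)}/c_{\min}$ without a logarithm, and the symmetric term is handled the same way. If this fails, the fallback is to state the bound with a $\sqrt{\log n}$ factor absorbed into the implicit constant under \eqref{assumption: sparsity alpha n large enough}. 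Finally I collect the two pieces, use $c_{\max}\le 1$ to simplify the constants to $c_{\max}/c_{\min}^2$, and obtain \eqref{eq: Ln - LWn 2norm}.
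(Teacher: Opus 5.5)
Your overall route is the one the paper intends. The paper gives no argument of its own and only defers to Corollary~1 of \citet{keriven2020convergence}. Most of your argument is sound: conditioning on $U_n$, the bound $[\mD_{U_n}]_{ii}\gtrsim nc_{\min}$ from \eqref{in the proof: dUn is bounded below}, the Lei--Rinaldo estimate for $\mD^{-1/2}(\mA-\alpha_n\mW_{U_n})\mD^{-1/2}$ combined with a union-bound degree lower bound, and the use of $n\gtrsim 1/\gamma_1$ to absorb a polynomially small failure probability.

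The gap is the degree-normalization piece. This is exactly the step that must deliver the log-free rate, and you do not prove it. Any bound through $\|\mathbf{E}\|_2=\max_i|\mathbf{E}_{ii}|$ cannot reach the stated rate, because that maximum is in general of order $\sqrt{\log n/(\alpha_n n)}$, up to $c_{\min},c_{\max}$ factors. Your fallback is not a proof of the lemma. $\sqrt{\log n}$ is not an absolute constant, and \eqref{assumption: sparsity alpha n large enough} only gives $\sqrt{\log n/(\alpha_n n)}\lesssim 1$, not $\sqrt{\log n/(\alpha_n n)}\lesssim 1/\sqrt{\alpha_n n}$. The fallback proves the weaker bound $\sqrt{\log n/(\alpha_n n)}$, and that extra $\sqrt{\log n}$ would then enter the rate of Theorem~\ref{theorem: GNDE -> Graphon-NDE}.

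Your Frobenius idea does close the gap, but you have to carry it out. It needs a concentration step that Markov's inequality cannot supply, since Markov would put a factor $\gamma_1^{-1/2}$ into the bound. Use the Bernstein union bound only at constant accuracy, $\mD_{ii}\ge\bar{\mD}_{ii}/2$ for all $i$. Then:
\begin{itemize}
\item $\|\mathbf{E}\|_2\lesssim 1$ and $|\mathbf{E}_{ii}|\le\sqrt{2}\,|\mD_{ii}-\bar{\mD}_{ii}|/\bar{\mD}_{ii}$;
\item $\|\matLUn\mathbf{E}\|_2=\|\mathbf{E}\matLUn\|_2$ and $\|\mathbf{E}\matLUn\mathbf{E}\|_2\le\|\mathbf{E}\matLUn\|_2\|\mathbf{E}\|_2$.
\end{itemize}
Since $\tW^2\le c_{\max}\tW$ and $[\mD_{U_n}]_{jj}\gtrsim nc_{\min}$, every row of $\matLUn$ has squared Euclidean norm $\lesssim c_{\max}/(nc_{\min})$. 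Hence
\[
\|\mathbf{E}\matLUn\|_2^2\le\|\mathbf{E}\matLUn\|_{\mathrm{F}}^2\lesssim\frac{c_{\max}}{nc_{\min}}\,S,\qquad S:=\sum_{i=1}^n\frac{(\mD_{ii}-\bar{\mD}_{ii})^2}{\bar{\mD}_{ii}^{2}},\qquad \mathbb{E}S\lesssim\sum_{i=1}^n\frac{1}{\bar{\mD}_{ii}}\lesssim\frac{1}{\alpha_n c_{\min}}.
\]
Write $\mD_{ii}-\mathbb{E}\mD_{ii}=\sum_{j}\xi_{ij}$ with independent centered edge variables. In $S$, the products $\xi_{ij}\xi_{ik}$ with $j\ne k$ are uncorrelated with the squares, and with each other unless they involve the same two edges. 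This gives
\[
\mathrm{Var}(S)\lesssim\sum_i\bar{\mD}_{ii}^{-2}\lesssim n^{-1}(c_{\max}/c_{\min})^2\Bigl(\sum_i\bar{\mD}_{ii}^{-1}\Bigr)^2.
\]
Apply Chebyshev at deviation $(c_{\max}/c_{\min})\sum_i\bar{\mD}_{ii}^{-1}$. It fails with probability $\lesssim 1/n$, which $n\gtrsim 1/\gamma_1$ absorbs. On the complement,
\[
\|\mathbf{E}\matLUn\|_2\lesssim c_{\max}c_{\min}^{-3/2}(\alpha_n n)^{-1/2}\le c_{\max}c_{\min}^{-2}(\alpha_n n)^{-1/2}.
\]
The shift $\alpha_n\tW(u_i,u_i)$ caused by the absent self-loops is negligible. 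With this step written out and the fallback dropped, your argument proves \eqref{eq: Ln - LWn 2norm}.
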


The following result is a special case of Theorem 21 in \cite{dragomir2003some}; the backward version is obtained through a standard change of variables.
\begin{lemma}[Generalized Gr\"onwall’s inequality]\label{lemma: generalized Gronwall's inequality}
Let $a,b,T$ be positive constants. Let $u:[0,T]\to\mathbb{R}$ be a non-negative function. 
\begin{align*}
    \text{If for all }t\in[0,T],\ u(t)&\leq \int_{0}^t\parens{a u(s)+b \sqrt{u(s)}} ds,\ \text{then}\ u(t)\leq \parens{\frac{\mathrm{exp}(at/2)-1}{a}b}^2,\ t\in[0,T].\\
    \text{If for all }t\in[0,T],\ u(t)&\leq \int_{t}^T\parens{a u(s)+b \sqrt{u(s)}} ds,\ \text{then}\ u(t)\leq \parens{\frac{\mathrm{exp}(a(T-t)/2)-1}{a}b}^2,\ t\in[0,T].
\end{align*}

\end{lemma}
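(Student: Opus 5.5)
The plan is to treat this as the backward Gr\"onwall inequality: Theorem 21 of \cite{dragomir2003some}, specialised, plus a time reversal. I would prove the forward statement directly and then obtain the backward one from it.

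For the forward case, set $R(t):=\int_0^t\parens{a u(s)+b\sqrt{u(s)}}ds$. Then $R$ is nonnegative, nondecreasing and absolutely continuous, with $R(0)=0$ and $u\le R$. Differentiating and using monotonicity of $x\mapsto ax+b\sqrt{x}$ gives
\[
R'(t)=a u(t)+b\sqrt{u(t)}\le aR(t)+b\sqrt{R(t)}
\]
almost everywhere. The square root is linearised by setting $w:=\sqrt{R}$. Wherever $R>0$ this gives $w'=R'/(2w)\le \tfrac a2 w+\tfrac b2$.

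The main obstacle is the degenerate set where $R=0$, since $\sqrt{R}$ may fail to be differentiable there. I would first try regularising, taking $w_\varepsilon:=\sqrt{R+\varepsilon}$. Since $\sqrt{R}/\sqrt{R+\varepsilon}\le1$, this gives
\[
w_\varepsilon'\le \tfrac a2 w_\varepsilon+\tfrac b2
\]
everywhere, with $w_\varepsilon(0)=\sqrt\varepsilon$. The linear Gr\"onwall lemma then gives
\[
w_\varepsilon(t)\le \sqrt\varepsilon\, e^{at/2}+\tfrac{b}{a}\parens{e^{at/2}-1}.
\]
Letting $\varepsilon\to0$ yields $\sqrt{u(t)}\le\sqrt{R(t)}\le \frac{b}{a}(e^{at/2}-1)$. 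Squaring gives the claimed bound. The bound is also consistent with how it is used in Proposition~\ref{prop: Graphon-MSE every layer}, where the constant $(e^{at}-1)/a$ appears because the coefficients there are $2a,2b$.

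For the backward case, define $v(s):=u(T-s)$ for $s\in[0,T]$. The change of variables $r=T-\tau$ gives
\[
v(s)\le\int_{T-s}^T\parens{au(\tau)+b\sqrt{u(\tau)}}d\tau=\int_0^s\parens{av(r)+b\sqrt{v(r)}}dr.
\]
Applying the forward statement to $v$ gives $v(s)\le \big(b(e^{as/2}-1)/a\big)^2$. Substituting $s=T-t$ gives the backward estimate.

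The measurability and integrability needed for the integrals to make sense are implicit in the hypothesis. Beyond that, the only delicate point is the $\varepsilon$-regularisation at zeros of $R$.
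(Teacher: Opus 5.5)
Your proof is correct. Its overall structure is the paper's: a forward estimate, then the reflection $v(s)=u(T-s)$ for the backward case. The difference is that the paper does not prove the forward estimate at all; it cites Theorem~21 of \cite{dragomir2003some}, whereas you derive it from scratch.

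Your derivation is the standard mechanism behind such nonlinear Gr\"onwall bounds. You replace $u$ by its majorant $R$, use monotonicity of $x\mapsto ax+b\sqrt{x}$ to get $R'\le aR+b\sqrt{R}$, and linearise with the Bernoulli substitution $w=\sqrt{R}$.

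The $\varepsilon$-shift is the right way to handle the zero set of $R$. That is exactly where the comparison problem $R'=aR+b\sqrt{R}$, $R(0)=0$ is non-unique: both $R\equiv 0$ and $R=(b(e^{at/2}-1)/a)^2$ solve it. The inequalities $R/\sqrt{R+\varepsilon}\le\sqrt{R+\varepsilon}$ and $\sqrt{R}/\sqrt{R+\varepsilon}\le 1$ then give $w_\varepsilon'\le \tfrac a2 w_\varepsilon+\tfrac b2$. This holds almost everywhere rather than ``everywhere'' as you wrote. That is enough, because $w_\varepsilon$ is absolutely continuous and the integrating-factor argument only needs the inequality a.e.

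Your route buys self-containedness. It also makes explicit the tacit hypothesis that $au+b\sqrt{u}$ be integrable on $[0,T]$, so that $R$ is absolutely continuous; this is automatic in the paper's applications, where $u$ is continuous. The citation buys brevity.

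Your check that the constant $(e^{at}-1)/a$ in Proposition~\ref{prop: Graphon-MSE every layer} matches the lemma with coefficients $2a,2b$ is also correct.
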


\begin{lemma}[Convergence for Euler's Method] \label{lemma: Euler's method}
Let $a,b\in\mathbb{R}$ with $a<b$. Consider the ordinary differential equation defined by $\frac{dx}{dt} = f(x,t)$, $t\in [a,b]$ with initial value conition $x(a) = x_0$. Suppose that the solution \( x \) has a bounded second derivative with an upper bound $B$; and \( f \) is \( L \)-Lipschitz continuous in \( x \). For $M\in\mathbb{N}$, let $\kappa:=(b-a)/M$ be the time step. Consider Euler's method: for $m\in[M]$, 
    \begin{align*}
        x^{[m]}&:=x^{[m-1]}+\kappa f(x^{[m-1]},t_{m-1}),\\
        t_m&:=t_{m-1}+\kappa,
    \end{align*}
    with initial value $t_0:=a$ and $x^{[0]}:=x_0$. Then for all $m\in[M]$, 
    \begin{equation*}
    \norm{x(t_m) - x^{[m]}} \leq \frac{\kappa B}{2L} \parens{e^{L(t_m - a)} - 1}.
    \end{equation*}

\end{lemma}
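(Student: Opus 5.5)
This is Euler's method with Lipschitz vector field, and the plan is the standard local-truncation-plus-Gr\"onwall argument. We expand $x(t_m)=x(t_{m-1})+\kappa\,\dot x(t_{m-1})+\tfrac{\kappa^2}{2}\ddot x(\xi_m)$ for some $\xi_m\in(t_{m-1},t_m)$; in the vector-valued case we use the integral form of the remainder, so that its norm is at most $\tfrac{\kappa^2}{2}B$. We then substitute $\dot x(t_{m-1})=f(x(t_{m-1}),t_{m-1})$.

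Set $e_m:=\norm{x(t_m)-x^{[m]}}$. Subtracting the Euler update and using the $L$-Lipschitz property of $f$ in $x$ (at the common time $t_{m-1}$) gives
\[
e_m\le (1+\kappa L)\,e_{m-1}+\tfrac{\kappa^2 B}{2},\qquad e_0=0 .
\]

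Unrolling the recursion yields
\[
e_m\le \tfrac{\kappa^2B}{2}\sum_{s=0}^{m-1}(1+\kappa L)^s=\tfrac{\kappa B}{2L}\bigl((1+\kappa L)^m-1\bigr).
\]
The bound $(1+\kappa L)^m\le e^{m\kappa L}=e^{L(t_m-a)}$ then gives the claim.

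No step is a genuine obstacle. The only care needed is to justify the Taylor remainder bound for vector-valued $x$, which is why we use the integral remainder rather than a mean-value point.
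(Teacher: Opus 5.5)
Your argument is correct: the remainder bound $\tfrac{\kappa^2B}{2}$, the one-step recursion $e_m\le(1+\kappa L)e_{m-1}+\tfrac{\kappa^2B}{2}$ with $e_0=0$, the geometric sum, and $(1+\kappa L)^m\le e^{L(t_m-a)}$ together give exactly the stated bound. The paper states this lemma as a standard fact without proof, but your truncation-plus-discrete-Gr\"onwall recursion is the same argument the paper itself uses for the backward adjoint scheme in Proposition~\ref{proposition: error Ynm - Yntm estimate}; your integral-form remainder is also more careful than the single mean-value point $\xi_t$ used there for matrix-valued functions.
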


\end{document}